\def\eqref#1{equation~\ref{#1}}
\def\1{\bm{1}}
\def\va{{\bm{a}}}
\def\vb{{\bm{b}}}
\def\vf{{\bm{f}}}
\def\vq{{\bm{q}}}
\def\vr{{\bm{r}}}
\def\vu{{\bm{u}}}
\def\vv{{\bm{v}}}
\def\vw{{\bm{w}}}
\def\vx{{\bm{x}}}
\def\vy{{\bm{y}}}
\def\vz{{\bm{z}}}
\def\vpsi{{\bm{\psi}}}
\def\vxi{{\bm{\xi}}}
\def\vzeta{{\bm{\zeta}}}
\def\vepsilon{{\bm{\epsilon}}}
\def\mA{{\bm{A}}}
\def\mB{{\bm{B}}}
\def\mF{{\bm{F}}}
\def\mH{{\bm{H}}}
\def\mI{{\bm{I}}}
\def\mK{{\bm{K}}}
\def\mM{{\bm{M}}}
\def\mP{{\bm{P}}}
\def\mQ{{\bm{Q}}}
\def\mR{{\bm{R}}}
\def\mS{{\bm{S}}}
\def\mT{{\bm{T}}}
\def\mU{{\bm{U}}}
\def\mV{{\bm{V}}}
\def\mX{{\bm{X}}}
\def\mZ{{\bm{Z}}}
\def\mLambda{{\bm{\Lambda}}}
\def\mSigma{{\bm{\Sigma}}}
\def\mPi{{\bm{\Pi}}}
\DeclareMathAlphabet{\mathsfit}{\encodingdefault}{\sfdefault}{m}{sl}
\SetMathAlphabet{\mathsfit}{bold}{\encodingdefault}{\sfdefault}{bx}{n}
\def\gD{{\mathcal{D}}}
\def\gF{{\mathcal{F}}}
\def\gN{{\mathcal{N}}}
\def\gR{{\mathcal{R}}}
\def\gV{{\mathcal{V}}}
\def\gX{{\mathcal{X}}}
\def\gY{{\mathcal{Y}}}
\def\sR{{\mathbb{R}}}
\newcommand{\E}{\mathbb{E}}
\newcommand{\Var}{\mathrm{Var}}
\DeclareMathOperator{\Tr}{Tr}
\newcommand{\diagmtx}[2]{\begin{bmatrix} #1 & 0 \\ 0 & #2 \end{bmatrix}}
\newcommand{\vstack}[2]{\begin{bmatrix} #1 \\ #2 \end{bmatrix}}
\newcommand{\opnorm}[1]{\lVert #1 \rVert_{\mathrm{op}}}
\newcommand{\opnormlr}[1]{\left\lVert #1 \right\rVert_{\mathrm{op}}}
\theoremstyle{plain}
\newtheorem{theorem}{Theorem}[section]
\newtheorem{lemma}[theorem]{Lemma}
\newtheorem{corollary}[theorem]{Corollary}
\theoremstyle{definition}
\newtheorem{definition}[theorem]{Definition}
\newtheorem{assumption}[theorem]{Assumption}
\newtheorem{example}[theorem]{Example}
\theoremstyle{remark}
\newtheorem*{remark}{Remark}
\newcommand{\bounded}{\textbf{Boundedness}}
\newcommand{\conc}{\textbf{Concentration on $\gV$}}
\newcommand{\isotropy}{\textbf{Kernel-wise $\delta$-isotropy on $\gV^\perp$} }
\newcommand{\smallin}{\textbf{Small cross-sample inner-product on $ \gV^\perp$}}
\newcommand{\dimini}{\textbf{Diminishing population covariance on $\gV^\perp$}}
\newcommand{\err}{\text{Err}}
\newcommand{\wtos}{\text{w2s}}
\newcommand{\s}{\text{s}}
\newcommand{\w}{\text{w}}
\newcommand{\sceiling}{\text{sc}}
\newcommand{\intdim}{\textbf{intdim}}
\newcommand{\predgap}{\textbf{PredGap}}
\newcommand{\blue}[1]{#1}
\newcommand{\RC}[1]
{\MakeUppercase{\romannumeral #1}}
\icmltitlerunning{Representations Shape Weak-to-Strong Generalization: Theoretical Insights and Empirical Predictions}
\begin{document}

\twocolumn[
\icmltitle{Representations Shape Weak-to-Strong Generalization: \\Theoretical Insights and Empirical Predictions}




\begin{icmlauthorlist}
\icmlauthor{Yihao Xue}{ucla_cs}
\icmlauthor{Jiping Li}{ucla_math}
\icmlauthor{Baharan Mirzasoleiman}{ucla_cs}
\end{icmlauthorlist}

\icmlaffiliation{ucla_cs}{Department of Computer Science, University of California, Los Angeles}
\icmlaffiliation{ucla_math}{Department of Mathematics, University of California, Los
Angeles}

\icmlcorrespondingauthor{Yihao Xue}{yihaoxue@g.ucla.edu}

\icmlkeywords{Machine Learning, ICML}

\vskip 0.3in
]



\printAffiliationsAndNotice{}  

\vspace{-.2cm}
\begin{abstract}
Weak-to-Strong Generalization (W2SG), where a weak model supervises a stronger one,  serves as an important analogy for understanding how humans might guide superhuman intelligence in the future. Promising empirical results revealed that a strong model can surpass its weak supervisor. While recent work has offered theoretical insights into this phenomenon, a clear understanding of the interactions between weak and strong models that drive W2SG remains elusive. We investigate W2SG through a theoretical lens and show that it can be characterized using kernels derived from the principal components of weak and strong models' internal representations. These kernels can be used to define a space that, at a high level, captures what the weak model is unable to learn but is learnable by the strong model. The projection of labels onto this space quantifies how much the strong model falls short of its full potential due to weak supervision. This characterization also provides insights into how certain errors in weak supervision can be corrected by the strong model, regardless of overfitting. Our theory has significant practical implications, providing a representation-based metric that predicts W2SG performance trends without requiring labels, as shown in experiments on molecular predictions with transformers and 5 NLP tasks involving 52 LLMs.
\looseness=-1
\vspace{-.5cm}
\end{abstract}

\section{Introduction}

As AI systems become increasingly capable of performing complex tasks beyond human comprehension, humans will inevitably serve as ``weak supervisors" in aligning advanced AI. To investigate this fundamental problem, \citet{burns2023weak} propose an analogy that can be empirically explored today: can a weak model effectively supervise a stronger one? This framework, known as Weak-to-Strong Generalization (W2SG), involves leveraging a weak model, finetuned on a specific task, to supervise the finetuning of a stronger model. 
In this analogy, the finetuning task represents concepts tied to human values or skills, the finetuned weak model represents humans—limited in capability but aligned with human values, and the strong model represents superhuman intelligence--powerful but initially unaligned. Promising results from \cite{burns2023weak} show that the strong model can significantly outperform its weak supervisor. For instance, a GPT-4 model supervised by a fine-tuned GPT-2-level model achieves nearly 20\% better performance than the weak supervisor on NLP tasks.\looseness=-1

\looseness=-1

\begin{figure}[!t]
\vspace{-.3cm}
    \centering
\includegraphics[width=0.96\linewidth]{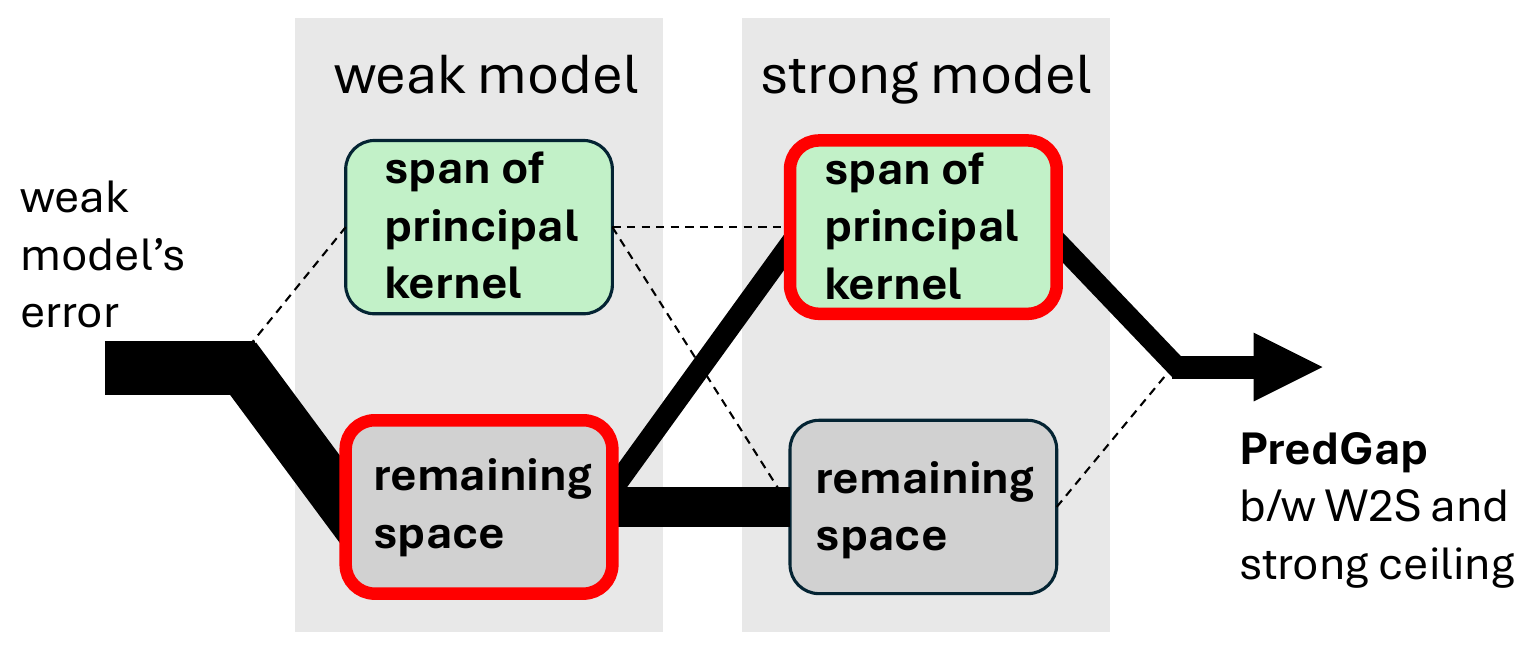}
    \vspace{-.1cm}
    \caption{\small An illustration of our main result (Thm. \ref{thm: main_theorem}).
The path connecting the two highlighted regions represents the overlap b/w the complement of a scaled span of the weak model's \emph{principal kernel} and the scaled span of the strong model's \emph{principal kernel}, determining the contribution of the weak model's errors to \predgap{}.\looseness=-1
}
    \label{fig: illustration}
    \vspace{-.7cm}
\end{figure}

At first glance, this phenomenon seems counterintuitive. After all, the strong model is explicitly trained to fit the weak supervision. Yet, it goes beyond mere imitation and generalizes better. It is important to understand which intrinsic properties of the weak and strong models enable W2SG. \looseness=-1

{Efforts have been made toward a theoretical understanding of W2SG. \citet{charikar2024quantifying} demonstrates that the disagreement between finetuned weak and strong models correlates with performance gains in W2SG. However, their analysis assumes high-quality representations in the strong model and does not address the role of the weak model's representations. The analysis of \cite{lang2024theoretical,shin2024weak} assumes a generalized version of an adversarially robust strong model, where W2SG arises solely from underfitting weak supervision. This framework excludes important scenarios such as benign overfitting, where W2SG occurs despite overfitting. \citet{wu2024provable} 
particularly studied benign overfitting and examined the impact of number of weakly labeled data points. However, we still lack an overarching explanation that captures the interaction between weak and strong models in enabling W2SG, as well as how it determines which weak supervision errors are corrected in general scenarios. The challenge lies in characterizing the abstract concepts including the knowledge embedded in the weak and strong models, their utilization, and their respective roles in W2SG. Striving for results that are general enough to capture a spectrum of behaviors without overly strict assumptions further adds to the complexity.}\looseness=-1


To address this, we adopt a representation-based perspective, analyzing finetuning as a process of learning a function on fixed representations to uncover how the internal structures of weak and strong models influence W2SG. Under a very general assumption about the representations, we demonstrate (illustrated in Fig. \ref{fig: illustration}) that the key quantifiable property governing W2SG is the overlap between two spaces: one representing what the weak model's \emph{principal representations} (capturing key knowledge gained during pretraining) do not cover, and the other representing what the strong model's \emph{principal representations} do cover. Errors in weak supervision that fall within this overlap hinder the strong model from reaching its full potential, leading to a prediction gap between the strong model finetuned with weak supervision and that finetuned with ground truth labels.
A smaller overlap implies that fewer of the weak model's mistakes are replicated, resulting in better W2SG performance.\looseness=-1

We then demonstrate an important use case of our main result: explaining \emph{benign overfitting}, where the W2S model overfits the weak model's mistakes on finetuning data yet paradoxically generalizes better on the test set. Using our theoretical framework, we establish a general condition for benign overfitting and apply it to a toy example to concretely illustrate the role of representations in error replication: errors that do not align with the kernel defined by the strong model's principal representations are not replicated by the W2S model, regardless of the extent of overfitting. \looseness=-1 

Our theory offers a metric that predicts trends in W2SG performance in practice \emph{without having the finetuning task labels}. This metric, which measures the overlap between the two highlighted regions in Fig. \ref{fig: illustration}, shows a strong correlation with W2SG performance across various settings.  The extensive experiments across 8 datasets, involving 150 small transformers and 52 LLMs, 
not only validate our theoretical insights but also suggest their potential applications in managing W2SG, providing a deeper understanding of LLM behavior through their internal representation structures.\looseness=-1 
\vspace{-.2cm}

\section{Related Work}

There have been many recent works that theoretically explore W2SG. \citet{somerstep2024statistical} adopt a transfer learning perspective, focusing on improving W2SG through in-context learning rather than explaining how W2SG emerges. \citet{lang2024theoretical,shin2024weak} analyze W2SG by considering a generalized version of adversarially robust models, showing that certain errors in weak supervision can be corrected by leveraging the good neighborhood structure in the data. However, their argument attributes error correction solely to underfitting—i.e., avoiding fitting mislabeled finetuning data.  This overlooks an important scenario recently discussed in \cite{wu2024provable}, known as benign overfitting, where the strong model overfits mislabeled finetuning data but still achieves accurate test-time predictions. Benign overfitting is particularly relevant in practice, as large neural networks often have the capacity to overfit while still generalizing effectively \cite{zhang2021understanding}. Closer to our setting, \citet{charikar2024quantifying} formalized W2SG using a representation-based perspective. 
Their work demonstrates that performance gain in W2SG correlates with the disagreement between the finetuned weak and strong models, assuming high-quality representations for the strong model. While insightful, it does not characterize the role of the weak model's representations, leaving the exact conditions for effective W2SG unclear.\looseness=-1

Compared to \cite{lang2024theoretical}, we analyze W2SG in a more realistic setting where error correction can result from either underfitting or overfitting, allowing for a full spectrum of behaviors. While benign overfitting is not our primary focus, we discuss it as a special case in Sec. \ref{sec: case_study} due to its importance and offer new insights. Compared to \cite{charikar2024quantifying}, we explicitly links W2SG performance to the interaction between the weak and strong models' representations, providing a more comprehensive view of how the intrinsic properties of the two models jointly determine W2SG.\looseness=-1

\section{W2SG from a Representation Perspective}


{We first formalize finetuning from a representation-based perspective, then introduce the properties of the representations considered, and finally present our main theory.}\looseness=-1
\vspace{-.2cm}







\subsection{A representation-based perspective}

The knowledge a model acquires through pretraining enables it to interpret inputs, extract relevant information, and organize it into meaningful intermediate states. This can be formalized as a ``representation function", $h$, which transforms data into structured representations. Finetuning leverages this knowledge to produce the desired output, which we formalize as learning a new function $f$ on the fixed $h$. The entire model is thus represented as the composition $f\!\circ\! h$. For simplicity, we consider the outputs of $h$ as vectors, and focus on the case where $f$ is a linear functions. This is practically relevant because: (1) Training a linear task head on fixed representations is common with large foundation models, e.g., using embedding LLMs \cite{muennighoff2022mteb}, linear probing on intermediate activations \cite{zou2023representation,nanda2023emergent,marks2023geometry}. (2) fine-tuning of LLMs largely operates in the NTK regime \cite{jacot2018neural}, where training dynamics are captured by a linear model on representations derived from model gradients \cite{malladi2023kernel}. (3) Our experiments in Sec. \ref{sec: experiments} show that insights from analyzing linear functions generalize to the complex non-linear setting of finetuning entire LLMs from pretrained weights. \looseness=-1

\vspace{-.1cm}
\subsection{{Preliminaries}}\label{subsec: pre}

{\textbf{Notations.} We sometimes abbreviate a matrix $\mA \in \sR^{l \times m}$ as $[A_{i,j}]_{1 \leq i \leq l, 1 \leq j \leq m}$ when each element $A_{i,j}$ can be expressed as a generic term in terms of its indices. $\lambda_{\text{min, $\neq 0$}}(\mA)$ denotes the smallest nonzero eigenvalue of matrix $\mA$.} \looseness=-1

\textbf{Data.} Let $\gD$ denote the distribution of the finetuning task's data, defined over the input-label pairs $(\vx, y)\in\gX\times\gY$, where $\gY=\sR$. In W2SG, we have two splits of data sampled from $\gD$. The first subset, $\tilde{\gD}=\{(\tilde{\vx}_i, \tilde{y}_i)\}_{i=1}^{\tilde{n}}$, consists of $\tilde{n}$ i.i.d. samples and is used for finetuning the weak model. The second subset, $\hat{\gD}=\{(\hat{\vx}_i, \hat{y}_i)\}_{i=1}^{\hat{n}}$ with $\hat{n}$ i.i.d. samples is used for finetuning the strong model. Note that the weak model's outputs will be used as labels in place of the actual $\hat{y}_i$'s. In our notation, quantities associated with the two splits are marked by the diacritical symbols, $\tilde{}$ and $\hat{}$, respectively.

\textbf{Models.} We denote the weak and strong models' representation functions as $h_\w$ and $h_\s$, respectively. The finetuned weak model is represented as $f_\w \!\circ\! h_\w $, with
\vspace{-.2cm}
$$
   f_\w \!=\! \arg\min_{f\in\gF_\w} (\frac{1}{\tilde{n}} \sum_{i=1}^{\tilde{n}} ( f(h_\w(\tilde{\vx}_i))-\tilde{y}_i )^2 \!+\! \beta_\w R(f) ).
$$
where $R(\cdot)$ represents $\ell_2$ regularization. \looseness=-1

The \emph{W2S model}, which refers to the strong model finetuned with weak supervision, is represented as $f_\wtos\! \circ\! h_\s $, with
\vspace{-.2cm}
$$
   f_\wtos \!= \!\arg\min_{f\in\gF_\s} (\frac{1}{\hat{n}} \sum_{i=1}^{\hat{n}} ( f(h_\s(\hat{\vx}_i))\!-\! f_\w(h_\w(\hat{\vx}_i)  ) )^2 \!+\! \beta_\s R(f) ).
$$
Additionally, as a reference, we define the \emph{strong ceiling model} as the strong model finetuned with the ground truth labels. It is represented as $f_\sceiling\circ h_\s$ with\looseness=-1
\vspace{-.2cm}
$$
   f_\sceiling = \arg\min_{f\in\gF_\s} (\frac{1}{\hat{n}} \sum_{i=1}^{\hat{n}} ( f(h_\s(\hat{\vx}_i))- \hat{y}_i  )^2  + R_\s(f) ).
$$
\vspace{-.2cm}

\textbf{Evaluation.} At test time, given any labeling function $g:\gX\!\rightarrow\!\gY$, we define its test error as the loss on the population: $
    \err(g) = \E_{(\vx, y)\sim\gD}[ ( g(\vx)\!-\! y )^2 ]$. We then introduce the shorthand notations: the weak model's test error $\err_\w=\err(f_\w \circ h_\w )$, the W2S model's test error $ \err_\wtos=\err(f_\wtos \circ h_\s )$, and the strong ceiling model's test error  $ \err_\sceiling=\err(f_\sceiling \circ h_\s )$. $\err_\wtos$ measures the performance achieved through W2SG, while $\err_\sceiling$ serves as the upper limit.\looseness=-1

We also introduce \predgap{}, the squared difference between the predictions of the W2S and strong ceiling models: \looseness=-1
\looseness=-1
$$
    \predgap = \E_{(\vx, y)\sim\gD}[ ( f_\wtos(h_\s(\vx))\!-\! f_\sceiling(h_\s(\vx)) )^2 ].
$$
{It captures how much the strong model falls short of its full potential due to weak supervision.} 
It is also indicative of \(\err_\wtos\), the direct measure of W2SG performance, through these connections: (1) If the strong ceiling model is nearly perfect, it follows that $\predgap{}\approx \err_\wtos $ {as the strong ceiling's predictions are almost identical to the ground truth}. This is not unlikely, since the ultimate goal of W2SG is to operate in cases where the strong model is a superhuman-level AI \cite{burns2023weak}, plausibly capable of achieving perfect results if provided with ground truth labels. (2) With small regularization and well-conditioned representations, $\err_\wtos \approx \predgap{}+ \err_\sceiling $ (Thm. \ref{thm: ewtos=predgap+errsc}), analogous to the Pythagorean theorem. Then, $\predgap{}$ directly determines $\err_\wtos$ for fixed $\err_\sceiling$. (3) For general cases, the upper bound $ \sqrt{\err_\wtos}\! \leq \sqrt{\predgap{}}\! +\! \sqrt{\err_\sceiling}  $ follows from the triangle inequality. Furthermore, the result obtained from analyzing $\predgap{}$ helps predict $\err_\wtos$ in our experiments (Sec. \ref{sec: experiments}). Thus, our main analysis focuses on $\predgap{}$.
\looseness=-1




\vspace{-.1cm}
\subsection{{Setting: representations with a well-concentrated principal part
and a manageable non-principal part}}\label{subsec: assump}
\vspace{-.1cm}


We first define two basic concepts, kernel and covariance, before introducing a general assumption on representations.\looseness=-1

\vspace{1mm}\begin{definition}[Kernel Matrix]
Given $h:\!\gX\!\rightarrow\! \sR^d $, we define the kernel matrix on the finetuning dataset $\hat{\gD}$ as $\hat{\mK}(h)\!=\![ h(\hat{\vx}_i)^\top h(\hat{\vx}_j) ]_{1\leq i,j\leq \hat{n}}$, a $\hat{n}\times\hat{n}$ matrix where each element represents the inner product between a pair of representations. $\tilde{\mK}(h)$ is defined on $\tilde{\gD}$ in the same manner.\looseness=-1
\end{definition}

\begin{definition}[Population/Empirical Covariance Matrices]
Given $h:\!\gX\!\rightarrow\! \sR^d $, we define the population covariance over distribution $\gD$ as $\mSigma(h)\coloneqq\E_{\gD_{\vx}}[ h(\vx)h(\vx)^\top ]$. The empirical version on $\hat{\gD}$ is defined as $\hat{\mSigma}(h) \coloneqq \frac{1}{\hat{n}} \sum_{i=1}^{\hat{n}}h(\hat{\vx}_i)h(\hat{\vx}_i)^\top $. $\tilde{\mSigma}(h)$ is defined on $\tilde{\gD}$ in the same manner.  \looseness=-1
\end{definition}
{Given a representation function and a reasonable sample size, certain components in the representations should \emph{concentrate well}, meaning they adequately reflect the population distribution. These components are pivotal to the model's generalization. In our analysis, we focus on cases where the remainder—the less-well-concentrated components—satisfies certain conditions, ensuring their impact remains theoretically tractable. The decomposition of representations into these two parts is formalized as follows.} \looseness=-1

\begin{definition}[$(\delta, \hat{\gamma}, \tilde{\gamma})$-decomposability]\label{def: delta_decomp} 
{Given $\gD$, $\tilde{\gD}$, $\hat{\gD}$, and a representation function $h:\!\gX\!\rightarrow\!\gR$, we say that the representations of $h$ are \emph{$(\delta, \hat{\gamma}, \tilde{\gamma})$-decomposable w.r.t. a subspace $\gV$ (of $\gR$)}, for some $\delta\!=\!O(1)$,  $\hat{\gamma}\!=\!O(1)$, and $\tilde{\gamma}\!=\!O(1)$, if there exists a subset of eigenvectors of $\mSigma(h)$ corresponding to non-zero eigenvalues such that the following holds. Let $\gV$ denote the span of these eigenvectors, and let $\gV^\perp$ denote its orthogonal complement. Let $\mPi_{\gV}$ and $\mPi_{\gV^\perp}$ denote the orthogonal projections onto $\gV$ and $\gV^\perp$, respectively. Define $\rho=\lambda_{\text{min, $\neq 0$}}( \mSigma(\mPi_{\gV}h)  )$ and $\gamma=\min( \hat{\gamma}, \tilde{\gamma} )$. With high probability of $1-o(1)$:\looseness=-1\\
(a) \bounded{}. A basic condition that ensures reasonable magnitudes of representations and labels: $\opnorm{\mSigma(h)}\!=\!O(1)$, $\opnorm{\hat{\mSigma}(h)}\!=\!O(1)$ $\opnorm{\tilde{\mSigma}(h)}\!=\!O(1)$, $\E[y^2]=O(1)$, $\frac{1}{\hat{n}}\sum_{i=1}^{\hat{n}}\hat{y}_i^2\!=\!O(1)$ and $\frac{1}{\tilde{n}}\sum_{i=1}^{\tilde{n}}\tilde{y}_i^2 \!= \!O(1)$.\\
(b) \conc{}. Representations are well-concentrated in the subspace $\gV$, both in terms of their covariance and their correlation with labels:  $ \opnorm{ \hat{\mSigma}(\mPi_{\gV}h)-  \mSigma(\mPi_{\gV}h)  } = o(\gamma^2+\delta^2 + \rho^2 ) $, $ \opnorm{ \tilde{\mSigma}(\mPi_{\gV}h)-  \mSigma(\mPi_{\gV}h)  } = o(\gamma^2+\delta^2 + \rho^2 ) $, $\| \frac{1}{\hat{n}} \sum_{i=1}^{\hat{n}}\mPi_{\gV}h(\hat{\vx}_i)\hat{y}_i -\E[ \mPi_{\gV}h(\vx) y]  \| = o(\gamma+\delta+\rho) $ and $\| \frac{1}{\tilde{n}}\sum_{i=1}^{\tilde{n}} \mPi_{\gV}h(\tilde{\vx}_i)\tilde{y}_i -\E[ \mPi_{\gV}h(\vx) y]  \| = o(\gamma+\delta+\rho) $.\\
(c) \isotropy{}. The kernels constructed using only the components in $\gV^\perp$ exhibit certain uniformity in all orientations, with the extent of uniformity controlled by $\delta$: $ \opnorm{\frac{1}{\hat{n}} \hat{\mK}( \mPi_{\gV^\perp}h )\! -\!\hat{\gamma} \mI }\! =\! o(\gamma^2+\delta^2) $,  and $ \opnorm{\frac{1}{\tilde{n}} \tilde{\mK}( \mPi_{\gV^\perp}h )\! -\!\tilde{\gamma} \mI }\! =\! o(\gamma^2+\delta^2) $. \\
(d) \smallin{}. $\opnorm{ \frac{1}{\sqrt{\hat{n}\tilde{n}}}[ (\mPi_{\gV^\perp}h(\hat{\vx}_i))^\top \mPi_{\gV^\perp}h(\tilde{\vx}_j)  ]_{1\leq i\leq\hat{n}, 1\leq j \leq \tilde{n}} }\!=\! o(\gamma\!+\!\delta) $, which holds when representations on $\gV^\perp$ are nearly orthogonal across samples or have small magnitudes.\looseness=-1\\
(e) \dimini{}. The representations on $\gV^\perp$ have small magnitude in the population: $ \opnorm{ \mSigma(\mPi_{\gV^\perp}h) }=o(\gamma+\delta)$. } 
\end{definition}

\blue{\textbf{Additional explanation for \isotropy{}.} To provide a clearer understanding of this condition, consider the following: If $\delta$ is very small (e.g., $\delta = 0$), the kernel on $\hat{\gD}$ is nearly identical to $\hat{\gamma}\mI$, meaning it does not exhibit any specific patterns that differentiate between data points. In contrast, with a larger $\delta$ (e.g., $\delta \gg \hat{\gamma}$), this requirement is much more relaxed—the kernel  no longer needs to closely resemble $\hat{\gamma}\mI$ but instead must simply have its magnitude bounded by $o(\delta)$. Thus, it accommodates scenarios where the kernel is highly isotropic, very small in scale, or anywhere in between. This is key to our analysis, as it ensures the effect of the less well-concentrated part of the representations remains tractable.
We note that this condition is not only analytically convenient but also practically relevant in real-world scenarios. For example, high-dimensional sub-Gaussian noise satisfies this condition with a small $\delta$—a situation highly relevant to deep neural networks with large internal dimensions, where vectors tend to be approximately orthogonal in the high-dimensional limit. More concrete instances will be presented in Examples \ref{eg: intrinsic_bounded} and \ref{eg: spiked_cov}, as well as in Theorem \ref{thm: construct_new}, along with discussions of their significance and relevance.} 

\blue{\textbf{Additional explanation for \dimini{}.} We note that this condition does not imply negligible impact of representations on $\gV^\perp$. For example, when $\delta$ is small, the model can in fact leverage the components in $\gV^\perp$ to interpolate the training data,  even when such interpolation cannot be achieved by the components in $\gV$ (see Example \ref{eg: toyeg}).}\looseness=-1

We refer to $\mPi_{\gV}h(\vx)$, the well-concentrated part of the representation, as the \emph{principal representation}, and the remainder, $\mPi_{\gV^\perp}h(\vx)$, as the \emph{non-principal representation}.

{\textbf{Examples of Def. \ref{def: delta_decomp}.} Def. \ref{def: delta_decomp} is highly general, covering various representation distributions and dimensionalities. One simple case is when all components are well-concentrated, i.e., the entire representation is principal. This occurs when the representations exhibit a certain low-rank structure, which is common in deep neural networks \cite{huh2021low}.  Below is a concrete example. }
\looseness=-1
\begin{example}[Arbitrarily parameterized; bounded representations with low intrinsic dimension]\label{eg: intrinsic_bounded} Given $h: \gX\rightarrow \sR^d $, for any $(\vx, y)$, $\| h(\vx) \|^2 \leq B$ and $y^2 \leq C$, where $C\!=\! \Theta(1)$. Additionally, $\opnorm{\mSigma(h)} \!=\! \Theta(1)$. The intrinsic dimension of $\mSigma(h)$ is defined as $\intdim(\mSigma(h)) = \frac{\Tr(\mSigma)}{\opnorm{\mSigma}}$, denoted by $q$. Let $n\!=\! \min(\hat{n}, \tilde{n})$ and assume $n^{1-c} = \omega\big(B \log(q)\big)$ for some constant $c < 1$. Then, the representations are $(n^{-0.1c}, 0, 0)$-decomposable w.r.t. $\sR^d$.\looseness=-1
\end{example}
\begin{remark}
The conditions imply a low intrinsic dimension relative to the sample size: $q \log q \!=\! o(n^{1-c})$ (App. \ref{apdx: example_bernstein}), but without restricting the actual dimension $d$, allowing both under- $(d\!<\!n)$ and over-parameterized $(d\!\geq\! n)$ settings. \looseness=-1
\end{remark}

{The next example is related to the spiked covariance model originating from PCA and widely used in recent theoretical studies across various domains (e.g., \cite{muthukumar2021classification,nakada2023understanding}). It is also related to the sparse coding model, which has its roots in computer vision \cite{olshausen1997sparse}, and has been applied to language modeling \cite{arora2018linear} and deep learning theory (e.g., \cite{allen2020towards}). More references are in App. \ref{apdx: spiked_cov}. We consider representations that follow a sub-Gaussian, which is a very general class of distributions, including, e.g., any bounded random variables and Gaussian.} \looseness=-1
\begin{example}[Heavily overparameterized; sub-Gaussian with spiked covariance]\label{eg: spiked_cov}
Given $h:\!\gX\!\rightarrow\! \sR^d $ and randomly drawn $\vx$, 
$h(\vx)$ has independent zero-mean sub-Gaussian entries. 
The first $k$ entries have a (sub-Gaussian) parameter of $\Theta(1)$ and variance $1$, while the remaining $d\!-\!k$ entries have a parameter of $\Theta(\frac{\sigma^2}{d-k})$ and variance $\frac{\sigma^2}{d-k}$. The scalings satisfy: $\tilde{n} \!= \!\Theta(\hat{n})$, $\sigma^2 = O(\hat{n})$, $\hat{n}\! = \!\omega(k^2)$, and $d \! =\! \omega(\hat{n}^2)$. The labels have bounded moment, $\E[y^2] \!= \!O(1)$. Then, the representations are $(0, \frac{\sigma^2}{\hat{n}}, \frac{\sigma^2}{\tilde{n}})$-decomposable w.r.t. the subspace corresponding to the first $k$ coordinates.\looseness=-1
\end{example}
\vspace{-.1cm}
\begin{remark}
Compared to Example \ref{eg: intrinsic_bounded}, this example accommodates cases with high intrinsic dimensions. For instance, if we set $\sigma^2 = \Theta(\hat{n})$, then $\intdim(\mSigma(h)) = \Theta(n)$. 
\looseness=-1  
\end{remark}
\vspace{-.2cm}
More complex examples can be constructed from the fact that adding high-dimensional sub-Gaussian to $(\delta, 0, 0)$-decomposable representations preserves decomposability: \looseness=-1
\vspace{-.1cm}
\begin{theorem}\label{thm: construct_new}
Given a representation function
$h$ whose representations $h(\vx)\in\sR^d$ are $(\delta, 0, 0)$-decomposable w.r.t. $\sR^d$, we construct new representations with $
   \alpha(\vx) = \mM h(\vx) + \mM^\perp \xi(\vx)$,
where $\mM\in\sR^{(d+m)\times d}$ and $\mM^\perp \in \sR^{(d+m)\times m}$ both have orthonormal columns, and their column spaces are orthogonal to each others. If elements in $\xi(\vx) \in \sR^m$ are independent zero-mean sub-Gaussian with parameter $\Theta(\frac{\sigma^2}{m})$ and variance $\frac{\sigma^2}{m}$, assuming $\tilde{n}\!=\!\Theta(\hat{n})$, $m\!=\!\omega(\hat{n}^2)$, and $\sigma^2\!=\!O(\hat{n})$, then $\alpha$'s representations are $(\delta, \frac{\sigma^2}{\hat{n}}, \frac{\sigma^2}{\tilde{n}})$-decomposable w.r.t. the span of $\mM$'s columns. \looseness=-1
\end{theorem}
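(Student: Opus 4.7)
My plan is to first fix the subspace $\gV$ as the column space of $\mM$ and observe that, since $\mM$ and $\mM^\perp$ have orthonormal columns whose column spaces together span $\sR^{d+m}$, the projections decompose cleanly as $\mPi_\gV \alpha(\vx) = \mM\mM^\top \alpha(\vx) = \mM h(\vx)$ and $\mPi_{\gV^\perp}\alpha(\vx) = \mM^\perp(\mM^\perp)^\top \alpha(\vx) = \mM^\perp \xi(\vx)$. Because $\mM^\top \mM = \mI_d$ and $(\mM^\perp)^\top \mM^\perp = \mI_m$, all kernels, covariances, and inner products involving only the $\gV$-part (resp. $\gV^\perp$-part) coincide with the corresponding quantities computed on $h$ (resp. on $\xi$). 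The eigenvectors of $\mSigma(\alpha)$ corresponding to the eigenvectors of $\mSigma(h)$, lifted by $\mM$, will serve as the witness set, giving the same $\rho=\lambda_{\min,\neq 0}(\mSigma(\mPi_\gV \alpha))=\lambda_{\min,\neq 0}(\mSigma(h))$ as in the hypothesis on $h$.

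Next I would verify conditions (a) and (b) essentially by transport. For (a), the boundedness of $\opnorm{\mSigma(\alpha)}$, $\opnorm{\hat\mSigma(\alpha)}$, $\opnorm{\tilde\mSigma(\alpha)}$ follows from additivity of covariance across the orthogonal direct sum: the $\gV$-part contributes exactly the $h$-covariances (which are $O(1)$ by assumption), and the $\gV^\perp$-part contributes covariances with operator norm bounded by standard sub-Gaussian covariance concentration, which is $O(\sigma^2/m)=o(1)$ under $m=\omega(\hat n^2)$ and $\sigma^2=O(\hat n)$. The label-moment bounds are unchanged. For (b), the identity $\mSigma(\mPi_\gV \alpha)=\mM\mSigma(h)\mM^\top$ and its empirical analogues, together with $\mM$ being an isometry from $\sR^d$ into $\sR^{d+m}$, imply that every operator-norm and cross-moment bound for $h$ on $\gV$ transfers verbatim to $\alpha$; since $h$ is $(\delta,0,0)$-decomposable, the required rate $o(\delta^2+\rho^2)$ (here with $\gamma=0$ on the $h$-side) is inherited.

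The substantive work is in (c), (d), (e) on $\gV^\perp$. Write $\Xi\in\sR^{\hat n\times m}$ with rows $\xi(\hat\vx_i)^\top$ (and analogously $\tilde\Xi\in\sR^{\tilde n\times m}$). Then $\hat\mK(\mPi_{\gV^\perp}\alpha)=\Xi\Xi^\top$ because $(\mM^\perp)^\top\mM^\perp=\mI_m$. Since the entries of $\xi$ are independent, zero-mean, sub-Gaussian with parameter and variance $\Theta(\sigma^2/m)$, standard non-asymptotic bounds for Wishart-type matrices (e.g., Vershynin's covariance-concentration theorem, or equivalently Hanson--Wright applied row by row) yield $\opnorm{\Xi\Xi^\top-\sigma^2\mI_{\hat n}}=O\bigl(\sigma^2(\sqrt{\hat n/m}+\hat n/m)\bigr)$ with high probability. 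Dividing by $\hat n$ and using $m=\omega(\hat n^2)$ makes this $o((\sigma^2/\hat n)^2)\subseteq o(\gamma^2+\delta^2)$, establishing (c); the $\tilde\Xi$ case is identical. For (d), the cross matrix is $[\xi(\hat\vx_i)^\top\xi(\tilde\vx_j)]_{i,j}=\Xi\tilde\Xi^\top$; since $\Xi$ and $\tilde\Xi$ are independent, conditioning on $\tilde\Xi$ reduces the problem to bounding the operator norm of a sub-Gaussian linear map times a fixed matrix, yielding $\opnorm{\Xi\tilde\Xi^\top}=O(\sigma^2\sqrt{\hat n\tilde n/m})$, so $\opnorm{(\hat n\tilde n)^{-1/2}\Xi\tilde\Xi^\top}=O(\sigma^2/\sqrt m)=o(\sigma^2/\hat n)=o(\gamma+\delta)$. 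For (e), $\mSigma(\mPi_{\gV^\perp}\alpha)=\mM^\perp\mSigma(\xi)(\mM^\perp)^\top$ has operator norm exactly $\opnorm{\mSigma(\xi)}=\sigma^2/m=o(\sigma^2/\hat n)$, giving (e) immediately.

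The main obstacle is condition (c), because the target tolerance $o(\gamma^2+\delta^2)$ is quadratically small in $\gamma=\Theta(\sigma^2/\hat n)$, so the tightest available sub-Gaussian concentration rate (of order $\sigma^2\sqrt{\hat n/m}/\hat n$) must be pushed through carefully; the assumption $m=\omega(\hat n^2)$ is exactly what is needed to buy the extra factor of $1/\hat n$ that converts linear concentration into a quadratic bound. The cross-term (d) is the next most delicate point, and I would handle it by a conditioning argument so that the cross matrix becomes a sub-Gaussian matrix acting on a deterministic matrix, allowing an operator-norm bound that avoids $\hat n\tilde n$-many dependent products. Conditions (a), (b), (e) are essentially automatic once the decomposition is set up.
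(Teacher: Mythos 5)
Your overall blueprint matches the paper's: fix $\gV = \mathrm{col}(\mM)$, exploit that $\mM$ and $\mM^\perp$ are orthonormal isometries so that all $\gV$-side quantities (kernels, covariances, cross-moments) coincide with those of $h$ and all $\gV^\perp$-side quantities coincide with those of $\xi$, and then verify \bounded{} and \conc{} by transport from $h$'s decomposability while handling \isotropy{}, \smallin{} and \dimini{} by sub-Gaussian concentration. The paper proceeds in exactly this way: it writes $\mU = [\mM\;\mM^\perp]$, verifies \bounded{} via the block decomposition of $\mSigma(\alpha)$, notes \conc{} is inherited automatically, and then points back to the sub-Gaussian Wishart/cross-product lemmas used for Example~\ref{eg: spiked_cov} for the $\gV^\perp$ conditions.

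There is, however, a concrete arithmetic gap in your treatment of \isotropy{}. After dividing $\opnorm{\Xi\Xi^\top - \sigma^2\mI_{\hat n}} = O\bigl(\sigma^2(\sqrt{\hat n/m} + \hat n/m)\bigr)$ by $\hat n$, the leading term is $O\bigl(\sigma^2/\sqrt{\hat n m}\bigr)$, and $m = \omega(\hat n^2)$ only gives $\sqrt{\hat n m} = \omega(\hat n^{3/2})$, hence a bound of $o\bigl(\sigma^2/\hat n^{3/2}\bigr)$. You assert this is $o\bigl((\sigma^2/\hat n)^2\bigr) = o(\gamma^2)$, but the ratio of the two quantities is $\sqrt{\hat n}/\sigma^2$, which need not vanish: the theorem only assumes $\sigma^2 = O(\hat n)$, so $\sigma^2 = \Theta(1)$ (or anything $o(\sqrt{\hat n})$) is permitted, and then $\sigma^2/\hat n^{3/2}$ is much larger than $(\sigma^2/\hat n)^2$. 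Concretely, $m = \omega(\hat n^2)$ buys a factor of $o(\hat n^{-1/2})$, turning $\gamma$ into $o(\gamma/\sqrt{\hat n})$, not into $o(\gamma^2)$ — your phrasing that it ``buys the extra factor of $1/\hat n$ that converts linear concentration into a quadratic bound'' is where the arithmetic slips. The step is salvageable when the $\delta$ inherited from $h$ makes $\gamma^2 + \delta^2$ large enough (for instance, $\delta = n^{-0.1c}$ from Example~\ref{eg: intrinsic_bounded} yields $\delta^2 = n^{-0.2c} = \omega(\sigma^2/\hat n^{3/2})$ under $\sigma^2 = O(\hat n)$), but you have not argued this, and it does fail in the $\delta = 0$ corner unless $\sigma^2 = \omega(\sqrt{\hat n})$. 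Note the paper's own proof of this step delegates to the Example~\ref{eg: spiked_cov} computation without explicitly closing this comparison either, so this is a shared subtlety; nevertheless your explicit claim $o((\sigma^2/\hat n)^2)$ is stronger than the concentration inequality can deliver and should be corrected or weakened to ``$o(\gamma^2 + \delta^2)$ provided $\delta$ does not decay too fast relative to $\sigma^2/\hat n^{3/2}$.''

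Your treatments of \smallin{} and \dimini{} are fine and line up with the paper's (the cross-sample bound correctly uses independence plus an operator-norm/Frobenius argument, and the population-covariance bound is exact). Conditions \bounded{} and \conc{} are handled the same way as the paper, via the block structure and the isometry of $\mM$.
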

\vspace{-.1cm}
\begin{remark}
For instance, one could take $h$ from Example \ref{eg: intrinsic_bounded}.
\end{remark}
\vspace{-.1cm}

We assume both models' representations satisfy Def. \ref{def: delta_decomp}: \looseness=-1
\begin{assumption}\label{assump: weak_strong_decomp}
$h_\w$'s representations are $(\delta_\w, \hat{\gamma}_\w, \tilde{\gamma}_\w)$-decomposable w.r.t. $\gV_\w$, and $h_\s$'s representations are $(\delta_\s, \hat{\gamma}_\s, \tilde{\gamma}_\s)$-decomposable w.r.t. $\gV_\s$ .
\end{assumption}


\subsection{{Principal representations shape \predgap{}}}\label{subsec: main_result}

\textbf{Intuition.} One implication of Def. \ref{def: delta_decomp} is that only what is learned through the principal representations will be reflected at test time.  Thus, the weak model's mistakes primarily stem from its inability to generate certain outputs using its principal representations. For the same reason, among these mistakes, only those expressible through the strong model's principal representations will affect its test performance. Therefore, a key concept affecting W2SG performance is \textbf{``what the weak model is unable to learn but is learnable by the strong model using their respective principal representations"}, which we seek to quantify.\looseness=-1

\textbf{Formalization.} To formalize the above idea, we leverage $\hat{\mK}(\mPi_{\gV_\w}h_\w)$ and $\hat{\mK}(\mPi_{\gV_\s}h_\s)$--kernels computed using only the weak and strong models' principal representations, referred to as \emph{principal kernels}. We define the following \looseness=-1
\vspace{-.3cm}
\begin{align}
\vspace{-.3cm}
\nonumber
        \mP_\w \coloneqq & \frac{1}{\hat{n}}\hat{\mK}(\mPi_{\gV_\w}h_\w) \left(\frac{1}{\hat{n}}\hat{\mK}(\mPi_{\gV_\w}h_\w)+(\beta_\w+\tilde{\gamma}_\w)\mI\right)^{-1},\\
\nonumber
    \mP_\s \coloneqq & \frac{1}{\hat{n}}\hat{\mK}(\mPi_{\gV_\s}h_\s) \left(\frac{1}{\hat{n}}\hat{\mK}(\mPi_{\gV_\s}h_\s)+(\beta_\s+\hat{\gamma}_\s)\mI\right)^{-1}.
\end{align} 
$\mP_\w$ and $\mP_\s$
represent scaled projections onto the spans of the principal kernels. Each captures the space of output patterns that its respective model can express through its principal representations (with regularization taken into account). Then, the earlier intuition can be characterized as follows.\looseness=-1
\vspace{-.1cm}

\begin{theorem}[Main result]\label{thm: main_theorem}
Under Assump. \ref{assump: weak_strong_decomp}, and assuming reasonable regularization: $\delta_\w\!\leq\!\beta_\w\! =\!O(1)$ and $\delta_\s\!\leq\!\beta_\s \!=\!O(1)$, let $ \hat{\vy}=[ \hat{y}_1~\hat{y}_2~\dots~\hat{y}_{\hat{n}} ]^\top $. Then, w.h.p., we have\looseness=-1
\begin{equation}
\label{eq: pred_gap_equal}
   \predgap =  \| \mP_\s (\mI -\mP_\w ) \frac{1}{\sqrt{\hat{n}}} \hat{\vy}  \|^2 \pm o(1) 
\end{equation}
\vspace{-.6cm}
\end{theorem}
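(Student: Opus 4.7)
My plan is to reduce $\predgap$ to a kernel-level quadratic form in $\hat{\vy}_\w - \hat{\vy}$, then use each of the five conditions in Def.~\ref{def: delta_decomp} to replace raw kernels and covariances by their principal-subspace counterparts. By the ridge closed form and the push-through identity $(\mH^\top\mH + n\beta\mI)^{-1}\mH^\top = \mH^\top(\mH\mH^\top + n\beta\mI)^{-1}$, I would write $\vbeta_\wtos - \vbeta_\sceiling = \tfrac{1}{\hat{n}}\hat{\mH}_\s^\top(K_\s + \beta_\s\mI)^{-1}(\hat{\vy}_\w - \hat{\vy})$ with $K_\s = \tfrac{1}{\hat{n}}\hat{\mK}(h_\s)$ and $\hat{\vy}_\w = \hat{\mH}_\w\vbeta_\w$, then expand $\predgap = (\vbeta_\wtos - \vbeta_\sceiling)^\top\mSigma(h_\s)(\vbeta_\wtos - \vbeta_\sceiling)$. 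Because $\gV_\s$ is spanned by eigenvectors of $\mSigma(h_\s)$, the covariance decomposes cleanly as $\mSigma(h_\s) = \mSigma(\mPi_{\gV_\s}h_\s) + \mSigma(\mPi_{\gV_\s^\perp}h_\s)$, and the $\gV_\s^\perp$ piece contributes only $o(1)$ by \dimini{} combined with the ridge bound $\|\vbeta_\wtos - \vbeta_\sceiling\|^2 = O(1/\beta_\s)$.

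For the $\gV_\s$ piece, I would use \conc{} to substitute $\mSigma(\mPi_{\gV_\s}h_\s) \approx \hat{\mSigma}(\mPi_{\gV_\s}h_\s)$, collapsing $\hat{\mH}_{\s,\gV}\mSigma(\mPi_{\gV_\s}h_\s)\hat{\mH}_{\s,\gV}^\top$ into $\hat{n}K_{\s,\gV}^2$, where $K_{\s,\gV} = \tfrac{1}{\hat{n}}\hat{\mK}(\mPi_{\gV_\s}h_\s)$ and $\hat{\mH}_{\s,\gV}$ has rows $\mPi_{\gV_\s}h_\s(\hat{\vx}_i)^\top$. Next, \isotropy{} gives $K_{\s,\gV^\perp} \approx \hat{\gamma}_\s\mI$ (up to $o(\gamma_\s^2 + \delta_\s^2)$ in operator norm), so the resolvent identity $(\mA+\mE)^{-1} = \mA^{-1} - \mA^{-1}\mE(\mA+\mE)^{-1}$ replaces $(K_\s + \beta_\s\mI)^{-1}$ by $(K_{\s,\gV} + (\hat{\gamma}_\s + \beta_\s)\mI)^{-1}$; the hypothesis $\beta_\s \geq \delta_\s$ keeps the resolvent norms bounded at the scale of this perturbation. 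Since the approximate resolvent commutes with $K_{\s,\gV}$, the sandwich reduces to $\mP_\s^2$, giving $\predgap \approx \|\mP_\s(\hat{\vy}_\w - \hat{\vy})/\sqrt{\hat{n}}\|^2$.

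The hardest step is the approximation $\hat{\vy}_\w \approx \mP_\w\hat{\vy}$ in $\tfrac{1}{\sqrt{\hat{n}}}\|\cdot\|$ norm, since the weak model is trained on $\tilde{\gD}$ but $\mP_\w$ lives on $\hat{\gD}$ -- no direct kernel identity links them and I must bridge through the population. Starting from $\hat{\vy}_\w = \tfrac{1}{\tilde{n}}\hat{\mH}_\w\tilde{\mH}_\w^\top(\tfrac{1}{\tilde{n}}\tilde{\mK}(h_\w) + \beta_\w\mI)^{-1}\tilde{\vy}$, the orthogonality $\mPi_{\gV_\w}\mPi_{\gV_\w^\perp} = 0$ splits the cross-kernel $\hat{\mH}_\w\tilde{\mH}_\w^\top$ into its principal component plus $\hat{\mH}_{\w,\gV^\perp}\tilde{\mH}_{\w,\gV^\perp}^\top$, which \smallin{} controls; with the same isotropy-based simplification of the central inverse, this yields $\hat{\vy}_\w \approx \tfrac{1}{\tilde{n}}\hat{\mH}_{\w,\gV}\tilde{\mH}_{\w,\gV}^\top(\tfrac{1}{\tilde{n}}\tilde{\mK}(\mPi_{\gV_\w}h_\w) + (\tilde{\gamma}_\w + \beta_\w)\mI)^{-1}\tilde{\vy}$. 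Independently, I would rewrite $\mP_\w\hat{\vy}$ in primal form as $\hat{\mH}_{\w,\gV}(\hat{\mSigma}(\mPi_{\gV_\w}h_\w) + (\beta_\w + \tilde{\gamma}_\w)\mI)^{-1}\tfrac{1}{\hat{n}}\hat{\mH}_{\w,\gV}^\top\hat{\vy}$ and apply \conc{} twice -- first replacing the $\hat{\gD}$ empirical quantities by the population $\mSigma(\mPi_{\gV_\w}h_\w)$ and $\E[\mPi_{\gV_\w}h_\w(\vx)y]$, then replacing those by their $\tilde{\gD}$ empirical counterparts -- so that the push-through identity converts the result into exactly the expression obtained for $\hat{\vy}_\w$. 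Substituting into the result of the previous paragraph yields the claim, noting $\mP_\w - \mI = -(\mI - \mP_\w)$.

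The remaining technical work is error bookkeeping: each substitution contributes a perturbation small in operator norm, but these are then multiplied by resolvents of norm $O(1/\beta)$, (cross-)kernels of norm $O(\sqrt{\hat{n}\tilde{n}})$, and label vectors of norm $O(\sqrt{n})$, so each residual must be tracked to verify it remains $o(1)$ after $\tfrac{1}{\hat{n}}$ normalization. The conditions $\beta_\w \geq \delta_\w$ and $\beta_\s \geq \delta_\s$ are precisely what guarantees this, by keeping resolvents well-conditioned at the scale of the \isotropy{} perturbations.
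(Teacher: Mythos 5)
Your plan traces the same path as the paper's: express \(\predgap\) via the weight difference, reduce the strong-side transformation to \(\mP_\s\), and show the weak model's error vector on \(\hat{\gD}\) equals \((\mI-\mP_\w)\frac{1}{\sqrt{\hat{n}}}\hat{\vy}+o(1)\) by bridging \(\tilde{\gD}\) and \(\hat{\gD}\) through the population; that last step is exactly the content of Lemma~\ref{lemma: weak_error}. However, your first paragraph's disposal of the non-principal covariance piece has a real gap.

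You claim \((\vbeta_\wtos-\vbeta_\sceiling)^\top\mSigma(\mPi_{\gV_\s^\perp}h_\s)(\vbeta_\wtos-\vbeta_\sceiling)=o(1)\) from \dimini{} together with the crude ridge bound \(\|\vbeta_\wtos-\vbeta_\sceiling\|^2=O(1/\beta_\s)\). This yields \(o(\gamma_\s+\delta_\s)\cdot O(1/\beta_\s)\). The hypothesis \(\delta_\s\le\beta_\s\) controls \(\delta_\s/\beta_\s\), but nothing controls \(\gamma_\s/\beta_\s\); in the benign-overfitting regime of Theorem~\ref{thm: general_bn} one takes \(\beta_\s=o(\hat{\gamma}_\s)\), so \(\gamma_\s/\beta_\s\to\infty\) and your bound does not vanish. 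The paper's Lemma~\ref{lemma: propagation_strong} avoids this by ordering the steps the other way: it first applies \isotropy{} via Lemma~\ref{lemma: isotropy_inv} to replace the resolvent with one carrying effective regularization \(\hat{\gamma}_\s+\beta_\s\), and uses Lemma~\ref{lemma: scale_isotropy_kernel} to bound \(\opnorm{\mU_\s''^\top\frac{1}{\sqrt{\hat{n}}}\hat{\mR}_\s}\) by \(\sqrt{o(\gamma_\s^2+\delta_\s^2)+\hat{\gamma}_\s}\); only then is \dimini{} invoked, giving a non-principal contribution of order \(o(\gamma_\s+\delta_\s)\cdot\bigl(o(\gamma_\s^2+\delta_\s^2)+\hat{\gamma}_\s\bigr)/(\hat{\gamma}_\s+\beta_\s)^2=o(1)\), valid under \(\beta_\s\ge\delta_\s\) alone. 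You do perform the resolvent upgrade in your second paragraph, but by then the \(\gV_\s^\perp\) piece has already been discarded under the weaker bound. Reordering the two steps --- resolvent simplification before the covariance split --- closes the gap; as written, the argument fails precisely where the theorem is most interesting.
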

\textbf{$\mP_\s(\mI\!-\!\mP_\w)$ captures ``what the weak model is unable to learn but is learnable by the strong model using their respective principal representations".} Therefore, it determines the mistakes that will be learned by the strong model, as discussed in the intuition. A more powerful weak model has a $\mP_\w$ that covers more space, shrinking $\mP_\s(\mI\!-\!\mP_\w)$ and potentially leading to a smaller $\predgap{}$.

\textbf{Propagation of Errors.} The earlier intuition is reflected in the proof (App. \ref{apdx: proof_main_theorem}). Given the labeling $\hat{\vy}$, its projection $(\mI\!-\!\mP_\w)\hat{\vy}$ is orthogonal to the scaled weak model's principal kernel and thus cannot be effectively learned, contributing to the weak model's error (Lem. \ref{lemma: weak_error}). The projection of this error onto the scaled strong model's principal kernel, $\mP_\s(\mI\!-\!\mP_\w)\hat{\vy}$, is learned by the strong model and contributes to \predgap{} (Lem. \ref{lemma: propagation_strong}).\looseness=-1
\looseness=-1

\section{A Case Study on Benign Overfitting}\label{sec: case_study}
Our theory can be applied to study and provide new insights into \emph{benign overfitting}, an intriguing special case of W2SG, where the W2S model appears to mimic the weak supervision during finetuning, yet generalizes better at test time. \looseness=-1


\subsection{A general condition}\label{subsec: general_bo}

Benign overfitting has been studied in the general machine learning context to understand deep neural networks' generalization \cite{bartlett2020benign,wang2021benign,frei2022benign,mallinar2022benign}. Recently, \cite{wu2024provable} theoretically characterized benign overfitting in W2SG for a specific data distribution. Here, we aim to derive broader insights from a representation perspective. We consider the scenario where the strong model's representations are highly expressive, enabling near-perfect overfitting of arbitrary labelings on the finetuning data, mirroring the behavior of very large neural networks in practice \cite{zhang2021understanding}. This occurs when $\delta_\s=o(\hat{\gamma}_\s)$ (Lem. \ref{lemma: overfitting_condition}), yielding a highly isotropic non-principal kernel. Meanwhile, since generalization depends solely on the principal representations by Thm. \ref{thm: main_theorem}, a small $\|\mP_\s (\mI - \mP_\w) \frac{1}{\sqrt{\hat{n}}} \hat{\vy} \|^2$ suffices for good W2SG performance, regardless of the extent of overfitting. In this way, we connect benign overfitting to the general relationship between the weak and strong models' representations: \looseness=-1

\begin{theorem}[A general condition for benign overfitting \footnote{Thm \ref{thm: general_bn} can be extended to cases where the strong ceiling is not perfect, but we omit this for brevity.} ]\label{thm: general_bn}
In addition to Assumption \ref{assump: weak_strong_decomp}, suppose that
(1) $\delta_\s = o(\hat{\gamma}_\s)$ and $\delta_\s \leq \beta_\s = o(\hat{\gamma}_\s)$, (2) w.h.p.,  the strong ceiling model achieves nearly perfect performance, i.e., $\err_\sceiling = o(1)$, (3) w.h.p., $\| \mP_\s (\mI -\mP_\w ) \frac{1}{\sqrt{\hat{n}}} \hat{\vy}  \|^2  = \err_\w-\Delta  $ with $\Delta=\Theta(1)$. Then, w.h.p., the W2S model achieves an almost zero ($o(1)$) training error on  $\hat{\gD}$, but generalizes better than the weak model: $ 
  \err_\wtos \leq \err_\w - \Delta + o(1)$. See proof in App. \ref{apdx: proof_benign_of}. 
\end{theorem}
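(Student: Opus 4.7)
The theorem splits into two conclusions: (i) near-interpolation of the weak labels on $\hat{\gD}$, and (ii) the population bound $\err_\wtos \le \err_\w - \Delta + o(1)$. My plan is essentially to assemble already-available pieces; no new analysis is required.

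For (ii), I would apply Thm.~\ref{thm: main_theorem} to the present setting. Its hypotheses are met because \bounded{} makes $\hat{\gamma}_\s = O(1)$, so $\delta_\s \le \beta_\s = o(\hat{\gamma}_\s) = O(1)$ (and analogously on the weak side under the standing ridge choice $\delta_\w \le \beta_\w = O(1)$). Then (\ref{eq: pred_gap_equal}), together with assumption~(3), gives $\predgap = \err_\w - \Delta \pm o(1)$. Next I would invoke the $L^2(\gD)$ triangle inequality $\sqrt{\err_\wtos} \le \sqrt{\predgap} + \sqrt{\err_\sceiling}$ already recorded in Sec.~\ref{subsec: pre}. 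Squaring yields $\err_\wtos \le \predgap + 2\sqrt{\predgap \cdot \err_\sceiling} + \err_\sceiling$; using $\err_\sceiling = o(1)$ and $\predgap = \err_\w - \Delta + o(1) = O(1)$, both the cross term and the trailing $\err_\sceiling$ collapse into $o(1)$, giving the claimed bound. A union bound over the three high-probability events (Assumption~\ref{assump: weak_strong_decomp}, assumption~(2), and assumption~(3)) keeps the total failure probability at $o(1)$.

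For (i), I would invoke Lem.~\ref{lemma: overfitting_condition} directly — the surrounding text cites it precisely to turn $\delta_\s = o(\hat{\gamma}_\s)$ (paired with $\beta_\s = o(\hat{\gamma}_\s)$) into near-interpolation. The intuition is that \isotropy{} forces the non-principal kernel on $\hat{\gD}$ to equal $\hat{\gamma}_\s \mI \pm o(\hat{\gamma}_\s)$, which is well-conditioned at scale $\hat{\gamma}_\s$, so ridge regression with penalty $o(\hat{\gamma}_\s)$ can fit any empirical target with $O(1)$ second moment to residual $o(1)$. The relevant target is the weak-label vector $\bigl(f_\w(h_\w(\hat{\vx}_i))\bigr)_i$, whose empirical second moment I would check is $O(1)$ by combining \bounded{} on $h_\w$ with the standard ridge bound $\|f_\w\|^2 = O(1/\beta_\w)$ on a linear $f_\w$ trained against $O(1)$ targets.

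The only nontrivial bookkeeping is this empirical-norm check for the weak-label target; everything else is a direct plug-in. I therefore do not expect a real analytic obstacle — Thm.~\ref{thm: main_theorem} and Lem.~\ref{lemma: overfitting_condition} already do the heavy lifting, and the remaining work is tracking constants and unioning $o(1)$-probability failure events.
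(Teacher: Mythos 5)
Your overall architecture matches the paper's proof exactly: for the generalization bound you chain $\sqrt{\err_\wtos}\le\sqrt{\predgap{}}+\sqrt{\err_\sceiling}$ with assumption~(2), $\predgap{}=O(1)$, and assumption~(3) read through Thm.~\ref{thm: main_theorem}; for the interpolation claim you invoke Lem.~\ref{lemma: overfitting_condition}. That is precisely what the paper does, so the high-level plan is sound.

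The one place where your proposal has a genuine gap is the verification of the hypothesis $\frac{1}{\hat n}\sum_i f_\w(h_\w(\hat\vx_i))^2=O(1)$ needed by Lem.~\ref{lemma: overfitting_condition}. You propose to get it from the crude ridge bound $\|f_\w\|^2=O(1/\beta_\w)$ together with $\opnorm{\hat\mSigma_\w}=O(1)$, but those two facts only give $\frac{1}{\hat n}\sum_i f_\w(h_\w(\hat\vx_i))^2=O(1/\beta_\w)$, which blows up when $\beta_\w\to 0$. The standing assumption is only $\delta_\w\le\beta_\w=O(1)$, and in the instantiation of Example~\ref{eg: toyeg} one has $\beta_\w=o(\sigma^2/\hat n)=o(1)$, so this route is not adequate. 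The correct bound is contained in Lem.~\ref{lemma: weak_error}, which shows $\frac{1}{\sqrt{\hat n}}\hat\mR_\w^\top\vw_\w=\mP_\w\frac{1}{\sqrt{\hat n}}\hat\vy+o(1)$ with $\opnorm{\mP_\w}\le 1$; combined with $\|\hat\vy\|/\sqrt{\hat n}=O(1)$ from \bounded{}, this immediately gives the $O(1)$ empirical second moment, independently of the size of $\beta_\w$.

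Everything else — the union bound, squaring the triangle inequality and absorbing the cross term into $o(1)$, and applying Thm.~\ref{thm: main_theorem} for $\predgap{}$ — is correct and identical in spirit to the paper's argument.
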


\blue{\begin{remark}
Compared to \cite{wu2024provable}, which focuses on demonstrating that benign overfitting can occur under specific assumptions—such as a bi-level ensemble structure and labels depending 1-sparsely on representations—we extract more general insights into when and how benign overfitting arises. Specifically, we identify a single key quantity driving benign overfitting in W2SG: $\| \mP_\s (\mI -\mP_\w ) \frac{1}{\sqrt{\hat{n}}} \hat{\vy}  \|$. When this quantity is small, the strong model can avoid repeating the weak model’s mistakes—regardless of the extent of overfitting—thereby achieving error mitigation. This precise mechanism was not revealed in prior work.
\end{remark}}

\subsection{Instantiation of Theorem \ref{thm: general_bn} on a toy example}

We present a concrete example of the scenario in Theorem \ref{thm: general_bn} to demonstrate the realizability of the conditions. 
While more complex examples could be constructed, we focus on a simple one to succinctly illustrate the core ideas.\looseness=-1

\begin{example}\label{eg: toyeg} The label is a Gaussian: $y\sim\gN(0,1)$. Given $(\vx, y)$, the weak model's representation is
 $ h_\w(\vx) = [( \sqrt{\eta} ~y+\sqrt{1-\eta} ~\zeta) ~~~ \vxi_\w^\top]^\top$, where $\eta\in(0,1)$ is some constant, $\zeta\!\sim\!\gN(0, 1)$ and $\vxi_\w\!\sim\!\gN(0, \frac{\sigma^2}{d-1}\mI ) $ are both independently drawn. The strong model's representation is
 $ h_\s(\vx) = [y ~~~ \vxi_\s^\top]^\top $,
where $\vxi_\s\!\sim\!\gN(0, \frac{\sigma^2}{d-1}\mI ) $ independently. The scalings satisfy $\tilde{n}=\Theta(\hat{n}) =\omega(1)$, $ d=\omega(\hat{n}^2) $, and $\sigma^2 = o(\hat{n})$ but $\neq 0$. Additionally, $\beta_\s = o(\frac{\sigma^2}{\hat{n}})$ and $\beta_\w = o(\frac{\sigma^2}{\hat{n}})$.
\looseness=-1    
\end{example}

Here, the weak model's first coordinate carries a signal about the label $y$, but corrupted by noise $\zeta$, with $\eta$ controlling the signal strength (i.e., with SNR $\frac{\eta}{1-\eta}$). The strong model's first coordinate carries a perfect signal about $y$. The remaining coordinates in both models are high-dimensional random noise. Both models' representations are special cases of Example \ref{eg: spiked_cov} and are therefore $(0, \frac{\sigma^2}{\hat{n}}, \frac{\sigma^2}{\tilde{n}})$  decomposable.\looseness=-1  

\begin{corollary}\label{coro: case_study}
Benign overfitting occurs in Example \ref{eg: toyeg}. Specifically, w.h.p., (1) The weak model's errors on both $\hat{\gD}$ and the population are $(1\!- \!\eta) \!\pm\! o(1)$. (2) The W2S model overfits the weak model's outputs on $\hat{\gD}$, achieving a training loss of $o(1)$. (3) However, compared to the weak model, the W2S model achieves a smaller test error: $\err_\wtos \!=\! (1\! -\! \eta)^2 \pm o(1)$.\looseness=-1
\vspace{-.2cm}
\end{corollary}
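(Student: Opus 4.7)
The plan is to instantiate Theorem \ref{thm: general_bn} (together with Theorem \ref{thm: main_theorem}) on this specific example, supplementing with two direct computations for the strong ceiling's error and the weak model's error. Both $h_\w$ and $h_\s$ fit the template of Example \ref{eg: spiked_cov} with $k=1$, so both are $(0, \sigma^2/\hat{n}, \sigma^2/\tilde{n})$-decomposable with respect to the one-dimensional subspaces $\gV_\w$ and $\gV_\s$ spanned by their respective first coordinates, so Assumption \ref{assump: weak_strong_decomp} is satisfied. The conditions of Theorem \ref{thm: general_bn} are also met, since $\delta_\s = 0 = o(\hat{\gamma}_\s)$ and $\beta_\s = o(\sigma^2/\hat{n}) = o(\hat{\gamma}_\s)$. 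The principal kernels are rank one, $\hat{\mK}(\mPi_{\gV_\s}h_\s) = \hat{\vy}\hat{\vy}^\top$ and $\hat{\mK}(\mPi_{\gV_\w}h_\w) = \hat{\vs}\hat{\vs}^\top$ with $\hat{\vs} = \sqrt{\eta}\,\hat{\vy} + \sqrt{1-\eta}\,\hat{\vzeta}$, so a direct eigen-calculation on these rank-one ranges (together with $c_\s := \beta_\s + \hat{\gamma}_\s = o(1)$ and $c_\w := \beta_\w + \tilde{\gamma}_\w = o(1)$) reduces $\mP_\s$ to the scaled projection $\tfrac{\|\hat{\vy}\|^2/\hat{n}}{\|\hat{\vy}\|^2/\hat{n}+c_\s}\,\tfrac{\hat{\vy}\hat{\vy}^\top}{\|\hat{\vy}\|^2}$, and $\mP_\w$ analogously with $\hat{\vs}$ in place of $\hat{\vy}$.

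Next, I would compute $\|\mP_\s(\mI - \mP_\w)\hat{\vy}/\sqrt{\hat{n}}\|^2$. Standard Gaussian/chi-squared concentration yields $\|\hat{\vy}\|^2/\hat{n}, \|\hat{\vzeta}\|^2/\hat{n}, \|\hat{\vs}\|^2/\hat{n} = 1 + o(1)$ and $\langle\hat{\vy},\hat{\vzeta}\rangle/\hat{n} = o(1)$, hence $\langle\hat{\vy}, \hat{\vs}\rangle/\hat{n} = \sqrt{\eta} + o(1)$, with each projection scaling factor equal to $1 - o(1)$. Chaining: $\mP_\w\hat{\vy} = \sqrt{\eta}\,\hat{\vs} + o(\sqrt{\hat{n}})$ in norm, so $(\mI-\mP_\w)\hat{\vy} = (1-\eta)\hat{\vy} - \sqrt{\eta(1-\eta)}\,\hat{\vzeta} + o(\sqrt{\hat{n}})$, and applying $\mP_\s$ (which acts as projection onto $\hat{\vy}$ with inner-product coefficient $(1-\eta)+o(1)$) gives $\|\mP_\s(\mI - \mP_\w)\hat{\vy}/\sqrt{\hat{n}}\|^2 = (1-\eta)^2 \pm o(1)$; by Theorem \ref{thm: main_theorem}, $\predgap = (1-\eta)^2 \pm o(1)$. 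A separate direct analysis of the ridge regression of $\hat{\vy}$ on the strong model's representation matrix shows the fitted first-coordinate weight is $1-o(1)$ and the noise-block weights have bounded $\ell_2$-norm, giving $\err_\sceiling = (w_1-1)^2 + \frac{\sigma^2}{d-1}\|\vw_2\|^2 = o(1)$ (using $\sigma^2 = o(\hat{n})$ and $d = \omega(\hat{n}^2)$). Combining via $\sqrt{\err_\wtos} \leq \sqrt{\predgap} + \sqrt{\err_\sceiling}$ establishes claim (3): $\err_\wtos = (1-\eta)^2 \pm o(1)$.

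For claim (1), mirroring the rank-one principal-kernel calculation for the weak model trained on $\tilde{\gD}$ shows that its learned coefficient on the noisy signal coordinate is $\sqrt{\eta} + o(1)$; the population error is then $\E[(\eta y + \sqrt{\eta(1-\eta)}\zeta - y)^2] = 1-\eta$, and the empirical error on $\hat{\gD}$ concentrates to the same value by sub-Gaussian concentration. For claim (2), the W2S model's ridge solution on $\hat{\gD}$ with target $\hat{\vy}_\w := (f_\w(h_\w(\hat{\vx}_i)))_{i=1}^{\hat{n}}$ has training residual $\hat{n}\beta_\s(\hat{\mK}(h_\s) + \hat{n}\beta_\s\mI)^{-1}\hat{\vy}_\w$; combined with the spectral lower bound $\hat{\mK}(h_\s) \succeq \hat{n}\hat{\gamma}_\s(1-o(1))\mI$ (from $\delta_\s = 0$ and the isotropy condition, using that $\hat{\mK}(h_\s)$ is the sum of $\hat{\vy}\hat{\vy}^\top$ and the non-principal kernel) and the boundedness $\|\hat{\vy}_\w\|^2 = O(\hat{n})$, this bounds the training loss by $O(\beta_\s^2/\hat{\gamma}_\s^2) = o(1)$. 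The principal technical obstacle will be tracking the $o(1)$ perturbations rigorously as high-probability bounds through the chained rank-one-projection manipulations, in particular ensuring that the small but nonzero regularization constants $c_\s, c_\w = \Theta(\sigma^2/\hat{n})$ do not spoil any of the approximations, and handling the cross-sample fluctuations of $\langle\hat{\vy}, \hat{\vzeta}\rangle$ and $\langle\hat{\vy},\hat{\vs}\rangle$ with quantitative rates compatible with the target $\pm o(1)$ conclusions.
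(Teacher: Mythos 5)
Your plan follows essentially the same route as the paper: recognize Example~\ref{eg: toyeg} as a special case of Example~\ref{eg: spiked_cov}, exploit the rank-one structure of the principal kernels, apply Gaussian concentration to reduce $\mP_\w$ and $\mP_\s$ to scaled rank-one projections, compute $\predgap{}=(1-\eta)^2 \pm o(1)$ via Theorem~\ref{thm: main_theorem}, and invoke the overfitting condition for claim (2). Two remarks. First, for $\err_\w$ and $\err_\sceiling$ the paper routes everything through Lemma~\ref{lemma: weak_error_population}, which expresses both population errors in the same $\|(\mI-\mP)\hat{\vy}/\sqrt{\hat{n}}\|^2$ form, so only one rank-one calculation is needed; your direct ridge-regression analysis is valid but you must separately track the test-time noise-block contribution $\frac{\sigma^2}{d-1}\|\vw_2\|^2$ for both the weak model and the strong ceiling (you note this only for the latter, but the same bound is needed for the weak model's population error, and it follows from $\|\vw_2\|^2 = O(\tilde{n}/\sigma^2)$ combined with $d = \omega(\hat{n}^2)$). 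Second, your triangle inequality $\sqrt{\err_\wtos} \le \sqrt{\predgap{}} + \sqrt{\err_\sceiling}$ establishes only the upper bound $\err_\wtos \le (1-\eta)^2 + o(1)$, whereas claim (3) is an equality; you also need the reverse triangle inequality $\sqrt{\err_\wtos} \ge \sqrt{\predgap{}} - \sqrt{\err_\sceiling}$ (or, as an alternative, verify the hypotheses of Theorem~\ref{thm: ewtos=predgap+errsc}, which hold here since $\beta_\s + \hat{\gamma}_\s = O(\sigma^2/\hat{n}) = o(1)$ and the principal population covariance is $\mI_1$) to pin down the matching lower bound.
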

For instance, if $\eta\!=\!0.6$, then $\err_\w\approx0.4$, while $\err_\wtos\approx0.16$, despite nearly perfect overfitting on $\hat{\gD}$. \looseness=-1
\vspace{-.2cm}

\subsection{A closer look at error propagation}

We provide a rough derivation of the W2S error (with  details in App. \ref{apdx: proof_case_study}), illustrating which errors are replicated and which are corrected (overfitted but benignly) by the W2S model, and how representations determine this.\looseness=-1

The principal representations for both models are simply at their first coordinates. Thus, the spans of their principal kernels are one-dimensional.  Let $\hat{\vzeta} \in \sR^{\hat{n}}$ denote the vector collecting the $\zeta$ values on $\hat{\gD}$, i.e., $\hat{\vzeta} = [\hat{\zeta}_1, \dots, \hat{\zeta}_{\hat{n}}]^\top$. Similarly, define $\hat{\vy} = [\hat{y}_1, \dots, \hat{y}_{\hat{n}}]^\top$. We can approximate the projection matrices as: 
$\mP_\w  \approx \frac{1}{\hat{n}}\hat{\vq}\hat{\vq}^\top$ and $
    \mP_\s \approx    
 \frac{1}{\hat{n}}\hat{\vy}\hat{\vy}^\top$, where $\hat{\vq}=\sqrt{\eta}\hat{\vy} + \sqrt{1-\eta}\hat{\vzeta}$.
Note that vectors $\frac{1}{\sqrt{\hat{n}}}\hat{\vy}$ and $\frac{1}{\sqrt{\hat{n}}}\hat{\vzeta}$ are almost orthogonal as the corresponding random variables are uncorrelated: $
    \frac{1}{\sqrt{\hat{n}}}\hat{\vy}^\top \frac{1}{\sqrt{\hat{n}}}\hat{\vzeta} = \frac{1}{\hat{n}}\sum_{i}\hat{y_i}\hat{\zeta}_i\approx\E [y\zeta]=0$. Let $\vepsilon_\w$ be the vector whose $i$-th element is the weak model's error on data point $(\hat{\vx}_i, \hat{y}_i)$. By Lemma \ref{lemma: weak_error}, we can approximate  $\vepsilon_\w$ as: \looseness=-1
$$
    \vepsilon_\w \approx (\mI - \mP_\w)\hat{\vy} \approx  (1-\eta)\frac{1}{\sqrt{\hat{n}}}\hat{\vy}  - \sqrt{\eta(1-\eta)}\frac{1}{\sqrt{\hat{n}}}\hat{\vzeta}
$$
The strong ceiling model's error $\err_\sceiling \approx 0$ as its representations directly encode $y$ in the first coordinate. Thus, $\err_\wtos \approx \predgap{}$. By Thm \ref{thm: main_theorem}, $\predgap{}\approx \mP_\s \vepsilon_\w$. Then, \looseness=-1
\begin{align}
\vspace{-.4cm}
\nonumber
& \err_\wtos \approx 
\underbrace{ \frac{1}{\hat{n}}\hat{\vy}\hat{\vy}^\top(1- \eta)\frac{1}{\sqrt{\hat{n}}}\hat{\vy}}_{\textbf{\footnotesize replicated}} ~~\underbrace{  -\frac{1}{\hat{n}}\hat{\vy}\hat{\vy}^\top \sqrt{\eta(1-\eta)}\frac{1}{\sqrt{\hat{n}}}\hat{\vzeta} }_{\textbf{\footnotesize avoided} \text{; $\approx 0$ since $\hat{\vzeta}\perp \hat{\vy}$ almost}}
\vspace{-.5cm}
\end{align}
The first term of the weak model's error, $(1 - \eta)\frac{1}{\sqrt{\hat{n}}}\hat{\vy}$, aligns with $\mP_\s$ which spans the strong model's principal kernel, and is therefore replicated by the W2S model. The second term, $-\sqrt{\eta(1 - \eta)}\frac{1}{\sqrt{\hat{n}}}\hat{\vzeta}$, is orthogonal to $\mP_\s$ and thus mitigated. Notably, $-\sqrt{\eta(1 - \eta)}\frac{1}{\sqrt{\hat{n}}}\hat{\vzeta}$ aligns with the strong model's non-principal kernel, which is highly isotropic ($\gamma_\s = \omega(\delta_\s)$), causing the corresponding errors to appear mimicked by the W2S model during finetuning. However, they do not manifest at test time. In other words, only errors within the span of the strong model's principal kernel are overfitted harmfully, while overfitting elsewhere remains benign. \looseness=-1


\vspace{-.2cm}
\section{Predicting W2SG Without Labels}\label{sec: experiments}
\vspace{-.1cm}

{ Leveraging Thm. \ref{thm: main_theorem}, we derive a representation-based metric that can predict W2SG performance without labels in experiments across various settings. Notably, this metric strongly correlates with W2SG performance even when we finetune entire LLMs—a scenario significantly more complex than what we analyze in theory.}\looseness=-1

\vspace{-.1cm}
\subsection{A label-agnostic metric for W2SG }
\begin{table*}[t]
    \caption{An overview of the three setups considered in our experiments. }
    \label{tab: tasks}
    \vspace{-.2cm}
    \centering
    \begin{tabular}{|c|c|c|c|c|}
    \hline
        EXP ID & Task & Strong model & Weak models & Finetuning \\
        \hline
        \RC{1} & molecular tasks & MolBERT & 150 transformers pretrained on GuacaMol & task head   \\
        \RC{2} & NLP tasks & \texttt{nvidia/NV-Embed-v2} & 22 other embedding models & task head  \\
        \RC{3} & NLP tasks & \texttt{Qwen/Qwen-7B} & 28 smaller LLMs & full model \\
        \hline
    \end{tabular}
    \vspace{-.5cm}
\end{table*}

We start with upper-bounding the RHS of Thm. \ref{thm: main_theorem}.\looseness=-1
\begin{corollary}[Upper Bound 1]\label{coro: predgap_ub_PP}
Define $C=\frac{1}{\hat{n}}\sum_{i=1}^{\hat{n}}\hat{y}_i^2 $. Following Theorem \ref{thm: main_theorem},  directly applying the submultiplicative property of the norm yields the following upper bound:
 \begin{align}
     \nonumber
     \predgap{} \leq C\opnorm{\mP_\s (\mI -\mP_\w ) }^2  + o(1),
 \end{align}
\end{corollary}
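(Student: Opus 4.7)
The plan is to derive this upper bound as a direct consequence of Theorem \ref{thm: main_theorem}, exploiting the submultiplicative property of the operator norm. The starting point is the identity
$$\predgap = \| \mP_\s (\mI -\mP_\w ) \tfrac{1}{\sqrt{\hat{n}}} \hat{\vy}  \|^2 \pm o(1),$$
which holds with high probability under Assumption \ref{assump: weak_strong_decomp} and the assumed bounds on the regularization parameters. From here, only elementary linear-algebraic manipulations are needed.

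First, I would apply the operator-norm inequality $\|Mv\|^2 \leq \opnorm{M}^2 \|v\|^2$ with $M = \mP_\s(\mI - \mP_\w)$ and $v = \tfrac{1}{\sqrt{\hat{n}}}\hat{\vy}$, which yields
$$\| \mP_\s (\mI -\mP_\w ) \tfrac{1}{\sqrt{\hat{n}}} \hat{\vy} \|^2 \leq \opnorm{\mP_\s(\mI - \mP_\w)}^2 \cdot \| \tfrac{1}{\sqrt{\hat{n}}}\hat{\vy}\|^2.$$
Second, I would identify the scalar factor as $C$: by definition $\|\tfrac{1}{\sqrt{\hat{n}}}\hat{\vy}\|^2 = \tfrac{1}{\hat{n}} \sum_{i=1}^{\hat{n}} \hat{y}_i^2 = C$. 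Combining with the equality from Theorem \ref{thm: main_theorem} and carrying the additive $o(1)$ through gives the claimed bound.

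There is essentially no hard step here; the only thing worth noting is that the $\pm o(1)$ in Theorem \ref{thm: main_theorem} survives the inequality (we simply upper-bound the exact expression and then add the nonnegative $o(1)$ slack), and that $C = O(1)$ by the \textbf{Boundedness} condition in Definition \ref{def: delta_decomp}, so the multiplicative factor does not blow up the asymptotics. Because both $\mP_\w$ and $\mP_\s$ are constructed from symmetric positive semidefinite kernels via the resolvent $\mK(\mK + c\mI)^{-1}$, they are themselves symmetric with spectrum in $[0,1)$, which justifies freely invoking the operator-norm inequality. The entire argument amounts to one line of linear algebra applied on top of Theorem \ref{thm: main_theorem}, so the anticipated obstacle is nothing more than bookkeeping the high-probability event and the $o(1)$ term.
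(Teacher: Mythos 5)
Your proof is correct and matches the paper's intended argument exactly: the corollary statement itself flags that the bound comes from "directly applying the submultiplicative property of the norm" to the equality in Theorem \ref{thm: main_theorem}, which is precisely what you do, together with the identification $\|\tfrac{1}{\sqrt{\hat{n}}}\hat{\vy}\|^2 = C$. Your side remarks (that $C = O(1)$ by \textbf{Boundedness} and that $\mP_\w, \mP_\s$ are symmetric PSD with spectrum in $[0,1)$) are accurate but not needed — submultiplicativity of the operator norm requires no structure on the matrix.
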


\begin{corollary}[Upper Bound 2]\label{coro: predgap_ub_PPP}
Following Theorem \ref{thm: main_theorem}, we can also obtain an upper bound that involves $\err_\sceiling$ as long as $|\E[y^2]- \frac{1}{\hat{n}}\sum_{i=1}^{\hat{n}}\hat{y}_i^2 |=o(1)$ (see proof in Appendix \ref{apdx: proof_PPP}) :\looseness=-1
$$
\predgap{} \leq \left( \sqrt{C} \opnorm{\mP_\s (\mI - \mP_\w) \mP_\s} + \sqrt{\err_\sceiling} \right)^2 + o(1).
$$
\end{corollary}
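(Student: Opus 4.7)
}
The plan is to refine the crude submultiplicative estimate used in Corollary~\ref{coro: predgap_ub_PP} by splitting $\hat{\vy}/\sqrt{\hat{n}}$ along the range of $\mP_\s$ and its orthogonal complement, so that the ``matched'' part picks up the tighter factor $\opnorm{\mP_\s(\mI-\mP_\w)\mP_\s}$ and the ``unmatched'' part gets absorbed into $\sqrt{\err_\sceiling}$.

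First, I would invoke Theorem~\ref{thm: main_theorem} to write $\sqrt{\predgap{}} \leq \|\mP_\s(\mI-\mP_\w)\hat{\vy}/\sqrt{\hat{n}}\| + o(1)$, using $\sqrt{a^2 + o(1)} \leq a + o(1)$ for $a \geq 0$. Then I decompose $\hat{\vy}/\sqrt{\hat{n}} = \mP_\s\hat{\vy}/\sqrt{\hat{n}} + (\mI-\mP_\s)\hat{\vy}/\sqrt{\hat{n}}$ and apply the triangle inequality to obtain
\begin{align*}
    \|\mP_\s(\mI-\mP_\w)\hat{\vy}/\sqrt{\hat{n}}\| \leq \|\mP_\s(\mI-\mP_\w)\mP_\s\hat{\vy}/\sqrt{\hat{n}}\| + \|\mP_\s(\mI-\mP_\w)(\mI-\mP_\s)\hat{\vy}/\sqrt{\hat{n}}\|.
\end{align*}
The first summand is bounded by $\opnorm{\mP_\s(\mI-\mP_\w)\mP_\s}\cdot\|\hat{\vy}/\sqrt{\hat{n}}\| = \sqrt{C}\,\opnorm{\mP_\s(\mI-\mP_\w)\mP_\s}$. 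For the second, note that both $\mP_\s$ and $\mI-\mP_\w$ have the form $M(M + c\mI)^{-1}$ (resp.\ $c(M+c\mI)^{-1}$) with $M \succeq 0$ and $c > 0$, whose eigenvalues lie in $[0,1]$; hence $\opnorm{\mP_\s(\mI-\mP_\w)} \leq 1$ and submultiplicativity yields the bound $\|(\mI-\mP_\s)\hat{\vy}/\sqrt{\hat{n}}\|$.

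The main obstacle is then to establish $\|(\mI-\mP_\s)\hat{\vy}/\sqrt{\hat{n}}\|^2 \leq \err_\sceiling + o(1)$. The vector $(\mI - \mP_\s)\hat{\vy}$ is precisely the kernel-form training residual of the ridge problem defining $f_\sceiling$ when the principal part $\mPi_{\gV_\s}h_\s$ is used in place of $h_\s$, so $\|(\mI-\mP_\s)\hat{\vy}/\sqrt{\hat{n}}\|^2$ equals the empirical training MSE of the strong ceiling up to $o(1)$ contributions from the non-principal kernel, which are exactly what conditions c, d, e of Definition~\ref{def: delta_decomp} make negligible. Converting this empirical MSE to the population error $\err_\sceiling$ uses the same concentration machinery underlying Lemma~\ref{lemma: weak_error} and Lemma~\ref{lemma: propagation_strong}: the \conc{} bounds control $\hat{\mSigma}(\mPi_{\gV_\s}h_\s) \to \mSigma(\mPi_{\gV_\s}h_\s)$ and $\frac{1}{\hat{n}}\sum \mPi_{\gV_\s}h_\s(\hat{\vx}_i)\hat{y}_i \to \E[\mPi_{\gV_\s}h_\s(\vx)y]$, while the hypothesis $|\E[y^2]-C|=o(1)$ is precisely what is needed to transfer the quadratic-in-$y$ term between its empirical and population forms.

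Combining the two summand bounds with $\sqrt{\err_\sceiling + o(1)} \leq \sqrt{\err_\sceiling} + o(1)$ gives $\sqrt{\predgap{}} \leq \sqrt{C}\,\opnorm{\mP_\s(\mI-\mP_\w)\mP_\s} + \sqrt{\err_\sceiling} + o(1)$, and squaring yields the claim. I expect the decomposition and the triangle/submultiplicative steps to be routine; the only nontrivial work is the empirical-to-population passage in the last step, but it should reduce to reusing the concentration lemmas already developed for Theorem~\ref{thm: main_theorem} rather than proving anything new.
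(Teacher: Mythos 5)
Your proposal matches the paper's proof step-for-step: the same decomposition $\hat{\vy}/\sqrt{\hat{n}} = \mP_\s\hat{\vy}/\sqrt{\hat{n}} + (\mI-\mP_\s)\hat{\vy}/\sqrt{\hat{n}}$, the same triangle inequality plus submultiplicativity, the same use of $\opnorm{\mP_\s(\mI-\mP_\w)}\leq 1$, and the same identification of $\|(\mI-\mP_\s)\hat{\vy}/\sqrt{\hat{n}}\|^2$ with $\err_\sceiling + o(1)$. One small caveat: that last identification is \emph{not} a reuse of Lemmas~\ref{lemma: weak_error} or \ref{lemma: propagation_strong} from Theorem~\ref{thm: main_theorem}, but a separate result (Lemma~\ref{lemma: weak_error_population}, applied to the strong ceiling) proved by directly expanding the population quadratic form $\err_\sceiling = \vw_\sceiling^\top\mSigma_\s\vw_\sceiling - 2\vw_\sceiling^\top\E[\vr_\s y] + \E[y^2]$ rather than via a training-residual-to-test-error conversion as you frame it, though your intuition for why it holds and your prediction of where the $|\E[y^2]-C|=o(1)$ hypothesis is used are both on target.
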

In both upper bounds, $C$ represents the variance of the labels on $\hat{\gD}$, which can be treated as a constant given a fixed dataset. Therefore, $\predgap{}$ is governed by the norm $\opnorm{\mP_\s (\mI \!-\!\mP_\w ) }$ or  $\opnorm{\mP_\s (\mI \!-\! \mP_\w) \mP_\s} $. Comparing the two bounds, the one in Corollary \ref{coro: predgap_ub_PPP} is tighter particularly when  $\err_\sceiling$ is small \footnote{One can also observe this in Example \ref{eg: toyeg}, where the equality in Corollary \ref{coro: predgap_ub_PPP} holds, whereas that in Corollary \ref{coro: predgap_ub_PP} does not.} . This follows from $\opnorm{\mP_\s (\mI\! -\! \mP_\w) \mP_\s} \!\leq\! \opnorm{\mP_\s (\mI\! -\! \mP_\w) } $. However, in our experiments, both are similarly indicative of W2SG performance.\looseness=-1

Now that \predgap{} can be bounded in terms of the above label-agnostic metrics, and \predgap{} is indicative of the error $\err_\wtos$ as discussed at the end of Sec. \ref{subsec: pre}, we turn our focus to examining the following relationship in real models\looseness=-1 
$$ \err_\wtos ~~\stackrel{?}{\sim} ~~ \opnorm{ \mP_\s(\mI-\mP_\w) } ~~\text{(or $\opnorm{ \mP_\s(\mI-\mP_\w) \mP_\s }$) } 
$$ to evaluate whether the metrics offer practical insights. Specifically, we consider the three setups summarized in Table \ref{tab: tasks}, with their details discussed in the corresponding subsections. In each setup, we fix the strong model and vary the weak model to obtain different $\err_\wtos$ and $\opnorm{ \mP_\s(\mI-\mP_\w) } $ (or $\opnorm{ \mP_\s(\mI-\mP_\w) \mP_\s }$) pairs and study their relationship. \looseness=-1

\subsection{Empirical measure of $\mP_\w$ and $\mP_\s$}

Before proceeding, let's address an important question: how can we compute $\mP_\w$ and $\mP_\s$ for real models? {In some cases, representations are not fixed during fine-tuning, making $h$ difficult to define. Additionally, determining the principal representation, $\mPi_{\gV}h$, is challenging because the exact $\gV$ depends on the population, which is unknown in practice.}
To tackle this, we design heuristics to approximate $\mP$ as follows\looseness=-1
\begin{align}
\vspace{-.2cm}
\label{eq: p_approx}
    \frac{1}{\hat{n}}\hat{\mK}( \mPi_{\alpha} h  ) ( \frac{1}{\hat{n}}\hat{\mK}( \mPi_{\alpha} h  ) + \beta_{\text{eff}}\mI )^{-1}
    \vspace{-.2cm}
\end{align}
We explain the key components below.

\textbf{$h$: extracting representations.} We consider two ways of defining the representations, depending on the setup. \textbf{(1) Last layer embeddings.} In Exps. \RC{1} and \RC{2}, the definition of representation is self-evident, as finetuning is simply training a task head on the embeddings produced by the base model \footnote{In the analysis, the linear model does not include a bias term, but it does in our experiments. This is addressed by appending a constant $1$ to the representation when computing the metrics.\looseness=-1}.  \textbf{(2) Activation maps.} \footnote{We observed worse results with last-layer embeddings in Exp. \RC{3}, likely due to complex cross-layer dynamics during finetuning.\looseness=-1} In Exp. \RC{3}, we finetune the entire LLM from pretrained weights, {so we don't have fixed representations as in the theoretical setting.} 
To address this, we adopt a simple heuristic:  we treat the layer-wise normalized vectorized activation maps of the pre-trained LLM, which encode information about how inputs are represented within the model, as the representations for computing $h(\vx)$. This heuristic serves primarily as a proof of concept, demonstrating that even straightforward approach like this can yield meaningful results. More principled definitions of representations, e.g., those based on NTK \cite{malladi2023kernel} or representation engineering \cite{zou2023representation}, could be explored in future work. See further discussion in Appx. \ref{apdx: discussion}.\looseness=-1

\textbf{$\mPi_{\alpha}$: approximating principal representations.} We consider two versions of $\mPi_{\alpha}$, the operation that extracts the principal part from the representations, based on the intuition that principal representations tend to have larger magnitudes {(e.g., Example \ref{eg: spiked_cov})}. (1) In Exps. \RC{1} and \RC{2}, we apply PCA by projecting the representations onto the eigenvectors of the covariance $\hat{\mSigma}(h)$ with eigenvalues $\geq \alpha \times \text{(the largest eigenvalue)}$. (2) In Exp. \RC{3},  we select the top coordinates with variance exceeding $\alpha \times$ (the largest coordinate-wise variance), a cheaper alternative to PCA for high-dimensional activation maps, as it avoids the expensive eigendecomposition. In both cases $\alpha$ is a hyperparameter. \looseness=-1

\textbf{$\beta_{\text{eff}}$: effective regularization.} In Thm. \ref{thm: main_theorem}, $(\beta+\hat{\gamma})$ is the effective regularization, capturing both the explicit ($\beta$) and implicit ($\hat{\gamma}$) \cite{jacot2020implicit} regularization. In practice, regularization can also stem from factors like early stopping, training algorithms, etc. We summarize these effects using $\beta_{\text{eff}}$ in Eq. \ref{eq: p_approx} and treat $\beta_{\text{eff}}$ as a hyperparameter.\looseness=-1

\blue{For each model, computing $\mP$ introduces two hyperparameters, $\alpha$ and $\beta$. If every model is assigned unique hyperparameters, the total number of hyperparameters would be twice the number of models. To simplify this, we let all weak models share the same two hyperparameters, $\alpha_\w$ and $\beta_\w$. For the strong model (only one in each setting), it is treated separately with its own hyperparameters, $\alpha_\s$ and $\beta_\s$. Thus, we only have four parameters in total. More details are in App. \ref{apdx: hyperparam}.}\looseness=-1


\begin{figure}[!t]
\vspace{-.2cm}
    \centering
\subfigure[Lipop\label{}]{
\includegraphics[width=.335\columnwidth]{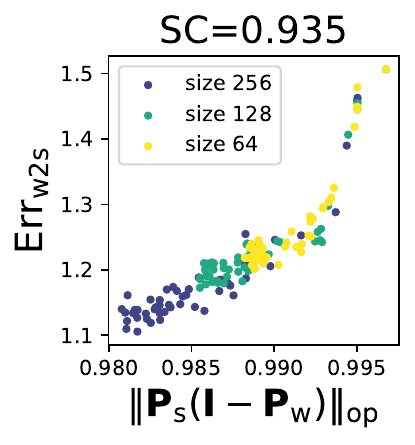}
\vspace{-.2cm}
}
\hspace{-.4cm}
\subfigure[FreeSolv\label{}]{
\includegraphics[width=.332\columnwidth]{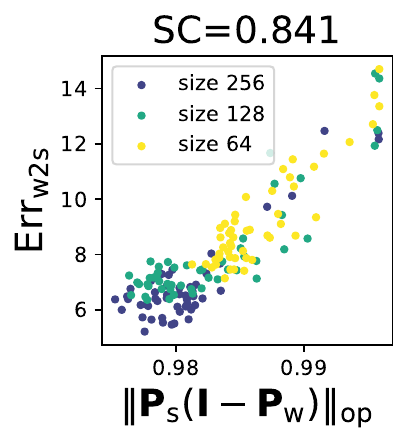}
\vspace{-.2cm}
}
\hspace{-.4cm}
\subfigure[ESOL\label{}]{
\includegraphics[width=.325\columnwidth]{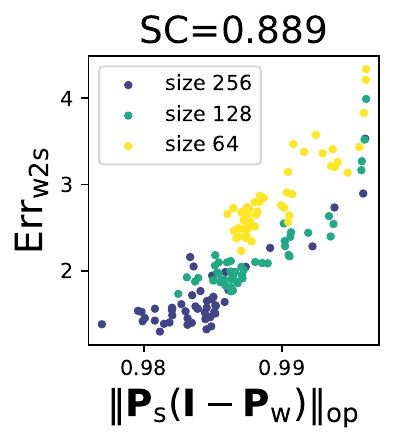}
\vspace{-.2cm}
}
\vspace{-.2cm}
    \caption{Results of Exp. \RC{1}: our metric strongly correlates with $\err_\wtos$ and serves as a more fine-grained indicator than model size.\looseness=-1 }
    \label{fig: molecular}
    \vspace{-.2cm}
\end{figure}

\begin{figure}[!t]
    \centering
\subfigure[\small Justice\label{}]{
\includegraphics[width=.31\columnwidth]{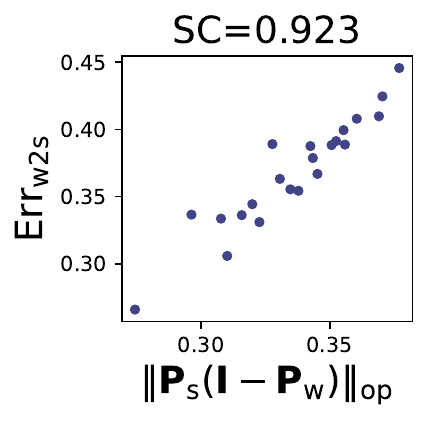}
}
\hspace{.1cm}
\subfigure[\small Commonsense\label{}]{
\includegraphics[width=.31\columnwidth]{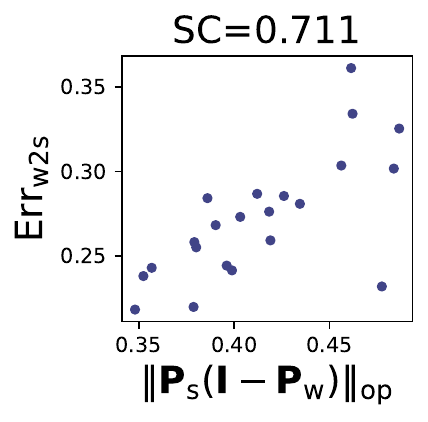}
}
\vspace{-.2cm}
    \caption{A strong correlation between $\opnorm{ \mP_\s(\mI-\mP_\w) }$ and $\err_\wtos$ is observed in Exp. \RC{2} where we finetune embedding models. \looseness=-1}
    \label{fig: embedding}
    \vspace{-.2cm}
\end{figure}

\begin{figure}[!t]
    \centering
\subfigure[SciQ\label{}]{
\includegraphics[width=.3\columnwidth]{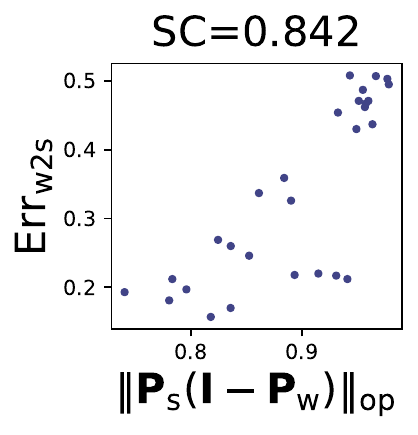}
}
\subfigure[Amazon Polarity\label{}]{
\includegraphics[width=.3\columnwidth]{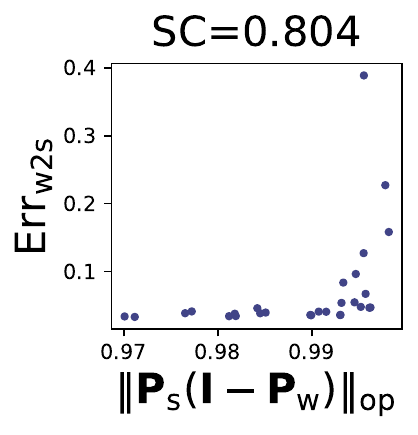}
}
\subfigure[Cosmos-QA\label{}]{
\includegraphics[width=.31\columnwidth]{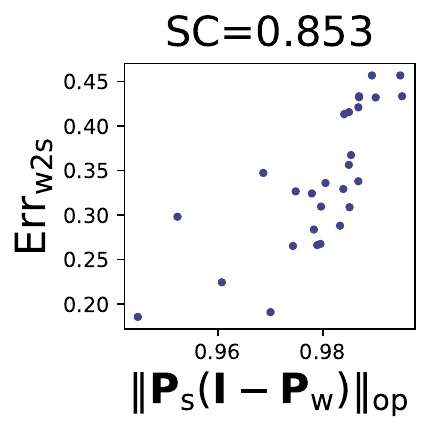}
}
\vspace{-.2cm}
    \caption{A strong correlation between $\opnorm{ \mP_\s(\mI-\mP_\w) }$ and $\err_\wtos$ is observed in Exp. \RC{3} involving general-purpose LLMs.\looseness=-1}
    \label{fig: end2end}
\vspace{-.3cm}
\end{figure}



\begin{figure}[!t]
    \centering

\includegraphics[width=.149\textwidth]{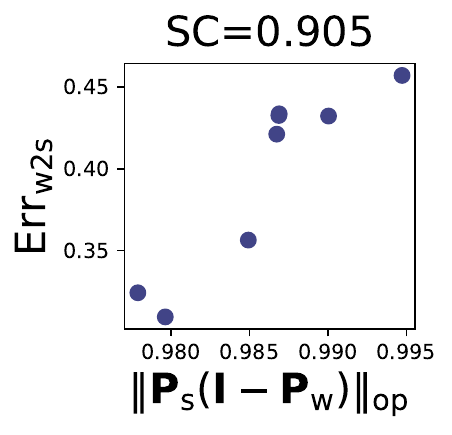}
\includegraphics[width=.16\textwidth]{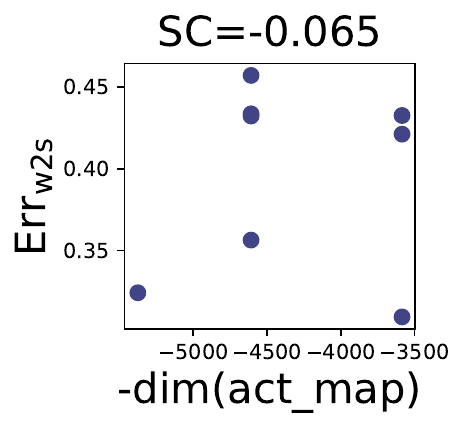}
\includegraphics[width=.145\textwidth]{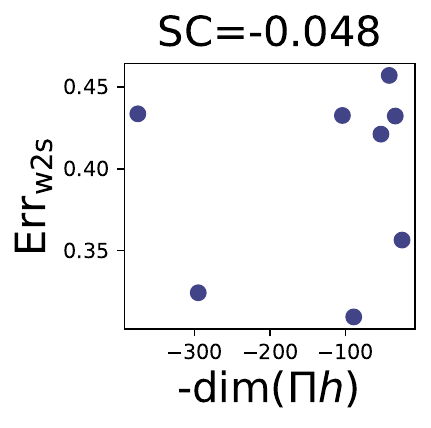}
\vspace{-.2cm}
    \caption{
    In Exp. \RC{3}, for models with activation map dimensions $\!\leq\! 8000$, both the activation map dimension (middle) and the dimension of approximated principal representations (right) correlate poorly with $\err_\wtos$. However, $\opnorm{ \mP_\s(\mI\!-\!\mP_\w) }$ remains strongly correlated with $\err_\wtos$ (left). We only show the results for Cosmos QA and defer those for other datasets to App. \ref{apdx: compare_size}.  \looseness=-1  }
    \label{fig: compare_PP_with_size}
    \vspace{-.2cm}
\end{figure}

\subsection{Experimental setups}

\textbf{Exp. \RC{1}: Molecular prediction.} Our first setting follows \cite{charikar2024quantifying}. We use the GuacaMol \cite{brown2019guacamol} dataset for pretraining both the strong and weak models. For finetuning, we consider three regression datasets—ESOL, FreeSolv, and Lipop—from the MoleculeNet \cite{wu2018moleculenet} benchmark, curated by ChemBench \cite{charleshen_2020_4054866}, which involve predicting molecular physical properties.  The strong model is MolBERT \cite{fabian2020molecular},  a BERT \cite{devlin2018bert}  pretrained for 100 epochs on GuacaMol. We use smaller transformers pretrained on GuacaMol as weak models. These weak models have 2 layers and 2 attention heads. We vary the hidden size across ${64, 128, 256}$, and vary the number of pretraining epochs from 1 to 50, resulting in 150 weak models. During finetuning, we extract last-layer embeddings and perform linear regression. {MSE loss is used for both training and measuring $\err_\wtos$ as the task is regression.} Additional details are in App.\ref{apdx: training_details}. \looseness=-1


\textbf{Exp. \RC{2}: NLP tasks with embedding models.}
We use the ``Justice" and ``Commonsense" datasets from ETHICS \cite{hendrycks2020aligning}, which involve binary classification based on basic moral concepts.
We consider embedding models—pretrained LLMs that convert text inputs into vector-based embeddings, with \texttt{nvidia/NV-Embed-v2} \cite{lee2024nv} (currently ranked first on the MTEB leaderboard \cite{muennighoff2022mteb}) as the strong model, and 22 other models as weak models (details in Appx. \ref{apdx: training_details}). For finetuning, we train a linear classifier on the embeddings {with CE loss. $\err_\wtos$ is measured as classification error.}\looseness=-1




\textbf{Exp. \RC{3}: NLP tasks with end-to-end finetuned LLMs.} We replicate a setup from \cite{burns2023weak} on three datasets: (1) SciQ \cite{welbl2017crowdsourcing}, containing crowdsourced science exam questions; 
(2) Amazon Polarity \cite{zhang2015character}, consisting of Amazon reviews; and (3) Cosmos QA \cite{huang2019cosmos}, involving commonsense-based reading comprehension. Both data preprocessing and finetuning strictly follow \cite{burns2023weak}. The entire model is finetuned with the unembedding layer replaced with a linear head, using CE loss. We use \texttt{Qwen/Qwen-7B} \cite{bai2023qwen} as the strong model and 28 smaller LLMs as weak models (details in Appx. \ref{apdx: training_details}). $\err_\wtos$ is measured in terms of classification error.\looseness=-1
\vspace{-.1cm}



\subsection{Results}

\textbf{Strong correlation between $\err_\wtos$ and $\opnorm{ \mP_\s(\mI-\mP_\w) }$ across various settings.}   For each of the weak models, we perform the W2SG procedure to obtain the resulting W2S model. We then measure $\err_\wtos$ and $\opnorm{ \mP_\s(\mI-\mP_\w) }$ and plot the results in Figures \ref{fig: molecular}, \ref{fig: embedding} and \ref{fig: end2end}. Across all the setups, we observe a strong correlation between the two quantities, with high Spearman's correlation values displayed at the top of the figures. The results are highly similar for $\opnorm{ \mP_\s(\mI-\mP_\w) \mP_\s}$, as shown in Appx. \ref{apdx: PPP}. Therefore, we only focus on discussing $\opnorm{ \mP_\s(\mI-\mP_\w) }$ in the main paper. {Notably, the correlation between $\err_\wtos$ and $\opnorm{ \mP_\s(\mI-\mP_\w) }$  extends beyond the theoretical setting, covering the following variations: \emph{(1) Loss function and evaluation metric.} While Thm. \ref{thm: main_theorem} is based on linear regression with MSE loss, Exps. \RC{2} and \RC{3} demonstrate that the correlation also holds for classification tasks using CE finetuning loss, with $\err_\wtos$ measured as classification error. \emph{(2) The form of finetuning.} Thm. \ref{thm: main_theorem} assumes that finetuning involves training a function on fixed representations. However, in Exp. \RC{3}, the entire LLM is finetuned. Despite the complex training dynamics in this scenario, a strong correlation between $\err_\wtos$ and $\opnorm{\mP_\s(\mI-\mP_\w)}$ is still observed when activation maps are heuristically used as representations. These results underscore the broad applicability of our conclusion.} 
\looseness=-1

\textbf{Capturing W2SG beyond model size.} {Smaller weak models can sometimes achieve better $\err_\wtos$ than larger ones. For example, in Exp. \RC{1}, the leftmost yellow point (size 64) outperforms the rightmost teal point (size 128) in Fig. \ref{fig: molecular}, likely because these smaller models were pretrained for more epochs 
(recall that we have 150 models span different combinations of sizes and pretraining epochs), resulting in better representations. 
Similarly, in Exp. \RC{3}, the middle column of Fig. \ref{fig: compare_PP_with_size} shows a poor correlation between $\err_\wtos$ and size for models with dimension $\leq 8000$.  Testing another dimension-based metric—the dimension of approximated principal representations—also reveals weak correlation with $\err_\wtos$ (last column of Fig. \ref{fig: compare_PP_with_size}). This underscore the complexity of predicting W2SG performance, as larger models or higher representation dimensions do not guarantee better results. Factors such as the pretraining recipe, the quality and relevance of the pretraining data, etc., all contribute to the final outcome. However, even in these cases, $\opnorm{ \mP_\s(\mI-\mP_\w) }$ consistently captures the trend in $\err_\wtos$ (Fig. \ref{fig: molecular} and the first column of Fig. \ref{fig: compare_PP_with_size}), demonstrating its robustness as a metric that surpasses simple dimensional measures and provides meaningful insights for W2SG.}\looseness=-1
\vspace{-.2cm}


\section{Conclusion}

In this work, we show that W2SG can be characterized using kernels derived from the principal components of weak and strong models' representations. The theory is applicable to a wide range of representation distributions, provides insights into how models' internal structures influence error correction and the conditions for benign overfitting. Additionally, it offers a label-free metric for predicting W2SG performance, validated through experiments on diverse datasets and LLMs. \looseness=-1

\section*{Impact Statement}

We see positive societal impacts in our work as it advances the understanding of Weak-to-Strong Generalization, a crucial problem for aligning superhuman AI in the future. Our results could enhance transparency in AI systems' behavior through analysis of their internal structures and contribute to the broader goal of improving AI safety and reliability.

\section*{Acknowledgement}
This research was partially supported by the National Science Foundation CAREER Award 2146492 and an OpenAI
SuperAlignment Grant.





\bibliography{reference}

\begin{thebibliography}{54}
\providecommand{\natexlab}[1]{#1}
\providecommand{\url}[1]{\texttt{#1}}
\expandafter\ifx\csname urlstyle\endcsname\relax
  \providecommand{\doi}[1]{doi: #1}\else
  \providecommand{\doi}{doi: \begingroup \urlstyle{rm}\Url}\fi

\bibitem[Allen-Zhu \& Li(2020)Allen-Zhu and Li]{allen2020towards}
Allen-Zhu, Z. and Li, Y.
\newblock Towards understanding ensemble, knowledge distillation and self-distillation in deep learning.
\newblock \emph{arXiv preprint arXiv:2012.09816}, 2020.

\bibitem[Arora et~al.(2018)Arora, Li, Liang, Ma, and Risteski]{arora2018linear}
Arora, S., Li, Y., Liang, Y., Ma, T., and Risteski, A.
\newblock Linear algebraic structure of word senses, with applications to polysemy.
\newblock \emph{Transactions of the Association for Computational Linguistics}, 6:\penalty0 483--495, 2018.

\bibitem[Bai et~al.(2023)Bai, Bai, Chu, Cui, Dang, Deng, Fan, Ge, Han, Huang, et~al.]{bai2023qwen}
Bai, J., Bai, S., Chu, Y., Cui, Z., Dang, K., Deng, X., Fan, Y., Ge, W., Han, Y., Huang, F., et~al.
\newblock Qwen technical report.
\newblock \emph{arXiv preprint arXiv:2309.16609}, 2023.

\bibitem[Bartlett et~al.(2020)Bartlett, Long, Lugosi, and Tsigler]{bartlett2020benign}
Bartlett, P.~L., Long, P.~M., Lugosi, G., and Tsigler, A.
\newblock Benign overfitting in linear regression.
\newblock \emph{Proceedings of the National Academy of Sciences}, 117\penalty0 (48):\penalty0 30063--30070, 2020.

\bibitem[Brown et~al.(2019)Brown, Fiscato, Segler, and Vaucher]{brown2019guacamol}
Brown, N., Fiscato, M., Segler, M.~H., and Vaucher, A.~C.
\newblock Guacamol: benchmarking models for de novo molecular design.
\newblock \emph{Journal of chemical information and modeling}, 59\penalty0 (3):\penalty0 1096--1108, 2019.

\bibitem[Burns et~al.(2023)Burns, Izmailov, Kirchner, Baker, Gao, Aschenbrenner, Chen, Ecoffet, Joglekar, Leike, et~al.]{burns2023weak}
Burns, C., Izmailov, P., Kirchner, J.~H., Baker, B., Gao, L., Aschenbrenner, L., Chen, Y., Ecoffet, A., Joglekar, M., Leike, J., et~al.
\newblock Weak-to-strong generalization: Eliciting strong capabilities with weak supervision.
\newblock \emph{arXiv preprint arXiv:2312.09390}, 2023.

\bibitem[Charikar et~al.(2024)Charikar, Pabbaraju, and Shiragur]{charikar2024quantifying}
Charikar, M., Pabbaraju, C., and Shiragur, K.
\newblock Quantifying the gain in weak-to-strong generalization.
\newblock \emph{arXiv preprint arXiv:2405.15116}, 2024.

\bibitem[Charleshen(2020)]{charleshen_2020_4054866}
Charleshen.
\newblock Chembench: The molecule benchmarks and molmapnet datasets, September 2020.
\newblock URL \url{https://doi.org/10.5281/zenodo.4054866}.

\bibitem[Demmel(1992)]{demmel1992componentwise}
Demmel, J.
\newblock The componentwise distance to the nearest singular matrix.
\newblock \emph{SIAM Journal on Matrix Analysis and Applications}, 13\penalty0 (1):\penalty0 10--19, 1992.

\bibitem[Devlin(2018)]{devlin2018bert}
Devlin, J.
\newblock Bert: Pre-training of deep bidirectional transformers for language understanding.
\newblock \emph{arXiv preprint arXiv:1810.04805}, 2018.

\bibitem[El~Ghaoui(2002)]{el2002inversion}
El~Ghaoui, L.
\newblock Inversion error, condition number, and approximate inverses of uncertain matrices.
\newblock \emph{Linear algebra and its applications}, 343:\penalty0 171--193, 2002.

\bibitem[Fabian et~al.(2020)Fabian, Edlich, Gaspar, Segler, Meyers, Fiscato, and Ahmed]{fabian2020molecular}
Fabian, B., Edlich, T., Gaspar, H., Segler, M., Meyers, J., Fiscato, M., and Ahmed, M.
\newblock Molecular representation learning with language models and domain-relevant auxiliary tasks.
\newblock \emph{arXiv preprint arXiv:2011.13230}, 2020.

\bibitem[Foldiak(2003)]{foldiak2003sparse}
Foldiak, P.
\newblock Sparse coding in the primate cortex.
\newblock \emph{The handbook of brain theory and neural networks}, 2003.

\bibitem[Frei et~al.(2022)Frei, Chatterji, and Bartlett]{frei2022benign}
Frei, S., Chatterji, N.~S., and Bartlett, P.
\newblock Benign overfitting without linearity: Neural network classifiers trained by gradient descent for noisy linear data.
\newblock In \emph{Conference on Learning Theory}, pp.\  2668--2703. PMLR, 2022.

\bibitem[Hendrycks et~al.(2020)Hendrycks, Burns, Basart, Critch, Li, Song, and Steinhardt]{hendrycks2020aligning}
Hendrycks, D., Burns, C., Basart, S., Critch, A., Li, J., Song, D., and Steinhardt, J.
\newblock Aligning ai with shared human values.
\newblock \emph{arXiv preprint arXiv:2008.02275}, 2020.

\bibitem[Huang et~al.(2019)Huang, Bras, Bhagavatula, and Choi]{huang2019cosmos}
Huang, L., Bras, R.~L., Bhagavatula, C., and Choi, Y.
\newblock Cosmos qa: Machine reading comprehension with contextual commonsense reasoning.
\newblock \emph{arXiv preprint arXiv:1909.00277}, 2019.

\bibitem[Huh et~al.(2021)Huh, Mobahi, Zhang, Cheung, Agrawal, and Isola]{huh2021low}
Huh, M., Mobahi, H., Zhang, R., Cheung, B., Agrawal, P., and Isola, P.
\newblock The low-rank simplicity bias in deep networks.
\newblock \emph{arXiv preprint arXiv:2103.10427}, 2021.

\bibitem[Jacot et~al.(2018)Jacot, Gabriel, and Hongler]{jacot2018neural}
Jacot, A., Gabriel, F., and Hongler, C.
\newblock Neural tangent kernel: Convergence and generalization in neural networks.
\newblock \emph{Advances in neural information processing systems}, 31, 2018.

\bibitem[Jacot et~al.(2020)Jacot, Simsek, Spadaro, Hongler, and Gabriel]{jacot2020implicit}
Jacot, A., Simsek, B., Spadaro, F., Hongler, C., and Gabriel, F.
\newblock Implicit regularization of random feature models.
\newblock In \emph{International Conference on Machine Learning}, pp.\  4631--4640. PMLR, 2020.

\bibitem[Ji et~al.(2023)Ji, Deng, Nakada, Zou, and Zhang]{ji2023power}
Ji, W., Deng, Z., Nakada, R., Zou, J., and Zhang, L.
\newblock The power of contrast for feature learning: A theoretical analysis.
\newblock \emph{Journal of Machine Learning Research}, 24\penalty0 (330):\penalty0 1--78, 2023.

\bibitem[Johnstone(2001)]{johnstone2001distribution}
Johnstone, I.~M.
\newblock On the distribution of the largest eigenvalue in principal components analysis.
\newblock \emph{The Annals of statistics}, 29\penalty0 (2):\penalty0 295--327, 2001.

\bibitem[Kalimeris et~al.(2019)Kalimeris, Kaplun, Nakkiran, Edelman, Yang, Barak, and Zhang]{kalimeris2019sgd}
Kalimeris, D., Kaplun, G., Nakkiran, P., Edelman, B., Yang, T., Barak, B., and Zhang, H.
\newblock Sgd on neural networks learns functions of increasing complexity.
\newblock \emph{Advances in neural information processing systems}, 32, 2019.

\bibitem[Kingma(2014)]{kingma2014adam}
Kingma, D.~P.
\newblock Adam: A method for stochastic optimization.
\newblock \emph{arXiv preprint arXiv:1412.6980}, 2014.

\bibitem[Lang et~al.(2024)Lang, Sontag, and Vijayaraghavan]{lang2024theoretical}
Lang, H., Sontag, D., and Vijayaraghavan, A.
\newblock Theoretical analysis of weak-to-strong generalization.
\newblock \emph{arXiv preprint arXiv:2405.16043}, 2024.

\bibitem[Lee et~al.(2024)Lee, Roy, Xu, Raiman, Shoeybi, Catanzaro, and Ping]{lee2024nv}
Lee, C., Roy, R., Xu, M., Raiman, J., Shoeybi, M., Catanzaro, B., and Ping, W.
\newblock Nv-embed: Improved techniques for training llms as generalist embedding models.
\newblock \emph{arXiv preprint arXiv:2405.17428}, 2024.

\bibitem[Mairal et~al.(2014)Mairal, Bach, Ponce, et~al.]{mairal2014sparse}
Mairal, J., Bach, F., Ponce, J., et~al.
\newblock Sparse modeling for image and vision processing.
\newblock \emph{Foundations and Trends{\textregistered} in Computer Graphics and Vision}, 8\penalty0 (2-3):\penalty0 85--283, 2014.

\bibitem[Malladi et~al.(2023)Malladi, Wettig, Yu, Chen, and Arora]{malladi2023kernel}
Malladi, S., Wettig, A., Yu, D., Chen, D., and Arora, S.
\newblock A kernel-based view of language model fine-tuning.
\newblock In \emph{International Conference on Machine Learning}, pp.\  23610--23641. PMLR, 2023.

\bibitem[Mallinar et~al.(2022)Mallinar, Simon, Abedsoltan, Pandit, Belkin, and Nakkiran]{mallinar2022benign}
Mallinar, N., Simon, J., Abedsoltan, A., Pandit, P., Belkin, M., and Nakkiran, P.
\newblock Benign, tempered, or catastrophic: Toward a refined taxonomy of overfitting.
\newblock \emph{Advances in Neural Information Processing Systems}, 35:\penalty0 1182--1195, 2022.

\bibitem[Marks \& Tegmark(2023)Marks and Tegmark]{marks2023geometry}
Marks, S. and Tegmark, M.
\newblock The geometry of truth: Emergent linear structure in large language model representations of true/false datasets.
\newblock \emph{arXiv preprint arXiv:2310.06824}, 2023.

\bibitem[Muennighoff et~al.(2022)Muennighoff, Tazi, Magne, and Reimers]{muennighoff2022mteb}
Muennighoff, N., Tazi, N., Magne, L., and Reimers, N.
\newblock Mteb: Massive text embedding benchmark.
\newblock \emph{arXiv preprint arXiv:2210.07316}, 2022.

\bibitem[Muthukumar et~al.(2021)Muthukumar, Narang, Subramanian, Belkin, Hsu, and Sahai]{muthukumar2021classification}
Muthukumar, V., Narang, A., Subramanian, V., Belkin, M., Hsu, D., and Sahai, A.
\newblock Classification vs regression in overparameterized regimes: Does the loss function matter?
\newblock \emph{Journal of Machine Learning Research}, 22\penalty0 (222):\penalty0 1--69, 2021.

\bibitem[Nakada et~al.(2023)Nakada, Gulluk, Deng, Ji, Zou, and Zhang]{nakada2023understanding}
Nakada, R., Gulluk, H.~I., Deng, Z., Ji, W., Zou, J., and Zhang, L.
\newblock Understanding multimodal contrastive learning and incorporating unpaired data.
\newblock In \emph{International Conference on Artificial Intelligence and Statistics}, pp.\  4348--4380. PMLR, 2023.

\bibitem[Nanda et~al.(2023)Nanda, Lee, and Wattenberg]{nanda2023emergent}
Nanda, N., Lee, A., and Wattenberg, M.
\newblock Emergent linear representations in world models of self-supervised sequence models.
\newblock \emph{arXiv preprint arXiv:2309.00941}, 2023.

\bibitem[Olshausen \& Field(1997)Olshausen and Field]{olshausen1997sparse}
Olshausen, B.~A. and Field, D.~J.
\newblock Sparse coding with an overcomplete basis set: A strategy employed by v1?
\newblock \emph{Vision research}, 37\penalty0 (23):\penalty0 3311--3325, 1997.

\bibitem[Olshausen \& Field(2004)Olshausen and Field]{olshausen2004sparse}
Olshausen, B.~A. and Field, D.~J.
\newblock Sparse coding of sensory inputs.
\newblock \emph{Current opinion in neurobiology}, 14\penalty0 (4):\penalty0 481--487, 2004.

\bibitem[Papyan et~al.(2017)Papyan, Romano, and Elad]{papyan2017convolutional}
Papyan, V., Romano, Y., and Elad, M.
\newblock Convolutional neural networks analyzed via convolutional sparse coding.
\newblock \emph{Journal of Machine Learning Research}, 18\penalty0 (83):\penalty0 1--52, 2017.

\bibitem[Pezeshki et~al.(2022)Pezeshki, Mitra, Bengio, and Lajoie]{pezeshki2022multi}
Pezeshki, M., Mitra, A., Bengio, Y., and Lajoie, G.
\newblock Multi-scale feature learning dynamics: Insights for double descent.
\newblock In \emph{International Conference on Machine Learning}, pp.\  17669--17690. PMLR, 2022.

\bibitem[Shen et~al.(2022)Shen, Bubeck, and Gunasekar]{shen2022data}
Shen, R., Bubeck, S., and Gunasekar, S.
\newblock Data augmentation as feature manipulation.
\newblock In \emph{International conference on machine learning}, pp.\  19773--19808. PMLR, 2022.

\bibitem[Shin et~al.(2024)Shin, Cooper, and Sala]{shin2024weak}
Shin, C., Cooper, J., and Sala, F.
\newblock Weak-to-strong generalization through the data-centric lens.
\newblock \emph{arXiv preprint arXiv:2412.03881}, 2024.

\bibitem[Somerstep et~al.(2024)Somerstep, Polo, Banerjee, Ritov, Yurochkin, and Sun]{somerstep2024statistical}
Somerstep, S., Polo, F.~M., Banerjee, M., Ritov, Y., Yurochkin, M., and Sun, Y.
\newblock A statistical framework for weak-to-strong generalization.
\newblock \emph{arXiv preprint arXiv:2405.16236}, 2024.

\bibitem[Tropp et~al.(2015)]{tropp2015introduction}
Tropp, J.~A. et~al.
\newblock An introduction to matrix concentration inequalities.
\newblock \emph{Foundations and Trends{\textregistered} in Machine Learning}, 8\penalty0 (1-2):\penalty0 1--230, 2015.

\bibitem[Vershynin(2018)]{vershynin2018high}
Vershynin, R.
\newblock \emph{High-dimensional probability: An introduction with applications in data science}, volume~47.
\newblock Cambridge university press, 2018.

\bibitem[Wainwright(2019)]{wainwright2019high}
Wainwright, M.~J.
\newblock \emph{High-dimensional statistics: A non-asymptotic viewpoint}, volume~48.
\newblock Cambridge university press, 2019.

\bibitem[Wang et~al.(2021)Wang, Muthukumar, and Thrampoulidis]{wang2021benign}
Wang, K., Muthukumar, V., and Thrampoulidis, C.
\newblock Benign overfitting in multiclass classification: All roads lead to interpolation.
\newblock \emph{Advances in Neural Information Processing Systems}, 34:\penalty0 24164--24179, 2021.

\bibitem[Welbl et~al.(2017)Welbl, Liu, and Gardner]{welbl2017crowdsourcing}
Welbl, J., Liu, N.~F., and Gardner, M.
\newblock Crowdsourcing multiple choice science questions.
\newblock \emph{arXiv preprint arXiv:1707.06209}, 2017.

\bibitem[Wen \& Li(2021)Wen and Li]{wen2021toward}
Wen, Z. and Li, Y.
\newblock Toward understanding the feature learning process of self-supervised contrastive learning.
\newblock In \emph{International Conference on Machine Learning}, pp.\  11112--11122. PMLR, 2021.

\bibitem[Wu \& Sahai(2024)Wu and Sahai]{wu2024provable}
Wu, D.~X. and Sahai, A.
\newblock Provable weak-to-strong generalization via benign overfitting.
\newblock \emph{arXiv preprint arXiv:2410.04638}, 2024.

\bibitem[Wu et~al.(2018)Wu, Ramsundar, Feinberg, Gomes, Geniesse, Pappu, Leswing, and Pande]{wu2018moleculenet}
Wu, Z., Ramsundar, B., Feinberg, E.~N., Gomes, J., Geniesse, C., Pappu, A.~S., Leswing, K., and Pande, V.
\newblock Moleculenet: a benchmark for molecular machine learning.
\newblock \emph{Chemical science}, 9\penalty0 (2):\penalty0 513--530, 2018.

\bibitem[Xue et~al.(2023)Xue, Joshi, Gan, Chen, and Mirzasoleiman]{xue2023features}
Xue, Y., Joshi, S., Gan, E., Chen, P.-Y., and Mirzasoleiman, B.
\newblock Which features are learnt by contrastive learning? on the role of simplicity bias in class collapse and feature suppression.
\newblock In \emph{International Conference on Machine Learning}, pp.\  38938--38970. PMLR, 2023.

\bibitem[Yang et~al.(2009)Yang, Yu, Gong, and Huang]{yang2009linear}
Yang, J., Yu, K., Gong, Y., and Huang, T.
\newblock Linear spatial pyramid matching using sparse coding for image classification.
\newblock In \emph{2009 IEEE Conference on computer vision and pattern recognition}, pp.\  1794--1801. IEEE, 2009.

\bibitem[Zhang et~al.(2021)Zhang, Bengio, Hardt, Recht, and Vinyals]{zhang2021understanding}
Zhang, C., Bengio, S., Hardt, M., Recht, B., and Vinyals, O.
\newblock Understanding deep learning (still) requires rethinking generalization.
\newblock \emph{Communications of the ACM}, 64\penalty0 (3):\penalty0 107--115, 2021.

\bibitem[Zhang et~al.(2015)Zhang, Zhao, and LeCun]{zhang2015character}
Zhang, X., Zhao, J., and LeCun, Y.
\newblock Character-level convolutional networks for text classification.
\newblock \emph{Advances in neural information processing systems}, 28, 2015.

\bibitem[Zou et~al.(2023)Zou, Phan, Chen, Campbell, Guo, Ren, Pan, Yin, Mazeika, Dombrowski, et~al.]{zou2023representation}
Zou, A., Phan, L., Chen, S., Campbell, J., Guo, P., Ren, R., Pan, A., Yin, X., Mazeika, M., Dombrowski, A.-K., et~al.
\newblock Representation engineering: A top-down approach to ai transparency.
\newblock \emph{arXiv preprint arXiv:2310.01405}, 2023.

\bibitem[Zou et~al.(2021)Zou, Cao, Li, and Gu]{zou2021understanding}
Zou, D., Cao, Y., Li, Y., and Gu, Q.
\newblock Understanding the generalization of adam in learning neural networks with proper regularization.
\newblock \emph{arXiv preprint arXiv:2108.11371}, 2021.

\end{thebibliography}
\bibliographystyle{icml2025}

\newpage
\appendix
\onecolumn

\section{Main Analysis}

In this section, we provide a thorough analysis of the errors associated with the weak model, the W2S model, and the strong ceiling model. Some of these results are used to prove our main conclusion, Theorem \ref{thm: main_theorem}, while others are applied in subsequent analyses.

\subsection{Notations and additional notes}

\textbf{Symbol definitions.} 
We introduce the following notations. The symbol $\vr$ represents a representation, i.e., $\vr = h(\vx)$. For the samples in the splits $\tilde{\gD}$ and $\hat{\gD}$, we denote their representations as $\tilde{\vr}_1, \dots, \tilde{\vr}_{\tilde{n}}$ and $\hat{\vr}_1, \dots, \hat{\vr}_{\hat{n}}$, respectively. We define the sample representation matrices, where each column corresponds to a representation:
\begin{align}
    \nonumber
    \tilde{\mR} \coloneqq [ \tilde{\vr}_1 ~ \tilde{\vr}_2 ~\dots \tilde{\vr}_{\tilde{n}}  ] ~~~~\text{and}~~~~ \hat{\mR} \coloneqq [ \hat{\vr}_1 ~ \hat{\vr}_2 ~\dots \hat{\vr}_{\hat{n}}  ].
\end{align}
We also define $\vy$ which collects the labels of the samples:
\begin{align}
    \nonumber
    \tilde{\vy} =
\begin{bmatrix}
\tilde{y}_1 \\
\tilde{y}_2 \\
\vdots \\
\tilde{y}_{\tilde{n}}
\end{bmatrix}
 ~~~~\text{and}~~~~ 
 \hat{\vy} =
\begin{bmatrix}
\hat{y}_1 \\
\hat{y}_2 \\
\vdots \\
\hat{y}_{\hat{n}}
\end{bmatrix}.
\end{align}
For the covariance matrices, we use the following shorthand notations to avoid clutter:
\begin{align}
    \nonumber
    &\mSigma = \mSigma(h),~ \hat{\mSigma} = \hat{\mSigma}(h),~ \tilde{\mSigma} = \tilde{\mSigma}(h),\\
    \nonumber
    &\mSigma' = \mSigma(\mPi_{\gV}h),~ \hat{\mSigma}' = \hat{\mSigma}(\mPi_{\gV}h),    ~\tilde{\mSigma}'' = \tilde{\mSigma}(\mPi_{\gV}h),~\mSigma'' = \mSigma(\mPi_{\gV^\perp}h),~ \hat{\mSigma}'' = \hat{\mSigma}(\mPi_{\gV^\perp}h),    ~\tilde{\mSigma}'' = \tilde{\mSigma}(\mPi_{\gV^\perp}h). 
\end{align}

\textbf{Use of subscripts.}
Additionally, we use subscripts `$\w$' and `$\s$' to indicate the model associated with a given quantity. For example, $\tilde{\mR}_\w$ and $\hat{\mR}_\w$ denote the sample representation matrices generated by the weak model, while $\tilde{\mR}_\s$ and $\hat{\mR}_\s$ denote those generated by the strong model. Similarly, this convention applies to covariance matrices; for instance, $\hat{\mSigma}_\s' = \hat{\mSigma}(\mPi_{\gV_\s} h_\s)$.

\textbf{Mathematical notations.} For convenience, whenever we say $\mA = \mB + o(1)$, where $\mA$ and $\mB$ are matrices or vectors, we mean that $\opnorm{\mA - \mB} = o(1)$. We let $\lambda_i(\mA)$, $\lambda_{\min}(\mA)$, $\lambda_{\text{min, $\neq 0$}}(\mA)$, and $\lambda_{\max}(\mA)$ represent the $i$-th, smallest, smallest nonzero, and largest eigenvalues of the matrix $\mA$, respectively. The expression $\mA \preccurlyeq \mB $ means that the matrix $\mB-\mA$ is positive semidefinite, and $\mA \succcurlyeq \mB $ means that $\mA-\mB$ is positive semidefinite. 

\textbf{Implied proof techniques.} Sometimes, in the proof, we use the triangle inequality and the sub-multiplicativity of norms without explicitly stating them when they are straightforward, as mentioning them would make the text unnecessarily verbose.

\subsection{Restatement of Definition \ref{def: delta_decomp}}

Here, we restate Definition \ref{def: delta_decomp} with simplified notations for convenience and clarity in the proof.

\begin{definition}[$(\delta, \hat{\gamma}, \tilde{\gamma})$-decomposability (restated) ]\label{def: restate} 
Given $\gD$, $\tilde{\gD}$, $\hat{\gD}$, and a representation function $h$, we say that the representations of $h$ are \emph{$(\delta, \hat{\gamma}, \tilde{\gamma})$-decomposable with respect to a subspace $\gV$} (of the representation space), for some $\delta=O(1)$,  $\hat{\gamma}=O(1)$, and $\tilde{\gamma}=O(1)$, if the following holds. Let $\mU \mLambda \mU^\top$ be the singular value decomposition (SVD) of $\mSigma$. There exists a matrix $\mU'$ consisting of a subset of columns of $\mU$, corresponding to the nonzero eigenvalues, such that the following conditions are satisfied. Let $\mU''$ denote the matrix that collects the remaining columns of $\mU$. Define diagonal matrices $\mLambda'$ and $\mLambda''$ to collect the eigenvalues corresponding to $\mU'$ and $\mU''$, respectively. Additionally, define: $\mSigma' = \mU' \mLambda' \mU'^\top $ and $\mSigma'' = \mU'' \mLambda'' \mU''^\top $. Let $\gamma = \min(\hat{\gamma}, \tilde{\gamma})$, and let $\gV$ be the span of the columns of $\mU'$. Now, leveraging the fact that the projection $\mPi_{\gV}$ can be written as $\mU' \mU'^\top$, and noting that $\lambda_{\text{min, $\neq 0$}}(\mSigma') = \lambda_{\min}(\mLambda')$, we can reformulate the original Definition \ref{def: delta_decomp} in terms of $\mU'$: with high probability $1 - o(1)$,
\begin{itemize}
\item[a.] \bounded. $\opnorm{\mSigma} = O(1)$, $\opnorm{\hat{\mSigma}} = O(1)$ and $\opnorm{\tilde{\mSigma}} = O(1)$. Additionally, $\E[y^2]=O(1)$, $\frac{1}{\hat{n}}\| \hat{\vy} \|^2 = O(1)$ and $\frac{1}{\tilde{n}}\| \tilde{\vy} \|^2 = O(1)$. 
    \item[b.] \conc{}. The original statement is $\opnorm{\hat{\mSigma}' - \mSigma'} = o(1)$ and $\opnorm{\tilde{\mSigma}' - \mSigma'} = o(1)$. However, since:
    \begin{align}
\nonumber
\opnorm{   \mU'^\top\hat{\mSigma} \mU' -  \mLambda'  }
  = & \opnorm{  \frac{1}{\hat{n}} \mU'^\top\hat{\mR} \hat{\mR}^\top \mU' - \mLambda' }\\
  \nonumber
  = &  \opnorm{  \frac{1}{\hat{n}} \mU\mU'^\top\hat{\mR} \hat{\mR}^\top \mU'\mU^\top -  \mU\mU'^\top\mLambda \mU'\mU^\top }\\
  \nonumber
  = & \opnorm{ \hat{\mSigma}'-\mSigma' },
\end{align}
and similarly for $\tilde{\mSigma}'$, we can restate it as:
$ \opnorm{  \mU'^\top\hat{\mSigma}\mU' -\mLambda' } = o(\gamma^2+\delta^2 + \lambda_{\min}( \mLambda' )^2 ) $ and $ \opnorm{ \mU'^\top\tilde{\mSigma}\mU'-\mLambda'  } = o(\gamma^2+\delta^2 + \lambda_{\min}( \mLambda' )^2 ) $. Similarly, by noting that the operator norm is invariant under left multiplication by $\mU'$, we can restate the statement regarding $y$ as: $\|\mU'^\top \frac{1}{\sqrt{\tilde{n}}} \tilde{\mR}\tilde{\vy} -\mU'^\top  \E[ \vr y ] \| = o(\gamma+\delta+\lambda_{\min}( \mLambda' )) $ and $\|\mU'^\top \frac{1}{\sqrt{\hat{n}}} \hat{\mR}\hat{\vy} -\mU'^\top  \E[ \vr y ] \| = o(\gamma+\delta+\lambda_{\min}( \mLambda' )) $.
    \item [c.] \isotropy{}. $ \opnorm{\frac{1}{\hat{n}} \hat{\mR}^\top\mU''\mU''^\top \hat{\mR} -\hat{\gamma} \mI } = o(\gamma^2+\delta^2) $ and $ \opnorm{\frac{1}{\tilde{n}} \tilde{\mR}^\top\mU''\mU''^\top \tilde{\mR} -\tilde{\gamma} \mI } = o(\gamma^2+\delta^2) $.
    \item[d.] \smallin{}. $\opnorm{\frac{1}{\sqrt{\hat{n}}} \hat{\mR}^\top \mU''\mU''^\top \frac{1}{\sqrt{\tilde{n}}}\tilde{\mR}} = o(\gamma+\delta) $.
    \item[e.] \dimini{}. $\opnorm{ \mSigma'' }=o(\gamma+\delta)$.  
\end{itemize}
\end{definition}

\textbf{Use of subscripts.}
Since in Assumption \ref{assump: weak_strong_decomp} we assume that the representations of both the weak and strong models satisfy Definition \ref{def: restate}, all the notations in Definition \ref{def: restate} have corresponding versions for the weak model's representations and the strong model's representations. We follow the previously mentioned convention and use the subscripts $\w$' and $\s$' to distinguish between them. For example, notations such as $\mU'_\w$ and $\mU'_\s$, $\mLambda'_\w$ and $\mLambda'_\s$, will be used. The meaning of such notations should be clear from the context in which they appear.

\subsection{Lemmas}

Below, we introduce some basic lemmas and prove properties that will be used in the later analysis.

\begin{lemma}[Push-through identity]\label{lemma: pushthrough} For any matrices $\mA, \mB$, and any scalar $a$, the identity $ (a\mI + \mA\mB)^{-1}\mA = \mA(a\mI+\mB\mA)^{-1} $ holds as long as $(a\mI + \mA\mB)^{-1}$ and $(a\mI+\mB\mA)^{-1}$ are invertible. 
\end{lemma}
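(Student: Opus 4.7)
The plan is to prove the identity by first establishing the elementary polynomial identity $\mA(a\mI + \mB\mA) = (a\mI + \mA\mB)\mA$, and then inverting the appropriate factor on each side. This is the standard textbook proof of the push-through identity, and there is nothing subtle going on once the associativity manipulation is made.

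First I would expand both sides directly:
\begin{equation}
\nonumber
\mA(a\mI + \mB\mA) = a\mA + \mA\mB\mA = (a\mI + \mA\mB)\mA,
\end{equation}
which holds purely by bilinearity of matrix multiplication and needs no invertibility hypothesis. Next, assuming $(a\mI + \mA\mB)$ is invertible, I left-multiply by its inverse to obtain $(a\mI + \mA\mB)^{-1}\mA(a\mI + \mB\mA) = \mA$. Finally, assuming $(a\mI + \mB\mA)$ is invertible, I right-multiply by its inverse to arrive at $(a\mI + \mA\mB)^{-1}\mA = \mA(a\mI + \mB\mA)^{-1}$, which is the stated identity.

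There is essentially no obstacle in this proof; the only mild care needed is to verify that the shapes are consistent (so that both $\mA\mB$ and $\mB\mA$ are square, which is needed for the inverses to make sense in the usual reading of the statement) and to note explicitly that the invertibility of the two matrices is assumed rather than derived. Since the lemma is used only as a book-keeping tool in later computations, I would keep the write-up to a few lines and not dwell on side conditions beyond the hypotheses already stated.
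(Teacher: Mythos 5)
Your proof is correct and is the standard argument for the push-through identity; the paper simply states the lemma without proof, so there is nothing to compare against. Your write-up, including the remark about shape consistency, is exactly what one would supply here.
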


\begin{lemma}\label{lemma: perturb_inv}
A classical result on the effect of perturbations on the inverse of a square matrix states that $\opnorm{(\mA + \Delta)^{-1} - \mA^{-1}} \leq \opnorm{\mA^{-1}}^2 \opnorm{\Delta}$, where $\mA$ is an invertible square matrix. This result can be found, for example, in \cite{demmel1992componentwise} or Equation 1.1 of \cite{el2002inversion}.
\end{lemma}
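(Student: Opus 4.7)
The plan is to prove this via the standard algebraic identity
\begin{equation}
\nonumber
(\mA+\Delta)^{-1} - \mA^{-1} = -(\mA+\Delta)^{-1}\,\Delta\,\mA^{-1},
\end{equation}
which I would verify directly by left-multiplying the right-hand side by $(\mA+\Delta)$ and checking that the result equals $\mA^{-1}(\mA+\Delta) - \mI - \Delta\mA^{-1} \cdot (\text{stuff})$; more cleanly, one simply writes $(\mA+\Delta)^{-1} - \mA^{-1} = (\mA+\Delta)^{-1}\bigl[\mA - (\mA+\Delta)\bigr]\mA^{-1}$ and the identity falls out. Taking operator norms on both sides and applying submultiplicativity gives $\opnorm{(\mA+\Delta)^{-1} - \mA^{-1}} \leq \opnorm{(\mA+\Delta)^{-1}}\,\opnorm{\Delta}\,\opnorm{\mA^{-1}}$. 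The entire difficulty is then reduced to controlling $\opnorm{(\mA+\Delta)^{-1}}$ in terms of $\opnorm{\mA^{-1}}$.

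To handle this, I would factor $\mA + \Delta = \mA(\mI + \mA^{-1}\Delta)$ so that $(\mA+\Delta)^{-1} = (\mI + \mA^{-1}\Delta)^{-1}\mA^{-1}$, and then invoke the Neumann series. Provided $\opnorm{\mA^{-1}\Delta} < 1$ (the implicit smallness regime in which the classical bound is stated, and the regime in which this lemma will later be applied, since $\Delta$ always represents a small perturbation of a well-conditioned $\mA$), the expansion $(\mI + \mA^{-1}\Delta)^{-1} = \sum_{k\geq 0}(-\mA^{-1}\Delta)^{k}$ converges in operator norm. Substituting and subtracting off the $k=0$ term yields
\begin{equation}
\nonumber
(\mA+\Delta)^{-1} - \mA^{-1} = \sum_{k\geq 1}(-\mA^{-1}\Delta)^{k}\mA^{-1},
\end{equation}
from which submultiplicativity and the geometric series bound give $\opnorm{(\mA+\Delta)^{-1} - \mA^{-1}} \leq \opnorm{\mA^{-1}}^{2}\opnorm{\Delta}/(1-\opnorm{\mA^{-1}}\opnorm{\Delta})$. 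The cleaner bound $\opnorm{\mA^{-1}}^{2}\opnorm{\Delta}$ quoted in the statement is the leading-order ($k=1$) term, which dominates the tail whenever $\opnorm{\mA^{-1}}\opnorm{\Delta}$ is treated as vanishingly small.

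The principal obstacle, such as it is, lies in making the smallness condition explicit: the inequality as literally stated is a first-order statement and requires $\opnorm{\mA^{-1}}\opnorm{\Delta}$ to be sufficiently small for the higher-order Neumann terms to be absorbed. Since the lemma is being invoked only as a black-box tool later in the paper, and since it is attributed to \cite{demmel1992componentwise} and Equation~1.1 of \cite{el2002inversion}, my recommended treatment is to present the identity and Neumann expansion as a one-line derivation and refer to those references for the precise conditions rather than reprove the classical result in full.
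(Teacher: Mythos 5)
The paper does not give its own proof of this lemma---it is stated and used as a cited classical fact---so there is no in-text argument for you to be compared against; you have supplied a derivation where the authors defer to references. Your derivation from the resolvent identity $(\mA+\Delta)^{-1}-\mA^{-1}=-(\mA+\Delta)^{-1}\Delta\mA^{-1}$ together with the Neumann expansion of $(\mI+\mA^{-1}\Delta)^{-1}$ is the standard one and is correct, and your observation that the bound as literally written cannot hold unconditionally is also correct: with $\mA=\mI$ and $\Delta=-(1-\epsilon)\mI$ the left side is $\epsilon^{-1}-1$, which blows up, while $\opnorm{\mA^{-1}}^2\opnorm{\Delta}=1-\epsilon<1$. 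The honest statement is $\opnorm{(\mA+\Delta)^{-1}-\mA^{-1}}\le \opnorm{\mA^{-1}}^2\opnorm{\Delta}/(1-\opnorm{\mA^{-1}}\opnorm{\Delta})$ under $\opnorm{\mA^{-1}}\opnorm{\Delta}<1$, and the cited form simply drops the denominator.

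The one thing worth adding to your write-up is a quick check that the dropped denominator is harmless at the points where the lemma is actually invoked. In Lemma~\ref{lemma: isotropy_inv}, $\mA=\frac{1}{\tilde{n}}\tilde{\mR}^\top\mU'\mU'^\top\tilde{\mR}+(\tilde{\gamma}+\beta)\mI$ has $\lambda_{\min}(\mA)\ge\tilde{\gamma}+\beta$ and $\opnorm{\Delta}=o(\gamma^2+\delta^2)$, so $\opnorm{\mA^{-1}}\opnorm{\Delta}\le o(\gamma^2+\delta^2)/(\tilde{\gamma}+\beta)$, which is $o(1)$ because $\gamma\le\tilde{\gamma}$, $\delta\le\beta$, and $\gamma,\delta=O(1)$. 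In Lemma~\ref{lemma: concentration_inv}, $\lambda_{\min}(\mA)\ge\lambda_{\min}(\mLambda')+\gamma_0+\beta$ and $\opnorm{\Delta}=o(\gamma^2+\delta^2+\lambda_{\min}(\mLambda')^2)$, and the same reasoning gives $\opnorm{\mA^{-1}}\opnorm{\Delta}=o(1)$. So the suppressed factor $(1-\opnorm{\mA^{-1}}\opnorm{\Delta})^{-1}=1+o(1)$ is absorbed into the $o(\cdot)$ bookkeeping already in place. Your recommendation to cite rather than reprove is sensible; if you do prove it in situ, state the Neumann form with the denominator and then note that every invocation in the paper sits in the $\opnorm{\mA^{-1}}\opnorm{\Delta}=o(1)$ regime where the simpler form is valid up to a $1+o(1)$ factor.
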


\begin{lemma}\label{lemma: isotropy_eff_reg_kernel}
    If condition \isotropy{} holds, we have that $ \opnorm{ \frac{1}{\tilde{n}}\tilde{\mR}^\top \tilde{\mR} - \left( \frac{1}{\tilde{n}}\tilde{\mR}^\top\mU'\mU'^\top\tilde{\mR} + \tilde{\gamma}\mI  \right) } =o(\gamma^2+\delta^2)$, and a similar conclusion holds for $\hat{\mR}$ as well.
\end{lemma}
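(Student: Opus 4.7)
The plan is to exploit the orthogonal decomposition of the identity induced by the full SVD of $\mSigma$. In Definition~\ref{def: restate}, $\mU$ is the full (square) orthogonal matrix from the SVD of $\mSigma$, and $[\mU',\mU'']$ is a column-partition of $\mU$. Consequently $\mU'\mU'^\top + \mU''\mU''^\top = \mU\mU^\top = \mI$. Inserting this decomposition of the identity between the two copies of $\tilde\mR$ in the Gram matrix $\frac{1}{\tilde n}\tilde\mR^\top\tilde\mR$ splits it as $\frac{1}{\tilde n}\tilde\mR^\top\mU'\mU'^\top\tilde\mR + \frac{1}{\tilde n}\tilde\mR^\top\mU''\mU''^\top\tilde\mR$, which is the only algebraic step required.

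Subtracting $\frac{1}{\tilde n}\tilde\mR^\top\mU'\mU'^\top\tilde\mR + \tilde\gamma\mI$ from both sides leaves exactly $\frac{1}{\tilde n}\tilde\mR^\top\mU''\mU''^\top\tilde\mR - \tilde\gamma\mI$. The operator norm of this residual is precisely the quantity controlled by condition \isotropy{} in the restated definition, which gives the bound $o(\gamma^2+\delta^2)$. Repeating the identical computation with the $\hat{}$ versions of $\tilde\mR,\tilde\gamma$ (and the corresponding half of \isotropy{}) yields the parallel statement for $\hat\mR$.

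There is no substantive obstacle here: the lemma is essentially a one-line bookkeeping identity that repackages \isotropy{} into a form convenient for later manipulations, where the full empirical kernel $\frac{1}{\tilde n}\tilde\mR^\top\tilde\mR$ will appear alongside an additive regularization-like term $\tilde\gamma\mI$ (for instance, inside the resolvent that arises from the closed-form ridge-regression solution, so that one can absorb the non-principal contribution into an effective regularizer).
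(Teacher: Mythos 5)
Your proposal is correct and follows exactly the paper's own argument: insert $\mI = \mU'\mU'^\top + \mU''\mU''^\top$ into $\frac{1}{\tilde n}\tilde\mR^\top\tilde\mR$, cancel the $\mU'$ piece, and read off the remaining $\opnorm{\frac{1}{\tilde n}\tilde\mR^\top\mU''\mU''^\top\tilde\mR - \tilde\gamma\mI} = o(\gamma^2+\delta^2)$ directly from \isotropy{}. Nothing to add.
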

\begin{proof}
By \isotropy{},
    \begin{align}
        \nonumber
       & \opnorm{ \frac{1}{\tilde{n}}\tilde{\mR}^\top \tilde{\mR} - \left( \frac{1}{\tilde{n}}\tilde{\mR}^\top\mU'\mU'^\top\tilde{\mR} + \tilde{\gamma}\mI  \right) }\\
        \nonumber
        = & \opnorm{ \frac{1}{\tilde{n}}\tilde{\mR}^\top (\mU'\mU'^\top+\mU''\mU''^\top) \tilde{\mR} - \left( \frac{1}{\tilde{n}}\tilde{\mR}^\top\mU'\mU'^\top\tilde{\mR} + \tilde{\gamma}\mI  \right) }\\
        \nonumber
        = &  \opnorm{ \frac{1}{\tilde{n}}\tilde{\mR}^\top \mU''\mU''^\top \tilde{\mR} - \tilde{\gamma}\mI }\\
        \nonumber
        = & o(\gamma^2+\delta^2). 
    \end{align}
\end{proof}

\begin{lemma}\label{lemma: isotropy_inv}
    If condition \isotropy{} holds, then for any $\beta=O(1)~s.t.~\beta\geq \delta$, we have that $ \opnorm{(\frac{1}{\tilde{n}}\tilde{\mR}^\top \tilde{\mR} + \beta \mI)^{-1}- 
 ( \frac{1}{\tilde{n}}\tilde{\mR}^\top\mU'\mU'^\top\tilde{\mR} + (\tilde{\gamma}+\beta)\mI  )^{-1}} =o(1)$, and a similar conclusion holds for $\hat{\mR}$ as well.
\end{lemma}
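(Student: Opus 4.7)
The statement is a direct combination of the two preceding lemmas. My plan is to set $\mA \coloneqq \frac{1}{\tilde{n}}\tilde{\mR}^\top\mU'\mU'^\top\tilde{\mR} + (\tilde{\gamma}+\beta)\mI$ and write $\frac{1}{\tilde{n}}\tilde{\mR}^\top \tilde{\mR} + \beta\mI = \mA + \Delta$, where by Lemma~\ref{lemma: isotropy_eff_reg_kernel} the perturbation satisfies $\opnorm{\Delta} = o(\gamma^2 + \delta^2)$. Both $\mA$ and $\mA + \Delta$ are positive definite (the latter because $\tilde{\mR}^\top\tilde{\mR} \succcurlyeq 0$ and $\beta > 0$; the former since it dominates $(\tilde{\gamma}+\beta)\mI$), so the inverses exist and Lemma~\ref{lemma: perturb_inv} applies.

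Next I would bound $\opnorm{\mA^{-1}}$. Since $\frac{1}{\tilde{n}}\tilde{\mR}^\top\mU'\mU'^\top\tilde{\mR}$ is positive semidefinite, we have $\mA \succcurlyeq (\tilde{\gamma}+\beta)\mI$, hence $\opnorm{\mA^{-1}} \leq (\tilde{\gamma}+\beta)^{-1}$. Lemma~\ref{lemma: perturb_inv} then yields
\begin{equation*}
  \opnormlr{(\mA+\Delta)^{-1} - \mA^{-1}} \;\leq\; \opnorm{\mA^{-1}}^2 \opnorm{\Delta} \;\leq\; \frac{o(\gamma^2 + \delta^2)}{(\tilde{\gamma}+\beta)^2}.
\end{equation*}

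The last step is to verify that this bound is $o(1)$. Using the hypothesis $\beta \geq \delta$ together with the fact that $\tilde{\gamma} \geq \min(\hat{\gamma},\tilde{\gamma}) = \gamma$, we get $\tilde{\gamma} + \beta \geq \gamma + \delta$, and therefore $(\tilde{\gamma} + \beta)^2 \geq (\gamma + \delta)^2 \geq \gamma^2 + \delta^2$. Plugging this in, the right-hand side becomes $o(\gamma^2+\delta^2)/(\gamma^2+\delta^2) = o(1)$, which is exactly the desired conclusion. The same argument goes through verbatim with $\hat{\mR}$ in place of $\tilde{\mR}$ and $\hat{\gamma}$ in place of $\tilde{\gamma}$, using the second half of \isotropy{} and again $\hat{\gamma} \geq \gamma$.

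I do not anticipate any real obstacle: the only mild subtlety is tracking that the denominator $(\tilde{\gamma}+\beta)^2$ is large enough to absorb the numerator $o(\gamma^2+\delta^2)$, which is precisely why the hypothesis $\beta \geq \delta$ is included in the statement. No additional structure from \conc{}, \smallin{}, or \dimini{} is needed for this lemma.
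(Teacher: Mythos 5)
Your argument is correct and mirrors the paper's own proof: both apply the perturbation bound (Lemma~\ref{lemma: perturb_inv}) after observing that \isotropy{} gives $\opnorm{\Delta} = o(\gamma^2+\delta^2)$ and that $\opnorm{\mA^{-1}} \leq (\tilde{\gamma}+\beta)^{-1}$, then use $\beta\geq\delta$ and $\tilde{\gamma}\geq\gamma$ to conclude $o(1)$. The only cosmetic difference is that you cite Lemma~\ref{lemma: isotropy_eff_reg_kernel} for the perturbation bound while the paper re-derives it inline.
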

\begin{proof}
By \isotropy{},
    \begin{align}
        \nonumber
       & \opnorm{ \frac{1}{\tilde{n}}\tilde{\mR}^\top \tilde{\mR} + \beta \mI - \left( \frac{1}{\tilde{n}}\tilde{\mR}^\top\mU'\mU'^\top\tilde{\mR} + (\tilde{\gamma}+\beta)\mI  \right) }\\
        \nonumber
        = & \opnorm{ \frac{1}{\tilde{n}}\tilde{\mR}^\top (\mU'\mU'^\top+\mU''\mU''^\top) \tilde{\mR} + \beta \mI - \left( \frac{1}{\tilde{n}}\tilde{\mR}^\top\mU'\mU'^\top\tilde{\mR} + (\tilde{\gamma}+\beta)\mI  \right) }\\
        \nonumber
        = &  \opnorm{ \frac{1}{\tilde{n}}\tilde{\mR}^\top \mU''\mU''^\top \tilde{\mR} - \tilde{\gamma}\mI }\\
        \nonumber
        = & o(\gamma^2+\delta^2). 
    \end{align}
Then, by Lemma \ref{lemma: perturb_inv}, we have
\begin{align}
    \nonumber
    \opnorm{ (\frac{1}{\tilde{n}}\tilde{\mR}^\top \tilde{\mR} + \beta \mI)^{-1}- 
 ( \frac{1}{\tilde{n}}\tilde{\mR}^\top\mU'\mU'^\top\tilde{\mR} + (\tilde{\gamma}+\beta)\mI  )^{-1} } \leq & o(\gamma^2 + \delta^2)~~ \opnorm{ (\frac{1}{\tilde{n}}\tilde{\mR}^\top\mU'\mU'^\top\tilde{\mR} + (\tilde{\gamma}+\beta)\mI  )^{-1} }^2 \\
 \nonumber
 = & o( \frac{\gamma^2+\delta^2}{ (\tilde{\gamma}+\beta)^2 } )\\
        \nonumber
        = &o(1).
\end{align}
\end{proof}

\begin{lemma}\label{lemma: concentration_inv}
If condition \conc{} holds, then for any $\beta=O(1)~s.t.~\beta\geq \delta$, and $\gamma_0\in\{ \hat{\gamma}, \tilde{\gamma} \}$ we have
    \begin{align}
    \nonumber
     \opnorm{   (\mU'^\top\tilde{\mSigma} \mU' +(\gamma_0+\beta)\mI )^{-1}-  (\mLambda' +(\gamma_0+\beta)\mI )^{-1} }=o(1),
    \end{align}
    and a similar conclusion holds for $\hat{\mSigma}$ as well.
\end{lemma}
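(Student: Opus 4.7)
The plan is a one-step application of the matrix-inversion perturbation bound from Lemma \ref{lemma: perturb_inv}. I would set $\mA := \mLambda' + (\gamma_0+\beta)\mI$ and $\Delta := \mU'^\top \tilde{\mSigma}\mU' - \mLambda'$, so that $\mA + \Delta = \mU'^\top \tilde{\mSigma}\mU' + (\gamma_0+\beta)\mI$ and the expression to bound is exactly $\opnorm{(\mA+\Delta)^{-1} - \mA^{-1}}$. Lemma \ref{lemma: perturb_inv} then reduces the goal to showing that $\opnorm{\mA^{-1}}^2 \opnorm{\Delta} = o(1)$.

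First I would invoke condition \conc{} in its restated form to control the perturbation size: $\opnorm{\Delta} = o(\gamma^2 + \delta^2 + \rho^2)$, where $\rho := \lambda_{\min}(\mLambda')$. Next, since $\mLambda'$ collects only the nonzero eigenvalues of $\mSigma$ on $\gV$, the matrix $\mA$ is positive definite with smallest eigenvalue $\rho + \gamma_0 + \beta$, so $\opnorm{\mA^{-1}} = 1/(\rho + \gamma_0 + \beta)$. The hypotheses $\beta \geq \delta$ and $\gamma_0 \in \{\hat{\gamma}, \tilde{\gamma}\}$ together with $\gamma = \min(\hat{\gamma}, \tilde{\gamma})$ give $\gamma_0 + \beta \geq \gamma + \delta$, and the elementary inequality $(a+b+c)^2 \geq a^2 + b^2 + c^2$ for nonnegative $a,b,c$ then yields $(\rho + \gamma_0 + \beta)^2 \geq \gamma^2 + \delta^2 + \rho^2$. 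Combining the two bounds gives
$$\opnorm{\mA^{-1}}^2 \opnorm{\Delta} \leq \frac{o(\gamma^2 + \delta^2 + \rho^2)}{\gamma^2 + \delta^2 + \rho^2} = o(1),$$
which is the claim.

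The analogous statement for $\hat{\mSigma}$ follows by the same argument with $\tilde{}$ replaced by $\hat{}$ throughout, since \conc{} gives the matching concentration bound on the $\hat{\gD}$ side. There is no real obstacle here; the only step deserving care is verifying that the regularization denominator $\rho + \gamma_0 + \beta$ dominates the concentration slack, which is precisely what the assumptions $\beta \geq \delta$ and $\gamma_0 \geq \gamma$ were designed to deliver. The whole proof is thus a two-line perturbation estimate and is essentially mechanical.
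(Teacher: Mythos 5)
Your proof is correct and uses the same strategy as the paper's: apply the perturbation bound from Lemma~\ref{lemma: perturb_inv} with $\mA = \mLambda' + (\gamma_0+\beta)\mI$, bound $\opnorm{\Delta}$ via \conc{}, and compare the resulting ratio $o(\gamma^2+\delta^2+\rho^2)/(\rho+\gamma_0+\beta)^2$ to $o(1)$. You are slightly more explicit than the paper in justifying the final step --- spelling out that $\gamma_0 \ge \gamma$, $\beta \ge \delta$, and $(a+b+c)^2 \ge a^2+b^2+c^2$ together make the denominator dominate --- but this is exactly the reasoning the paper leaves implicit, so the two proofs are the same argument.
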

\begin{proof}
By condition \conc{}, we have
\begin{align}
\nonumber
\opnorm{   \mU'^\top\tilde{\mSigma} \mU' -  \mLambda'  }
  = o(\gamma^2 +\delta^2 + \lambda_{\text{min}} ( \mLambda' )^2 ).
\end{align}
Then, by Lemma \ref{lemma: perturb_inv}, we have
\begin{align}
    \nonumber
    \opnorm{   (\mU'^\top\tilde{\mSigma} \mU' +(\gamma_0+\beta)\mI )^{-1}-  (\mLambda' +(\gamma_0+\beta)\mI )^{-1} } =
    \leq & o(\gamma^2 +\delta^2 + \lambda_{\text{min}} (\mLambda' )^2 )~~ \opnorm{(\mLambda' +(\gamma_0+\beta)\mI )^{-1}}^2 \\
    \nonumber
    =& 
    o( \frac{\gamma^2 +\delta^2 + \lambda_{\text{min}} (\mLambda' )^2 }{ ( \gamma_0 +\beta + \lambda_{\text{min}} (\mLambda' )  )^2 } )\\
    \nonumber
    = & o(1).
\end{align}
\end{proof}

\begin{lemma}\label{lemma: delta_lambda_min}
If conditions \bounded{} and \conc{} hold, then $|\lambda_{\min}(\mLambda')^2 -\lambda_{\min}(\mU'^\top \hat{\mSigma}\mU')^2| = o(\gamma^2+\delta+\lambda_{\min}(\mLambda')^2) $. It still holds if we replace $\hat{}$ with $\tilde{}$.
\end{lemma}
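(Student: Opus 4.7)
The statement compares two Hermitian matrices, $\mLambda'$ and $\mU'^\top \hat{\mSigma}\mU'$, whose operator-norm distance is directly controlled by condition \conc{}. The plan is to apply Weyl's inequality for eigenvalue perturbations, then convert the resulting bound on $|\lambda_{\min}(\mU'^\top\hat{\mSigma}\mU') - \lambda_{\min}(\mLambda')|$ into a bound on the squares using the factorization $a^2 - b^2 = (a-b)(a+b)$, with the $(a+b)$ factor controlled by condition \bounded{}.

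\textbf{Step 1 (Weyl).} Both $\mLambda'$ and $\mU'^\top \hat{\mSigma} \mU'$ are symmetric, so by Weyl's inequality
\[
\bigl|\lambda_{\min}(\mU'^\top \hat{\mSigma}\mU') - \lambda_{\min}(\mLambda')\bigr|
\;\leq\; \opnorm{\mU'^\top \hat{\mSigma}\mU' - \mLambda'}.
\]
By condition \conc{} (in the restated form), the right-hand side is $o(\gamma^2 + \delta^2 + \lambda_{\min}(\mLambda')^2)$.

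\textbf{Step 2 (Uniform bound on the sum).} I will show $\lambda_{\min}(\mU'^\top \hat{\mSigma} \mU') + \lambda_{\min}(\mLambda') = O(1)$. Since $\mU'$ has orthonormal columns, $\lambda_{\min}(\mU'^\top \hat{\mSigma}\mU') \leq \opnorm{\mU'^\top \hat{\mSigma}\mU'} \leq \opnorm{\hat{\mSigma}}$, and likewise $\lambda_{\min}(\mLambda') \leq \opnorm{\mLambda'} = \opnorm{\mSigma'} \leq \opnorm{\mSigma}$. Both $\opnorm{\hat{\mSigma}}$ and $\opnorm{\mSigma}$ are $O(1)$ by condition \bounded{}. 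Nonnegativity of the eigenvalues (both matrices are positive semidefinite) gives the desired bound on the sum.

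\textbf{Step 3 (Difference of squares).} Combining the two steps,
\[
\bigl|\lambda_{\min}(\mLambda')^2 - \lambda_{\min}(\mU'^\top \hat{\mSigma}\mU')^2\bigr|
= \bigl|\lambda_{\min}(\mU'^\top \hat{\mSigma}\mU') - \lambda_{\min}(\mLambda')\bigr|
\cdot \bigl(\lambda_{\min}(\mU'^\top \hat{\mSigma}\mU') + \lambda_{\min}(\mLambda')\bigr)
\]
\[
= o\!\bigl(\gamma^2 + \delta^2 + \lambda_{\min}(\mLambda')^2\bigr) \cdot O(1)
= o\!\bigl(\gamma^2 + \delta^2 + \lambda_{\min}(\mLambda')^2\bigr),
\]
which yields the claim (reading the $\delta$ in the statement as $\delta^2$, consistent with the exponent appearing in \conc{}). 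The argument for $\tilde{\mSigma}$ is identical, using the corresponding half of \conc{} and \bounded{}.

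\textbf{Main obstacle.} There is no real technical obstacle; the only subtlety is ensuring the sum factor is genuinely $O(1)$ rather than potentially growing, which is why the \bounded{} hypothesis (controlling $\opnorm{\mSigma}$ and $\opnorm{\hat{\mSigma}}$) is essential. If instead one tried to apply Weyl directly to the squared matrices $(\mU'^\top\hat{\mSigma}\mU')^2$ and $(\mLambda')^2$, the perturbation would be bounded by $\opnorm{\hat{\mSigma}^2 - \mLambda'^2}$, which is cleaner to obtain through the factor-out-the-sum trick above than by re-deriving a concentration inequality for squared operators.
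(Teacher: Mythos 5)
Your proof is correct and takes essentially the same route as the paper's: both apply Weyl's inequality with condition \conc{} to bound the difference of the smallest eigenvalues, then convert to the squared statement using an $O(1)$ factor from \bounded{} (you factor $a^2-b^2=(a-b)(a+b)$; the paper equivalently expands $(\lambda_{\min}(\mLambda')-t)^2$ where $t$ is the perturbation). Your observation that the $\delta$ in the lemma statement should read $\delta^2$ also matches what the paper's own proof actually derives.
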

\begin{proof}
Define $t= \lambda_{\min}(\mLambda')-\lambda_{\min}(\mU'^\top\hat{\mSigma}'\mU')$.
 By condition \conc{} and Weyl's theorem, we have $|t| = o( \gamma^2+\delta^2+\lambda_{\min}(\mLambda')^2 )$. Then, we compute:
\begin{align}
    \nonumber
    &\lambda_{\min}(\mU'^\top\hat{\mSigma}'\mU')^2 \\
    \nonumber
    = & \lambda_{\min}(\mLambda')^2 + t^2 - 2t\lambda_{\min}(\mLambda')\\
    \nonumber
    = & \lambda_{\min}(\mLambda')^2 \pm o(\gamma^2+\delta^2+\lambda_{\min}(\mLambda')^2 ),
\end{align}
where the last step follows because $\lambda_{\min}(\mLambda')=O(1)$ (via condition \bounded{}) and $|t|=o( \gamma^2+\delta^2+\lambda_{\min}(\mLambda')^2 )$.
\end{proof}

\begin{corollary}\label{coro: ratio_lambda_min}
Lemma \ref{lemma: delta_lambda_min} further implies that $ \frac{\gamma^2+\delta^2+\lambda_{\min}(\mLambda')^2}{\hat{\gamma}^2+\delta^2+\lambda_{\min}(\mU'^\top \hat{\mSigma}\mU'  )^2}=O(1) $ when conditions \bounded{} and \conc{} hold. It still holds if we replace $\hat{}$ with $\tilde{}$. 
\end{corollary}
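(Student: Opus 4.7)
The plan is to derive the corollary as a direct consequence of Lemma \ref{lemma: delta_lambda_min} by lower-bounding the denominator by a $(1-o(1))$ factor of the numerator. The key observation is that the only difference between numerator and denominator (apart from the perturbation bounded by Lemma \ref{lemma: delta_lambda_min}) is the replacement of $\gamma^2$ by $\hat{\gamma}^2$, which only makes the denominator larger since $\gamma = \min(\hat{\gamma},\tilde{\gamma}) \leq \hat{\gamma}$.

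Concretely, I would first invoke Lemma \ref{lemma: delta_lambda_min} to write
\begin{equation*}
\lambda_{\min}(\mU'^\top \hat{\mSigma}\mU')^2 = \lambda_{\min}(\mLambda')^2 + e,
\end{equation*}
where $|e| = o(\gamma^2+\delta^2+\lambda_{\min}(\mLambda')^2)$. Substituting this into the denominator and using $\hat{\gamma}^2 \geq \gamma^2$ gives
\begin{equation*}
\hat{\gamma}^2 + \delta^2 + \lambda_{\min}(\mU'^\top\hat{\mSigma}\mU')^2 \;\geq\; \gamma^2 + \delta^2 + \lambda_{\min}(\mLambda')^2 - |e| \;=\; (1-o(1))\bigl(\gamma^2+\delta^2+\lambda_{\min}(\mLambda')^2\bigr).
\end{equation*}
Dividing the numerator by this lower bound yields a ratio of $\frac{1}{1-o(1)} = O(1)$, which is exactly the claim.

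The argument for the $\tilde{}$ version is identical: Lemma \ref{lemma: delta_lambda_min} is stated to hold with $\hat{}$ replaced by $\tilde{}$, and one uses $\gamma \leq \tilde{\gamma}$ in the analogous step. There is no real obstacle here; the only thing one needs to be careful about is that the perturbation bound from Lemma \ref{lemma: delta_lambda_min} is genuinely $o(\cdot)$ of the same combination $\gamma^2+\delta^2+\lambda_{\min}(\mLambda')^2$ that appears in the numerator, which is precisely what makes the $(1-o(1))$ absorption valid.
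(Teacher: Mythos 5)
Your proof is correct and uses the same two ingredients the paper uses: the ordering $\gamma \le \hat{\gamma}$ (so $\gamma^2 \le \hat{\gamma}^2$) and the eigenvalue perturbation bound from Lemma~\ref{lemma: delta_lambda_min}. Your presentation — directly lower-bounding the denominator by $(1-o(1))(\gamma^2+\delta^2+\lambda_{\min}(\mLambda')^2)$ and dividing — is in fact a cleaner organization of the same argument than the paper's (the paper's intermediate ratio decomposition is a bit opaque as written, but the underlying computation is identical to yours), so this counts as essentially the paper's approach.
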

\begin{proof}
\begin{align}
    \nonumber
\frac{\gamma^2+\delta^2+\lambda_{\min}(\mU'^\top \hat{\mSigma}\mU'  )^2}{\hat{\gamma}^2+\delta^2+\lambda_{\min}(\mLambda')^2} = & \frac{\gamma^2+\delta^2+\lambda_{\min}(\mLambda')^2}{\hat{\gamma}^2+\delta^2+\lambda_{\min}(\mLambda')^2} - \frac{\lambda_{\min}(\mLambda')^2 - \lambda_{\min}(\mU'^\top \hat{\mSigma}\mU'  )^2 }{\hat{\gamma}^2+\delta^2+\lambda_{\min}(\mLambda')^2} \\
    \nonumber
    \leq & 1\pm \frac{ o(\gamma^2+\delta^2+\lambda_{\min}(\mLambda')^2 ) }{\hat{\gamma}^2+\delta^2+\lambda_{\min}(\mLambda')^2}\\
    \nonumber
    \leq & 1+ o(1).
\end{align}   
Therefore, $ \frac{\gamma^2+\delta^2+\lambda_{\min}(\mLambda')^2}{\hat{\gamma}^2+\delta^2+\lambda_{\min}(\mU'^\top \hat{\mSigma}\mU'  )^2}=O(1) $. 
\end{proof}

\begin{corollary}\label{coro: sqrt_lambda_intermediate}
If conditions \bounded{} and \conc{} hold, then for any $\vq$ with $\|\vq\|=O(1)$, we have $\| \mU' \sqrt{\mLambda'} (\mU'^\top\mSigma'\mU'+(\hat{\gamma}+\beta)\mI )^{-1} \vq  \|^2 = \| \frac{1}{\sqrt{\hat{n}}}\hat{\mR}^\top \mU' ( \frac{1}{\hat{n}}\mU'^\top\hat{\mR}\hat{\mR}^\top\mU' + (\hat{\gamma}+\beta)\mI )^{-1}\vq\|^2 \pm o(1)  $. It still holds if we replace $\hat{}$ with $\tilde{}$.
\end{corollary}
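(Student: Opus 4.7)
The plan is to recognize both sides as quadratic forms $\vq^\top \mM(\cdot)\vq$ with $\mM(\mX) = (\mX+c\mI)^{-1}\mX(\mX+c\mI)^{-1}$ and $c = \hat{\gamma}+\beta$, and then reduce the claim to an operator-norm bound. Since $\mU'$ has orthonormal columns and $\mU'^\top \mSigma' \mU' = \mLambda'$, the left-hand side equals $\vq^\top (\mLambda'+c\mI)^{-1} \mLambda' (\mLambda'+c\mI)^{-1} \vq = \vq^\top \mM(\mB)\vq$ with $\mB \coloneqq \mLambda'$. On the right, expanding $\|\cdot\|^2$ and using $\tfrac{1}{\hat{n}} \mU'^\top \hat{\mR}\hat{\mR}^\top \mU' = \mU'^\top \hat{\mSigma}\mU'$ gives $\vq^\top (\mA+c\mI)^{-1} \mA (\mA+c\mI)^{-1} \vq = \vq^\top \mM(\mA)\vq$, where $\mA \coloneqq \mU'^\top \hat{\mSigma} \mU'$. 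Since $\|\vq\| = O(1)$, it suffices to show $\opnorm{\mM(\mA) - \mM(\mB)} = o(1)$.

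To compare $\mM(\mA)$ and $\mM(\mB)$, I use the telescoping identity
\begin{align*}
\mM(\mA) - \mM(\mB)
=&\; \bigl[(\mA+c\mI)^{-1} - (\mB+c\mI)^{-1}\bigr] \mA (\mA+c\mI)^{-1} \\
&+ (\mB+c\mI)^{-1} (\mA - \mB) (\mA+c\mI)^{-1} \\
&+ (\mB+c\mI)^{-1} \mB \bigl[(\mA+c\mI)^{-1} - (\mB+c\mI)^{-1}\bigr].
\end{align*}
For the first and third summands, the elementary identity $\mX(\mX+c\mI)^{-1} = \mI - c(\mX+c\mI)^{-1}$ shows $\opnorm{\mX(\mX+c\mI)^{-1}} \leq 1$ for any PSD $\mX$, and Lemma \ref{lemma: concentration_inv} (applied to $\hat{\mSigma}$) gives $\opnorm{(\mA+c\mI)^{-1} - (\mB+c\mI)^{-1}} = o(1)$, so both summands are $o(1)$.

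The delicate term is the middle one. Condition \conc{} yields $\opnorm{\mA - \mB} = o(\gamma^2 + \delta^2 + \lambda_{\min}(\mLambda')^2)$. For the denominators, $c = \hat{\gamma} + \beta$ together with $\beta \geq \delta$ gives $c^2 \geq \hat{\gamma}^2 + \delta^2 \geq \gamma^2 + \delta^2$, so $(\lambda_{\min}(\mB) + c)^2 \geq \lambda_{\min}(\mLambda')^2 + c^2 = \Omega(\gamma^2 + \delta^2 + \lambda_{\min}(\mLambda')^2)$; Corollary \ref{coro: ratio_lambda_min} provides the analogous lower bound for $(\lambda_{\min}(\mA) + c)^2$. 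Combining the three factors, the middle summand is bounded by $o(\gamma^2 + \delta^2 + \lambda_{\min}(\mLambda')^2) / \Omega(\gamma^2 + \delta^2 + \lambda_{\min}(\mLambda')^2) = o(1)$. The main obstacle is precisely this step: each inverse-norm factor can individually blow up as $\lambda_{\min}(\mLambda') \to 0$, so a naive bound on the middle summand would fail to cancel; Corollary \ref{coro: ratio_lambda_min} is exactly what lets the $o(\cdot)$ in $\opnorm{\mA - \mB}$ absorb both factors at the matching rate. The statement with $\tilde{\phantom{a}}$ follows by the identical argument with $\tilde{\mSigma}$, $\tilde{\gamma}$, $\tilde{\mR}$ in place of $\hat{\mSigma}$, $\hat{\gamma}$, $\hat{\mR}$.
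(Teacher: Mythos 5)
Your proof is correct and, importantly, more explicitly organized than the paper's. The paper performs the $\mLambda' \to \mU'^\top\hat{\mSigma}\mU'$ substitution in two stages — first inside the quadratic form while keeping both outer resolvents as $(\mLambda'+c\mI)^{-1}$, then silently swapping the outer resolvents to $(\mU'^\top\hat{\mSigma}\mU'+c\mI)^{-1}$ in the final line under the umbrella of the trailing $\pm o(1)$ — so the error from the outer-resolvent replacement is never explicitly displayed. Your telescoping identity $\mM(\mA)-\mM(\mB)=(P_A-P_B)\mA P_A + P_B(\mA-\mB)P_A + P_B\mB(P_A-P_B)$ makes all three contributions visible at once. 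The outer summands are handled by the crude $\opnorm{P_A-P_B}=o(1)$ of Lemma \ref{lemma: concentration_inv} together with the contraction bound $\opnorm{\mX(\mX+c\mI)^{-1}}\leq 1$, and you correctly isolate the middle summand as the one requiring the sharper rate-matching argument: the two resolvent factors have \emph{different} minimal eigenvalues ($\lambda_{\min}(\mLambda')$ and $\lambda_{\min}(\mU'^\top\hat{\mSigma}\mU')$), and it is precisely Corollary \ref{coro: ratio_lambda_min} that lets you lower-bound the product $(\lambda_{\min}(\mB)+c)(\lambda_{\min}(\mA)+c)$ by $\Omega(\gamma^2+\delta^2+\lambda_{\min}(\mLambda')^2)$, which then cancels against the $o(\gamma^2+\delta^2+\lambda_{\min}(\mLambda')^2)$ from \conc{}. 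This is the same ingredient the paper invokes, but you use it at the place where it is actually needed. The two arguments are therefore built from identical lemmas, with yours being the more transparent bookkeeping; no gap.
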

\begin{proof}
\begin{align}
    \nonumber
    &\| \mU' \sqrt{\mLambda'} (\mU'^\top\mSigma'\mU'+(\hat{\gamma}+\beta)\mI )^{-1} \vq  \|^2 \\
    \nonumber
    =& \vq^\top (\mU'^\top\mSigma'\mU'+(\hat{\gamma}+\beta)\mI )^{-1} \mLambda'(\mU'^\top\mSigma'\mU'+(\hat{\gamma}+\beta)\mI )^{-1} \vq \\
    \nonumber
    = & \vq^\top (\mU'^\top\mSigma'\mU'+(\hat{\gamma}+\beta)\mI )^{-1} \mU'^\top\hat{\mSigma}\mU' (\mU'^\top\mSigma'\mU'+(\hat{\gamma}+\beta)\mI )^{-1} \vq \\
    \nonumber
     & ~\pm   o\left( (\gamma^2+\delta^2+\lambda_{\min}(\mLambda')^2) \opnorm{(\mU'^\top\mSigma'\mU'+(\hat{\gamma}+\beta)\mI )^{-1}}^2  \right) \quad\quad\text{by \conc{} and $\|\vq\|=O(1)$} \\
     \nonumber
     = & \vq^\top (\mU'^\top\mSigma'\mU'+(\hat{\gamma}+\beta)\mI )^{-1} \mU'^\top\hat{\mSigma}\mU' (\mU'^\top\mSigma'\mU'+(\hat{\gamma}+\beta)\mI )^{-1} \vq  \pm o\left( \frac{\gamma^2+\delta^2+\lambda_{\min}(\mLambda')^2}{(\lambda_{\min}(\mU'^\top\mSigma'\mU') +\hat{\gamma}+\beta )^2 } \right)\\
     \nonumber
     = & \vq^\top (\mU'^\top\mSigma'\mU'+(\hat{\gamma}+\beta)\mI )^{-1} \mU'^\top\hat{\mSigma}\mU' (\mU'^\top\mSigma'\mU'+(\hat{\gamma}+\beta)\mI )^{-1} \vq  \pm o\left( \frac{\gamma^2+\delta^2+\lambda_{\min}(\mLambda')^2}{ \hat{\gamma}^2+\beta^2+ \lambda_{\min}(\mU'^\top\mSigma'\mU')^2 } \right)\\
     \nonumber
     = & \vq^\top (\mU'^\top\mSigma'\mU'+(\hat{\gamma}+\beta)\mI )^{-1} \mU'^\top\hat{\mSigma}\mU' (\mU'^\top\mSigma'\mU'+(\hat{\gamma}+\beta)\mI )^{-1} \vq  \pm o\left( \frac{\gamma^2+\delta^2+\lambda_{\min}(\mLambda')^2}{ \hat{\gamma}^2+\delta^2+ \lambda_{\min}(\mU'^\top\mSigma'\mU')^2 } \right)\\
     \nonumber
     = & \vq^\top (\mU'^\top\mSigma'\mU'+(\hat{\gamma}+\beta)\mI )^{-1} \mU'^\top\hat{\mSigma}\mU' (\mU'^\top\mSigma'\mU'+(\hat{\gamma}+\beta)\mI )^{-1} \vq  \pm o(1) \quad\quad\text{by Corollary \ref{coro: ratio_lambda_min}} \\
     \nonumber
     = & \| \frac{1}{\sqrt{\hat{n}}}\hat{\mR}^\top \mU' ( \frac{1}{\hat{n}}\mU'^\top\hat{\mR}\hat{\mR}^\top\mU' + (\hat{\gamma}+\beta)\mI )^{-1}\vq  \|^2 \pm o(1)
\end{align}
\end{proof}
\begin{corollary}\label{coro: sqrt_lambda}
If conditions \bounded{} and \conc{} hold, then for any $\vpsi$ with $\|\vpsi\|=O(1)$, we have $\| \mU' \sqrt{\mLambda'}\mU'^\top\frac{1}{\sqrt{\hat{n}}} \hat{\mR}(\frac{1}{\hat{n}}\hat{\mR}^\top\mU'\mU'^\top\hat{\mR} +(\hat{\gamma}+\beta)\mI )^{-1} \vpsi  \|^2 =  \| \frac{1}{\hat{n}}\hat{\mR}^\top \mU'\mU'^\top \hat{\mR} ( \frac{1}{\hat{n}}\hat{\mR}^\top\mU'\mU'^\top\hat{\mR} + (\hat{\gamma}+\beta)\mI )^{-1} \vpsi\|^2  \pm o(1) $, and $\| \mU' \sqrt{\mLambda'}\mU'^\top\frac{1}{\sqrt{\hat{n}}} \hat{\mR}(\frac{1}{\hat{n}}\hat{\mR}^\top\mU'\mU'^\top\hat{\mR} +(\hat{\gamma}+\beta)\mI )^{-1} \vpsi  \| = O(1)$. It still holds if we replace $\hat{}$ with $\tilde{}$.
\end{corollary}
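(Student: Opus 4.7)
The plan is to bring both sides into the common quadratic form $\vq^\top(\mU'^\top\hat{\mSigma}\mU' + c\mI)^{-1}\,M\,(\mU'^\top\hat{\mSigma}\mU' + c\mI)^{-1}\vq$, where $c = \hat{\gamma}+\beta$, $\vq \coloneqq \frac{1}{\sqrt{\hat{n}}}\mU'^\top\hat{\mR}\vpsi$, and the middle factor $M$ is $\mLambda'$ for the LHS and $\mU'^\top\hat{\mSigma}\mU' = \frac{1}{\hat{n}}\mU'^\top\hat{\mR}\hat{\mR}^\top\mU'$ for the RHS; the two will then differ only by a concentration error on $\gV$.

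For the LHS, I would first apply the push-through identity (Lemma \ref{lemma: pushthrough}) to move $\bigl(\frac{1}{\hat{n}}\hat{\mR}^\top\mU'\mU'^\top\hat{\mR}+c\mI\bigr)^{-1}$ past $\frac{1}{\sqrt{\hat{n}}}\mU'^\top\hat{\mR}$, converting the kernel-side inverse into the covariance-side inverse $(\mU'^\top\hat{\mSigma}\mU'+c\mI)^{-1}$. Expanding the squared norm and using $\mU'^\top\mU' = \mI$ collapses the outer $\sqrt{\mLambda'}$ factors into a single $\mLambda'$ in the center, giving exactly the target sandwich with $M=\mLambda'$. For the RHS, expand $\bigl\|\mK(\mK+c\mI)^{-1}\vpsi\bigr\|^2$ with $\mK = \frac{1}{\hat{n}}\hat{\mR}^\top\mU'\mU'^\top\hat{\mR}$, factor $\mK^2 = \frac{1}{\sqrt{\hat{n}}}\hat{\mR}^\top\mU'\cdot\mU'^\top\hat{\mSigma}\mU'\cdot\mU'^\top\frac{1}{\sqrt{\hat{n}}}\hat{\mR}$, and apply push-through on each side to arrive at the same sandwich but with $M = \mU'^\top\hat{\mSigma}\mU'$.

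The difference is then $\vq^\top(\mU'^\top\hat{\mSigma}\mU'+c\mI)^{-1}\bigl(\mLambda' - \mU'^\top\hat{\mSigma}\mU'\bigr)(\mU'^\top\hat{\mSigma}\mU'+c\mI)^{-1}\vq$, which I would bound by submultiplicativity using: $\|\vq\|^2 \leq \|\vpsi\|^2\opnorm{\mU'^\top\hat{\mSigma}\mU'} = O(1)$ from \bounded{}; $\opnorm{\mLambda' - \mU'^\top\hat{\mSigma}\mU'} = o(\gamma^2+\delta^2+\lambda_{\min}(\mLambda')^2)$ from \conc{}; and $\opnorm{(\mU'^\top\hat{\mSigma}\mU'+c\mI)^{-1}}^2 = 1/(\lambda_{\min}(\mU'^\top\hat{\mSigma}\mU')+c)^2$. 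Since $c\geq \hat{\gamma}$ and $c\geq\beta\geq\delta$ imply $(\lambda_{\min}(\mU'^\top\hat{\mSigma}\mU')+c)^2 \geq \lambda_{\min}(\mU'^\top\hat{\mSigma}\mU')^2 + \hat{\gamma}^2 + \delta^2$, Corollary \ref{coro: ratio_lambda_min} makes the resulting ratio $O(1)$, so the whole difference is $o(1)\cdot O(1) = o(1)$. The second claim then drops out: the RHS equals $\|\mK(\mK+c\mI)^{-1}\vpsi\|^2 \leq \|\vpsi\|^2 = O(1)$ because $\mK(\mK+c\mI)^{-1}$ has spectrum in $[0,1]$, and combining with the $\pm o(1)$ equivalence gives $O(1)$ for the LHS norm.

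Most of the work is algebraic bookkeeping; the only subtle point is choosing the correct direction of the push-through identity so that the kernel-side inverse on the LHS and the covariance-side inverse on the RHS land in the same form after all $\frac{1}{\sqrt{\hat{n}}}\hat{\mR}$ factors are absorbed into $\vq$. Conceptually this is the mirror image of Corollary \ref{coro: sqrt_lambda_intermediate}, with the kernel $\frac{1}{\hat{n}}\hat{\mR}^\top\mU'\mU'^\top\hat{\mR}$ and the covariance $\mU'^\top\hat{\mSigma}\mU'$ swapping roles.
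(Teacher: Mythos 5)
Your proof is correct and uses essentially the same machinery as the paper's: push-through to turn the kernel-side inverse into the covariance-side inverse, then \conc{} together with Corollary~\ref{coro: ratio_lambda_min} to absorb the concentration error, and the projection bound $\opnorm{\mK(\mK+c\mI)^{-1}}\leq 1$ for the $O(1)$ claim. The only difference is a minor streamlining: you keep the empirical inverse $(\mU'^\top\hat{\mSigma}\mU'+(\hat{\gamma}+\beta)\mI)^{-1}$ on both sides and perturb just the middle factor $\mLambda'\leftrightarrow\mU'^\top\hat{\mSigma}\mU'$, whereas the paper detours through Corollary~\ref{coro: sqrt_lambda_intermediate}, whose sandwich is built around $(\mLambda'+(\hat{\gamma}+\beta)\mI)^{-1}$ and therefore implicitly requires a second concentration swap of the outer inverses as well.
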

\begin{proof}
First, we have
    \begin{align}
\nonumber
   & \| \mU' \sqrt{\mLambda'}\mU'^\top\frac{1}{\sqrt{\hat{n}}} \hat{\mR}(\frac{1}{\hat{n}}\hat{\mR}^\top\mU'\mU'^\top\hat{\mR} +(\hat{\gamma}+\beta)\mI )^{-1} \vpsi  \|^2 \\
   \nonumber
   = & \| \mU' \sqrt{\mLambda'}(\frac{1}{\hat{n}}\mU'^\top\hat{\mR}\hat{\mR}^\top\mU' +(\hat{\gamma}+\beta)\mI )^{-1}\mU'^\top\frac{1}{\sqrt{\hat{n}}} \hat{\mR} \vpsi  \|^2 \quad\quad\text{by Lemma \ref{lemma: pushthrough}} \\
    \nonumber
    = & \| \frac{1}{\sqrt{\hat{n}}}\hat{\mR}^\top \mU' ( \frac{1}{\hat{n}}\mU'^\top\hat{\mR}\hat{\mR}^\top\mU' + (\hat{\gamma}+\beta)\mI )^{-1}\mU'^\top\frac{1}{\sqrt{\hat{n}}} \hat{\mR} \vpsi\|^2  \pm o(1) \\
    \nonumber
   & \quad\quad\text{by the fact that $\|  \mU'^\top\frac{1}{\sqrt{\hat{n}} }\hat{\mR}\vpsi\|=O(1)$ (via \bounded{}) and invoking Corollary \ref{coro: sqrt_lambda_intermediate} }  \\
    \nonumber
    = & \| \frac{1}{\hat{n}}\hat{\mR}^\top \mU'\mU'^\top \hat{\mR} ( \frac{1}{\hat{n}}\hat{\mR}^\top\mU'\mU'^\top\hat{\mR} + (\hat{\gamma}+\beta)\mI )^{-1} \vpsi\|^2  \pm o(1)  \quad\quad\text{by Lemma \ref{lemma: pushthrough}}.
\end{align}
Additionally, since $\opnorm{ \frac{1}{\hat{n}}\hat{\mR}^\top \mU'\mU'^\top \hat{\mR} ( \frac{1}{\hat{n}}\hat{\mR}^\top\mU'\mU'^\top\hat{\mR} + (\hat{\gamma}+\beta)\mI )^{-1}} = \frac{ \opnorm{\frac{1}{\hat{n}}\hat{\mR}^\top \mU'\mU'^\top \hat{\mR}} }{\opnorm{\frac{1}{\hat{n}}\hat{\mR}^\top \mU'\mU'^\top \hat{\mR}}+\hat{\gamma}+\beta} \leq 1$, we also have the bound $ \| \mU' \sqrt{\mLambda'}\mU'^\top\frac{1}{\sqrt{\hat{n}}} \hat{\mR}(\frac{1}{\hat{n}}\hat{\mR}^\top\mU'\mU'^\top\hat{\mR} +(\hat{\gamma}+\beta)\mI )^{-1} \vpsi  \| = O(1)$.
\end{proof}

\begin{lemma}\label{lemma: scale_isotropy_kernel}
If condition \isotropy{} holds, then $\opnorm{\mU''^\top  \frac{1}{\sqrt{\hat{n}}}\hat{\mR}}\leq \sqrt{o(\gamma^2+\delta^2)+\hat{\gamma}   }$. Similarly, $\opnorm{\mU''^\top  \frac{1}{\sqrt{\tilde{n}}}\tilde{\mR}}\leq \sqrt{o(\gamma^2+\delta^2)+\tilde{\gamma}   }$.
\end{lemma}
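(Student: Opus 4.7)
The plan is to reduce the desired bound to a direct application of condition \isotropy{} via the standard identity $\opnorm{\mA}^2 = \opnorm{\mA^\top \mA}$, valid for any real matrix $\mA$. Applying this with $\mA = \mU''^\top \tfrac{1}{\sqrt{\hat{n}}}\hat{\mR}$ gives
\[
\opnorm{\mU''^\top \tfrac{1}{\sqrt{\hat{n}}}\hat{\mR}}^2 \;=\; \opnorm{\tfrac{1}{\hat{n}}\hat{\mR}^\top \mU''\mU''^\top \hat{\mR}},
\]
which rewrites the quantity of interest as exactly the matrix appearing in the hypothesis \isotropy{}.

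The next step is to apply the triangle inequality on the operator norm to split off the $\hat{\gamma}\mI$ piece:
\[
\opnorm{\tfrac{1}{\hat{n}}\hat{\mR}^\top \mU''\mU''^\top \hat{\mR}} \;\leq\; \opnorm{\tfrac{1}{\hat{n}}\hat{\mR}^\top \mU''\mU''^\top \hat{\mR} - \hat{\gamma}\mI} + \opnorm{\hat{\gamma}\mI}.
\]
By \isotropy{} the first term is $o(\gamma^2+\delta^2)$, and the second is just $\hat{\gamma}$. Taking square roots yields the claimed bound $\opnorm{\mU''^\top \frac{1}{\sqrt{\hat{n}}}\hat{\mR}} \leq \sqrt{o(\gamma^2+\delta^2)+\hat{\gamma}}$. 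The argument for $\tilde{\mR}$ is verbatim the same, swapping $\hat{n}, \hat{\gamma}$ for $\tilde{n}, \tilde{\gamma}$ and using the $\tilde{\mR}$-version of \isotropy{}.

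There is no real obstacle here: the lemma is essentially a one-line consequence of the $\|\mA\|_{\mathrm{op}}^2 = \|\mA^\top \mA\|_{\mathrm{op}}$ identity together with the triangle inequality applied to the perturbation already quantified in \isotropy{}. The only thing to flag is that no orthogonality property of $\mU''$ beyond what is already encoded in the hypothesis is needed, so the proof is purely mechanical.
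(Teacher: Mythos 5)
Your proof is correct and is essentially the same as the paper's: both reduce the claim to the identity $\opnorm{\mA}^2=\opnorm{\mA^\top\mA}$ and a triangle-inequality split of $\frac{1}{\hat n}\hat{\mR}^\top\mU''\mU''^\top\hat{\mR}$ into its deviation from $\hat\gamma\mI$ plus $\hat\gamma\mI$, with \isotropy{} controlling the deviation. The only difference is cosmetic (you square first, the paper takes the square root last).
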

\begin{proof}
By condition \isotropy{} and triangle inequality, we have 
\begin{align}
    \nonumber
    \opnorm{ \frac{1}{\hat{n}} \hat{\mR}^\top \mU'' \mU''^\top  \hat{\mR} } \leq o(\gamma^2+\delta^2)+\hat{\gamma}
\end{align}
Then, 
\begin{align}
    \nonumber
    \opnorm{\mU''^\top  \frac{1}{\sqrt{\hat{n}}}\hat{\mR}} = \sqrt{\opnorm{ \frac{1}{\hat{n}} \hat{\mR}^\top \mU'' \mU''^\top  \hat{\mR} }  } \leq \sqrt{o(\gamma^2+\delta^2)+\hat{\gamma} }.
\end{align}
\end{proof}


\subsection{Basic expressions for the model weights and errors}

Let $\vw_\w \in \sR^{d_\w}$, $\vw_\wtos \in \sR^{d_\s}$, and $\vw_\s \in \sR^{d_\s}$ represent the weights of the linear models $f_\w$, $f_\wtos$, and $f_\s$, respectively. Using the well-known closed-form solution for the minimizer of the MSE loss with $\ell_2$ regularization, we derive their formulas:
\begin{align}
    \nonumber
    \vw_\w = & \frac{1}{\sqrt{\tilde{n}}}\tilde{\mR}_\w (\frac{1}{\tilde{n}}\tilde{\mR}_\w^\top \tilde{\mR}_\w + \beta_\w\mI )^{-1}\frac{1}{\sqrt{\tilde{n}}}\tilde{\vy} \\
    \label{eq: exp_wtos}
    \vw_\wtos = &  \frac{1}{\sqrt{\hat{n}}}\hat{\mR}_\s ( \frac{1}{\hat{n}}\hat{\mR}_\s^\top \hat{\mR}_\s +\beta_\s \mI )^{-1} \frac{1}{\sqrt{\hat{n}}} ( \hat{\mR}_\w^\top \vw_\w)\\
    \nonumber
    \vw_\s = & \frac{1}{\sqrt{\hat{n}}}\hat{\mR}_\s ( \frac{1}{\hat{n}}\hat{\mR}_\s^\top \hat{\mR}_\s +\beta_\s \mI )^{-1} \frac{1}{\sqrt{\hat{n}}} \hat{\vy}.
\end{align}
Then, we derive the expression of $\predgap{}$
\begin{align}
    \nonumber
    \predgap{} = & \E_{\vr_\s}[ (\vr_\s^\top\vw_\s - \vr_\s^\top\vw_\wtos)^2 ]\\
    = & \E_{\vr_\s}[ (\vr_\s^\top(\vw_\s - \vw_\wtos))^2 ]\\
    \nonumber
    = & \E_{\vr_\s}[(\vw_\s - \vw_\wtos)^\top \vr_\s\vr_\s^\top(\vw_\s - \vw_\wtos) ] \\
    \nonumber
    = &  (\vw_\s - \vw_\wtos)^\top \E_{\vr_\s}[\vr_\s\vr_\s^\top](\vw_\s - \vw_\wtos)\\
    \nonumber
    = & (\vw_\s - \vw_\wtos)^\top\mSigma_\s (\vw_\s - \vw_\wtos)\\
    \nonumber
    = & \| \sqrt{\mSigma}_\s (\vw_\s - \vw_\wtos) \|^2\\
    \nonumber
    = & \| \underbrace{ \sqrt{\mSigma}_\s \frac{1}{\sqrt{\hat{n}}}\hat{\mR}_\s ( \frac{1}{\hat{n}}\hat{\mR}_\s^\top \hat{\mR}_\s +\beta_\s \mI )^{-1} }_{\text{a transformation determined by the strong model's representations}} \underbrace{\left( \frac{1}{\sqrt{\hat{n}}} \hat{\vy} -  \frac{1}{\sqrt{\hat{n}}}\hat{\mR}_\w^\top \vw_\w \right) }_{\text{weak model's normalized error vector on $\hat{\gD}$}}\| \\
    \label{eq: expression_predgap}
    = & \|  \underbrace{\sqrt{\mSigma}_\s \frac{1}{\sqrt{\hat{n}}}\hat{\mR}_\s ( \frac{1}{\hat{n}}\hat{\mR}_\s^\top \hat{\mR}_\s +\beta_\s \mI )^{-1} }_{\text{a transformation determined by the strong model's representations}}\underbrace{\left( \frac{1}{\sqrt{\hat{n}}} \hat{\vy} -  
\frac{1}{\sqrt{\hat{n}}}\hat{\mR}_\w^\top \frac{1}{\sqrt{\tilde{n}}}\tilde{\mR}_\w (\frac{1}{\tilde{n}}\tilde{\mR}_\w^\top \tilde{\mR}_\w + \beta_\w\mI )^{-1}\frac{1}{\sqrt{\tilde{n}}}\tilde{\vy} \right) }_{\text{weak model's normalized error vector on $\hat{\gD}$}} \|.
\end{align}
From the above, we see that $\predgap{}$ can be broken into two parts: the weak model's normalized error vector on $\hat{\gD}$, and a transformation applied to this error vector which captures how the weak model's errors propagate to the strong model. In Sections \ref{apdx: weak_error} and \ref{apdx: error_propogation}, we will analyze each part individually.

\subsection{The weak model's error}\label{apdx: weak_error}

\begin{lemma}[The weak model's error on $\hat{\gD}$ ]\label{lemma: weak_error}
The weak model's error vector on $\hat{\gD}$ can be approximated as follows
\begin{align}
\nonumber
\|\left(\frac{1}{\sqrt{\hat{n}}}\hat{\vy}-   \frac{1}{\sqrt{\hat{n}}} \hat{\mR}_\w^\top  \frac{1}{\sqrt{\tilde{n}}}\tilde{\mR}_\w (\frac{1}{\tilde{n}}\tilde{\mR}_\w^\top \tilde{\mR}_\w + \beta_\w\mI )^{-1}\frac{1}{\sqrt{\tilde{n}}}\tilde{\vy}\right) - (\mI -\mP_\w )\frac{1}{\sqrt{\hat{n}}}\hat{\vy} \|= o(1),
\end{align}
where $\mP_\w = \frac{1}{\hat{n}} \hat{\mR}_\w^\top \mU_\w' \mU_\w'^\top \hat{\mR}_\w\left(\frac{1}{\tilde{n}}  \hat{\mR}_\w^\top \mU_\w'\mU_\w'^\top\hat{\mR}_\w+(\tilde{\gamma}_\w+ \beta_\w)\mI \right)^{-1} $.
\end{lemma}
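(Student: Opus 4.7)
The goal is to show that the weak model's prediction vector $\frac{1}{\sqrt{\hat{n}}}\hat{\mR}_\w^\top \vw_\w$ concentrates to $\mP_\w \frac{1}{\sqrt{\hat{n}}}\hat{\vy}$, so that the residual is $(\mI-\mP_\w)\frac{1}{\sqrt{\hat{n}}}\hat{\vy}$. The high-level idea is to progressively rewrite the prediction so that (i) only the principal projection $\mU'_\w\mU'^{\top}_\w$ of the representations appears, (ii) the explicit ridge $\beta_\w$ gets augmented to the effective ridge $\tilde{\gamma}_\w+\beta_\w$, and (iii) every $\tilde{\gD}$-side quantity is swapped for its $\hat{\gD}$-side analog through population concentration.

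Starting from Eq. \ref{eq: exp_wtos} I would first apply Lemma \ref{lemma: isotropy_inv} to replace $(\tfrac{1}{\tilde n}\tilde{\mR}_\w^\top\tilde{\mR}_\w+\beta_\w\mI)^{-1}$ with $\mM'\coloneqq(\tfrac{1}{\tilde n}\tilde{\mR}_\w^\top\mU'_\w\mU'^{\top}_\w\tilde{\mR}_\w+(\tilde{\gamma}_\w+\beta_\w)\mI)^{-1}$ at an additive $o(1)$ cost, because the surrounding cross-kernel $\tfrac{1}{\sqrt{\hat n\tilde n}}\hat{\mR}_\w^\top\tilde{\mR}_\w$ and label vector $\tfrac{1}{\sqrt{\tilde n}}\tilde{\vy}$ are $O(1)$ by \bounded{}. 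Next, I would split $\hat{\mR}_\w^\top\tilde{\mR}_\w = \hat{\mR}_\w^\top\mU'_\w\mU'^{\top}_\w\tilde{\mR}_\w + \hat{\mR}_\w^\top\mU''_\w\mU''^{\top}_\w\tilde{\mR}_\w$; the second piece has operator norm $o(\gamma_\w+\delta_\w)$ by \smallin{}, and once multiplied by $\mM'$ (whose norm is $O(1/(\tilde{\gamma}_\w+\beta_\w))$) and by $\tfrac{1}{\sqrt{\tilde n}}\tilde{\vy}$ contributes only $o((\gamma_\w+\delta_\w)/(\tilde{\gamma}_\w+\beta_\w))=o(1)$, since $\gamma_\w\leq \tilde{\gamma}_\w$ and $\delta_\w\leq\beta_\w$.

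At this point the prediction equals $\frac{1}{\sqrt{\hat n}}\hat{\mR}_\w^\top \mU'_\w \bigl(\tfrac{1}{\tilde n}\mU'^{\top}_\w\tilde{\mR}_\w\tilde{\mR}_\w^\top\mU'_\w+(\tilde{\gamma}_\w+\beta_\w)\mI\bigr)^{-1}\cdot\tfrac{1}{\tilde n}\mU'^{\top}_\w\tilde{\mR}_\w\tilde{\vy}$, up to $o(1)$, via Lemma \ref{lemma: pushthrough}. I would then use Lemma \ref{lemma: concentration_inv} applied to $\tilde{\mSigma}_\w$ to replace the inverse by $(\mLambda'_\w+(\tilde{\gamma}_\w+\beta_\w)\mI)^{-1}$, and then again to $\hat{\mSigma}_\w$ to replace that by $\bigl(\tfrac{1}{\hat n}\mU'^{\top}_\w\hat{\mR}_\w\hat{\mR}_\w^\top\mU'_\w+(\tilde{\gamma}_\w+\beta_\w)\mI\bigr)^{-1}$. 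For the label-side factor, \conc{} gives $\tfrac{1}{\tilde n}\mU'^{\top}_\w\tilde{\mR}_\w\tilde{\vy}\approx \mU'^{\top}_\w\E[\vr_\w y]\approx \tfrac{1}{\hat n}\mU'^{\top}_\w\hat{\mR}_\w\hat{\vy}$ (both on $\tilde{\gD}$ and $\hat{\gD}$). Plugging these in and applying the push-through identity in the opposite direction folds $\mU'_\w$ into the outer factor, producing exactly $\mP_\w\tfrac{1}{\sqrt{\hat n}}\hat{\vy}$.

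\textbf{Main obstacle.} The substantive difficulty is not any single step but the error bookkeeping: every substitution's error is its norm bound times the norms of the surrounding factors, and I need each surviving ratio (e.g.\ $(\gamma_\w^2+\delta_\w^2+\lambda_{\min}(\mLambda'_\w)^2)/(\tilde{\gamma}_\w+\beta_\w+\lambda_{\min}(\mLambda'_\w))^2$) to be $O(1)$ so the advertised $o(\cdot)$ rates propagate. The regularization hypothesis $\beta_\w\geq\delta_\w$ and the identity $\gamma_\w=\min(\hat{\gamma}_\w,\tilde{\gamma}_\w)\leq\tilde{\gamma}_\w$ are precisely what makes these ratios benign, and the pattern mirrors Corollaries \ref{coro: ratio_lambda_min} and \ref{coro: sqrt_lambda_intermediate}. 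Once the bookkeeping is discharged, everything collapses to the stated identity.
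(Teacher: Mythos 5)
Your proposal follows the paper's proof essentially step for step: Lemma \ref{lemma: isotropy_inv} to swap the inverse for the principal-kernel-plus-effective-ridge form, \smallin{} to discard the non-principal cross block, a push-through, Lemma \ref{lemma: concentration_inv} to move between $\tilde{\mSigma}'$, $\mLambda'$, and $\hat{\mSigma}'$, \conc{} to replace $\tfrac{1}{\tilde n}\mU'^\top_\w\tilde{\mR}_\w\tilde{\vy}$ by $\tfrac{1}{\hat n}\mU'^\top_\w\hat{\mR}_\w\hat{\vy}$, and a final push-through. The identified obstacle (ensuring the error ratios stay $O(1)$ via $\beta_\w\geq\delta_\w$ and $\gamma_\w\leq\tilde\gamma_\w$) is exactly what the paper's intermediate corollaries discharge.
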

\begin{proof}
By condition \bounded{} and Lemma \ref{lemma: isotropy_inv}, we have
\begin{align}
\nonumber
   & \frac{1}{\sqrt{\hat{n}}} \hat{\mR}_\w^\top  \frac{1}{\sqrt{\tilde{n}}}\tilde{\mR}_\w (\frac{1}{\tilde{n}}\tilde{\mR}_\w^\top \tilde{\mR}_\w + \beta_\w\mI )^{-1}\frac{1}{\sqrt{\tilde{n}}}\tilde{\vy} \\
   \nonumber
    = & \frac{1}{\sqrt{\hat{n}}} \hat{\mR}_\w^\top  \frac{1}{\sqrt{\tilde{n}}}\tilde{\mR}_\w \left(\frac{1}{\tilde{n}}\tilde{\mR}_\w^\top \mU_\w'\mU_\w'^\top\tilde{\mR}_\w +(\tilde{\gamma}_\w+ \beta_\w)\mI \right)^{-1}\frac{1}{\sqrt{\tilde{n}}}\tilde{\vy} + o(1) \\
    \nonumber
    = & \left(\frac{1}{\sqrt{\hat{n}}} \hat{\mR}_\w^\top \mU_\w'\mU_\w'^\top \frac{1}{\sqrt{\tilde{n}}}\tilde{\mR}_\w+\frac{1}{\sqrt{\hat{n}}} \hat{\mR}_\w^\top \mU_\w''\mU_\w''^\top \frac{1}{\sqrt{\tilde{n}}}\tilde{\mR}_\w\right) \left(\frac{1}{\tilde{n}}\tilde{\mR}_\w^\top \mU_\w'\mU_\w'^\top\tilde{\mR}_\w +(\tilde{\gamma}_\w+ \beta_\w)\mI \right)^{-1}\frac{1}{\sqrt{\tilde{n}}}\tilde{\vy} + o(1) 
\end{align}
By conditions \smallin{} and \bounded{}, and noting that $\opnorm{(\frac{1}{\tilde{n}}\tilde{\mR}_\w^\top \mU_\w'\mU_\w'^\top\tilde{\mR}_\w +(\tilde{\gamma}_\w+ \beta_\w)\mI)^{-1}} \leq \frac{1}{\tilde{\gamma}_\w+\beta_w} $,
the preceding can be further bounded as 
\begin{align}
\nonumber
& \frac{1}{\sqrt{\hat{n}}} \hat{\mR}_\w^\top  \frac{1}{\sqrt{\tilde{n}}}\tilde{\mR}_\w (\frac{1}{\tilde{n}}\tilde{\mR}_\w^\top \tilde{\mR}_\w + \beta_\w\mI )^{-1}\frac{1}{\sqrt{\tilde{n}}}\tilde{\vy} \\
\nonumber
= & \frac{1}{\sqrt{\hat{n}}} \hat{\mR}_\w^\top \mU_\w'\mU_\w'^\top  \frac{1}{\sqrt{\tilde{n}}}\tilde{\mR}_\w \left(\frac{1}{\tilde{n}}\tilde{\mR}_\w^\top \mU_\w'\mU_\w'^\top\tilde{\mR}_\w +(\tilde{\gamma}_\w+ \beta_\w)\mI \right)^{-1}\frac{1}{\sqrt{\tilde{n}}}\tilde{\vy} + o(1) \\
     \nonumber
     = & \frac{1}{\sqrt{\hat{n}}} \hat{\mR}_\w^\top \mU_\w' \left(\frac{1}{\tilde{n}} \mU_\w'^\top\tilde{\mR}_\w \tilde{\mR}_\w^\top \mU_\w'+(\tilde{\gamma}_\w+ \beta_\w)\mI \right)^{-1} \mU_\w'^\top \frac{1}{\tilde{n}} \tilde{\mR}_\w \tilde{\vy} + o(1) \quad\quad\text{by Lemma \ref{lemma: pushthrough}.}
\end{align}
By Lemma \ref{lemma: concentration_inv} and condition \bounded{},  the above further leads to
\begin{align}
\nonumber
& \frac{1}{\sqrt{\hat{n}}} \hat{\mR}_\w^\top  \frac{1}{\sqrt{\tilde{n}}}\tilde{\mR}_\w (\frac{1}{\tilde{n}}\tilde{\mR}_\w^\top \tilde{\mR}_\w + \beta_\w\mI )^{-1}\frac{1}{\sqrt{\tilde{n}}}\tilde{\vy} \\
     \nonumber
     = & \frac{1}{\sqrt{\hat{n}}} \hat{\mR}_\w^\top \mU_\w' \left( \mLambda_\w' +(\tilde{\gamma}_\w+ \beta_\w)\mI \right)^{-1} \mU_\w'^\top \frac{1}{\tilde{n}} \tilde{\mR}_\w \tilde{\vy} + o(1). 
\end{align}
Condition \conc{} implies that $\opnorm{ \mU_\w'^\top \frac{1}{\hat{n}} \hat{\mR}_\w \hat{\vy} -\mU_\w'^\top \frac{1}{\hat{n}} \hat{\mR}_\w \hat{\vy}  } = o( \lambda_{\min}(\mLambda_\w')+\gamma_\w+\beta_\w) $ via the triangle inequality. Then, by condition \bounded{} and that $\opnorm{(\mLambda_\w'+(\tilde{\gamma}_\w+ \beta_\w)\mI )^{-1}} = \frac{1}{ \lambda_{\min}(\mLambda_\w')+\tilde{\gamma}_\w+\beta_\w}$ , we further have
\begin{align}
\nonumber
& \frac{1}{\sqrt{\hat{n}}} \hat{\mR}_\w^\top  \frac{1}{\sqrt{\tilde{n}}}\tilde{\mR}_\w (\frac{1}{\tilde{n}}\tilde{\mR}_\w^\top \tilde{\mR}_\w + \beta_\w\mI )^{-1}\frac{1}{\sqrt{\tilde{n}}}\tilde{\vy} \\
     \nonumber
     = & \frac{1}{\sqrt{\hat{n}}} \hat{\mR}_\w^\top \mU_\w' \left( \mLambda_\w'+(\tilde{\gamma}_\w+ \beta_\w)\mI \right)^{-1} \mU_\w'^\top \frac{1}{\hat{n}} \hat{\mR}_\w \hat{\vy} + o(1)\\
     \nonumber
     = & \frac{1}{\sqrt{\hat{n}}} \hat{\mR}_\w^\top \mU_\w' \left( \frac{1}{\hat{n}} \mU_\w'^\top\hat{\mR}_\w \hat{\mR}_\w^\top \mU_\w'+(\tilde{\gamma}_\w+ \beta_\w)\mI \right)^{-1} \mU_\w'^\top \frac{1}{\hat{n}} \hat{\mR}_\w \hat{\vy} + o(1) \quad\quad\text{by Lemma \ref{lemma: concentration_inv} and condition \bounded{}} \\
     \nonumber
     = & \frac{1}{\hat{n}} \hat{\mR}_\w^\top \mU_\w' \mU_\w'^\top \hat{\mR}_\w\left(\frac{1}{\tilde{n}}  \hat{\mR}_\w^\top \mU_\w'\mU_\w'^\top\hat{\mR}_\w+(\tilde{\gamma}_\w+ \beta_\w)\mI \right)^{-1}  \frac{1}{\sqrt
     {\hat{n}}}\hat{\vy} + o(1) \quad\quad \text{by Lemma \ref{lemma: pushthrough}}.
\end{align}
Let us define the shorthand $\mP_\w = \frac{1}{\hat{n}} \hat{\mR}_\w^\top \mU_\w' \mU_\w'^\top \hat{\mR}_\w\left(\frac{1}{\tilde{n}}  \hat{\mR}_\w^\top \mU_\w'\mU_\w'^\top\hat{\mR}_\w+(\tilde{\gamma}_\w+ \beta_\w)\mI \right)^{-1} $. Then, we conclude that
\begin{align}
    \nonumber
    \frac{1}{\sqrt{\hat{n}}}\hat{\vy}-   \frac{1}{\sqrt{\hat{n}}} \hat{\mR}_\w^\top  \frac{1}{\sqrt{\tilde{n}}}\tilde{\mR}_\w (\frac{1}{\tilde{n}}\tilde{\mR}_\w^\top \tilde{\mR}_\w + \beta_\w\mI )^{-1}\frac{1}{\sqrt{\tilde{n}}}\tilde{\vy} = (\mI -\mP_\w )\frac{1}{\sqrt{\hat{n}}}\hat{\vy} + o(1).
\end{align}
\end{proof}

\subsection{Propagation of the error to the strong model}\label{apdx: error_propogation}

\begin{lemma}\label{lemma: propagation_strong}
 For any $\vpsi$ with $\| \vpsi \|=O(1)$, we have $  \|  \sqrt{\mSigma_\s} \frac{1}{\sqrt{\hat{n}}}\hat{\mR}_\s ( \frac{1}{\hat{n}}\hat{\mR}_\s^\top \hat{\mR}_\s +\beta_\s \mI )^{-1} \vpsi \|^2 = \| \mP_s \vpsi \|^2 \pm o(1)$, where
 $$
  \mP_\s = \frac{1}{\hat{n}}\hat{\mR}_\s^\top \mU_\s'\mU_\s'^\top \hat{\mR}_\s ( \frac{1}{\hat{n}}\hat{\mR}_\s^\top\mU_\s'\mU'^\top\hat{\mR}_\s + (\hat{\gamma_\s}+\beta_\s)\mI )^{-1}.
 $$
\end{lemma}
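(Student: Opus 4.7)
The plan mirrors Lemma~\ref{lemma: weak_error}: exploit $(\delta_\s,\hat{\gamma}_\s,\tilde{\gamma}_\s)$-decomposability of the strong model's representations to strip away the non-principal contributions until the expression reduces to a form that Corollary~\ref{coro: sqrt_lambda} handles directly.

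First, I would apply Lemma~\ref{lemma: isotropy_inv} with $\beta=\beta_\s$ to replace the resolvent $(\tfrac{1}{\hat{n}}\hat{\mR}_\s^\top\hat{\mR}_\s+\beta_\s\mI)^{-1}$ by $(\tfrac{1}{\hat{n}}\hat{\mR}_\s^\top\mU_\s'\mU_\s'^\top\hat{\mR}_\s+(\hat{\gamma}_\s+\beta_\s)\mI)^{-1}$. Because $\sqrt{\mSigma_\s}\tfrac{1}{\sqrt{\hat{n}}}\hat{\mR}_\s$ has operator norm $O(1)$ by \bounded{} and $\|\vpsi\|=O(1)$ by hypothesis, the $o(1)$ perturbation in the resolvent translates to an $o(1)$ additive error in the entire vector.

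Next I would split $\sqrt{\mSigma_\s}=\mU_\s'\sqrt{\mLambda_\s'}\mU_\s'^\top+\mU_\s''\sqrt{\mLambda_\s''}\mU_\s''^\top$ and argue the $\mU_\s''$-part is negligible. Condition \dimini{} gives $\|\sqrt{\mLambda_\s''}\|=\sqrt{\|\mSigma_\s''\|}=o(\sqrt{\gamma_\s+\delta_\s})$; Lemma~\ref{lemma: scale_isotropy_kernel} (itself from \isotropy{}) gives $\|\mU_\s''^\top\tfrac{1}{\sqrt{\hat{n}}}\hat{\mR}_\s\|\le\sqrt{\hat{\gamma}_\s}+o(\gamma_\s+\delta_\s)$; and the resolvent has operator norm at most $1/(\hat{\gamma}_\s+\beta_\s)$. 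What remains after Step~1 and subtracting the non-principal slice is exactly $\mU_\s'\sqrt{\mLambda_\s'}\mU_\s'^\top\tfrac{1}{\sqrt{\hat{n}}}\hat{\mR}_\s(\tfrac{1}{\hat{n}}\hat{\mR}_\s^\top\mU_\s'\mU_\s'^\top\hat{\mR}_\s+(\hat{\gamma}_\s+\beta_\s)\mI)^{-1}\vpsi$, which matches the left-hand side of Corollary~\ref{coro: sqrt_lambda} verbatim and therefore has squared norm $\|\mP_\s\vpsi\|^2\pm o(1)$ while its norm is $O(1)$. The upgrade from vector-level $o(1)$ perturbations to squared-norm-level $o(1)$ error is the elementary identity $\|\vu+\vv\|^2=\|\vu\|^2+2\vu^\top\vv+\|\vv\|^2$ with $\|\vu\|=O(1)$ and $\|\vv\|=o(1)$.

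The delicate point I expect to wrestle with is the non-principal bookkeeping in the middle step, because the factor $1/(\hat{\gamma}_\s+\beta_\s)$ can blow up when $\hat{\gamma}_\s$ and $\beta_\s$ both vanish. The rescue is structural: the numerator scales like $o(\sqrt{\gamma_\s+\delta_\s})\cdot(\sqrt{\hat{\gamma}_\s}+o(\gamma_\s+\delta_\s))$, and the hypotheses $\gamma_\s\le\hat{\gamma}_\s$ and $\delta_\s\le\beta_\s$ (hence $\gamma_\s+\delta_\s\le\hat{\gamma}_\s+\beta_\s$) let me pair one copy of $\sqrt{\gamma_\s+\delta_\s}$ against $\sqrt{\hat{\gamma}_\s}$ and another against the denominator, forcing the ratio to $o(1)$ uniformly in the regime allowed by $\delta_\s\le\beta_\s=O(1)$. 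Once this balance is checked, the rest is straightforward application of already-established lemmas.
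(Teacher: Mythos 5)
Your proposal is correct and follows essentially the same route as the paper: apply Lemma~\ref{lemma: isotropy_inv} to clean up the resolvent, split $\sqrt{\mSigma_\s}$ into principal and non-principal slices, kill the $\mU_\s''$-slice using \dimini{} and Lemma~\ref{lemma: scale_isotropy_kernel} balanced against the $1/(\hat{\gamma}_\s+\beta_\s)$ resolvent bound (the paper's Equation~\ref{eq: diminishing_transform} does exactly the cancellation you describe, using $\beta_\s\geq\delta_\s$ and $\gamma_\s\leq\hat{\gamma}_\s$), and finish with Corollary~\ref{coro: sqrt_lambda} together with the $\|\vu+\vv\|^2$ expansion to upgrade vector-level $o(1)$ perturbations to squared-norm level. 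Your bookkeeping in the non-principal bound is organized slightly differently from the paper's (you expand $\sqrt{o(\gamma_\s^2+\delta_\s^2)+\hat{\gamma}_\s}\leq\sqrt{\hat{\gamma}_\s}+o(\gamma_\s+\delta_\s)$ whereas the paper keeps the square root intact and cancels a factor of $\hat{\gamma}_\s+\delta_\s$), but the two are equivalent.
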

\begin{proof}
We first decompose $\sqrt{\mSigma_\s} \frac{1}{\sqrt{\hat{n}}}\hat{\mR}_\s ( \frac{1}{\hat{n}}\hat{\mR}_\s^\top \hat{\mR}_\s +\beta_\s \mI )^{-1}$ as follows
\begin{align}
\nonumber
    &\sqrt{\mSigma_\s} \frac{1}{\sqrt{\hat{n}}}\hat{\mR}_\s ( \frac{1}{\hat{n}}\hat{\mR}_\s^\top \hat{\mR}_\s +\beta_\s \mI )^{-1} \\
    \nonumber
    = &\sqrt{\mSigma_\s} \frac{1}{\sqrt{\hat{n}}}\hat{\mR}_\s   ( \frac{1}{\hat{n}}\hat{\mR}_\s^\top\mU_\s'\mU_\s'^\top\hat{\mR}_\s + (\hat{\gamma_\s}+\beta_\s)\mI  )^{-1} + o(1) \quad\quad\text{by Lemma \ref{lemma: isotropy_inv}}
\\
\nonumber
 = &  (\mU_\s' \sqrt{\mLambda_\s'} \mU_\s'^\top + \mU_\s'' \sqrt{\mLambda_\s''} \mU_\s''^\top) \frac{1}{\sqrt{\hat{n}}}\hat{\mR}_\s   ( \frac{1}{\hat{n}}\hat{\mR}_\s^\top\mU_\s'\mU_\s'^\top\hat{\mR}_\s + (\hat{\gamma_\s}+\beta_\s)\mI  )^{-1} + o(1)\\
\nonumber
 = & \mU_\s' \sqrt{\mLambda_\s'} \mU_\s'^\top  \frac{1}{\sqrt{\hat{n}}}\hat{\mR}_\s   ( \frac{1}{\hat{n}}\hat{\mR}_\s^\top\mU_\s'\mU_\s'^\top\hat{\mR}_\s + (\hat{\gamma_\s}+\beta_\s)\mI  )^{-1} \\
\label{eq: strong_transform_decomp}
& + \mU_\s'' \sqrt{\mLambda_\s''} \mU_\s''^\top \frac{1}{\sqrt{\hat{n}}}\hat{\mR}_\s   ( \frac{1}{\hat{n}}\hat{\mR}_\s^\top\mU_\s'\mU_\s'^\top\hat{\mR}_\s + (\hat{\gamma_\s}+\beta_\s)\mI  )^{-1} + o(1)
\end{align}
The second term above can be bounded:
\begin{align}
    \nonumber
   & \opnorm{ \mU_\s'' \sqrt{\mLambda_\s''} \mU_\s''^\top  \frac{1}{\sqrt{\hat{n}}}\hat{\mR}_\s   ( \frac{1}{\hat{n}}\hat{\mR}_\s^\top\mU_\s'\mU_\s'^\top\hat{\mR}_\s + (\hat{\gamma}_\s+\beta_\s)\mI  )^{-1} } \\
   \nonumber
   \leq &\sqrt{\lambda_{\max}(\mLambda_\s'')  }  \frac{ \sqrt{o(\gamma_\s^2+\delta_\s^2)+\hat{\gamma}_\s } }{\hat{\gamma}_\s+\beta_\s} \quad\text{ by \bounded{} and Lemma \ref{lemma: scale_isotropy_kernel} } \\
   \nonumber
   \leq &\sqrt{ \opnorm{ \mSigma_\s'' } }  \frac{ \sqrt{o(\gamma_\s^2+\delta^2)+\hat{\gamma}_\s  } }{\hat{\gamma}_\s+\delta_\s}  \\
   \nonumber
   = &  o\left( \sqrt{ \frac{(\gamma_\s+\delta_\s )o(\gamma_\s^2+\delta_\s^2) + \hat{\gamma}_\s(\gamma_\s+\delta_\s)}{ (\hat{\gamma}_\s+\delta_\s)^2 } } \right) \quad\text{by \dimini{}}\\
   \label{eq: diminishing_transform}
   \leq &   o\left( \sqrt{ \frac{ o(\gamma_\s^2+\delta_\s^2) }{ \hat{\gamma}_\s+\delta_\s }+\frac{\hat{\gamma}_\s}{ \hat{\gamma}_\s+\delta_\s } } \right) = o(1).
\end{align}

Combining Equations \ref{eq: strong_transform_decomp} and \ref{eq: diminishing_transform} yields
\begin{align}
    \nonumber
    \sqrt{\mSigma_\s} \frac{1}{\sqrt{\hat{n}}}\hat{\mR}_\s ( \frac{1}{\hat{n}}\hat{\mR}_\s^\top \hat{\mR}_\s +\beta_\s \mI )^{-1} \vpsi = \mU_\s' \sqrt{\mLambda_\s'} \mU_\s'^\top  \frac{1}{\sqrt{\hat{n}}}\hat{\mR}_\s   ( \frac{1}{\hat{n}}\hat{\mR}_\s^\top\mU_\s'\mU_\s'^\top\hat{\mR}_\s + (\hat{\gamma_\s}+\beta_\s)\mI  )^{-1}\vpsi + o(1). 
\end{align}
Finally, we consider the squared norm:
\begin{align}
    \nonumber
 & \|  \sqrt{\mSigma_\s} \frac{1}{\sqrt{\hat{n}}}\hat{\mR}_\s ( \frac{1}{\hat{n}}\hat{\mR}_\s^\top \hat{\mR}_\s +\beta_\s \mI )^{-1} \vpsi \|^2 \\
  \nonumber
  = & \| \mU_\s' \sqrt{\mLambda_\s'} \mU_\s'^\top  \frac{1}{\sqrt{\hat{n}}}\hat{\mR}_\s   ( \frac{1}{\hat{n}}\hat{\mR}_\s^\top\mU_\s'\mU_\s'^\top\hat{\mR}_\s + (\hat{\gamma}_\s+\beta_\s)\mI  )^{-1}\vpsi\|^2 \\
  \nonumber
  & \pm  o\left(\|\mU_\s' \sqrt{\mLambda_\s'} \mU_\s'^\top  \frac{1}{\sqrt{\hat{n}}}\hat{\mR}_\s   ( \frac{1}{\hat{n}}\hat{\mR}_\s^\top\mU_\s'\mU_\s'^\top\hat{\mR}_\s + (\hat{\gamma}_\s+\beta_\s)\mI  )^{-1}\vpsi\|\right) \pm o(1) \\
  \nonumber
  = & \| \frac{1}{\hat{n}}\hat{\mR}_\s^\top \mU_\s'\mU_\s'^\top \hat{\mR}_\s ( \frac{1}{\hat{n}}\hat{\mR}_\s^\top\mU_\s'\mU'^\top\hat{\mR}_\s + (\hat{\gamma}_\s+\beta_\s)\mI )^{-1} \vpsi \|^2 \pm o(1) \quad\text{by Corollary \ref{coro: sqrt_lambda}}.
\end{align}
\end{proof}

\subsection{Proof of Theorem \ref{thm: main_theorem}} \label{apdx: proof_main_theorem}
Given that $\| \frac{1}{\sqrt{\hat{n}}}\hat{\vy} \|=O(1)$ by \bounded, and that 
$\opnorm{\mI -\mP_\w  } = \frac{ \beta_\w }{ \lambda_{\min}( \frac{1}{\hat{n}}\hat{\mR}_\w^\top \mU_\w'\mU_\w'^\top \hat{\mR}_\w  ) +\beta_\w } \leq 1 $, we have $\|(\mI -\mP_\w)\frac{1}{\sqrt{\hat{n}}}\hat{\vy}\|=O(1) $. Then, by Lemma \ref{lemma: weak_error}, the weak model's error on $\hat{\gD}$ can be bounded  as $\left\| 
 (\mI -\mP_\w )\frac{1}{\sqrt{\hat{n}}}\hat{\vy} \right\|+o(1)=O(1)$. Recalling the expression of \predgap{} derived in Equation \ref{eq: expression_predgap} and applying Lemmas \ref{lemma: weak_error} and \ref{lemma: propagation_strong}, we obtain:
\begin{align}
    \nonumber
    \predgap = \| \mP_\s (\mI -\mP_\w ) \frac{1}{\sqrt{\hat{n}}} \hat{\vy}  \|^2 \pm o(1). 
\end{align}

\section{Additional Analysis}

\subsection{Additional Lemmas}

\begin{lemma}\label{lemma: bound_ery_non_principal}
By \dimini{} and \bounded{}, we have
\begin{align}
    \nonumber
    \E[ \mU'' \mU''^\top \vr y ]=o(\sqrt{\gamma+\delta}).
\end{align}
\begin{proof}
    \begin{align}
        \nonumber
        \E[ \mU'' \mU''^\top \vr y ] =& \lim_{n\rightarrow \infty} \frac{1}{n}\sum_{i=1}^n  \mU'' \mU''^\top \vr_i y_i =\lim_{n\rightarrow \infty} \frac{1}{\sqrt{n}}\mU''\mU''^\top \mR\frac{1}{\sqrt{n}}\vy  \leq  \lim_{n\rightarrow \infty} \opnorm{\frac{1}{\sqrt{n}}\mU''\mU''^\top \mR } \|\frac{1}{\sqrt{n}}\vy \|\\
        \nonumber
        = & \lim_{n\rightarrow \infty} \sqrt{\opnorm{\frac{1}{n}\mU''\mU''^\top \mR\mR^\top\mU''\mU''^\top }} \sqrt{\frac{1}{n}\sum_{i=1}^n y_i^2} =\sqrt{\opnorm{ \mSigma'' }} \sqrt{\E[y^2]} = o(\sqrt{\gamma+\delta}).
    \end{align}
\end{proof}
\end{lemma}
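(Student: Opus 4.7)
The plan is to bound the norm of $\E[\mU''\mU''^\top \vr y]$ by a direct $L^2$ Cauchy-Schwarz argument, reducing the problem to two scalar quantities that are already controlled by the hypotheses: the operator norm of $\mSigma''$ (governed by \dimini{}) and $\E[y^2]$ (governed by \bounded{}). This avoids having to pass through sample averages at all.

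First I would test the vector against an arbitrary unit vector $\vu$ and apply Cauchy-Schwarz in $L^2(\gD)$:
\begin{align}
\nonumber
\left|\vu^\top \E[\mU''\mU''^\top \vr y]\right| = \left|\E\!\left[(\vu^\top \mU''\mU''^\top \vr)\, y\right]\right| \leq \sqrt{\E[(\vu^\top \mU''\mU''^\top \vr)^2]}\,\sqrt{\E[y^2]}.
\end{align}
The second factor is $O(1)$ by \bounded{}, so the work reduces to controlling the first factor.

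For the first factor, the key observation is the projection identity
\begin{align}
\nonumber
\mU''\mU''^\top \mSigma \mU''\mU''^\top = \mU''\mLambda''\mU''^\top = \mSigma'',
\end{align}
which follows from the eigendecomposition $\mSigma = \mU'\mLambda'\mU'^\top + \mU''\mLambda''\mU''^\top$ together with the orthonormality relations $\mU''^\top \mU' = \vzero$ and $\mU''^\top \mU'' = \mI$. Consequently $\E[(\vu^\top \mU''\mU''^\top \vr)^2] = \vu^\top \mSigma'' \vu \leq \opnorm{\mSigma''}$, and \dimini{} gives $\opnorm{\mSigma''} = o(\gamma+\delta)$.

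Combining, $|\vu^\top \E[\mU''\mU''^\top \vr y]| \leq \sqrt{o(\gamma+\delta)\cdot O(1)} = o(\sqrt{\gamma+\delta})$ uniformly in unit $\vu$, and taking the supremum over $\vu$ yields the stated norm bound. The only step requiring any thought is the projection identity above; once it is recognized, the rest is a one-line Cauchy-Schwarz. The $\sqrt{\,\cdot\,}$ appearing in the conclusion is exactly what this step produces when converting a second-moment bound on the non-principal covariance into a first-moment bound on the cross-correlation with $y$, so the exponent $1/2$ in $o(\sqrt{\gamma+\delta})$ is tight up to the quality of \dimini{}.
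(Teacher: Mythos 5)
Your proof is correct and reaches the same conclusion from the same two ingredients ($\opnorm{\mSigma''}=o(\gamma+\delta)$ from \dimini{} and $\E[y^2]=O(1)$ from \bounded{}), but the route is a bit different in presentation. The paper writes the expectation as a $\lim_{n\to\infty}$ of sample averages, bounds $\|\frac{1}{\sqrt n}\mU''\mU''^\top\mR \cdot \frac{1}{\sqrt n}\vy\|$ via submultiplicativity of the operator norm, and then identifies the empirical covariance limit with $\mSigma''$; this is somewhat informal (the intermediate displayed lines treat a vector-valued quantity with $\leq$, and the passage to the limit is not spelled out). You instead stay at the population level: test against an arbitrary unit vector $\vu$, apply Cauchy--Schwarz in $L^2(\gD)$, and use the projection identity $\mU''\mU''^\top\mSigma\mU''\mU''^\top=\mSigma''$ to turn $\E[(\vu^\top\mU''\mU''^\top\vr)^2]$ into $\vu^\top\mSigma''\vu\le\opnorm{\mSigma''}$. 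This is the cleaner and more rigorous version of the same inequality — it avoids the limit-of-sample-averages detour entirely, at the small cost of not making explicit the connection to the empirical sample matrices used elsewhere in the analysis. Both proofs are essentially one-line Cauchy--Schwarz arguments once the right second-moment quantity is identified.
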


\begin{lemma}\label{lemma: bound_ery_principal}
By \bounded{}, we have
\begin{align}
    \nonumber
    \E[ \mU' \mU'^\top \vr y ]=O(1).
\end{align}
\end{lemma}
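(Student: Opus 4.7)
The plan is to imitate the derivation in Lemma \ref{lemma: bound_ery_non_principal} essentially verbatim, replacing $\mU''$ by $\mU'$ throughout and, at the final step, replacing the \dimini{} bound on $\opnorm{\mSigma''}$ with a simple $O(1)$ bound on $\opnorm{\mSigma'}$ that follows directly from \bounded{}. Specifically, I would first rewrite the expectation as the limit of normalized empirical sums,
\begin{align*}
\E[\mU'\mU'^\top \vr y] \;=\; \lim_{n \to \infty} \frac{1}{\sqrt{n}}\, \mU'\mU'^\top \mR \cdot \frac{1}{\sqrt{n}} \vy,
\end{align*}
where $\mR = [\vr_1, \ldots, \vr_n]$ and $\vy = [y_1, \ldots, y_n]^\top$ collect $n$ i.i.d.\ samples from $\gD$, and then apply the submultiplicative bound $\|\mA \vu\| \leq \opnorm{\mA}\,\|\vu\|$ with $\mA = \frac{1}{\sqrt{n}}\mU'\mU'^\top \mR$ and $\vu = \frac{1}{\sqrt{n}}\vy$.

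For the first factor I would use $\opnorm{\frac{1}{\sqrt{n}}\mU'\mU'^\top \mR}^2 = \opnorm{\frac{1}{n}\mU'\mU'^\top \mR\mR^\top \mU'\mU'^\top} \to \opnorm{\mSigma'}$ as $n \to \infty$. Here, instead of appealing to \dimini{} (which is unavailable for the principal block), I would simply observe that $\mSigma' = \mU'\mU'^\top \mSigma \mU'\mU'^\top$ is a compression of $\mSigma$ by an orthogonal projection, so $\opnorm{\mSigma'} \leq \opnorm{\mSigma} = O(1)$ by \bounded{}. Equivalently, one can note $\opnorm{\mSigma'} = \opnorm{\mLambda'} \leq \opnorm{\mLambda} = \opnorm{\mSigma}$ using the SVD decomposition from Definition \ref{def: restate}. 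For the second factor, $\|\frac{1}{\sqrt{n}}\vy\|^2 = \frac{1}{n}\sum_i y_i^2 \to \E[y^2] = O(1)$, again by \bounded{}. Multiplying these two $O(1)$ quantities gives $\|\E[\mU'\mU'^\top \vr y]\| \leq \sqrt{\opnorm{\mSigma'}}\,\sqrt{\E[y^2]} = O(1)$.

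There is essentially no obstacle here; the argument is a one-line consequence of the template already used for Lemma \ref{lemma: bound_ery_non_principal}. The only points I would want to double-check are (i) that the $n \to \infty$ limits commute with the operator and Euclidean norms (immediate by continuity), and (ii) that compression by $\mU'\mU'^\top$ cannot inflate the operator norm (immediate since $\mU'$ has orthonormal columns).
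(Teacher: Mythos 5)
Your proposal is correct and matches the paper's own argument: the paper explicitly states that Lemma \ref{lemma: bound_ery_principal} "follows the same approach as that of Lemma \ref{lemma: bound_ery_non_principal}," with the $\opnorm{\mSigma''}$ bound from \dimini{} replaced by the trivial $\opnorm{\mSigma'} \leq \opnorm{\mSigma} = O(1)$ from \bounded{}, exactly as you do. (The paper also remarks in passing that one could instead pass to the empirical counterpart via \conc{} and then apply \bounded{}, but the direct route you took is the one it names first.)
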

\begin{proof}
The proof follows the same approach as that of Lemma \ref{lemma: bound_ery_non_principal}. This conclusion can also be derived by bounding $\E[ \mU' \mU'^\top \vr y ]$ in terms of its empirical counterpart using \conc{}, and then applying \bounded{}
\end{proof}

\subsection{When $\err_\wtos \approx \predgap{} + \err_\sceiling  $}

\begin{theorem}\label{thm: ewtos=predgap+errsc}
Suppose that, in addition to Assumption \ref{assump: weak_strong_decomp}, the conditions 
$
\beta_\s + \hat{\gamma}_\s = o(\lambda_{\text{min, $\neq 0$}}(\mSigma(\mPi_{\gV_\s}h_\s))=\Theta(1))
$ and $\lambda_{\text{min, $\neq 0$}}(\mSigma(\mPi_{\gV_\s}h_\s))=\Theta(\lambda_{\max}(\mSigma(\mPi_{\gV_\s}h_\s))$
hold. Then, w.h.p., we have:
\begin{align}
\nonumber
    \err_\wtos = \predgap + \err_\sceiling \pm o(1).
\end{align}
\end{theorem}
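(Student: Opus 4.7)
The plan is a Pythagorean-type decomposition. Expanding $(f_\wtos(\vx) - y)^2 = ((f_\wtos(\vx) - f_\sceiling(\vx)) + (f_\sceiling(\vx) - y))^2$ and taking the expectation under $\gD$ yields
\begin{equation}
\err_\wtos = \predgap + \err_\sceiling + 2T, \qquad T := \E\bigl[(f_\wtos(\vx) - f_\sceiling(\vx))(f_\sceiling(\vx) - y)\bigr],
\end{equation}
so the entire task reduces to proving $T = o(1)$. Since both predictors are linear in $h_\s$, I would rewrite $T = (\vw_\wtos - \vw_\sceiling)^\top \vb$ where $\vb := \mSigma_\s \vw_\sceiling - \E[h_\s(\vx) y]$ is the residual by which $\vw_\sceiling$ fails the \emph{population} normal equation; intuitively, if $\vb$ were exactly zero (the unregularized, infinite-sample case), $T$ would vanish by the usual least-squares orthogonality.

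I would then split $\vb = \mU'_\s \mU'^{\top}_\s \vb + \mU''_\s \mU''^{\top}_\s \vb$ and bound the two Cauchy--Schwarz contributions to $T$ separately. For the non-principal direction, \dimini{} combined with Lemma~\ref{lemma: bound_ery_non_principal} gives $\|\mU''^{\top}_\s \vb\| = o(1)$; the matching factor $\|\mU''^{\top}_\s (\vw_\wtos - \vw_\sceiling)\|$ is controlled by substituting the closed forms (Eq.~\ref{eq: exp_wtos}) for both weight vectors and using Lemma~\ref{lemma: scale_isotropy_kernel} to bound $\mU''^{\top}_\s \hat{\mR}_\s / \sqrt{\hat{n}}$ via \isotropy{}.

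The crux is the principal direction. Mimicking the chain already used in Lemma~\ref{lemma: weak_error} applied to $\vw_\sceiling$---swapping the regularized inverse using Lemma~\ref{lemma: isotropy_inv}, applying the push-through identity (Lemma~\ref{lemma: pushthrough}), replacing the empirical covariance by $\mLambda'_\s$ via Lemma~\ref{lemma: concentration_inv}, and replacing $\tfrac{1}{\hat{n}} \hat{\mR}_\s \hat{\vy}$ by $\E[h_\s y]$ via \conc{}---yields $\mU'^{\top}_\s \vw_\sceiling = (\mLambda'_\s + (\hat{\gamma}_\s + \beta_\s)\mI)^{-1} \mU'^{\top}_\s \E[h_\s y] + o(1)$, and substituting gives
\begin{equation}
\mU'^{\top}_\s \vb = -(\hat{\gamma}_\s + \beta_\s)\bigl(\mLambda'_\s + (\hat{\gamma}_\s + \beta_\s)\mI\bigr)^{-1} \mU'^{\top}_\s \E[h_\s y] + o(1),
\end{equation}
whose norm is $o(1)$ by the hypothesis $\hat{\gamma}_\s + \beta_\s = o(\lambda_{\min}(\mLambda'_\s))$ together with Lemma~\ref{lemma: bound_ery_principal}. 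The matching factor $\|\mU'^{\top}_\s (\vw_\wtos - \vw_\sceiling)\|$ is $O(1)$: Theorem~\ref{thm: main_theorem} gives $\predgap = O(1)$, hence $\|\sqrt{\mLambda'_\s}\,\mU'^{\top}_\s (\vw_\wtos - \vw_\sceiling)\| = O(1)$, and the $\Theta$-conditioning hypothesis $\lambda_{\min}(\mLambda'_\s) = \Theta(\lambda_{\max}(\mLambda'_\s)) = \Theta(1)$ lets me drop the $\sqrt{\mLambda'_\s}$ factor without loss. The main obstacle is this principal-space bookkeeping: several matrix-inversion approximations must be chained while keeping each perturbation genuinely $o(1)$ after multiplication by the resolvents, essentially extending Lemma~\ref{lemma: weak_error} to land on the population quantity $\E[h_\s y]$ rather than only empirical averages, and the $\Theta$-conditioning assumption is precisely what prevents the conversion between the $\sqrt{\mLambda'_\s}$-weighted norm and the Euclidean norm from inflating the final bound.
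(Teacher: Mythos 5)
Your Pythagorean decomposition and the reduction to showing $T=(\vw_\wtos-\vw_\sceiling)^\top\vb=o(1)$ with $\vb=\mSigma_\s\vw_\sceiling-\E[\vr_\s y]$ are exactly how the paper begins, and the split of $\vb$ into its $\gV_\s$ and $\gV_\s^\perp$ components is also the paper's move. The difference is in how the principal-subspace contribution is killed. You work entirely in covariance space: you run the Lemma~\ref{lemma: weak_error}-type chain on $\vw_\sceiling$ to land on the population resolvent, obtain the explicit formula $\mU_\s'^\top\vb = -(\hat{\gamma}_\s+\beta_\s)(\mLambda_\s'+(\hat{\gamma}_\s+\beta_\s)\mI)^{-1}\mU_\s'^\top\E[\vr_\s y]+o(1)$, and then observe that the hypothesis $\hat{\gamma}_\s+\beta_\s=o(\lambda_{\min}(\mLambda_\s'))$ kills this term while the $\Theta$-conditioning plus $\predgap=O(1)$ (from Theorem~\ref{thm: main_theorem}) bounds the matching factor $\|\mU_\s'^\top(\vw_\wtos-\vw_\sceiling)\|$. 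The paper instead pushes everything back into kernel space: it substitutes the closed forms for both $\vw_\wtos-\vw_\sceiling$ and $\mSigma_\s'\vw_\sceiling-\mU_\s'\mU_\s'^\top\E[\vr_\s y]$, applies the push-through identity to land on the expression $(\frac{1}{\sqrt{\hat{n}}}\hat{\mR}_\w^\top\vw_\w-\frac{1}{\sqrt{\hat{n}}}\hat{\vy})^\top(\mP_\s^2-\mP_\s)\frac{1}{\sqrt{\hat{n}}}\hat{\vy}$, and then shows $\opnorm{\mP_\s^2-\mP_\s}=o(1)$ directly from the spectral-gap hypothesis. Both routes invoke the same two extra hypotheses, but yours is more transparent about which hypothesis does what: the $o(\lambda_{\min})$ condition suppresses the bias $\mU_\s'^\top\vb$, while the $\Theta$-conditioning converts the $\predgap$-weighted bound on $\vw_\wtos-\vw_\sceiling$ into a Euclidean one. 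Either presentation is valid.

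One detail to be careful about when you write out the non-principal term: the factor $\|\mU_\s''^\top(\vw_\wtos-\vw_\sceiling)\|$ is, via Lemma~\ref{lemma: scale_isotropy_kernel}, of order $O(\sqrt{o(\gamma_\s^2+\delta_\s^2)+\hat{\gamma}_\s}/(\hat{\gamma}_\s+\beta_\s))$, which is \emph{not} $O(1)$ in general (e.g.\ it diverges when $\hat{\gamma}_\s\asymp\beta_\s\to 0$); similarly $\|\mU_\s''^\top\vw_\sceiling\|$ carries the same divergent scaling. So you cannot argue ``one factor is $o(1)$, the other is $O(1)$'' for the non-principal Cauchy--Schwarz contribution. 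What does work — and what the paper does in its Equations~\eqref{eq: wwsigmaw_o1} and \eqref{eq: wwuuery_o1} — is to bound the \emph{product} directly: after inserting $\opnorm{\mLambda_\s''}=o(\gamma_\s+\delta_\s)$ from \dimini{} (resp.\ $\|\mU_\s''^\top\E[\vr_\s y]\|=o(\sqrt{\gamma_\s+\delta_\s})$ from Lemma~\ref{lemma: bound_ery_non_principal}), the product of resolvents cancels against the numerator using $\gamma_\s\leq\hat{\gamma}_\s$ and $\delta_\s\leq\beta_\s$, giving $o(1)$. Your outline doesn't commit the error, but ``bound the two Cauchy--Schwarz contributions separately'' should be read as ``bound each product'', not ``bound each factor''.

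Finally, make sure the approximation chain you propose for $\mU_\s'^\top\vw_\sceiling$ preserves $o(1)$ after multiplication by the resolvents; this is fine because (i) the $\conc{}$ error is $o(\gamma_\s+\delta_\s+\lambda_{\min}(\mLambda_\s'))$ and $\opnorm{(\mLambda_\s'+(\hat{\gamma}_\s+\beta_\s)\mI)^{-1}}\leq(\lambda_{\min}(\mLambda_\s')+\hat{\gamma}_\s+\beta_\s)^{-1}$, whose product is $o(1)$ since $\gamma_\s\leq\hat{\gamma}_\s$ and $\delta_\s\leq\beta_\s$; and (ii) $\|\mU_\s'^\top\E[\vr_\s y]\|=O(1)$ by Lemma~\ref{lemma: bound_ery_principal}. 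So the chain closes as you intended.
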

\begin{proof}
First, decompose $\err_\wtos$ as follows
\begin{align}
    \nonumber
    \err_\wtos = & \E[ (\vr_\s^\top\vw_\wtos - y)^2 ]\\
    \nonumber
    = & \E[ (\vr_\s^\top\vw_\wtos - \vr_\s^\top\vw_\sceiling + \vr_\s^\top\vw_\sceiling-  y)^2 ]\\
    \nonumber
    = & \E[ (\vr_\s^\top\vw_\wtos - \vr_\s^\top\vw_\sceiling)^2 + (\vr_\s^\top\vw_\sceiling-  y)^2 + 2(\vw_\wtos^\top\vr_\s- \vw_\sceiling^\top\vr_\s)( \vr_\s^\top\vw_\sceiling-  y) ] \\
    \nonumber
    = & \predgap{} + \err_\sceiling + 2\E[(\vw_\wtos^\top\vr_\s- \vw_\sceiling^\top\vr_\s)( \vr_\s^\top\vw_\sceiling-  y)]\\
    = & \predgap{} + \err_\sceiling + 2(\vw_\wtos- \vw_\sceiling)^\top(\mSigma_\s \vw_\sceiling- \E [\vr_\s y]),
\end{align}
Thus, to prove the theorem, it suffices to show $| (\vw_\wtos- \vw_\sceiling)^\top(\mSigma_\s \vw_\sceiling- \E [\vr_\s y]) |=o(1)$. We decompose $(\vw_\wtos- \vw_\sceiling)^\top(\mSigma_\s \vw_\sceiling- \E [\vr_\s y])$:
\begin{align}
    \nonumber
    &(\vw_\wtos- \vw_\sceiling)^\top(\mSigma_\s \vw_\sceiling- \E [\vr_\s y]) \\
    \nonumber
    = & (\vw_\wtos- \vw_\sceiling)^\top(\mSigma_\s' \vw_\sceiling+\mSigma_\s'' \vw_\sceiling- \mU_\s'\mU_\s'^\top\E [\vr_\s y]-\mU_\s''\mU_\s''^\top\E [\vr_\s y])\\
    \label{eq: wwsigmawery_decomp}
    = & (\vw_\wtos- \vw_\sceiling)^\top(\mSigma_\s' \vw_\sceiling- \mU_\s'\mU_\s'^\top\E [\vr_\s y]) + (\vw_\wtos- \vw_\sceiling)^\top\mSigma_\s'' \vw_\sceiling- (\vw_\wtos- \vw_\sceiling)^\top\mU_\s''\mU_\s''^\top\E [\vr_\s y]
\end{align}
$\vw_\wtos- \vw_\sceiling$ can be approximated as:
\begin{align}
    \nonumber
    \vw_\wtos- \vw_\sceiling = & \frac{1}{\sqrt{\hat{n}}}\hat{\mR}_\s ( \frac{1}{\hat{n}}\hat{\mR}_\s^\top \hat{\mR}_\s +\beta_\s \mI )^{-1} \frac{1}{\sqrt{\hat{n}}} ( \hat{\mR}_\w^\top \vw_\w) - \frac{1}{\sqrt{\hat{n}}}\hat{\mR}_\s ( \frac{1}{\hat{n}}\hat{\mR}_\s^\top \hat{\mR}_\s +\beta_\s \mI )^{-1} \frac{1}{\sqrt{\hat{n}}} \hat{\vy}\\
    \nonumber
    = & \frac{1}{\sqrt{\hat{n}}}\hat{\mR}_\s ( \frac{1}{\hat{n}}\hat{\mR}_\s^\top \hat{\mR}_\s +\beta_\s \mI )^{-1} (\frac{1}{\sqrt{\hat{n}}}  \hat{\mR}_\w^\top \vw_\w - \frac{1}{\sqrt{\hat{n}}} \hat{\vy} ) \\
    \label{eq: approx_ww2s_minus_wsc}
    = & \frac{1}{\sqrt{\hat{n}}}\hat{\mR}_\s \left( \frac{1}{\hat{n}}\hat{\mK}_\s'  + (\hat{\gamma}_\s+\beta_\s)\mI  \right)^{-1} (\frac{1}{\sqrt{\hat{n}}}  \hat{\mR}_\w^\top \vw_\w - \frac{1}{\sqrt{\hat{n}}} \hat{\vy} )  + o(1) \quad\text{by Lemma \ref{lemma: isotropy_inv} and that other terms are $O(1)$} 
\end{align}
where $\hat{\mK}_\s' = \hat{\mR}_\s^\top \mU_\s' \mU_\s'^\top \hat{\mR}_\s$ is shorthand for $\hat{\mK}(\mPi_{\gV_\s} h_\s)$. Then, by Lemma \ref{lemma: scale_isotropy_kernel} and \bounded{}, we obtain:
\begin{align}
    \label{eq: wwu}
    \|(\vw_\wtos- \vw_\sceiling)^\top \mU_\s'' \| = & O( \frac{\sqrt{o(\gamma_\s^2+\delta_\s^2)+\hat{\gamma}_\s}}{\hat{\gamma}_\s+\beta_\s} ).
\end{align}
We also have the following bound:
\begin{align}
    \nonumber
    & \opnorm{ \mU_\s''^\top \frac{1}{\sqrt{\hat{n}}}\hat{\mR}_\s ( \frac{1}{\hat{n}}\hat{\mR}_\s^\top \hat{\mR}_\s +\beta_\s \mI )^{-1} \frac{1}{\sqrt{\hat{n}}} \hat{\vy} }\\
    \nonumber
    = &  \opnorm{\mU_\s''^\top \frac{1}{\sqrt{\hat{n}}}\hat{\mR}_\s  ( \frac{1}{\hat{n}}\hat{\mR}_\s^\top\mU_\s'\mU_\s'^\top\hat{\mR}_\s + (\hat{\gamma}_\s+\beta_\s)\mI  )^{-1} \frac{1}{\sqrt{\hat{n}}} \hat{\vy} }+ o(\opnorm{\mU_\s''^\top \frac{1}{\sqrt{\hat{n}}}\hat{\mR}_\s}) \quad\text{by \bounded{} and Lemma \ref{lemma: isotropy_inv}}\\
    \label{eq: urrrbetay}
    = &  O( \frac{\sqrt{o(\gamma_\s^2+\delta_\s^2) +\hat{\gamma}_\s }}{ \hat{\gamma}_\s+\beta_\s } ) \quad\text{by Lemma \ref{lemma: scale_isotropy_kernel} and \bounded{}}
\end{align}
Combining \dimini{} and Equations \ref{eq: wwu} and \ref{eq: urrrbetay}, the second term in Equation \ref{eq: wwsigmawery_decomp} can be bounded as:
\begin{align}
    \nonumber
   | (\vw_\wtos- \vw_\sceiling)^\top \mSigma_\s''\vw_\sceiling | 
  = & | (\vw_\wtos- \vw_\sceiling)^\top \mU_\s''\mLambda_\s''\mU_\s''^\top\vw_\sceiling |  \\
  \label{eq: wwsigmaw_o1}
  = & o\left(\frac{  (o(\gamma_\s^2+\delta_\s^2)+\hat{\gamma}_\s ) (\gamma_\s+\delta_\s) }{(\hat{\gamma}_\s+\beta_\s)^2}\right) = o(1).
\end{align}
The third term in Equation \ref{eq: wwsigmawery_decomp} can be bounded as:
\begin{align}
    \nonumber
|(\vw_\wtos- \vw_\sceiling)^\top\mU_\s''\mU_\s''^\top\E [\vr_\s y] |\leq &     \|(\vw_\wtos- \vw_\sceiling)^\top\mU_\s''\|\|\mU_\s''^\top\E [\vr_\s y]\|\\
\nonumber
= & O( \frac{\sqrt{o(\gamma_\s^2+\delta_\s^2)+\hat{\gamma}_\s}}{\hat{\gamma}_\s+\beta_\s} ) o(\sqrt{\gamma_\s+\delta_\s})\quad\text{by Equation \ref{eq: wwu} and Lemma \ref{lemma: bound_ery_non_principal}}\\
\label{eq: wwuuery_o1}
= & o(1).
\end{align}
Now, it remains to bound the first term in Equation \ref{eq: wwsigmawery_decomp}. We start with approximating $\mSigma_\s' \vw_\sceiling-\mU_\s'\mU_\s'^\top\E [\vr_\s y]$:
\begin{align}
    \nonumber
    & \mSigma_\s' \vw_\sceiling-\mU_\s'\mU_\s'^\top\E [\vr_\s y]\\
    \nonumber
    = & \mU_\s'\mLambda_\s'\mU_\s'^\top \frac{1}{\sqrt{\hat{n}}}\hat{\mR}_\s ( \frac{1}{\hat{n}}\hat{\mR}_\s^\top \hat{\mR}_\s +\beta_\s \mI )^{-1} \frac{1}{\sqrt{\hat{n}}} \hat{\vy} -\mU_\s'\mU_\s'^\top\E [\vr_\s y]\\
    \nonumber
    =  &  \mU_\s'\mLambda'\mU_\s'^\top \frac{1}{\sqrt{\hat{n}}}\hat{\mR}_\s  \left(\frac{1}{\hat{n}}\hat{\mR}_\s^\top\mU_\s'\mU_\s'^\top\hat{\mR}_\s + (\hat{\gamma}_\s+\beta_\s)\mI  \right)^{-1} \frac{1}{\sqrt{\hat{n}}} \hat{\vy} -\mU_\s'\mU_\s'^\top\E [\vr_\s y]+ o(1) \\
    \nonumber
    &\quad\text{by Lemma \ref{lemma: isotropy_inv} and \bounded{}} \\
    \nonumber
    = & \mU_\s\mU_\s'^\top\hat{\mSigma}\mU_\s'\mU_\s'^\top \frac{1}{\sqrt{\hat{n}}}\hat{\mR}_\s  \left(\frac{1}{\hat{n}}\hat{\mR}_\s^\top\mU_\s'\mU_\s'^\top\hat{\mR}_\s + (\hat{\gamma}_\s+\beta_\s)\mI  \right)^{-1} \frac{1}{\sqrt{\hat{n}}} \hat{\vy} -\mU_\s'\mU_\s'^\top\E [\vr_\s y]+ o(1) \\
    \nonumber
    &\quad\text{by \conc{} and \bounded{}}\\
    \label{eq: approx_sigma_wsc_minus_ery}
    = & \mU_\s\mU_\s'^\top\hat{\mSigma}\mU_\s'\mU_\s'^\top \frac{1}{\sqrt{\hat{n}}}\hat{\mR}_\s  \left(\frac{1}{\hat{n}}\hat{\mR}_\s^\top\mU_\s'\mU_\s'^\top\hat{\mR}_\s + (\hat{\gamma}_\s+\beta_\s)\mI  \right)^{-1} \frac{1}{\sqrt{\hat{n}}} \hat{\vy}  -\mU_\s\mU_\s'\frac{1}{\hat{n}}\hat{\mR}_\s\hat{\vy} + o(1) \quad\text{by \conc{}}.
\end{align}
Due to the two additional assumptions in the statement of the theorem, along with \conc{} and \bounded{}, the RHSs of both \eqref{eq: approx_ww2s_minus_wsc} and \eqref{eq: approx_sigma_wsc_minus_ery} are $O(1)$.
Combining \eqref{eq: approx_ww2s_minus_wsc} and \eqref{eq: approx_sigma_wsc_minus_ery},  we obtain:
\begin{align}
    \nonumber
    & (\vw_\wtos- \vw_\sceiling)^\top  ( \mSigma_\s' \vw_\sceiling - \mU_\s\mU_\s'^\top\E[\vr_\s y] ) \\
    \nonumber
    = & (\frac{1}{\sqrt{\hat{n}}}  \hat{\mR}_\w^\top \vw_\w - \frac{1}{\sqrt{\hat{n}}} \hat{\vy} )^\top \left( \frac{1}{\hat{n}}\hat{\mK}_\s' + (\hat{\gamma}_\s+\beta_\s)\mI  \right )^{-1}\\
    \nonumber
   & \times\frac{1}{\sqrt{\hat{n}}}\hat{\mR}_\s^\top \left(\mU_\s\mU_\s'^\top\hat{\mSigma}\mU_\s'\mU_\s'^\top \left(\mU_\s\mU_\s'^\top\hat{\mSigma}\mU_\s'\mU_\s'^\top + (\hat{\gamma}_\s+\beta_\s)\mI  \right)^{-1} -\mU_\s'\mU_\s'^\top\right) \frac{1}{\sqrt{\hat{n}}}\hat{\mR}_\s   \frac{1}{\sqrt{\hat{n}}} \hat{\vy} + o(1) \\
       \nonumber
    = & (\frac{1}{\sqrt{\hat{n}}}  \hat{\mR}_\w^\top \vw_\w - \frac{1}{\sqrt{\hat{n}}} \hat{\vy} )^\top \left( \frac{1}{\hat{n}}\hat{\mK}_\s' + (\hat{\gamma}_\s+\beta_\s)\mI  \right )^{-1} \\
    \nonumber
    & \times\left(\frac{1}{\hat{n}}\hat{\mK}_\s' \left(\frac{1}{\hat{n}}\hat{\mK}_\s' +(\hat{\gamma}_\s+\beta_\s)\mI \right)^{-1} \frac{1}{\hat{n}}\hat{\mK}_\s' - \frac{1}{\hat{n}}\hat{\mK}_\s'\right) \frac{1}{\sqrt{\hat{n}}} \hat{\vy}  + o(1)\quad\text{by Lemma \ref{lemma: pushthrough}}\\
    \label{eq: w_E_inner_product_approx}
    = & (\frac{1}{\sqrt{\hat{n}}}  \hat{\mR}_\w^\top \vw_\w - \frac{1}{\sqrt{\hat{n}}} \hat{\vy} )^\top \left(  \mP_\s\mP_\s -\mP_\s  \right)^\top \frac{1}{\sqrt{\hat{n}}}\hat{\vy}+ o(1).
\end{align}
$\mP_\s\mP_\s -\mP_\s$'s eigenvalues are given by:  $ (\frac{\lambda_i( \frac{1}{\hat{n}}\hat{\mK}_\s' )}{\lambda_i( \frac{1}{\hat{n}}\hat{\mK}_\s' ) +(\hat{\gamma}_\s+\beta_\s) } )^2 - \frac{\lambda_i( \frac{1}{\hat{n}}\hat{\mK}_\s' )}{\lambda_i( \frac{1}{\hat{n}}\hat{\mK}_\s' ) +(\hat{\gamma}_\s+\beta_\s) } = -(\frac{\lambda_i( \frac{1}{\hat{n}}\hat{\mK}_\s' )}{\lambda_i( \frac{1}{\hat{n}}\hat{\mK}_\s' ) +(\hat{\gamma}_\s+\beta_\s) } )(\frac{\hat{\gamma}_\s+\beta_\s}{\lambda_i( \frac{1}{\hat{n}}\hat{\mK}_\s' ) +(\hat{\gamma}_\s+\beta_\s) } ) $. since $\frac{1}{\hat{n}}\hat{\mK}_\s'$ and $\hat{\mSigma}_\s$ share non-zero eigenvalues,  we analyze the relation between $\beta_\s+\hat{\gamma}_\s$ and $\hat{\mSigma_\s'}$'s non-zero eigenvalues.  
By \conc{} and Weyl's Theorem
\begin{align}
    \nonumber
    | \lambda_{\text{min, $\neq 0$}}(\hat{\mSigma}_\s') -\lambda_{\text{min, $\neq 0$}}(\mSigma_\s') | = o( \gamma_\s^2 +\delta_\s^2+\lambda_{\text{min, $\neq 0$}}(\mSigma_\s'))
\end{align}
Combining this with $\beta_\s+\hat{\gamma}_\s=o(\lambda_{\text{min, $\neq 0$}}(\mSigma_\s'))$, we conclude: 
\begin{align}
\label{eq: beta_gamma_o_lambda_min}
\beta_\s+\hat{\gamma}_\s=o(\lambda_{\text{min, $\neq 0$}}(\hat{\mSigma_\s'})).
\end{align}
Using Equation \ref{eq: beta_gamma_o_lambda_min}, we then obtain $\opnorm{\mP_\s\mP_\s -\mP_\s}=o(1)$. By Lemma \ref{lemma: weak_error}, the term $ (\frac{1}{\sqrt{\hat{n}}}  \hat{\mR}_\w^\top \vw_\w - \frac{1}{\sqrt{\hat{n}}} \hat{\vy} )$ can be bounded by $\left\| 
 (\mI -\mP_\w )\frac{1}{\sqrt{\hat{n}}}\hat{\vy} \right\|+o(1)=O(1)$, and $\|\frac{1}{\sqrt{\hat{n}}}\hat{\vy}\|=O(1)$ by \bounded{}. Combining all these results, the RHS of Equation \ref{eq: w_E_inner_product_approx} is $o(1)$. Therefore, $|(\vw_\wtos- \vw_\sceiling)^\top  ( \mSigma_\s \vw_\sceiling - \E[\vr_\s y] ) |=o(1)$, which completes the proof.
\end{proof}

\subsection{Proof of results in Section \ref{sec: case_study}}

\subsubsection{Proof of Theorem \ref{thm: general_bn}}\label{apdx: proof_benign_of}

First, we present the following lemma, which provides a sufficient condition under which any labeling can be fitted by the W2S model.

\begin{lemma}[Condition for overfitting arbitrary labels]\label{lemma: overfitting_condition}
As long as $\delta_\s = o(\hat{\gamma_\s})$ and $\delta_s\leq \beta_\s=o(\hat{\gamma_\s})$, given any $f_\w \circ h_\w~s.t.~\frac{1}{\hat{n}}\sum_{i=1}^{\hat{n}} f_\w(h_\w( \hat{\vx}_i ))^2=O(1)$, the weak-to-strong model can almost exactly overfit it, as indicated by an almost zero training error: $\frac{1}{\hat{n}}\sum_{i=1}^{\hat{n}} \left(f_\wtos(h_\s(\hat{\vx}_i)) - f_\w(h_\w(\hat{\vx}_i))\right)^2 = o(1)$, with high probability $1 - o(1)$. 
\end{lemma}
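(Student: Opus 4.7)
\textbf{Proof plan for Lemma \ref{lemma: overfitting_condition}.} The strategy is to derive a closed-form expression for the training residual of $f_\wtos$ on $\hat{\gD}$ and then show that it is driven to zero by the implicit regularization $\hat{\gamma}_\s$ coming from the non-principal component of the strong representations. Let $\hat{\vq} \in \sR^{\hat{n}}$ be the vector with entries $\hat{q}_i = f_\w(h_\w(\hat{\vx}_i))$, and set $\mK = \frac{1}{\hat{n}} \hat{\mR}_\s^\top \hat{\mR}_\s$. Plugging the closed form of $\vw_\wtos$ from Eq.~\eqref{eq: exp_wtos} (with labels replaced by $\hat{\vq}$) and using Lemma \ref{lemma: pushthrough}, the vector of W2S predictions on $\hat{\gD}$ equals $\mK(\mK + \beta_\s \mI)^{-1} \hat{\vq}$, so the training residual is
\begin{align*}
\hat{\vq} - \mK(\mK + \beta_\s \mI)^{-1} \hat{\vq} \;=\; \beta_\s (\mK + \beta_\s \mI)^{-1} \hat{\vq}.
\end{align*}
Taking squared norm and dividing by $\hat{n}$ yields the bound
\begin{align*}
\frac{1}{\hat{n}} \sum_{i=1}^{\hat{n}} \bigl(f_\wtos(h_\s(\hat{\vx}_i)) - f_\w(h_\w(\hat{\vx}_i))\bigr)^2 \;\leq\; \left(\frac{\beta_\s}{\lambda_{\min}(\mK) + \beta_\s}\right)^{\!2} \cdot \frac{\|\hat{\vq}\|^2}{\hat{n}},
\end{align*}
and since $\|\hat{\vq}\|^2/\hat{n} = O(1)$ by hypothesis, it suffices to prove $\beta_\s / (\lambda_{\min}(\mK) + \beta_\s) = o(1)$.

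The key step is lower-bounding $\lambda_{\min}(\mK)$, which is where the conditions $\delta_\s = o(\hat{\gamma}_\s)$ and $\beta_\s = o(\hat{\gamma}_\s)$ come in. Applying Lemma \ref{lemma: isotropy_eff_reg_kernel} to the strong model's representations gives
\begin{align*}
\opnorm{ \mK - \Bigl( \tfrac{1}{\hat{n}} \hat{\mR}_\s^\top \mU_\s' \mU_\s'^{\!\top} \hat{\mR}_\s + \hat{\gamma}_\s \mI \Bigr)} = o(\gamma_\s^2 + \delta_\s^2).
\end{align*}
The bracketed matrix is at least $\hat{\gamma}_\s \mI$ since $\frac{1}{\hat{n}} \hat{\mR}_\s^\top \mU_\s' \mU_\s'^{\!\top} \hat{\mR}_\s$ is PSD, so by Weyl's inequality $\lambda_{\min}(\mK) \geq \hat{\gamma}_\s - o(\gamma_\s^2 + \delta_\s^2)$. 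Because $\hat{\gamma}_\s = O(1)$ we have $\gamma_\s^2 \leq \hat{\gamma}_\s^2 = O(\hat{\gamma}_\s)$, and $\delta_\s = o(\hat{\gamma}_\s)$ gives $\delta_\s^2 = o(\hat{\gamma}_\s)$ as well. Hence $\lambda_{\min}(\mK) \geq (1 - o(1))\hat{\gamma}_\s$. Combined with $\beta_\s = o(\hat{\gamma}_\s)$, this yields $\beta_\s/(\lambda_{\min}(\mK) + \beta_\s) = o(1)$, and the claim follows.

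The main obstacle is essentially bookkeeping: verifying that the asymptotic error term $o(\gamma_\s^2 + \delta_\s^2)$ from \isotropy{} is genuinely negligible relative to $\hat{\gamma}_\s$ (not $\hat{\gamma}_\s^2$), which relies critically on the boundedness $\hat{\gamma}_\s = O(1)$ from \bounded{}. Everything else is a one-line closed-form computation plus a Weyl bound, and the high-probability qualifier is inherited directly from Definition~\ref{def: restate}.
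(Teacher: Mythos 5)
Your proposal is correct and matches the paper's proof essentially line for line: both compute the training residual in closed form as $\beta_\s(\mK + \beta_\s \mI)^{-1}\hat{\vq}$ (equivalently bound it by $\opnorm{\mK(\mK+\beta_\s\mI)^{-1}-\mI}$), invoke Lemma~\ref{lemma: isotropy_eff_reg_kernel} together with Weyl's inequality to get $\lambda_{\min}(\mK) \geq \hat{\gamma}_\s - o(\gamma_\s^2 + \delta_\s^2)$, and conclude from $\beta_\s, \delta_\s = o(\hat{\gamma}_\s)$. The bookkeeping you flag at the end, that $o(\gamma_\s^2+\delta_\s^2) = o(\hat{\gamma}_\s)$ via $\hat{\gamma}_\s = O(1)$, is exactly the implicit step the paper glosses over when it writes the final $o(1)$, so your care there is well placed.
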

\begin{proof}

Let $\hat{\mT} \in \sR^{\hat{n}}$ denote the weak model's predictions on $\hat{\gD}$. The following holds for all $\hat{\mT}$ such that $\frac{1}{\hat{n}} |\hat{\mT} |^2 = O(1)$. The training loss can be expressed as
\begin{align}
    \nonumber
    \frac{1}{\hat{n}}\| \hat{\mR}_\s^\top\vw_\wtos - \hat{\mT}   \|^2 = &  \| \frac{1}{\sqrt{\hat{n}}}\hat{\mR}_\s^\top \frac{1}{\sqrt{\hat{n}}}\hat{\mR}_\s ( \frac{1}{\hat{n}}\hat{\mR}_\s^\top \hat{\mR}_\s +\beta_\s \mI )^{-1} \frac{1}{\sqrt{\hat{n}}}\hat{\mT} - \frac{1}{\sqrt{\hat{n}}}\hat{\mT} \|^2 \quad\quad \text{by Equation \ref{eq: exp_wtos}} \\
    \nonumber
    = & \| \left(\frac{1}{\sqrt{\hat{n}}}\hat{\mR}_\s^\top \frac{1}{\sqrt{\hat{n}}}\hat{\mR}_\s ( \frac{1}{\hat{n}}\hat{\mR}_\s^\top \hat{\mR}_\s +\beta_\s \mI )^{-1} -\mI\right) \frac{1}{\sqrt{\hat{n}}}\hat{\mT} \|^2\\
    \nonumber
    \leq & \opnorm{\frac{1}{\sqrt{\hat{n}}}\hat{\mR}_\s^\top \frac{1}{\sqrt{\hat{n}}}\hat{\mR}_\s ( \frac{1}{\hat{n}}\hat{\mR}_\s^\top \hat{\mR}_\s +\beta_\s \mI )^{-1} -\mI}^2 \| \frac{1}{\sqrt{\hat{n}}}\hat{\mT} \|^2\\
    \nonumber
    = & \left(\frac{\beta_\s}{\lambda_{\min}(  \frac{1}{\hat{n}}\hat{\mR}_\s^\top \hat{\mR}_\s ) +\beta_\s }\right)^2  \| \frac{1}{\sqrt{\hat{n}}}\hat{\mT} \|^2\\
    \label{eq: training_loss_bound}
     = & O\left(\left(\frac{\beta_\s}{\lambda_{\min}(  \frac{1}{\hat{n}}\hat{\mR}_\s^\top \hat{\mR}_\s ) +\beta_\s }\right)^2 \right) \quad\quad\text{because we assume $ \frac{1}{\hat{n}} \|\hat{\mT} \|^2=O(1) $}.  
\end{align}

By Lemma \ref{lemma: isotropy_eff_reg_kernel} and Weyl's Theorem, we have
\begin{align}
    \nonumber
   & | \lambda_{\min}(  \frac{1}{\hat{n}}\hat{\mR}_\s^\top \hat{\mR}_\s ) - \lambda_{\min}( \frac{1}{\hat{n}}\hat{\mR}_\s^\top\mU_\s'\mU_\s'^\top\hat{\mR}_\s + \hat{\gamma}_\s\mI  ) | \leq \opnormlr{ \frac{1}{\hat{n}}\hat{\mR}_\s^\top \hat{\mR}_\s - \left( \frac{1}{\hat{n}}\hat{\mR}_\s^\top\mU_\s'\mU_\s'^\top\hat{\mR}_\s + \hat{\gamma}_\s\mI  \right) } =o(\gamma_\s^2+\delta_\s^2)  \\
   \label{eq: lower_bound_lambda_min}
   \implies & \lambda_{\min}(  \frac{1}{\tilde{n}}\hat{\mR}_\s^\top \hat{\mR}_\s )  \geq \lambda_{\min}( \frac{1}{\hat{n}}\hat{\mR}_\s^\top\mU_\s'\mU_\s'^\top\hat{\mR}_\s + \hat{\gamma}_\s\mI  )-o(\gamma_\s^2+\delta_\s^2)  \geq \hat{\gamma}_\s-o(\gamma_\s^2+\delta_\s^2).
\end{align}
Substituding Equation \ref{eq: lower_bound_lambda_min} into Equation \ref{eq: training_loss_bound} yields
\begin{align}
    \nonumber
    \frac{1}{\hat{n}}\| \hat{\mR}_\s^\top\vw_\wtos - \hat{\mT}   \|^2 =O \left( \left(  
    \frac{\beta_\s}{\hat{\gamma}_\s-o(\gamma_\s^2+\delta_\s^2) +\beta_\s } \right)^2 \right) = o(1)   \quad\quad\text{because we assume $\beta_\s=o(\hat{\gamma_\s})$ and $\delta_\s=o(\hat{\gamma_\s})$},
\end{align}
which completes the proof.
\end{proof}

The first statement in Theorem \ref{thm: general_bn} can now be readily proved by invoking the above lemma.

For the second statement in Theorem \ref{thm: general_bn}, we first apply the triangle inequality, which gives $\sqrt{\err_{\wtos}} \leq \sqrt{\predgap{}} +\sqrt{\err_\sceiling}$. Given the assumption $\err_\sceiling=o(1)$ and the fact that Theorem \ref{thm: main_theorem} implies $\predgap{}=O(1)$, we obtain $\err_{\wtos} \leq \predgap{} + o(1) $. Furthermore, by our assumption combined with Theorem \ref{thm: main_theorem}, we know $\predgap{} = \err_\w-\Delta + o(1) $.  Substituting this into the previous inequality yields $\err_\wtos \leq  \err_\w-\Delta + o(1) $.

\subsubsection{Proof of Corollary \ref{coro: case_study}}\label{apdx: proof_case_study}

We begin by presenting the following general result regarding the test errors of the weak model and the strong ceiling model.

\begin{lemma}[The weak model's error on the population]\label{lemma: weak_error_population}
If $| \E[y^2]-\frac{1}{\hat{n}}\sum_{i=1}^{\hat{n}}\hat{y}_i^2  |=o(1)$ w.h.p., then
the weak model's error on the population, $\err_\w$ , can be approximated as follows, 
\begin{align}
\nonumber
\err_\w = \| (\mI - \mP_\w)\frac{1}{\sqrt{\hat{n}}}\hat{\vy}  \|^2 \pm o(1).
\end{align}
A similar conclusion holds for the strong ceiling's error $\err_\sceiling$ as well: $\err_\sceiling = \| (\mI - \mP_\s)\frac{1}{\sqrt{\hat{n}}}\hat{\vy}  \|^2 \pm o(1)$.
\end{lemma}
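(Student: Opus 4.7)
The plan is to compute both $\err_\w$ and $\|(\mI - \mP_\w)\tfrac{1}{\sqrt{\hat{n}}}\hat{\vy}\|^2$ independently as closed-form expressions in the analytic quantities $\va := \mU_\w'^\top \E[\vr_\w y]$ and $\mD := \mLambda_\w' + (\tilde{\gamma}_\w + \beta_\w)\mI$, show that both equal $C - 2\va^\top \mD^{-1}\va + \va^\top\mD^{-1}\mLambda_\w'\mD^{-1}\va + o(1)$ with $C = \E[y^2]$ on one side and $C = \tfrac{1}{\hat{n}}\|\hat{\vy}\|^2$ on the other, and conclude using the moment hypothesis. The $\err_\sceiling$ claim will then follow by the analogous argument after the substitutions $\w \to \s$, $\tilde{\gD} \to \hat{\gD}$, and $\tilde{\gamma}_\w \to \hat{\gamma}_\s$, since $\vw_\sceiling$ has the same ridge-regression structure as $\vw_\w$.

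For $\err_\w = \E[y^2] - 2\vw_\w^\top \E[\vr_\w y] + \vw_\w^\top \mSigma_\w \vw_\w$, I decompose $\vw_\w = \mU_\w'(\mU_\w'^\top\vw_\w) + \mU_\w''(\mU_\w''^\top\vw_\w)$. Using the closed form of $\vw_\w$ together with Lemma~\ref{lemma: scale_isotropy_kernel} yields $\|\mU_\w''^\top\vw_\w\| = O(\sqrt{\tilde{\gamma}_\w}/(\tilde{\gamma}_\w+\beta_\w))$; combining this with \dimini{} ($\opnorm{\mSigma_\w''} = o(\gamma+\delta)$) and Lemma~\ref{lemma: bound_ery_non_principal} ($\|\mU_\w''^\top\E[\vr_\w y]\| = o(\sqrt{\gamma+\delta})$) shows that the non-principal contributions to both $\vw_\w^\top\mSigma_\w\vw_\w$ and $\vw_\w^\top\E[\vr_\w y]$ are $o(1)$, where the orderings $\gamma \leq \tilde{\gamma}_\w$ and $\delta \leq \beta_\w$ keep the relevant ratios bounded by $1$. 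For the principal part, chaining Lemma~\ref{lemma: isotropy_inv}, the push-through identity (Lemma~\ref{lemma: pushthrough}), Lemma~\ref{lemma: concentration_inv}, and \conc{} (with $\|\va\| = O(1)$ from Lemma~\ref{lemma: bound_ery_principal}) produces
\[
\mU_\w'^\top \vw_\w = \mD^{-1}\va + o(1),
\]
which when substituted back, together with the non-principal bounds above, yields
\[
\err_\w = \E[y^2] - 2\va^\top \mD^{-1}\va + \va^\top \mD^{-1}\mLambda_\w' \mD^{-1}\va + o(1).
\]

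In parallel, using symmetry of $\mP_\w$, I expand
\[
\|(\mI-\mP_\w)\tfrac{1}{\sqrt{\hat{n}}}\hat{\vy}\|^2 = \tfrac{1}{\hat{n}}\|\hat{\vy}\|^2 - 2\cdot\tfrac{1}{\hat{n}}\hat{\vy}^\top \mP_\w \hat{\vy} + \tfrac{1}{\hat{n}}\hat{\vy}^\top \mP_\w^2 \hat{\vy}.
\]
Push-through rewrites $\mP_\w = \mA^\top(\hat{\mLambda} + (\tilde{\gamma}_\w+\beta_\w)\mI)^{-1}\mA$ with $\mA := \tfrac{1}{\sqrt{\hat{n}}}\mU_\w'^\top\hat{\mR}_\w$ and $\hat{\mLambda} := \mA\mA^\top$, so the middle and last terms become quadratic forms in $\hat{\va} := \mA\tfrac{1}{\sqrt{\hat{n}}}\hat{\vy} = \mU_\w'^\top\tfrac{1}{\hat{n}}\hat{\mR}_\w\hat{\vy}$. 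Condition \conc{} gives $\hat{\va} = \va + o(1)$, while Lemma~\ref{lemma: concentration_inv} gives $(\hat{\mLambda}+(\tilde{\gamma}_\w+\beta_\w)\mI)^{-1} = \mD^{-1} + o(1)$ in operator norm; combining these (and using $\opnorm{\mLambda_\w'\mD^{-1}} \leq 1$ to control the $\mP_\w^2$ term) yields exactly the same quadratic form in $\va$ as for $\err_\w$:
\[
\|(\mI-\mP_\w)\tfrac{1}{\sqrt{\hat{n}}}\hat{\vy}\|^2 = \tfrac{1}{\hat{n}}\|\hat{\vy}\|^2 - 2\va^\top \mD^{-1}\va + \va^\top\mD^{-1}\mLambda_\w'\mD^{-1}\va + o(1).
\]
Subtracting the two expressions leaves $\err_\w - \|(\mI-\mP_\w)\tfrac{1}{\sqrt{\hat{n}}}\hat{\vy}\|^2 = \E[y^2] - \tfrac{1}{\hat{n}}\|\hat{\vy}\|^2 + o(1) = o(1)$ by the hypothesis.

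The main technical obstacle will be the careful propagation of the $o(\cdot)$ terms through matrix inverses and bilinear forms when $\rho := \lambda_{\text{min, $\neq 0$}}(\mLambda_\w')$, $\tilde{\gamma}_\w$, and $\beta_\w$ may simultaneously be small; however, the regime-aware bounds of Lemmas~\ref{lemma: isotropy_inv}, \ref{lemma: concentration_inv}, and~\ref{lemma: scale_isotropy_kernel}, combined with the structural orderings from Definition~\ref{def: restate} (ensuring every error term reduces to $o(\mathrm{ratio})$ with a bounded ratio), handle this uniformly. The argument for $\err_\sceiling$ is structurally identical once the substitutions above are made, with the orderings $\gamma \leq \hat{\gamma}_\s$ and $\delta \leq \beta_\s$ playing the corresponding roles.
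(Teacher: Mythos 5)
Your approach is correct and uses the same toolkit as the paper (the expansion $\err_\w = \E[y^2] - 2\vw_\w^\top\E[\vr_\w y] + \vw_\w^\top\mSigma_\w\vw_\w$; the non-principal bounds via \dimini{}, Lemma~\ref{lemma: scale_isotropy_kernel}, Lemma~\ref{lemma: bound_ery_non_principal}; the chain isotropy $\to$ push-through $\to$ concentration $\to$ \conc{} for the principal part). What differs is where you stop: you reduce both $\err_\w$ and $\|(\mI-\mP_\w)\frac{1}{\sqrt{\hat{n}}}\hat{\vy}\|^2$ independently to the common population-level quadratic form $C - 2\va^\top\mD^{-1}\va + \va^\top\mD^{-1}\mLambda_\w'\mD^{-1}\va + o(1)$ and then cancel, whereas the paper pushes $\vw_\w^\top\mSigma_\w\vw_\w$ and $\vw_\w^\top\E[\vr_\w y]$ all the way back to the empirical quantities $\|\mP_\w\frac{1}{\sqrt{\hat{n}}}\hat{\vy}\|^2$ and $\frac{1}{\hat{n}}\hat{\vy}^\top\mP_\w\hat{\vy}$ directly (via Corollaries~\ref{coro: sqrt_lambda_intermediate}, \ref{coro: sqrt_lambda}), so that the cancellation with the RHS is literal. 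Your symmetric reduction is cleaner to read and makes the role of the moment hypothesis $|\E[y^2] - \tfrac{1}{\hat{n}}\|\hat{\vy}\|^2| = o(1)$ transparent; the paper's route has the advantage of keeping every intermediate quantity manifestly bounded, since $\opnorm{\mP_\w}\le 1$.

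That advantage points to the one place you should shore up your argument. You write $\mU_\w'^\top\vw_\w = \mD^{-1}\va + o(1)$ and substitute this into the quadratic form $(\mU_\w'^\top\vw_\w)^\top \mLambda_\w'\,(\mU_\w'^\top\vw_\w)$. The cross term is $2\,(\mD^{-1}\va)^\top\mLambda_\w'\,\ve$ with $\|\ve\| = o(1)$, and bounding it by $o(1)$ requires $\|\sqrt{\mLambda_\w'}\,\mD^{-1}\va\| = O(1)$. Neither $\|\va\| = O(1)$ (Lemma~\ref{lemma: bound_ery_principal}) nor $\opnorm{\mLambda_\w'\mD^{-1}} \le 1$ gives this for free, since $\opnorm{\sqrt{\mLambda_\w'}\mD^{-1}}$ can scale like $(\tilde\gamma_\w+\beta_\w)^{-1/2}$ when the regularization is small; a naive Cauchy--Schwarz only yields a $k$-dependent bound. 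The paper sidesteps this by never isolating $\mU_\w'^\top\vw_\w$ alone: it always carries the $\sqrt{\mLambda_\w'}$ factor, and Corollary~\ref{coro: sqrt_lambda} converts $\sqrt{\mLambda_\w'}\,(\cdot)$ into the manifestly-bounded $\mP_\w\vpsi$ before any squared norms are expanded. You can repair your version by invoking Corollary~\ref{coro: sqrt_lambda_intermediate} with $\vq = \va$ (and with the $\tilde{}$-split, $c = \tilde\gamma_\w+\beta_\w$), which shows $\|\sqrt{\mLambda_\w'}\mD^{-1}\va\|^2$ is $o(1)$-close to $\|\mP_\w\frac{1}{\sqrt{\hat{n}}}\hat{\vy}\|^2 = O(1)$; the same trick justifies the analogous cross terms on the $\|(\mI-\mP_\w)\frac{1}{\sqrt{\hat{n}}}\hat{\vy}\|^2$ side. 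Also note that your estimate $\|\mU_\w''^\top\vw_\w\| = O(\sqrt{\tilde\gamma_\w}/(\tilde\gamma_\w+\beta_\w))$ drops the $\sqrt{o(\gamma^2+\delta^2)}$ contribution from Lemma~\ref{lemma: scale_isotropy_kernel}; this is harmless for the final $o(1)$ conclusion under $\gamma \le \tilde\gamma_\w$ and $\delta \le \beta_\w$, but the more precise $O\bigl(\sqrt{o(\gamma^2+\delta^2)+\tilde\gamma_\w}/(\tilde\gamma_\w+\beta_\w)\bigr)$ is what the lemma actually gives. With these two points patched in, the argument closes.
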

\begin{proof}
We decompose the error as follows
\begin{align}
    \nonumber
    \err_\w =& \E[ (\vr_\w^\top \vw_\w - y)^2 ] \\
    \label{eq: errw_decomp_3}
    = & \vw_\w^\top \mSigma_\w \vw_\w - 2\vw_\w^\top \E[\vr_\w y] + \E[y^2].
\end{align}
The first term can further be decomposed as:
\begin{align}
    \nonumber
    \vw_\w^\top \mSigma_\w \vw_\w = &  \|\sqrt{\mLambda_\w}\mU_\w^\top \frac{1}{\sqrt{\tilde{n}}}\tilde{\mR}_\w (\frac{1}{\tilde{n}}\tilde{\mR}_\w^\top \tilde{\mR}_\w + \beta_\w\mI )^{-1}\frac{1}{\sqrt{\tilde{n}}}\tilde{\vy} \|^2 \\
    \nonumber
    = & \|\sqrt{\mLambda_\w}\mU_\w^\top \frac{1}{\sqrt{\tilde{n}}}\tilde{\mR}_\w (\frac{1}{\tilde{n}}\tilde{\mR}_\w^\top \mU_\w'\mU_\w'^\top \tilde{\mR}_\w + (\beta_\w+\tilde{\gamma}_\w)\mI )^{-1}\frac{1}{\sqrt{\tilde{n}}}\tilde{\vy} \|^2 \pm o(1) \quad\text{by Lemma \ref{lemma: isotropy_eff_reg_kernel} and \bounded{}} \\
    \nonumber
    = &  \| \vstack{\sqrt{\mLambda_\w'}\mU_\w'^\top}{\sqrt{\mLambda_\w''}\mU_\w''^\top  } \frac{1}{\sqrt{\tilde{n}}}\tilde{\mR}_\w (\frac{1}{\tilde{n}}\tilde{\mR}_\w^\top \mU_\w'\mU_\w'^\top \tilde{\mR}_\w + (\beta_\w+\tilde{\gamma}_\w)\mI )^{-1}\frac{1}{\sqrt{\tilde{n}}}\tilde{\vy} \|^2 \pm o(1) \\
    \nonumber
    = & \| \sqrt{\mLambda_\w'}\mU_\w'^\top \frac{1}{\sqrt{\tilde{n}}}\tilde{\mR}_\w (\frac{1}{\tilde{n}}\tilde{\mR}_\w^\top \mU_\w'\mU_\w'^\top \tilde{\mR}_\w + (\beta_\w+\tilde{\gamma}_\w)\mI )^{-1}\frac{1}{\sqrt{\tilde{n}}}\tilde{\vy} \|^2 \\
    \label{eq: wsigmaw_decomp_3}
    & + \| \sqrt{\mLambda_\w''}\mU_\w''^\top   \frac{1}{\sqrt{\tilde{n}}}\tilde{\mR}_\w (\frac{1}{\tilde{n}}\tilde{\mR}_\w^\top \mU_\w'\mU_\w'^\top \tilde{\mR}_\w + (\beta_\w+\tilde{\gamma}_\w)\mI )^{-1}\frac{1}{\sqrt{\tilde{n}}}\tilde{\vy} \|^2   \pm o(1) 
\end{align}
We bound the second term in Equation \ref{eq: wsigmaw_decomp_3}:
\begin{align}
    \nonumber
    &\| \sqrt{\mLambda_\w''}\mU_\w''^\top   \frac{1}{\sqrt{\tilde{n}}}\tilde{\mR}_\w (\frac{1}{\tilde{n}}\tilde{\mR}_\w^\top \mU_\w'\mU_\w'^\top \tilde{\mR}_\w + (\beta_\w+\tilde{\gamma}_\w)\mI )^{-1}\frac{1}{\sqrt{\tilde{n}}}\tilde{\vy} \|^2 \\
    \nonumber
    \leq & \opnorm{ \mLambda_\w'' }\opnorm{ \mU_\w''^\top \frac{1}{\sqrt{\tilde{n}}}\tilde{\mR}_\w  }^2 \opnorm{ (\frac{1}{\tilde{n}}\tilde{\mR}_\w^\top \mU_\w'\mU_\w'^\top \tilde{\mR}_\w + (\beta_\w+\tilde{\gamma}_\w)\mI )^{-1}}^2  \|\frac{1}{\sqrt{\tilde{n}}}\tilde{\vy}  \|^2 \\
    \nonumber
    \leq & \frac{o( \gamma_\w +\delta_\w ) (o(\gamma_\w^2+\delta_\w^2)+\tilde{\gamma}_\w)}{(\beta_\w+\tilde{\gamma}_\w)^2}    \quad\text{by \dimini{}, Lemma \ref{lemma: scale_isotropy_kernel} and \bounded{}}\\
    \label{wsigma_term_2}
    = & o(1).
\end{align}
Then, we approximate the first term in Equation \ref{eq: wsigmaw_decomp_3}:
\begin{align}
    \nonumber
   &  \| \sqrt{\mLambda_\w'}\mU_\w'^\top \frac{1}{\sqrt{\tilde{n}}}\tilde{\mR}_\w (\frac{1}{\tilde{n}}\tilde{\mR}_\w^\top \mU_\w'\mU_\w'^\top \tilde{\mR}_\w + (\beta_\w+\tilde{\gamma}_\w)\mI )^{-1}\frac{1}{\sqrt{\tilde{n}}}\tilde{\vy} \|^2 \\
    \nonumber
   = & \| \sqrt{\mLambda_\w'} (\frac{1}{\tilde{n}}\mU_\w'^\top \tilde{\mR}_\w\tilde{\mR}_\w^\top \mU_\w' + (\beta_\w+\tilde{\gamma}_\w)\mI )^{-1} \mU_\w'^\top \frac{1}{\sqrt{\tilde{n}}}\tilde{\mR}_\w\frac{1}{\sqrt{\tilde{n}}}\tilde{\vy} \|^2 \quad\text{by Lemma \ref{lemma: pushthrough}}\\
   \nonumber
   = & \| \sqrt{\mLambda_\w'}(\mLambda_\w' + (\beta_\w+\tilde{\gamma}_\w)\mI )^{-1} \mU_\w'^\top\frac{1}{\sqrt{\tilde{n}}}\tilde{\mR}_\w\frac{1}{\sqrt{\tilde{n}}}\tilde{\vy} \|^2 \pm o(1) \quad\text{by Lemma \ref{lemma: concentration_inv} and \bounded{}}\\
   \nonumber
   = & \| \sqrt{\mLambda_\w'} (\frac{1}{\hat{n}}\mU_\w'^\top \hat{\mR}_\w\hat{\mR}_\w^\top \mU_\w' + (\beta_\w+\tilde{\gamma}_\w)\mI )^{-1} \mU_\w'^\top\frac{1}{\sqrt{\tilde{n}}}\tilde{\mR}_\w\frac{1}{\sqrt{\tilde{n}}}\tilde{\vy} \|^2\pm o(1) \quad\text{by Lemma \ref{lemma: concentration_inv} and \bounded{}} \\
   \nonumber
   = & \| \sqrt{\mLambda_\w'} (\frac{1}{\hat{n}}\mU_\w'^\top \hat{\mR}_\w\hat{\mR}_\w^\top \mU_\w' + (\beta_\w+\tilde{\gamma}_\w)\mI )^{-1} \mU_\w'^\top\frac{1}{\sqrt{\hat{n}}}\hat{\mR}_\w\frac{1}{\sqrt{\hat{n}}}\hat{\vy} \|^2 \pm o(1) \quad\text{by \conc{} and \bounded{}}\\
   \nonumber
   = & \| \sqrt{\mLambda_\w'} \mU_\w'^\top\frac{1}{\sqrt{\hat{n}}}\hat{\mR}_\w(\frac{1}{\hat{n}}\hat{\mR}_\w^\top \mU_\w'\mU_\w'^\top \hat{\mR}_\w + (\beta_\w+\tilde{\gamma}_\w)\mI )^{-1} \frac{1}{\sqrt{\hat{n}}}\hat{\vy} \|^2 \pm o(1)\quad\text{by Lemma \ref{lemma: pushthrough}}\\
   \nonumber
   = & \| \frac{1}{\hat{n}}\hat{\mR}_\w^\top \mU_\w'\mU_\w'^\top \hat{\mR}_\w(\frac{1}{\hat{n}}\hat{\mR}_\w^\top \mU_\w'\mU_\w'^\top \hat{\mR}_\w + (\beta_\w+\tilde{\gamma}_\w)\mI )^{-1} \frac{1}{\sqrt{\hat{n}}}\hat{\vy} \|^2 \pm o(1)\quad\text{by Corollary \ref{coro: sqrt_lambda} and \bounded{}}\\
   \label{eq: wsigma_term_1}
   = & \| \mP_\w \frac{1}{\sqrt{\hat{n}}}\hat{\vy} \|^2 \pm o(1).
\end{align}
Now, we approximate the second term in Equation \ref{eq: errw_decomp_3}:
\begin{align}
    \nonumber
    &\vw_\w^\top \E[\vr_\w y]\\
    \nonumber
    =& (\frac{1}{\sqrt{\tilde{n}}}\tilde{\mR}_\w (\frac{1}{\tilde{n}}\tilde{\mR}_\w^\top \tilde{\mR}_\w + \beta_\w\mI )^{-1}\frac{1}{\sqrt{\tilde{n}}}\tilde{\vy})^\top \E[\vr_\w y] \\
    \nonumber
    = & (\frac{1}{\sqrt{\tilde{n}}}\tilde{\mR}_\w (\frac{1}{\tilde{n}}\tilde{\mR}_\w^\top \tilde{\mR}_\w + \beta_\w\mI )^{-1}\frac{1}{\sqrt{\tilde{n}}}\tilde{\vy})^\top \mU_\w'\mU_\w'^\top \E[\vr_\w y] + (\frac{1}{\sqrt{\tilde{n}}}\tilde{\mR}_\w (\frac{1}{\tilde{n}}\tilde{\mR}_\w^\top \tilde{\mR}_\w + \beta_\w\mI )^{-1}\frac{1}{\sqrt{\tilde{n}}}\tilde{\vy})^\top \mU_\w''\mU_\w''^\top \E[\vr_\w y] \\
    \nonumber
    = & (\frac{1}{\sqrt{\tilde{n}}}\tilde{\mR}_\w (\frac{1}{\tilde{n}}\tilde{\mR}_\w^\top \tilde{\mR}_\w + \beta_\w\mI )^{-1}\frac{1}{\sqrt{\tilde{n}}}\tilde{\vy})^\top \mU_\w'\mU_\w'^\top \E[\vr_\w y] \pm o( \frac{\sqrt{o(\gamma_\w^2+\delta_\w^2)+\tilde{\gamma}_\w}}{\tilde{\gamma}_\w+\beta_\w}\sqrt{\gamma_\w+\delta_\w} )\\
    \nonumber
    & \quad\quad\quad\quad\quad\quad\quad\quad\text{by \bounded{}, Lemmas \ref{lemma: isotropy_inv}, \ref{lemma: scale_isotropy_kernel} and \ref{lemma: bound_ery_non_principal} }\\
    \nonumber
    = & (\frac{1}{\sqrt{\tilde{n}}}\tilde{\mR}_\w (\frac{1}{\tilde{n}}\tilde{\mR}_\w^\top \tilde{\mR}_\w + \beta_\w\mI )^{-1}\frac{1}{\sqrt{\tilde{n}}}\tilde{\vy})^\top \mU_\w'\mU_\w'^\top \E[\vr_\w y] \pm o(1) \\
    \nonumber
    = & (\frac{1}{\sqrt{\tilde{n}}}\tilde{\mR}_\w (\frac{1}{\tilde{n}}\tilde{\mR}_\w^\top \mU_\w'\mU_\w'^\top \tilde{\mR}_\w + (\beta_\w+\tilde{\gamma}_\w)\mI )^{-1}\frac{1}{\sqrt{\tilde{n}}}\tilde{\vy})^\top \mU_\w'\mU_\w'^\top \E[\vr_\w y] \pm o(1)\\
    \nonumber
    &\quad\quad\quad\quad\quad\quad\quad\quad\text{by Lemma \ref{lemma: isotropy_inv}, \bounded{}, and Lemma \ref{lemma: bound_ery_principal}}\\
    \nonumber
    = & \frac{1}{\sqrt{\tilde{n}}}\tilde{\vy}^\top (\frac{1}{\tilde{n}}\tilde{\mR}_\w^\top \mU_\w'\mU_\w'^\top \tilde{\mR}_\w + (\beta_\w+\tilde{\gamma}_\w)\mI )^{-1}  \frac{1}{\sqrt{\tilde{n}}}\tilde{\mR}_\w^\top \mU_\w'\mU_\w'^\top \E[\vr_\w y] \pm o(1) \\
    \nonumber
    = & \frac{1}{\sqrt{\tilde{n}}}\tilde{\vy}^\top \frac{1}{\sqrt{\tilde{n}}}\tilde{\mR}_\w^\top \mU_\w' (\frac{1}{\tilde{n}}\mU_\w'^\top \tilde{\mR}_\w \tilde{\mR}_\w^\top \mU_\w'+ (\beta_\w+\tilde{\gamma}_\w)\mI )^{-1}  \mU_\w'^\top \E[\vr_\w y] \pm o(1)\quad\text{by Lemma \ref{lemma: pushthrough}}\\
    \nonumber
    = & \frac{1}{\sqrt{\tilde{n}}}\tilde{\vy}^\top \frac{1}{\sqrt{\tilde{n}}}\tilde{\mR}_\w^\top \mU_\w' (\frac{1}{\tilde{n}}\mU_\w'^\top \hat{\mR}_\w \hat{\mR}_\w^\top \mU_\w'+ (\beta_\w+\tilde{\gamma}_\w)\mI )^{-1}  \mU_\w'^\top \E[\vr_\w y] \pm o(1)\\
    \nonumber
    &\quad\quad\quad\quad\quad\quad\quad\quad\text{by Lemma \ref{lemma: concentration_inv}, Lemma \ref{lemma: bound_ery_principal}, and \bounded{}}\\
    \nonumber
    = & \frac{1}{\hat{n}}\hat{\vy}^\top \hat{\mR}_\w^\top \mU_\w' (\frac{1}{\tilde{n}}\mU_\w'^\top \hat{\mR}_\w \hat{\mR}_\w^\top \mU_\w'+ (\beta_\w+\tilde{\gamma}_\w)\mI )^{-1}  \mU_\w'^\top \frac{1}{\hat{n}}\hat{\mR}_\w\hat{\vy} \pm o(1) \\
    \nonumber
&\quad\quad\quad\quad\quad\quad\quad\quad\text{by \conc{}, \bounded{} and Lemma \ref{lemma: bound_ery_principal}}\\
\label{eq: errw_term_2}
= & \frac{1}{\sqrt{\hat{n}}}\hat{\vy}^\top \mP_\w \frac{1}{\sqrt{\hat{n}}}\hat{\vy}\pm o (1) \quad\text{by Lemma \ref{lemma: pushthrough}}.
\end{align}
Combining Equations \ref{eq: errw_decomp_3}, \ref{eq: wsigmaw_decomp_3}, \ref{wsigma_term_2}, \ref{eq: wsigma_term_1}, \ref{eq: errw_term_2}, and the assumption about $\E[y^2]$ yields
\begin{align}
    \nonumber
    \err_\w = \| (\mI-\mP_\w)\frac{1}{\sqrt{\hat{n}}}\hat{\vy} \|^2 \pm o (1).
\end{align}
The proof of the result concerning $\err_\sceiling$ is similar.
\end{proof}

We show that the condition regarding $\E[y^2]$ is satisfied in Example \ref{eg: toyeg}. Specifically, $\sum_{i=1}^{\hat{n}} \hat{y}_i^2$ follows a $\chi^2(\hat{n})$ distribution, with a mean of $\hat{n} \E[y^2]$ and a variance of $2\hat{n}$. For simplicity, we demonstrate the following result using Chebyshev's inequality, while noting that tighter bounds could be achieved with tail bounds for $\chi^2$ variables or Lemma \ref{lemma: eigen_val_bound}. For any $k > 0$, we have:
$
\Pr\left(| \sum_{i=1}^{\hat{n}} \hat{y}_i^2 - \hat{n} \E[y^2] | \geq k \sqrt{2\hat{n}} \right) \leq \frac{1}{k^2}$. Letting $k = \hat{n}^{1/4}$, we find that with probability $1 - O\left(\frac{1}{\sqrt{\hat{n}}}\right)$,
$
\left| \frac{1}{\hat{n}} \sum_{i=1}^{\hat{n}} \hat{y}_i^2 - \E[y^2] \right| = O\left(\frac{1}{\hat{n}^{1/4}}\right)$. Thus, Lemma \ref{lemma: weak_error_population} applies to Example \ref{eg: toyeg}.

Now, based on Lemmas \ref{lemma: weak_error}, \ref{lemma: weak_error_population}, and Theorem \ref{thm: main_theorem}, the key to computing the errors of all these models boils down to simply computing $\mP_\w$ and $\mP_\s$.

We first compute the kernels. For convenience, we use the shorthand notations $\hat{\mK}_\w$ and $\hat{\mK}_\s$ to represent $\hat{\mK}(\mPi_{\gV_\w}h_\w)$ and $\hat{\mK}(\mPi_{\gV_\s}h_\s)$, respectively. Since the representations in Example \ref{eg: toyeg} are decomposable with respect to the subspace corresponding to the first coordinate, for both the weak and strong models, the principal kernels are rank one and can be expressed as $\hat{\mK}_\w = \vq\vq^\top $ and $\hat{\mK}_\s =\hat{\vy}\hat{\vy}^\top$, where $\hat{\vq}\coloneqq \sqrt{\eta}\hat{\vy}+\sqrt{1-\eta}\hat{\vzeta}   $.  Then, for $\frac{1}{\hat{n}}\hat{\mK}_\w$, it has a single nonzero eigenvalue $\| \frac{1}{\sqrt{\hat{n}}} \hat{\vq} \|^2$, with the corresponding eigenvector $\frac{1}{\|\frac{1}{\sqrt{\hat{n}}} \hat{\vq} \|} \frac{1}{\sqrt{\hat{n}}} \hat{\vq}$. Similarly, $\frac{1}{\hat{n}}\hat{\mK}_\s$ has a single eigenvalue $\| \frac{1}{\sqrt{\hat{n}}} \hat{\vy} \|^2$, with the corresponding eigenvector $\frac{1}{\|\frac{1}{\sqrt{\hat{n}}} \hat{\vy} \|} \frac{1}{\sqrt{\hat{n}}} \hat{\vy}$.

Next, we present the following Lemma.
\begin{lemma}\label{lemma: yzetaq} We have the following:
\begin{align}
    \nonumber
    \|\frac{1}{\sqrt{\hat{n}}} \hat{\vy} \|^2 = 1\pm o(1),~~ \|\frac{1}{\sqrt{\hat{n}}} \hat{\vzeta} \|^2 = 1\pm o(1), ~~| \frac{1}{\sqrt{\hat{n}}} \hat{\vzeta}^\top \frac{1}{\sqrt{\hat{n}}} \hat{\vy} |= o(1), ~~ \|\frac{1}{\sqrt{\hat{n}}} \hat{\vq} \|^2 = 1\pm o(1)
\end{align}
\end{lemma}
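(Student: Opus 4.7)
The plan is to recognize that all four claims are standard concentration statements for sums of i.i.d.\ random variables with $\hat{n}=\omega(1)$, and then to derive the last one algebraically from the first three. Since $y\sim\gN(0,1)$ and $\zeta\sim\gN(0,1)$ are independent, the vectors $\hat{\vy}$ and $\hat{\vzeta}$ have i.i.d.\ standard normal entries and are mutually independent.

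First, I would handle $\|\tfrac{1}{\sqrt{\hat{n}}}\hat{\vy}\|^2=\tfrac{1}{\hat{n}}\sum_{i=1}^{\hat{n}} \hat{y}_i^2$. This equals $\tfrac{1}{\hat{n}}$ times a $\chi^2(\hat{n})$ variable, whose mean is $1$ and variance is $2/\hat{n}$. Chebyshev's inequality (or the standard Laurent--Massart $\chi^2$ tail bound) yields $\tfrac{1}{\hat{n}}\sum_i \hat{y}_i^2 = 1\pm o(1)$ with probability $1-o(1)$ because $\hat{n}=\omega(1)$. The same argument, applied verbatim to $\hat{\vzeta}$, gives $\|\tfrac{1}{\sqrt{\hat{n}}}\hat{\vzeta}\|^2=1\pm o(1)$ w.h.p. (This is analogous to the Chebyshev argument already used right above the lemma to verify the $\E[y^2]$ condition.)

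Next, for the cross term $\tfrac{1}{\hat{n}}\hat{\vy}^\top\hat{\vzeta}=\tfrac{1}{\hat{n}}\sum_i \hat{y}_i \hat{\zeta}_i$, the summands $\hat{y}_i\hat{\zeta}_i$ are i.i.d.\ with mean $\E[y\zeta]=\E[y]\E[\zeta]=0$ and variance $\E[y^2]\E[\zeta^2]=1$. Hence the average has mean $0$ and variance $1/\hat{n}=o(1)$, so Chebyshev gives $|\tfrac{1}{\hat{n}}\hat{\vy}^\top\hat{\vzeta}|=o(1)$ with probability $1-o(1)$.

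Finally, for $\hat{\vq}=\sqrt{\eta}\,\hat{\vy}+\sqrt{1-\eta}\,\hat{\vzeta}$, I would simply expand
\begin{equation*}
\Bigl\|\tfrac{1}{\sqrt{\hat{n}}}\hat{\vq}\Bigr\|^2 = \eta\,\Bigl\|\tfrac{1}{\sqrt{\hat{n}}}\hat{\vy}\Bigr\|^2 + (1-\eta)\,\Bigl\|\tfrac{1}{\sqrt{\hat{n}}}\hat{\vzeta}\Bigr\|^2 + 2\sqrt{\eta(1-\eta)}\,\tfrac{1}{\hat{n}}\hat{\vy}^\top\hat{\vzeta},
\end{equation*}
and plug in the previous three estimates together with the fact that $\eta\in(0,1)$ is a constant, which yields $1\pm o(1)$. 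A union bound over the three high-probability events (each of probability $1-o(1)$) ensures all four statements hold simultaneously w.h.p. There is no real obstacle here; the only thing to be careful about is the union bound and the fact that $\hat{n}=\omega(1)$ is the only scaling assumption needed, which is guaranteed by Example~\ref{eg: toyeg}.
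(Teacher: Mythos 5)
Your proof is correct, and it takes a more elementary route than the paper. The paper proves the first two statements by invoking Lemma~\ref{lemma: eigen_val_bound} (the Gaussian sample-covariance concentration result, applied with ambient dimension $1$) and the third by invoking Lemma~\ref{lemma: op_norm_bound} (the cross-Gram operator-norm bound, again with $d_1=d_2=1$), then derives the fourth algebraically as you do. You instead compute the variance of each scalar average directly and apply Chebyshev, which avoids the matrix lemmas entirely and keeps the argument self-contained. The paper's route buys tighter (exponential) tail bounds for free by reusing machinery already established for the spiked-covariance example; your route is conceptually simpler and makes clear that $\hat{n}=\omega(1)$ is the only scaling needed. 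Both are valid, and the paper itself notes in the surrounding text that Chebyshev suffices and that tighter $\chi^2$ bounds or Lemma~\ref{lemma: eigen_val_bound} could be used instead. One small point worth making explicit in your write-up: in Example~\ref{eg: toyeg}, $\zeta$ is drawn independently of $y$, so $\E[y\zeta]=\E[y]\E[\zeta]=0$ and $\mathrm{Var}(y\zeta)=\E[y^2]\E[\zeta^2]=1$, which is exactly what your third step uses; you state this but it is the load-bearing fact, so it deserves emphasis.
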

\begin{proof}
The first two statements can be proved by leveraging classical results on the concentration of Gaussian matrices (see Lemma \ref{lemma: eigen_val_bound} for details). The third statement follows as a special case of Lemma \ref{lemma: op_norm_bound}.  The last statement is implied by the previous three.
\end{proof}

Recall that both the weak and strong models' representations in Example \ref{eg: toyeg} are special cases of Example \ref{eg: spiked_cov}. Given that $\sigma^2 = o(\hat{n})$ and $\tilde{n} = \Theta(\hat{n})$, we have $\hat{\gamma}_\w$, $\tilde{\gamma}_\w$, $\hat{\gamma}_\s$, and $\tilde{\gamma}_\s$ all being $o(1)$, $\delta_\w=\delta_\s=0$, and $\beta_\w=o(1), \beta_\s=o(1)$. Combining these with Lemma \ref{lemma: yzetaq}, we derive:
\begin{align}
    \nonumber
\opnorm{ \mP_\w - \frac{1}{\hat{n}}\hat{\vq}\hat{\vq}^\top} = o(1), 
\opnorm{ \mP_\s - \frac{1}{\hat{n}}\hat{\vy}\hat{\vy}^\top} = o(1). 
\end{align}

Now, leveraging Lemma \ref{lemma: yzetaq}, we can derive all the errors using the expressions provided in Lemmas \ref{lemma: weak_error}, \ref{lemma: weak_error_population}, and Theorem \ref{thm: main_theorem}. \looseness=-1

\subsection{Proof of Corollary \ref{coro: predgap_ub_PPP}}\label{apdx: proof_PPP}

Following Theorem \ref{thm: main_theorem}, we bound the RHS as follows
\begin{align}
    \nonumber
    \predgap{} = & \| \mP_\s(\mI-\mP_\w)\mP_\s \frac{1}{\sqrt{\hat{n}}} \hat{\vy}+ \mP_\s(\mI-\mP_\w) ( 
\mI- \mP_\s)\frac{1}{\sqrt{\hat{n}}} \hat{\vy} \|^2 \pm o(1)\\
\leq & \left( \opnorm{ \mP_\s(\mI-\mP_\w)  \mP_\s} \|\frac{1}{\sqrt{\hat{n}}} \hat{\vy}\|+ \opnorm{\mP_\s(\mI-\mP_\w)} \|( 
\mI- \mP_\s)\frac{1}{\sqrt{\hat{n}}} \hat{\vy} \|\right)^2 + o(1) \\
\nonumber
\leq & \left(\opnorm{ \mP_\s(\mI-\mP_\w)  \mP_\s} \sqrt{C} +  \|( 
\mI- \mP_\s)\frac{1}{\sqrt{\hat{n}}} \hat{\vy} \|\right)^2 + o(1) \\
\nonumber
= & \left(\opnorm{ \mP_\s(\mI-\mP_\w)  \mP_\s} \sqrt{C} +  \sqrt{\err_\sceiling +o(1) } \right)^2 + o(1) \quad\text{by Lemma \ref{lemma: weak_error_population}} \\
\nonumber
= & \left( \opnorm{ \mP_\s(\mI-\mP_\w)  \mP_\s} \sqrt{C} +  \sqrt{\err_\sceiling } \right)^2 + o(1) \quad
\end{align}

\section{Proof of Examples in Section \ref{subsec: assump}}

\subsection{Example \ref{eg: intrinsic_bounded}}\label{apdx: example_bernstein}

For convenience, let $q = \intdim(\mSigma)$ and $\tau = \opnorm{\mSigma}$.

Firstly, we note that the conditions in the example imply a low intrinsic dimension. Here’s why: since $\Tr(\mSigma) = \E |\vr|^2 \leq B$, it follows that
\begin{align} \label{eq: intdim_tau_B} \intdim(\mSigma) = \frac{\Tr(\mSigma)}{\opnorm{\mSigma}} \leq \frac{B}{\tau} = O(B), \end{align}
where the last step holds because $\tau = \opnorm{\mSigma} = \Theta(1)$. Given that $n^{1-c} = \omega(B \log(q))$, we then have $n^{1-c} = \omega(q \log(q))$, as mentioned in the remark.

Additionally, since $\intdim(\mSigma) \geq 1$, Equation \ref{eq: intdim_tau_B} also implies
\begin{align} \label{eq: B_tau_relation} B \geq \tau \quad \text{and} \quad B = \Omega(1), \end{align}
which we will use later.

Next, we introduce the following two lemmas, both of which rely on the matrix Bernstein inequality with intrinsic dimension, as stated in Theorem 7.3.1 of \cite{tropp2015introduction}.

\begin{lemma}\label{lemma: bernstein_cov}
With a probability of at least $1-8q\exp( \frac{- 0.5 \hat{n}^{1-c} }{ B\tau+ (B+\tau)/3   } )=1-o(1) $, the following holds
\begin{align}
    \nonumber
    \opnorm{\hat{\mSigma}-\mSigma} \leq \hat{n}^{-0.5c}.
\end{align}
The same conclusion applies to $\tilde{\mSigma}$ as well.
\end{lemma}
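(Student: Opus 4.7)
The plan is to write $\hat{\mSigma}-\mSigma$ as an i.i.d.\ centered matrix sum and apply the intrinsic-dimension matrix Bernstein inequality (Thm.~7.3.1 of the reference cited in the lemma). Specifically, I set
\begin{align*}
\hat{\mSigma}-\mSigma \;=\; \sum_{i=1}^{\hat{n}} \mX_i, \qquad \mX_i \coloneqq \tfrac{1}{\hat{n}}\bigl(h(\hat{\vx}_i)h(\hat{\vx}_i)^\top - \mSigma\bigr),
\end{align*}
which are symmetric and mean zero, and target the deviation level $t=\hat{n}^{-0.5c}$. The three ingredients that Bernstein needs are a uniform operator-norm bound on $\mX_i$, a convenient PSD upper bound on $\sum_i \E[\mX_i^2]$, and control of its intrinsic dimension.

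\textbf{The three ingredients.} First, $\|\vr\|^2\leq B$ together with $\opnorm{\mSigma}=\tau$ gives $\opnorm{\mX_i}\leq (B+\tau)/\hat{n}\eqqcolon L$. Second, $\|\vr\|^2\leq B$ implies $\E[\|\vr\|^2\vr\vr^\top]\preccurlyeq B\mSigma$, so
\begin{align*}
\sum_{i=1}^{\hat{n}}\E[\mX_i^2] \;=\; \tfrac{1}{\hat{n}}\bigl(\E[\|\vr\|^2\vr\vr^\top]-\mSigma^2\bigr) \;\preccurlyeq\; \tfrac{B}{\hat{n}}\mSigma \;\eqqcolon\; \mV,
\end{align*}
and I would feed this $\mV$ into Bernstein. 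Third, $v\coloneqq\opnorm{\mV}=B\tau/\hat{n}$ and $\intdim(\mV)=\Tr(\mSigma)/\tau=q$, which is precisely the $q$ appearing in the lemma. For large $\hat{n}$ we have $t\leq 1$, so $v+Lt/3\leq (B\tau+(B+\tau)/3)/\hat{n}$, and the intrinsic Bernstein tail bound yields
\begin{align*}
\Pr\!\Bigl(\opnorm{\hat{\mSigma}-\mSigma}\geq \hat{n}^{-0.5c}\Bigr) \;\leq\; 8q\exp\!\left(\frac{-0.5\,\hat{n}^{1-c}}{B\tau+(B+\tau)/3}\right),
\end{align*}
which is exactly the stated bound. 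Since $\tau=\Theta(1)$ and $B\geq\tau$ (from \eqref{eq: B_tau_relation}), the denominator is $O(B)$, making the exponent $-\Omega(\hat{n}^{1-c}/B)$; the hypothesis $\hat{n}^{1-c}=\omega(B\log q)$ then forces $q\exp(-\Omega(\hat{n}^{1-c}/B))=o(1)$. The argument for $\tilde{\mSigma}$ is identical with $\hat{n}\to\tilde{n}$.

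\textbf{Main obstacle.} The only delicate point is that Tropp's theorem is typically stated in terms of the true matrix variance $V=\sum_i\E[\mX_i^2]$, whose intrinsic dimension need not equal $q$; I will need the standard variant that permits substituting any PSD dominator $\mV\succcurlyeq V$ so that $\intdim(\mV)$ (rather than $\intdim(V)$) enters the prefactor, which is what allows the clean value $q$ to appear. Once this substitution is in place, the only remaining technical check is the admissibility condition $t\geq\sqrt{v}+L/3$, which reduces to $\hat{n}^{1-c}\gtrsim B\tau$ and is immediate from the hypothesis.
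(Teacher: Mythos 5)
Your proposal is correct and follows essentially the same route as the paper: same i.i.d.\ centered decomposition $\hat{\mSigma}-\mSigma=\sum_i \tfrac{1}{\hat{n}}(\hat{\vr}_i\hat{\vr}_i^\top-\mSigma)$, same uniform bound $L=(B+\tau)/\hat{n}$, same PSD dominator $\mV=\tfrac{B}{\hat{n}}\mSigma$, and the same application of Tropp's intrinsic-dimension Bernstein inequality with $t=\hat{n}^{-0.5c}$ and the observation $t\leq 1$ to simplify the denominator. The only cosmetic difference is that the paper explicitly computes $d=\intdim(\diag(\mV,\mV))=2q$ and uses the prefactor $4d=8q$, while you write $\intdim(\mV)=q$ and jump directly to the $8q$ prefactor; your worry about needing a ``variant'' of Tropp that permits a PSD dominator is unfounded, as Theorem~7.3.1 is already stated with the variance bound as a semidefinite dominator, which is exactly what both proofs use.
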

\begin{proof}
We prove the result for $\hat{\mSigma}$; the result for $\tilde{\mSigma}$ can be proved in the same way. 
Define $\mS_i = \frac{1}{\hat{n}}(\hat{\vr}_i\hat{\vr}_i^\top -\mSigma)$. The random matrices $\mS_i$ are independent, identically distributed, and centered. Their norms are bounded as follows
\begin{align}
    \nonumber
    \opnorm{\mS_i} \leq \frac{1}{\hat{n}}(\opnorm{\hat{\vr}_i\hat{\vr}_i^\top} + \opnorm{\mSigma}  ) \leq \frac{ B +  \tau }{ \hat{n}} \coloneqq L.
\end{align}
Then,
\begin{align}
    \nonumber
    \E \mS_i^2 = \frac{1}{\hat{n}^2} \E(\hat{\vr}_i\hat{\vr}_i^\top -\mSigma )^2 = \frac{1}{\hat{n}^2} \E(\|\hat{\vr}_i\|^2\hat{\vr}_i\hat{\vr}_i^\top -2\mSigma^2  +\mSigma^2 )  \preccurlyeq \frac{1}{\hat{n}^2} \E(B\hat{\vr}_i\hat{\vr}_i^\top -\mSigma^2 ) \preccurlyeq \frac{B}{\hat{n}^2}\mSigma
\end{align}
Define $\mZ = \sum_{i=1}^{\hat{n}} \mS_i $. We have
\begin{align}
    \nonumber
    \mathbf{0} \preccurlyeq \E \mZ^2 = \sum_{i=1}^{\hat{n}} \E \mS_i^2 \preccurlyeq \frac{B}{\hat{n}}\mSigma \coloneqq \mV
\end{align}
$\mV$'s norm can be expressed as follows:
\begin{align}
    \nonumber
   \opnorm{\mV} = \frac{B \opnorm{\mSigma} }{\hat{n}} = \frac{B\tau}{\hat{n}}\coloneqq v
\end{align}
Define $d =\intdim(\diagmtx{ \mV }{ \mV}  )$, which can be simplified as:
\begin{align}
\nonumber
    d = 2\frac{\Tr(  \frac{B}{\hat{n}}\mSigma  )}{ \opnorm{ \frac{B}{\hat{n}}\mSigma } } = 2\intdim(  \frac{B}{\hat{n}}\mSigma  ) =2\intdim(\mSigma) =2q.
\end{align}
Now we are ready to apply Theorem 7.3.1 of \cite{tropp2015introduction}. It leads to the conclusion that, for any $t\geq \sqrt{v}+L/3$,
\begin{align}
    \nonumber
   \mathbb{P}\{ \opnorm{\mZ} \geq t \}\leq & 4d \exp{ (\frac{-t^2/2}{v+Lt/3} )} \\
   \nonumber
   = & 8 q \exp( \frac{-t^2/2}{ \frac{B\tau}{\hat{n}} + \frac{B+\tau}{\hat{n}} t /3   } ) \\
   \label{eq: pzt}
   = & 8 q\exp( \frac{- \hat{n} t^2/2}{ B\tau+ (B+\tau) t /3   } ) 
\end{align}
By assumption:
\begin{align} 
\nonumber
& n^{1-c}=\omega(B\log q)\\
\nonumber
\implies &  n^{1-c}=\omega(((\tau+1/3)B+\tau/3)\log q)  \quad\quad\text{because $\tau=O(1)$} \\
\nonumber
\implies & \frac{\hat{n}^{1-c}}{(\tau+1/3)B+\tau/3} = \omega(\log q  )\\
    \nonumber
    \implies  & 0.5\frac{\hat{n}^{1-c}}{(\tau+1/3)B+\tau/3} 
 = \omega( \log q ) \\
    \nonumber
    \implies & \exp\left(\frac{0.5\hat{n}^{1-c}}{(\tau+1/3)B+\tau/3}  \right) = \omega( q ) \\
    \label{eq: qexp}
    \implies & q \exp\left(\frac{-0.5\hat{n}^{1-c}}{(\tau+1/3)B+\tau/3}  \right) = o(1)
\end{align}
Therefore, we set the value of $t$ to $\hat{n}^{-0.5c} = o(1)$ in Equation \ref{eq: pzt}. It is easy to verify that $\hat{n}^{-0.5c} \geq \sqrt{v}+L/3$. Substituting, we get: 
\begin{align}
    \nonumber
   \mathbb{P}\{ \opnorm{\mZ} \geq \hat{n}^{-0.5c} \}\leq & 4d \exp{ (\frac{-t^2/2}{v+Lt/3} )} \leq 8q\exp( \frac{- \hat{n} t^2/2}{ B\tau+ (B+\tau) t /3   } ) =&  8q\exp( \frac{- 0.5 \hat{n}^{1-c} }{ B\tau+ (B+\tau) \hat{n}^{-0.5c} /3   } ) \\
    \nonumber
    \leq & 8q\exp( \frac{- 0.5 \hat{n}^{1-c} }{ B\tau+ (B+\tau)/3   } ) \quad\quad\text{because $\hat{n}^{-0.5c}\leq 1$} \\
    \nonumber
    = & o(1)   \quad\quad\text{by Equation \ref{eq: qexp}}.
\end{align}
Since $\mZ= \hat{\mSigma}-\mSigma $, restating the above, we have that with a probability of at least $1-8q\exp( \frac{- 0.5 \hat{n}^{1-c} }{ B\tau+ (B+\tau)/3   } ) $, the following holds
\begin{align}
    \nonumber
    \opnorm{\hat{\mSigma}-\mSigma} \leq \hat{n}^{-0.5c}.
\end{align}
\end{proof}

\begin{lemma}\label{lemma: bernstein_ry}
With a probability of at least $1-(q+4) \exp( \frac{-0.5\hat{n}^{1-c}}{4BC+\frac{2}{3} \sqrt{BC}} )=1-o(1) $, the following holds
\begin{align}
    \nonumber
    \| \frac{1}{\hat{n}}\sum_{i=1}^{\hat{n}} \hat{\vr}_iy_i  -\E[ \vr y ] \| \leq \hat{n}^{-0.5c}.
\end{align}
The same conclusion applies to $\frac{1}{\tilde{n}} \sum_{i=1}^{\tilde{n}}\tilde{\vr}_iy_i$ as well.
\end{lemma}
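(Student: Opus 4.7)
The plan is to mirror the proof of Lemma \ref{lemma: bernstein_cov}: apply the intrinsic-dimension matrix Bernstein inequality (Theorem 7.3.1 of \cite{tropp2015introduction}) to the centered i.i.d.\ sum $\mZ \coloneqq \sum_{i=1}^{\hat{n}} \mS_i$ with summands $\mS_i \coloneqq \frac{1}{\hat{n}}(\hat{\vr}_i \hat{y}_i - \E[\vr y])$. Since each $\mS_i$ is now a vector rather than a square matrix, I will first pass to its Hermitian dilation $\mT_i \coloneqq \bigl[\begin{smallmatrix} 0 & \mS_i^\top \\ \mS_i & 0 \end{smallmatrix}\bigr]$, which preserves operator norms so that $\opnorm{\sum_i \mT_i} = \|\mZ\|$; the rest of the argument then runs on the $\mT_i$'s.

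The three ingredients Bernstein requires are obtained as follows. (i) For the uniform bound, the assumption $\|\vr\|^2 \leq B$ together with $y^2 \leq C$ and Cauchy--Schwarz ($\|\E[\vr y]\| \leq \sqrt{BC}$) give $\|\mS_i\| \leq L \coloneqq 2\sqrt{BC}/\hat{n}$. (ii) For the matrix variance, compute the two diagonal blocks of $\mT_i^2 = \diag(\mS_i^\top \mS_i,\, \mS_i\mS_i^\top)$ separately: $\sum_i \E[\mS_i^\top \mS_i] \leq 4BC/\hat{n}$ from the uniform bound, while $\sum_i \E[\mS_i\mS_i^\top] \preccurlyeq (C/\hat{n})\,\mSigma$ using $y^2 \leq C$. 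Collecting these into a PSD envelope $\mV$ yields $v \coloneqq \opnorm{\mV} = 4BC/\hat{n}$, since $\tau \leq B$ by Equation \ref{eq: B_tau_relation}. (iii) For the intrinsic dimension, $\intdim(\mV) \leq 1 + \Tr(\mSigma)/(4B) \leq 1 + q\tau/(4B) \leq 1 + q/4$, so Tropp's prefactor $4\,\intdim(\mV)$ is at most $q+4$.

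Choosing $t = \hat{n}^{-0.5c}$ (which exceeds $\sqrt{v} + L/3$ since $\hat{n}^{1-c} = \omega(B)$ by the assumption and $C = \Theta(1)$), Theorem 7.3.1 then gives
\begin{align*}
\Pr\bigl\{\,\|\mZ\| \geq \hat{n}^{-0.5c}\,\bigr\}
\;\leq\; (q+4)\,\exp\!\Bigl(\tfrac{-\hat{n}^{1-c}/2}{\,4BC + (2\sqrt{BC}/3)\,\hat{n}^{-0.5c}\,}\Bigr)
\;\leq\; (q+4)\,\exp\!\Bigl(\tfrac{-0.5\,\hat{n}^{1-c}}{\,4BC + (2/3)\sqrt{BC}\,}\Bigr),
\end{align*}
where the last step uses $\hat{n}^{-0.5c}\leq 1$. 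This is exactly the stated probability, and the same chain of inequalities as in Equation \ref{eq: qexp} (using $\hat{n}^{1-c} = \omega(B\log q)$ and $C = \Theta(1)$) shows it is $o(1)$. The argument for $\frac{1}{\tilde{n}}\sum_i \tilde{\vr}_i y_i$ is identical after swapping $\hat{n}$ for $\tilde{n}$.

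The only mildly technical step is the rectangular/dilation setup and the careful bookkeeping that reduces Tropp's $4\,\intdim$ prefactor to the clean form $q+4$; everything else is a direct parallel of Lemma \ref{lemma: bernstein_cov} and should present no real obstacle.
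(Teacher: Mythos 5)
Your proposal is correct and follows essentially the same route as the paper: center the summands, apply the intrinsic-dimension matrix Bernstein inequality (Theorem 7.3.1 of Tropp) with the identical uniform bound $L = 2\sqrt{BC}/\hat{n}$, variance envelope $v = 4BC/\hat{n}$, and intrinsic-dimension bound $q/4+1$, then choose $t = \hat{n}^{-0.5c}$. The only cosmetic difference is that you make the Hermitian dilation explicit, whereas the paper invokes the rectangular form of the theorem directly (with $\diag(\mV_1,\mV_2)$), which is the same device packaged into the statement.
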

\begin{proof}
We prove the result for $ \frac{1}{\hat{n}} \sum_{i=1}^{\hat{n}}\hat{\vr}_iy_i $; the result for $ \frac{1}{\tilde{n}} \sum_{i=1}^{\tilde{n}}\tilde{\vr}_iy_i $ can be proved in the same way. Define $\mS_i = \frac{1}{\hat{n}}(\hat{\vr}_iy -\E[\vr y])$. The random matrices (vectors) $\mS_i$ are independent, identically distributed, and centered. Their norms are bounded as follows
\begin{align}
\label{eq: bound_S_L}
    \|\mS_i\| \leq \frac{1}{\hat{n}}(\|\hat{\vr}_i y\| + \| \E[\vr y] \|  ) \leq \frac{1}{\hat{n}}(\|\hat{\vr}_i\| |y| +  \E[\|\vr\|  |y|]   ) \leq \frac{2}{\hat{n}}\sqrt{BC} \coloneqq L.
\end{align}
Define $\mZ = \sum_{i=1}^{\hat{n}} \mS_i $. We analyze the semidefinite upper bounds for the variances $ \E \mZ\mZ^\top$ and $ \E \mZ^\top\mZ$:
\begin{align}
    \nonumber
    \E \mZ\mZ^\top = & \sum_{i=1}^{\hat{n}} \E \mS_i\mS_i^\top \\
    \nonumber
    = & \frac{1}{\hat{n}^2}( \E y_i^2\hat{\vr}_i\hat{\vr}_i^\top - \E [\vr y]\E [\vr y]^\top ) \\
    \nonumber
    \preccurlyeq &  \frac{1}{\hat{n}^2} \E y_i^2\hat{\vr}_i\hat{\vr}_i^\top \\
    \nonumber
    \preccurlyeq & \frac{C}{\hat{n}^2} \mSigma \coloneqq \mV_1.
\end{align}
\begin{align}
    \nonumber
    \E \mZ^\top \mZ = & \sum_{i=1}^{\hat{n}} \E \mS_i^\top\mS_i \\
    \nonumber
    = & \hat{n} \E \|\mS_i\|^2 \\
    \nonumber
    \leq & \frac{4}{\hat{n}} BC \coloneqq \mV_2 \quad\quad\text{by Equation \ref{eq: bound_S_L}}.
\end{align}
Define $ v = \max( \opnorm{\mV_1}, \opnorm{\mV_2} )$. It can be simplified as follows
\begin{align}
    \nonumber
     v = &  \max( \opnorm{ \frac{C}{\hat{n}}\mSigma }, \frac{4}{\hat{n}} BC ) \\
     \nonumber
     = & \frac{4}{\hat{n}} BC \quad\quad \text{because $B\geq \opnorm{\mSigma} $ as in Equation \ref{eq: B_tau_relation}}.
\end{align}
Define $d = \intdim(\diagmtx{\mV_1}{\mV_2})$, which can be simplified as
\begin{align}
    \nonumber
    d = & \intdim( \diagmtx{ \frac{C}{\hat{n}}\mSigma }{\frac{4}{\hat{n}} BC} ) \\
    \nonumber
    = & \frac{ \Tr( \frac{C}{\hat{n}}\mSigma ) + \frac{4}{\hat{n}} BC }{ \max( \opnorm{ \frac{C}{\hat{n}}\mSigma }, \frac{4}{\hat{n}} BC )  } \\
    \nonumber
    = & \frac{ \Tr( \frac{C}{\hat{n}}\mSigma ) + \frac{4}{\hat{n}} BC }{ \frac{4}{\hat{n}} BC } \\
    \nonumber
    = & \frac{ \Tr( \frac{C}{\hat{n}}\mSigma )}{ \frac{4}{\hat{n}} BC } + 1\\
    \nonumber
    \leq & q/4+1 \quad\quad \text{because $B\geq \tau$ as in Equation \ref{eq: B_tau_relation} and $\frac{\Tr(\mSigma)}{ \tau }=q$ }.
\end{align}
Applying Theorem 7.3.1 of \cite{tropp2015introduction}, we have that for any $t\geq \sqrt{v}+L/3$,
\begin{align}
    \nonumber
   \mathbb{P}\{ \|\mZ\| \geq t \}\leq & 4d \exp{ (\frac{-t^2/2}{v+Lt/3} )} \\
\label{eq: pzt_2}
   \leq & (q+4) \exp( \frac{-t^2/2}{\frac{4}{\hat{n}}BC+\frac{2\sqrt{BC}}{\hat{n}} t/3 }  ).
\end{align}
By assumption:
\begin{align}
    \nonumber
& n^{1-c}=\omega(B\log q)\\
\nonumber
\implies &  n^{1-c}=\omega( B\log (q+4)) \\
\nonumber
\implies & n^{1-c}=\omega( (4BC +\frac{2\sqrt{BC}}{3})\log (q+4)) \quad\quad\text{because $C=\Theta(1)$, and $B=\Omega(1)$ as in Equation \ref{eq: B_tau_relation} } \\
\nonumber
\implies & \frac{0.5 n^{1-c}}{(4BC +\frac{2\sqrt{BC}}{3})}=\omega( \log (q+4)) \\
\nonumber
\implies & (q+4)\exp\left( \frac{-0.5 n^{1-c}}{4BC +\frac{2\sqrt{BC}}{3}} \right)  =o(1).
\end{align}
Therefore, we set the value of $t$ to $\hat{n}^{-0.5c} = o(1)$ in Equation \ref{eq: pzt_2}. It is easy to verify that $\hat{n}^{-0.5c} \geq \sqrt{v}+L/3$. Substituting, we get: 
\begin{align}
    \nonumber
   \mathbb{P}\{ \opnorm{\mZ} \geq \hat{n}^{-0.5c} \}\leq & (q+4)\exp( \frac{-0.5n^{-c}}{ \frac{4}{\hat{n}} BC+\frac{2\sqrt{BC}}{\hat{n}} \hat{n}^{-0.5c}/3   } ) \\
   \nonumber
   \leq & (q+4)\exp( \frac{-0.5n^{-c}}{ \frac{4}{\hat{n}} BC+\frac{2\sqrt{BC}}{\hat{n}} /3   } ) \quad\quad\text{because $\hat{n}^{-0.5c}\leq 1$} \\
   \nonumber
   = &  (q+4)\exp( \frac{-0.5n^{1-c}}{ 4BC+ 2\sqrt{BC} /3   } ) \\
   \nonumber
   = & o(1). 
\end{align}
\end{proof}

Now, we are ready to show that Example \ref{eg: intrinsic_bounded} satisfies Definition \ref{def: delta_decomp}. We let $\gV$ be the entire representation space. Then, $\gV^\perp$ is the zero space ${\mathbf{0}}$. In this case, the conditions \isotropy{}, \smallin{}, and \dimini{} trivially hold. Thus, we only need to prove that \bounded{} and \conc{} hold. 

We let $\delta = n^{-0.1c}$ and $\gamma = 0$. First, note that $\delta^2 = n^{-0.2c} \geq \hat{n}^{-0.2c}$. Then, by Lemma \ref{lemma: bernstein_cov}, we obtain that $\opnorm{\hat{\mSigma} - \mSigma} \leq \hat{n}^{-0.5c} = o(\hat{n}^{-0.2c}) = o(\delta^2) = o(\gamma^2 + \delta^2 + \rho)$ with probability $1 - o(1)$. Similarly, we can show that $\opnorm{\tilde{\mSigma} - \mSigma} = o(\gamma^2 + \delta^2 + \rho)$ with probability $1 - o(1)$.

Next, since $\delta = n^{-0.1c} \geq \hat{n}^{-0.1c}$, applying Lemma \ref{lemma: bernstein_ry} gives us $\left| \frac{1}{\hat{n}} \sum_{i=1}^{\hat{n}}\hat{\vr}_i y_i - \E[\vr y] \right| \leq \hat{n}^{-0.5c} = o(\hat{n}^{-0.1c}) = o(\delta) = o(\gamma + \delta + \rho)$ with probability $1 - o(1)$. Similarly, the same conclusion can be shown for $\frac{1}{\tilde{n}} \tilde{\vr}_i y_i$.

Note that there are only four events above, so the probability that all of them occur remains $1 - o(1)$. To now, we have proved \conc{}.

Finally, regarding \bounded{}, $\opnorm{\mSigma}=\Theta(1)$ is directly given in the assumption. Keeping in mind that $\gV$ is the entire space, the conditions regarding covariance matrices are readily satisfied through the triangle inequality. For example: $\opnorm{\hat{\mSigma}}\leq \opnorm{\hat{\mSigma} -\mSigma}+\opnorm{\mSigma} = o(1)+\Theta(1)=O(1) $. The other two conditions are directly implied by the boundedness of each $y$.

\subsection{Example \ref{eg: spiked_cov}}\label{apdx: spiked_cov}

Originating from PCA \cite{johnstone2001distribution}, the spiked covariance model has been widely adopted in recent works to theoretically characterize key aspects across various topics \cite{ji2023power,nakada2023understanding,muthukumar2021classification,pezeshki2022multi,wu2024provable}. Furthermore, Example \ref{eg: spiked_cov} also subsumes the sparse coding model as a special case, which has its roots in computer vision \cite{olshausen1997sparse,foldiak2003sparse,olshausen2004sparse,yang2009linear,mairal2014sparse,papyan2017convolutional}, has been used to model language data \cite{arora2018linear}, and has been extensively employed in recent theoretical studies \cite{kalimeris2019sgd,allen2020towards,wen2021toward,zou2021understanding,shen2022data,xue2023features}. 

In the following proof, we start with a simple case where the data are Gaussian. We then extend the result to sub-Gaussian data by replacing the technical lemmas for Gaussian data with appropriate alternatives.


\subsubsection{Over-Parameterized Gaussian Data} \label{sec:gaussian_data}

Suppose that we have $\hat{\mR} \in \mathbb{R}^{d\times \hat{n}}, \tilde{\mR} \in \mathbb{R}^{d\times \tilde{n}}$ with $\hat{n} = \Theta(\tilde{n})$ and $d = \omega(\hat{n}^2)$ drawn from a high-dimensional $\mSigma$-Gaussian ensemble with zero mean, where 
\begin{align}
    \mSigma  = \begin{bmatrix}
        \mI_k  & \mathbf{0} \\
        \mathbf{0} & \frac{\sigma^2}{d-k}\mI_{d-k}
    \end{bmatrix} = 
    \underbrace{\begin{bmatrix}
        \mI_k=\mLambda'  & \mathbf{0} \\
        \mathbf{0} & \mathbf{0}
    \end{bmatrix}}_{\mSigma'} + 
    \underbrace{\begin{bmatrix}
        \mathbf{0}  & \mathbf{0} \\
        \mathbf{0} & \frac{\sigma^2}{d-k}\mI_{d-k}=\mLambda''
    \end{bmatrix} }_{\mSigma''}, \quad \text{with }\sigma^2 = O(\hat{n}), \hat{n} = \omega(k^2).  
\end{align}
Here the two data splits have comparable sizes, and the model is heavily over-parameterized. 
By splitting the matrix $\hat{\mR} = \begin{bmatrix}
    \hat{\mF} \\ \hat{\mA}
\end{bmatrix}$, where $\hat{\mF} \in \mathbb{R}^{k \times \hat{n}}$ corresponds to the $k$ principal features (which form the space $\gV$) and $\hat{\mA} \in \mathbb{R}^{(d-k) \times \hat{n}}$ corresponds to the rest (which form the space $\gV^\perp$), we can write the sample covariance matrix as
\[
    \hat{\mSigma} = \frac{1}{\hat{n}}\hat{\mR}\hat{\mR}^\top = \frac{1}{\hat{n}}
    \begin{bmatrix}
        \hat{\mF}\hat{\mF}^\top & \hat{\mF}\hat{\mA}^\top \\
        \hat{\mA}\hat{\mF}^\top & \hat{\mA}\hat{\mA}^\top
    \end{bmatrix}. 
\]

We note that $d - k = \omega(\hat{n}^2)$, and the corresponding labels have bounded mean and variance. The same decomposition applies to $\tilde{\mR}$. Note that here $\mU' = \begin{bmatrix}
    \mI_k \\ \mathbf{0}_{(d-k) \times k}
\end{bmatrix}$ and $\mU'' =  \begin{bmatrix}
    \mathbf{0}_{k \times (d-k)} \\ \mI_{d-k}
\end{bmatrix}$ allow us to define the projection matrices $\mU'\mU'^\top$ and $\mU''\mU''^\top$ on $\gV$ and $\gV^\perp$ respectively. 

In this section, we show that our assumptions hold in the above setting with $\delta = 0$ and $\hat{\gamma} = \sigma^2/\hat{n}, \tilde{\gamma} = \sigma^2/\tilde{n}$. 
We only prove for $\hat{\mR}$ whenever the same proof can be easily applied to $\tilde{\mR}$. 

First, let us introduce the following Lemmas: 

\begin{lemma}[Restatement of Example 6.2 in \cite{wainwright2019high}] \label{lemma: eigen_val_bound}
Let $\mathbf{X} \in \mathbb{R}^{d \times n}$ be a random matrix with i.i.d. entries drawn from $\mathcal{N}(0, 1)$ (that is a $\mSigma$-Gaussian ensemble with $\mSigma = \mI_d$). Then with probability at least $1- 2e^{-n\delta^2/2}$ for some $\delta > 0$, the following inequality holds: 
\[
 \opnorm{\frac{1}{n}\mX\mX^T - \mI_d} \leq 2\left(\sqrt{\frac{d}{n}} + \delta \right) + \left( \sqrt{\frac{d}{n}} + \delta 
 \right)^2. 
\]
\end{lemma}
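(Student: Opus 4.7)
\textbf{Proof plan for Lemma \ref{lemma: eigen_val_bound}.} The plan is to reduce the operator-norm bound on $\frac{1}{n}\mX\mX^\top - \mI_d$ to a two-sided bound on the extremal singular values of $\mX/\sqrt{n}$, and then invoke standard Gaussian matrix concentration. The first step is to observe that $\frac{1}{n}\mX\mX^\top - \mI_d$ is symmetric and that its eigenvalues are exactly $\sigma_i^2(\mX/\sqrt{n}) - 1$, where $\sigma_i$ ranges over the singular values. Consequently,
\[
\opnorm{\tfrac{1}{n}\mX\mX^\top - \mI_d} \;=\; \max\!\left\{\,\sigma_{\max}^2(\mX/\sqrt{n}) - 1,\; 1 - \sigma_{\min}^2(\mX/\sqrt{n}) \,\right\}.
\]

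Next, I would invoke the Davidson--Szarek (Gordon slicing) bound for the expected extremal singular values of a $d \times n$ Gaussian matrix with i.i.d.\ $\mathcal{N}(0,1)$ entries: $\mathbb{E}[\sigma_{\max}(\mX)] \leq \sqrt{n} + \sqrt{d}$ and $\mathbb{E}[\sigma_{\min}(\mX)] \geq |\sqrt{n} - \sqrt{d}|$. Dividing by $\sqrt{n}$ gives $\mathbb{E}[\sigma_{\max}(\mX/\sqrt{n})] \leq 1 + \sqrt{d/n}$ and an analogous lower bound for $\sigma_{\min}(\mX/\sqrt{n})$. Since the maps $\mX \mapsto \sigma_{\max}(\mX)$ and $\mX \mapsto \sigma_{\min}(\mX)$ are $1$-Lipschitz with respect to the Frobenius norm, and since $\mX$ has i.i.d.\ standard Gaussian entries, Gaussian concentration of Lipschitz functions (Tsirelson--Ibragimov--Sudakov) yields that $\sigma_{\max}(\mX/\sqrt{n})$ and $\sigma_{\min}(\mX/\sqrt{n})$ each concentrate around their means at rate $1/\sqrt{n}$. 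Concretely, with probability at least $1 - 2e^{-n\delta^2/2}$,
\[
\sigma_{\max}(\mX/\sqrt{n}) \leq 1 + \sqrt{d/n} + \delta, \qquad \sigma_{\min}(\mX/\sqrt{n}) \geq 1 - \sqrt{d/n} - \delta,
\]
after a union bound (absorbing the factor of $2$ into the existing prefactor).

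The final step is just the elementary algebraic identity $|\sigma^2 - 1| = |\sigma - 1|\,|\sigma + 1|$: if $|\sigma(\mX/\sqrt{n}) - 1| \leq \sqrt{d/n} + \delta$, then $|\sigma^2(\mX/\sqrt{n}) - 1| \leq (\sqrt{d/n} + \delta)(2 + \sqrt{d/n} + \delta) = 2(\sqrt{d/n} + \delta) + (\sqrt{d/n} + \delta)^2$, which matches the stated bound. Applying this to both $\sigma_{\max}$ and $\sigma_{\min}$ completes the argument.

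\textbf{Main obstacle.} There is no serious obstacle, since this is a textbook result; the only care needed is in tracking the constants so that the concentration exponent is exactly $n\delta^2/2$ (rather than something like $c\,n\delta^2$) and in arranging the union bound so that the prefactor remains $2$ rather than $4$. The $n\delta^2/2$ exponent follows from applying Gaussian Lipschitz concentration with Lipschitz constant $1/\sqrt{n}$ (after normalizing by $\sqrt{n}$), and the prefactor $2$ can be obtained by noting that the worse of the two deviations already controls the target operator norm, so only one two-sided Gaussian tail bound is needed.
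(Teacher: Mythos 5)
Your proof is correct and follows the standard argument (Gordon/Davidson--Szarek bounds on the extremal singular values, then Gaussian concentration of $1$-Lipschitz functions), which is exactly how Wainwright establishes Theorem 6.1 and hence Example 6.2; the paper itself only cites the result without reproving it, so there is no competing in-paper proof. Two small points worth tightening. First, your prefactor justification is garbled: what actually gives the $2$ is that each of the two relevant bad events ($\sigma_{\max}(\mX/\sqrt{n})$ too large, $\sigma_{\min}(\mX/\sqrt{n})$ too small) has a \emph{one-sided} Gaussian tail of $e^{-n\delta^2/2}$, and the union bound over these two one-sided events yields $2e^{-n\delta^2/2}$ --- this is not the same as ``the worse deviation controls the operator norm, so one two-sided bound suffices,'' which would only control one of the two singular values. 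Second, the paper applies this lemma in the tall regime $d>n$ (for the noise block $\hat{\mA}$), where $\frac{1}{n}\mX\mX^\top$ has $d-n$ zero eigenvalues so that $1-\sigma_{\min}^2(\mX/\sqrt{n})$ degenerates to $1$ and the Davidson--Szarek lower bound on the smallest singular value is vacuous; your argument is fine there, but you should say explicitly that the stated right-hand side already exceeds $2\sqrt{d/n}>1$ in that regime, so only the upper tail of $\sigma_{\max}$ is binding. Both points are cosmetic and do not affect correctness.
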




\begin{lemma}\label{lemma: op_norm_bound}
Consider two independently sampled Gaussian matrices where $\mathbf{A} \in \mathbb{R}^{d_1 \times n}$ has columns $\va_i \sim \mathcal{N}(0, \sigma_1^2\mI_{d_1})$ and $\mathbf{B} \in \mathbb{R}^{d_2 \times n}$ has columns $\vb_i \sim \mathcal{N}(0, \sigma_2^2\mI_{d_2})$ . Then for some $\frac{1}{d_1d_2} > \delta > 0$ and constant $C$, with probability at least $1- d_1d_2\delta$, we have
\[
    \frac{1}{n}\opnorm{\mA\mB^\top} \leq \frac{\sigma_1\sigma_2}{n}\sqrt{Cd_1d_2n\log(\frac{2}{\delta})}. 
\]
\end{lemma}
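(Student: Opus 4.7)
The plan is to pass from operator norm to Frobenius norm, then control each entry of $\mA\mB^\top$ by a sub-exponential tail bound, and finally absorb the factor $d_1 d_2$ via a union bound. Concretely, I would use the inequality $\opnorm{\mA\mB^\top} \leq \|\mA\mB^\top\|_F$, so it suffices to bound every entry $(\mA\mB^\top)_{ij}$ uniformly.

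\textbf{Step 1 (single entry).} Fix $i \in [d_1]$ and $j \in [d_2]$. Then
\begin{equation*}
(\mA\mB^\top)_{ij} \;=\; \sum_{k=1}^n A_{ik} B_{jk},
\end{equation*}
where $A_{ik} \sim \gN(0,\sigma_1^2)$ and $B_{jk} \sim \gN(0,\sigma_2^2)$ are independent across $k$ and independent of each other by the assumption that $\mA$ and $\mB$ are independent. Each summand is the product of two independent centered Gaussians, hence a mean-zero sub-exponential random variable with Orlicz norm $\Theta(\sigma_1\sigma_2)$. By Bernstein's inequality for sums of independent centered sub-exponentials (Proposition 2.9 in \cite{wainwright2019high}, or the standard product-Gaussian tail bound), there is an absolute constant $c>0$ such that for all $t \geq 0$,
\begin{equation*}
\Pr\!\left[\,\bigl|(\mA\mB^\top)_{ij}\bigr| > t\,\right] \;\leq\; 2\exp\!\left( -c\,\min\!\left(\tfrac{t^2}{n\sigma_1^2\sigma_2^2},\; \tfrac{t}{\sigma_1\sigma_2}\right)\right).
\end{equation*}
Because $\delta < 1/(d_1 d_2)$, we have $\log(2/\delta) \geq \log(2 d_1 d_2)$, and for the regime of interest the Gaussian part of the tail dominates. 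Setting $t = \sigma_1 \sigma_2 \sqrt{C n \log(2/\delta)}$ for a sufficiently large constant $C$ yields
\begin{equation*}
\Pr\!\left[\,\bigl|(\mA\mB^\top)_{ij}\bigr| > \sigma_1\sigma_2\sqrt{Cn\log(2/\delta)}\,\right] \;\leq\; \delta.
\end{equation*}

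\textbf{Step 2 (union bound and Frobenius bound).} Taking a union bound over the $d_1 d_2$ entries, with probability at least $1 - d_1 d_2 \delta$ every entry satisfies $(\mA\mB^\top)_{ij}^2 \leq C\,\sigma_1^2\sigma_2^2\, n \log(2/\delta)$. On this event,
\begin{equation*}
\opnorm{\mA\mB^\top}^2 \;\leq\; \|\mA\mB^\top\|_F^2 \;=\; \sum_{i=1}^{d_1}\sum_{j=1}^{d_2} (\mA\mB^\top)_{ij}^2 \;\leq\; C\,\sigma_1^2\sigma_2^2\, d_1 d_2 n \log(2/\delta),
\end{equation*}
which gives $\opnorm{\mA\mB^\top} \leq \sigma_1\sigma_2\sqrt{C d_1 d_2 n \log(2/\delta)}$; dividing by $n$ yields the stated bound.

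\textbf{Main obstacle.} The only delicate point is ensuring that the Bernstein bound is applied in the Gaussian (and not the exponential) tail regime, which requires that $t/(\sigma_1\sigma_2) \leq t^2/(n\sigma_1^2\sigma_2^2)$, i.e.\ $\sqrt{C n \log(2/\delta)} \leq n/\sigma_1\sigma_2 \cdot \sigma_1\sigma_2 $; unrolling, this needs $\log(2/\delta) = O(n)$, which is guaranteed in the applications of this lemma (where $d_1, d_2$ are polynomial in the sample sizes). The other minor concern—that using $\opnorm{\cdot}\leq\|\cdot\|_F$ is wasteful—turns out to be exactly the dependence $\sqrt{d_1 d_2}$ that appears in the stated bound, so no finer argument (e.g.\ $\epsilon$-net on the unit spheres of $\R^{d_1}$ and $\R^{d_2}$) is needed.
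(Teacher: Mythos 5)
Your proof is correct and takes essentially the same route as the paper: the paper also bounds $\opnorm{\mA\mB^\top}$ by the Frobenius norm, views each entry $(\mA\mB^\top)_{ij}$ as the inner product of a row of $\mA$ with a row of $\mB$ (both in $\R^n$), controls it by a sub-exponential concentration bound (cited as Lemma 4 of \cite{shen2022data} rather than re-derived from Bernstein as you do), and union-bounds over the $d_1d_2$ entries. One small slip in your "main obstacle" paragraph: the inequality $t/(\sigma_1\sigma_2)\leq t^2/(n\sigma_1^2\sigma_2^2)$ characterizes the exponential (large-$t$) regime, not the Gaussian one; the Gaussian regime is $t^2/(n\sigma_1^2\sigma_2^2)\leq t/(\sigma_1\sigma_2)$, which after unrolling does give the condition $\log(2/\delta)=O(n)$ you state, so your final conclusion is right even though the intermediate inequality is reversed.
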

\begin{proof}
    Let $\mQ = \mA\mB^T$. Then each entry of $\mQ$ is an inner product $\mQ_{ij} = \va_i \cdot \vb_j$, where $\va_i \in \mathbb{R}^n$ is the $i$-th row of $\mA$ and $\vb_j \in \mathbb{R}^n$ is the $j$-th row of $\mB$. Since each entry of $\va_i$ is $\mathcal{N}(0, \sigma_1^2)$ and each entry of $\vb_j$ is $\mathcal{N}(0, \sigma_2^2)$, by Lemma 4 from \cite{shen2022data}, with probability at least $1- \delta$ (taking $\frac{1}{d_1d_2} > \delta > 0$), for some constant $C_{ij}$, 
    \[
        \mQ_{ij}^2 = (\va_i \cdot \vb_j)^2 \leq C_{ij}\sigma_1^2\sigma_2^2n\log(2/\delta'). 
    \]
    We define $C =  \max \left \{C_{ij}: 1 \leq i \leq d_1, 1 \leq j \leq d_2 \right\}$. Now we bound the operator norm with 
    \begin{align*}
        \frac{1}{n}\opnorm{\mA\mB^\top} \leq \frac{1}{n}\|\mA\mB^\top\|_F & = \frac{1}{n}\|\mQ\|_F \\
        & = \frac{1}{n} \sqrt{ \sum_{1 \leq i \leq d_1, 1 \leq j \leq d_2}{\mQ_{ij}^2}} \\
        & \leq \frac{1}{n} \sqrt{ \sum_{1 \leq i \leq d_1, 1 \leq j \leq d_2}{C_{ij}\sigma_1^2\sigma_2^2n\log(2/\delta)}} \\ 
        & \leq \frac{1}{n} \sqrt{Cd_1d_2\sigma_1^2\sigma_2^2n\log(2/\delta)} = \frac{\sigma_1\sigma_2}{n}\sqrt{Cd_1d_2n\log(2/\delta)}
    \end{align*}
    with probability at least $1-d_1d_2\delta$ since the inequality has to hold for each entry. 
\end{proof}

We now prove that the example satisfies the five aspects of the definition: 
\begin{enumerate}
    \item[1. ] \bounded{}:  \\ \\
    First, we have $\opnorm{\mSigma} = 1 = O(1)$ from its definition, and 
\begin{align} \label{eq:cov_difference}
    \opnorm{\hat{\mSigma} - \mSigma} \leq
    \left\|
    \begin{bmatrix}
        \frac{1}{\hat{n}}\hat{\mF}\hat{\mF}^\top - \mI_k & \mathbf{0} \\
        \mathbf{0} & \frac{1}{\hat{n}}\hat{\mA}\hat{\mA}^\top - \frac{\sigma^2}{d-k}\mI_{d-k} 
    \end{bmatrix}
    \right\|_{\text{op}} + 
    \frac{1}{\hat{n}}\left\|
    \begin{bmatrix}
        \mathbf{0} & \hat{\mF}\hat{\mA}^\top \\
        \hat{\mA}\hat{\mF}^\top & \mathbf{0}
    \end{bmatrix}
    \right\|_{\text{op}}. 
\end{align}
By Lemma \ref{lemma: eigen_val_bound}, we take $\delta_1 = \hat{n}^{-1/4}$ and have that with probability at least $1 - 2e^{-\hat{n}\delta_1^2/2} = 1 - 2e^{-\sqrt{\hat{n}}/2} = 1 - o(1)$,  
\begin{align*}
   \opnorm{\frac{1}{\hat{n}}\hat{\mF}\hat{\mF}^\top - \mI_k} \leq 2\sqrt{\frac{k}{\hat{n}}} + \frac{2}{\hat{n}^{1/4}} + \left( \sqrt{\frac{k}{\hat{n}}} + \frac{1}{\hat{n}^{1/4}} \right)^2 = o(1) \quad \text{since $\hat{n} \gg k$}. 
\end{align*} 
As $\hat{\mA} \in \mathbb{R}^{(d-k) \times \hat{n}}$ is sampled from $\frac{\sigma^2}{d-k}\mI_{d-k}$, $\frac{\sqrt{d-k}}{\sigma}\hat{\mA}$ is sampled from $\mI_{d-k}$. With this scaling, similarly, Lemma \ref{lemma: eigen_val_bound} implies that
\begin{align*}
    \left\|\frac{1}{\hat{n}}\left( \frac{\sqrt{d-k}}{\sigma}\hat{\mA} \right) \left(\frac{\sqrt{d-k}}{\sigma}\hat{\mA} \right)^\top - \mI_{d-k}\right\|_{\text{op}} & =
    \left\|\frac{d-k}{\hat{n}\sigma^2}\hat{\mA}\hat{\mA}^\top - \mI_{d-k}\right\|_{\text{op}} \\
    & \leq 2\sqrt{\frac{d-k}{\hat{n}}} + \frac{2}{\hat{n}^{1/4}} + \left( \sqrt{\frac{d-k}{\hat{n}}} + \frac{1}{\hat{n}^{1/4}} \right)^2
\end{align*}
\begin{align*}
    \iff \left\|\frac{1}{\hat{n}}\hat{\mA}\hat{\mA}^\top - \frac{\sigma^2}{d-k}\mI_{d-k}\right\|_{\text{op}} \leq \frac{\sigma^2}{d-k}\left[ 2\sqrt{\frac{d-k}{\hat{n}}} + \frac{2}{\hat{n}^{1/4}} + \left( \sqrt{\frac{d-k}{\hat{n}}} + \frac{1}{\hat{n}^{1/4}} \right)^2 \right] = O(1) \quad \text{as $\sigma^2 = O(\hat{n})$}. 
\end{align*}

We have bounded the first term on the right side of Eq. \ref{eq:cov_difference} and have that $\opnorm{\frac{1}{\sqrt{\hat{n}}}\hat{\mF}}$ and $\opnorm{\frac{1}{\sqrt{\hat{n}}}\hat{\mA}}$ are $O(1)$. It follows that 
\[
    \frac{1}{\hat{n}}\left\|
    \begin{bmatrix}
        \mathbf{0} & \hat{\mF}\hat{\mA}^\top \\
        \hat{\mA}\hat{\mF}^\top & \mathbf{0}
    \end{bmatrix}
    \right\|_{\text{op}} = \frac{1}{\hat{n}}\opnorm{\hat{\mF}\hat{\mA}^\top
    } = O(1) \Longrightarrow \opnorm{\hat{\mSigma} - \mSigma} = O(1). 
\]
Hence, $\opnorm{\hat{\mSigma}} = O(1)$ directly follows from $\opnorm{\mSigma} = O(1)$. 

Now we consider $\frac{1}{\hat{n}}\|\hat{\vy}\|^2 = \frac{1}{\hat{n}}\sum_{i=0}^{\hat{n}}\hat{\vy}_i^2$, where $\hat{\vy}_i$ represents the $i$-th entry of the vector. Since the label has bounded population variance $O(1)$, the i.i.d assumption implies
\[
    \Var(\frac{1}{\hat{n}}\sum_{i=0}^{\hat{n}}\hat{\vy}_i^2) = \frac{1}{\hat{n}^2}\sum_{i=0}^{\hat{n}}\Var(\hat{\vy}_i^2) = \frac{1}{\hat{n}^2}\sum_{i=0}^{\hat{n}}O(1) = O(\frac{1}{\hat{n}}). 
\]
Then by Chebyshev's inequality, for any $\epsilon > 0$ and some constant $C_1$, we let $z = \frac{1}{\hat{n}}\|\hat{\vy}\|^2$ for simplicity and then have
\[
P\left(|z- \mathbb{E}[z]| > \epsilon\right) \leq \frac{\Var(z)}{\epsilon^2} \leq \frac{C_1}{\hat{n} \epsilon^2}.  
\]
We take $\epsilon = \hat{n}^{-1/4}$. Then with probability at least $1-\frac{C_1}{\sqrt{\hat{n}}} = 1 - o(1)$, 
\[
    |\frac{1}{\hat{n}}\|\hat{\vy}\|^2 - \Var(\vy_i)| = o(1) \Longrightarrow \frac{1}{\hat{n}}\|\hat{\vy}\|^2 = O(1) \quad \text{since the variance of the label is bounded.}
\]
    \item[2.] \conc{}: \\ \\
    With $\mU' = \begin{bmatrix}
    \mI_k \\ \mathbf{0}_{(d-k) \times k}
\end{bmatrix}$ preserving only the first $k$ components, we have from above that with probability at least $1-o(1)$, 
\[
    \opnorm{\mU'^\top\hat{\mSigma}\mU' - \mLambda'} = \opnorm{\frac{1}{\hat{n}}\hat{\mF}\hat{\mF}^\top - \mI_k} = o(1). 
\] 
Now we consider
\[
    \| \frac{1}{\hat{n}} \mU'^\top \hat{\mR}\hat{\vy} -\E[ \mU'^\top\vr y]  \| = \| \frac{1}{\hat{n}} \hat{\mF}\hat{\vy} -\E[ \vf y]  \| ,   
\]
where $\vf = \mU'\vr$. We define a new random variable $\vz = \vf y$ and its sample mean $\hat{\mZ} = \frac{1}{\hat{n}} \hat{\mF}\hat{\vy} \in \mathbb{R}^k$. We first show that the variance of each entry of $\hat{\mZ}$ is of magnitude $\sim \frac{1}{\hat{n}}$: 
\[
    \Var(\hat{\mZ}_i) = \Var \left( \sum_{j = 1}^{\hat{n}} \frac{1}{\hat{n}} \hat{\mF}_{ij}\hat{\vy}_j \right) = \frac{1}{\hat{n}^2} \Var \left( \sum_{j = 1}^{\hat{n}} \hat{\mF}_{ij}\hat{\vy}_j \right) \quad \forall i = 1, \cdots, k. 
\]
For each term in the summation, 
\begin{align*}
    \Var(\hat{\mF}_{ij}\hat{\vy}_j) = \mathbb{E}[(\hat{\mF}_{ij}\hat{\vy}_j)^2] - \mathbb{E}[\hat{\mF}_{ij}\hat{\vy}_j]^2 = O(1) 
\end{align*}
since $\hat{\mF}_{ij}$ and $\hat{\vy}_j$ are both bounded. By the i.i.d assumption, 
\[
    \Var(\hat{\mZ}_i) = \frac{1}{\hat{n}^2} \sum_{j = 1}^{\hat{n}} O(1) = O\left(\frac{1}{\hat{n}}\right). 
\]

By Chebyshev's inequality, for any \(\epsilon > 0\) and some constant $C_2$, 
\[
P\left(\|\hat{\mZ}_i- \mathbb{E}[\vz_i]\| > \epsilon\right) \leq \frac{\Var(\hat{\mZ}_i)}{\epsilon^2} \leq \frac{C_2}{\hat{n} \epsilon^2} 
\]
\[
P\left(\|\hat{\mZ}_i- \mathbb{E}[\vz_i]\| > \epsilon \quad \forall i = 1, \cdots, k\right) \leq \frac{kC_2}{\hat{n} \epsilon^2} 
\]

Similarly, by choosing \(\epsilon = \hat{n}^{-1/4}\), the probability of large deviation decays rapidly as:
\[
P\left(\|\hat{\mZ}_i- \mathbb{E}[\vz_i]\| > \frac{1}{\hat{n}^{1/4}} \quad \forall i = 1, \cdots, k \right) \leq \frac{kC_2}{\sqrt{\hat{n}}} = o(1) \quad \text{since } \hat{n} = \omega(k^2).
\]
This statement implies that with probability at least $1 - o(1)$, 
 \[
    \|\hat{\mZ} - \E[\vz]\| = \| \frac{1}{\hat{n}} \mU'^\top \hat{\mR}\hat{\vy} -\E[ \mU'^\top\vr y]  \| \leq \sqrt{\frac{k}{\sqrt{\hat{n}}}} = o(1) = o(\gamma+\delta+\lambda_{\min}( \mLambda' )) 
 \]
 as we sum up the $k$ terms. This shows that our setting satisfies the second part of the definition. 
    \item[3.] \isotropy{}: \\ \\
    We define $\mZ = \frac{\sqrt{d-k}}{\sigma}\hat{\mA} \in \mathbb{R}^{(d-k) \times \hat{n} }$, which has standard normal entries. With the scaling, we plug in $\mU''$, $\hat{\gamma} = \sigma^2/\hat{n}$ and have
\begin{align} \label{eq: kernel_wise_entropy_op}
    \opnorm{\frac{1}{\hat{n}} \hat{\mR}^\top\mU''\mU''^\top \hat{\mR} -\hat{\gamma} \mI } = \opnorm{\frac{1}{\hat{n}} \hat{\mA}^\top\hat{\mA} -\frac{\sigma^2}{\hat{n}} \mI } = \frac{\sigma^2}{\hat{n}}\opnorm{\frac{1}{d-k} \mZ^\top\mZ -\mI }. 
\end{align}
Now we apply Lemma \ref{lemma: eigen_val_bound} and have that with probability at least $1 - 2e^{-\hat{n}\delta_2^2/2}$ for some $\delta_2 > 0$, 
\[
    \frac{\sigma^2}{\hat{n}}\opnorm{\frac{1}{d-k} \mZ^\top\mZ -\mI } \leq \frac{\sigma^2}{\hat{n}} \left[ 2\left(\sqrt{\frac{\hat{n}}{d-k}} + \delta_2 \right) + \left(\sqrt{\frac{\hat{n}}{d-k}} + \delta_2 
    \right)^2 \right]. 
\]
The rest follows similarly by taking $\delta_2 = \hat{n}^{-1/4}$. 
    \item[4.] \smallin{}: \\ \\ 
    By $\mU'' =  \begin{bmatrix}
    \mathbf{0}_{k \times (d-k)} \\ \mI_{d-k}
\end{bmatrix}$ and Lemma \ref{lemma: op_norm_bound} with $\hat{\mA}^\top \in \mathbb{R}^{\hat{n} \times (d-k)}$ and $\tilde{\mA}^\top \in \mathbb{R}^{\tilde{n} \times (d-k)}$, each having $\mathcal{N}(0, \frac{\sigma^2}{d-k})$ entries, the target expression becomes 
\begin{align}
    \opnorm{\frac{1}{\sqrt{\hat{n}}} \hat{\mR}^\top \mU''\mU''^\top \frac{1}{\sqrt{\tilde{n}}}\tilde{\mR}} & =  \frac{1}{\sqrt{\hat{n}\tilde{n}}} \opnorm{\hat{\mA}^\top\tilde{\mA}} \label{eq: cross_sample_op}\\ 
    & \leq \frac{1}{\sqrt{\hat{n}\tilde{n}}} \sqrt{C_4\hat{n}\tilde{n}\frac{\sigma^4}{(d-k)^2}(d-k)\log{(2/\delta_3)}}  \notag \\
    & = \sqrt{C_4\frac{\sigma^4}{d-k}\log{(2/\delta_3)}}  \notag \\
    & = \sigma^2\sqrt{C_4\log{(2/\delta_3)}}\sqrt{\frac{1}{d-k}} \notag
\end{align}
for some constant $C_4$ and with probability at least $1-\hat{n}\tilde{n}\delta_3$ for some $0 < \delta_3 < \frac{1}{\hat{n}\tilde{n}}$. We choose some $\delta_3 = o(\frac{1}{\hat{n}\tilde{n}})$ in this range and then have that with probability at least $1-o(1)$, the previous bound can be expressed as: 
\[
    \sigma^2\sqrt{C_4\log{(2/\delta_3)}}\sqrt{\frac{1}{d-k}} = \Theta \left(\sigma^2\sqrt{C_4\frac{\log(\hat{n}\tilde{n})}{d-k}} \right) = o(\frac{\sigma^2}{\max \{\hat{n}, \tilde{n}\}}) = o(\gamma+\delta)
\]
since $d-k = \omega(\hat{n}) = \omega(\tilde{n})$. 
    \item[5.] \dimini{}: \\ \\
    By definition, it is trivial to see that
\[
    \lambda_{max}(\mLambda'') = \frac{\sigma^2}{d-k} = o(\frac{\sigma^2}{\max \{\hat{n}, \tilde{n}\}}) =o(\gamma+\delta)
\]
since $d-k = \omega(\hat{n}) = \omega(\tilde{n})$. 
\end{enumerate}

\subsubsection{Further Relaxation to Sub-Gaussian Data}

Now, we consider the more general sub-Gaussian setting outlined in Example \ref{eg: spiked_cov}. The population covariance is: 
\[
    \mSigma  = \begin{bmatrix}
        \mI_k  & \mathbf{0} \\
        \mathbf{0} & \frac{\sigma^2}{d-k}\mI_{d-k}
    \end{bmatrix}, 
\]
where the top left block has a corresponding sub-Gaussian parameter of $\Theta(1)$ and the rest has a parameter of $\Theta(\frac{\sigma^2}{d-k})$. 

We adopt the following definitions from Chapter 2 of \cite{vershynin2018high} for reference. 

\begin{definition} \label{def:sub_gaussian}
    A zero-mean random variable $X$ is sub-Gaussian if there is a positive parameter $K_g$ such that
    \[
        \mathbb{E}[e^{X^2/K_g^2}] \leq 2. 
    \]   
\end{definition}

\begin{definition} \label{def:sub_exponentual}
    A zero-mean random variable $X$ is sub-exponential if there is a positive parameter $K_e$ such that
    \[
        \mathbb{E}[e^{|X|/K_e}] \leq 2. 
    \]   
\end{definition}

We can also define the following norms that give the sub-Gaussian or sub-exponential parameter: 
\[
    \|X\|_{\psi_2} = \inf \{ t > 0: \mathbb{E}[e^{X^2/t^2}] \leq 2\} = K_g
\]
\[
    \|X\|_{\psi_1} = \inf \{ t > 0: \mathbb{E}[e^{|X|/t}] \leq 2\} = K_e
\]

\begin{remark}
    There are many different characterizations for these two definitions, each with a different sub-Gaussian/sub-exponential parameter. A detailed summary can be found in Chapter 2 of \cite{vershynin2018high}. Notably, these parameters differ from each other only by at most a constant factor. 
\end{remark}

\begin{lemma} (Extension of Lemma 4 \cite{shen2022data} to sub-Gaussian) \label{lemma:sub_gauss_dot_product_bound}
    Consider high-dimensional independent sub-Gaussian vectors $\vz_1$, $\vz_2 \in \mathbb{R}^d$, whose i.i.d. entries have variances $\sigma_1^2$, $\sigma_2^2$ and sub-Gaussian parameters $\Theta(\sigma_1)$, $\Theta(\sigma_2)$ respectively. Then for $\delta > 0$ such that $\sqrt{\log(2/\delta)} > \sqrt{cd}$ for some constant $c$, there exists a constant $C$ such that with probability at least $1-\delta$, 
    \[
        |\vz_1 \cdot \vz_2| \leq C\sigma_1\sigma_2\sqrt{d\log(2/\delta)}. 
    \]
\end{lemma}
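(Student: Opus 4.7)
\textbf{Proof Plan for Lemma~\ref{lemma:sub_gauss_dot_product_bound}.} The plan is to view $\vz_1 \cdot \vz_2 = \sum_{i=1}^d z_{1,i} z_{2,i}$ as a sum of $d$ independent mean-zero random variables and then apply a Bernstein-type concentration inequality. Mean-zero follows immediately from the independence of $\vz_1$ and $\vz_2$ together with $\E[z_{1,i}]=\E[z_{2,i}]=0$, so the sum is centered and we only need a tail bound on its deviations.

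First I would control the Orlicz-$\psi_1$ norm of each summand $X_i := z_{1,i} z_{2,i}$. By assumption, $\|z_{1,i}\|_{\psi_2} = \Theta(\sigma_1)$ and $\|z_{2,i}\|_{\psi_2} = \Theta(\sigma_2)$. The standard fact that a product of two sub-Gaussian random variables is sub-exponential, with $\|XY\|_{\psi_1} \leq \|X\|_{\psi_2}\|Y\|_{\psi_2}$ up to an absolute constant (see, e.g., Lemma~2.7.7 of \cite{vershynin2018high}), then gives
\begin{equation*}
\|X_i\|_{\psi_1} \leq C_1 \sigma_1 \sigma_2
\end{equation*}
for an absolute constant $C_1$. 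The $X_i$'s are independent since the pairs $(z_{1,i},z_{2,i})$ are independent across $i$.

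Next I would invoke Bernstein's inequality for sums of independent, mean-zero sub-exponential random variables (e.g., Theorem~2.8.1 of \cite{vershynin2018high}), which yields, for every $t>0$,
\begin{equation*}
\Pr\!\left(\,|\vz_1 \cdot \vz_2| \geq t\,\right) \;\leq\; 2\exp\!\left(-c_0 \min\!\left(\frac{t^2}{d\,C_1^2 \sigma_1^2 \sigma_2^2},\; \frac{t}{C_1 \sigma_1 \sigma_2}\right)\right)
\end{equation*}
for an absolute constant $c_0$. Setting $t = C\sigma_1 \sigma_2 \sqrt{d\log(2/\delta)}$, the first branch of the minimum equals $(c_0 C^2/C_1^2)\log(2/\delta)$, which makes the right-hand side at most $\delta$ provided $C$ is chosen sufficiently large. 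The regularity condition on $\delta$ in the statement is exactly what guarantees we remain in the sub-Gaussian branch of Bernstein (so that the first term in the $\min$ is active); solving $t \leq C_1\sigma_1\sigma_2 \cdot t^2/(dC_1^2\sigma_1^2\sigma_2^2)$ reduces to a size comparison between $\log(2/\delta)$ and $d$ up to the constant $c$, so once we fix $C$ appropriately we simply choose $c$ to make both quantities comparable.

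I do not expect any serious obstacle: the argument is a direct, textbook application of sub-exponential Bernstein, and the only nontrivial ingredient is the product-of-sub-Gaussians-is-sub-exponential inequality, which is standard. The minor bookkeeping issue is tracking how the absolute constants $C_1, c_0$ combine into the final $C$ and $c$ in the statement, and making sure the regime condition is used correctly to select the Gaussian-tail branch of Bernstein rather than the exponential branch.
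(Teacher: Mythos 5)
Your approach is the same as the paper's: write $\vz_1 \cdot \vz_2 = \sum_i z_{1,i}z_{2,i}$, invoke $\|XY\|_{\psi_1} \lesssim \|X\|_{\psi_2}\|Y\|_{\psi_2}$ to get $\psi_1$-control on the summands, apply Bernstein for sub-exponential sums, and plug in $t = C\sigma_1\sigma_2\sqrt{d\log(2/\delta)}$. That much is sound.

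However, the discussion of which branch of Bernstein is selected is reversed. With $t^2/A$ and $t/B$ where $A = dC_1^2\sigma_1^2\sigma_2^2$ and $B = C_1\sigma_1\sigma_2$, the quadratic term is the minimum precisely when $t \le A/B = dC_1\sigma_1\sigma_2$, which for the chosen $t$ means $\sqrt{\log(2/\delta)} \lesssim \sqrt{d}$. Your displayed inequality $t \le C_1\sigma_1\sigma_2\cdot t^2/(dC_1^2\sigma_1^2\sigma_2^2)$ simplifies to $t \ge dC_1\sigma_1\sigma_2$ — the reverse of what you need. More importantly, the lemma's stated regularity condition $\sqrt{\log(2/\delta)} > \sqrt{cd}$ also points the wrong way: under that condition it is the \emph{sub-exponential} term $\sqrt{cd\log(2/\delta)}$ that is the min, which is strictly smaller than $\log(2/\delta)$, so the resulting tail probability $2\exp(-\sqrt{cd\log(2/\delta)})$ exceeds $\delta$ and the lemma's conclusion does not follow. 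The paper's own illustrative choice $\delta = 1/d^2$ gives $\log(2/\delta) = \Theta(\log d) \ll cd$ and hence satisfies $\sqrt{\log(2/\delta)} < \sqrt{cd}$, not $>$, confirming this is a sign typo in the paper that your proposal inherits rather than a deliberate regime choice. You should flip the inequality to $\sqrt{\log(2/\delta)} < \sqrt{cd}$ (or simply observe that both branch conditions are met once $C$ is taken large enough and $\log(2/\delta) = O(d)$), after which your argument closes cleanly.
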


\begin{proof}
   We consider the product $\vz_1 \cdot \vz_2 = \sum_{i=1}^d \vz_{1i}\vz_{2i} = \sum_{i=1}^d a_i$, where we define $a_i$ for simplicity. It is a well-known result that the product of two sub-Gaussian random variables is sub-exponential. More precisely, 
    \[
        \|a_i\|_{\psi_1} \leq \|\vz_{1i}\|_{\psi_2}\|\vz_{2i}\|_{\psi_2} = C\sigma_1\sigma_2. 
    \]
    By Bernstein's inequality for sub-exponential functions (see Theorem 3.8.1 \cite{vershynin2018high}), this summation can be bounded as: for some constant $c > 0$, 
    \begin{align*}
        P\left( \left| \sum_{i=1}^d a_i \right| \geq t\right) & \leq 2 \exp \left[ -c \min  \left\{ \frac{t^2}{\sum_{i=1}^d \|a_i\|_{\psi_1}^2}, \frac{t}{ \max_i \|a_i\|_{\psi_1}}\right\}\right] \\
        & \leq  2 \exp \left[ -c \min  \left\{ \frac{t^2}{dC^2\sigma_1^2\sigma_2^2}, \frac{t}{ C\sigma_1\sigma_2}\right\}\right]
    \end{align*}
    Let $t = \frac{C}{\sqrt{c}}\sigma_1\sigma_2\sqrt{d\log(2/\delta)}$ for some $\delta$ that satisfies the condition $\sqrt{\log(2/\delta)} > \sqrt{cd}$ (e.g. $\delta = 1/d^2$). The probability statement becomes: 
    \begin{align*}
        P\left( \left| \sum_{i=1}^d a_i \right| \geq \frac{C}{\sqrt{c}}\sigma_1\sigma_2\sqrt{d\log(2/\delta)} \right) & \leq 2 \exp \left[ -c \min  \left\{ \frac{\log(2/\delta)}{c} ,  \sqrt{\frac{d\log(2/\delta)}{c}} \right\}\right] \\
        & = 2 \exp \left[ -\min  \left\{ \log(2/\delta) ,  \sqrt{cd\log(2/\delta)} \right\}\right]. 
    \end{align*}
    Since our choice of $\delta$ ensures that the first quantity is smaller,  
    \[
        P\left( \left| \sum_{i=1}^d a_i \right| \geq \frac{C}{\sqrt{c}}\sigma_1\sigma_2\sqrt{d\log(2/\delta)} \right) \leq \delta
    \]
    In other words, letting $C' = C/\sqrt{c}$, we have that with probability at least $1-\delta$, 
    \[
        |\vz_1 \cdot \vz_2| \leq C'\sigma_1\sigma_2\sqrt{d\log(2/\delta)}.
    \]
\end{proof}

Now we are ready to show that our assumptions capture the setting in Section \ref{sec:gaussian_data} but with sub-Gaussian data. That is, we now allow the data to have possibly even lighter tail than that of Gaussian. The proof can be easily replicated, as Chebyshev's inequality still applies here and Lemmas \ref{lemma: eigen_val_bound}, \ref{lemma: op_norm_bound} find the following ``sub-Gaussian" alternatives, namely Lemmas \ref{lemma:sub_gauss_cov_bound}, \ref{lemma:sub_gauss_op_norm_bound}: 

\begin{lemma} (Restatement of Theorem 6.5 in \cite{wainwright2019high})\label{lemma:sub_gauss_cov_bound}
Let $\mathbf{X} \in \mathbb{R}^{d \times n}$ be a random sub-Gaussian matrix with parameter $K_g$ and population covariance $\mI_d$. Then for all $\delta \geq 0$, there are universal constants $C_1, C_2, C_3$ such that 
\[
 \opnorm{\frac{1}{n}\mX\mX^T - \mI_d} \leq K_g^2\left[ C_1\left(\sqrt{\frac{d}{n}}+\frac{d}{n}\right) + \delta\right]
\]
with probability at least $1 - C_2e^{-C_3n\min\{\delta, \delta^2\}}$. 
\end{lemma}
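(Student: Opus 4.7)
\textbf{Proof plan for Lemma \ref{lemma:sub_gauss_cov_bound}.} The plan is to use the standard $\epsilon$-net plus Bernstein argument for sample covariance concentration. First, since $\frac{1}{n}\mX\mX^\top - \mI_d$ is symmetric, I would use the variational characterization $\opnorm{\frac{1}{n}\mX\mX^\top - \mI_d} = \sup_{v \in S^{d-1}} | v^\top(\frac{1}{n}\mX\mX^\top - \mI_d) v |$. Then I would fix a $\frac{1}{4}$-net $\mathcal{N}$ of the unit sphere $S^{d-1}$ with $|\mathcal{N}| \leq 9^d$ (volume argument), and use the fact that for symmetric matrices $M$ one has $\opnorm{M} \leq 2 \sup_{v \in \mathcal{N}} |v^\top M v|$. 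This reduces the problem to a pointwise concentration bound followed by a union bound over $\mathcal{N}$.

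Next, for any fixed $v \in S^{d-1}$, write $Z_i = \langle x_i, v \rangle$, so that
\[
v^\top\Bigl(\tfrac{1}{n}\mX\mX^\top - \mI_d\Bigr) v \;=\; \tfrac{1}{n}\sum_{i=1}^n (Z_i^2 - 1).
\]
Since the columns $x_i$ are independent sub-Gaussian with parameter $K_g$ and covariance $\mI_d$, each $Z_i$ is sub-Gaussian with parameter $\Theta(K_g)$ and unit variance. Therefore $Z_i^2 - 1$ is a centered sub-exponential random variable with sub-exponential norm $\Theta(K_g^2)$, via the standard fact $\|Z_i^2\|_{\psi_1} \leq 2 \|Z_i\|_{\psi_2}^2$. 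I would then apply Bernstein's inequality for sub-exponential summands to obtain, for any $t > 0$,
\[
\Pr\!\Bigl( \bigl| \tfrac{1}{n}\!\sum_{i=1}^n (Z_i^2 - 1) \bigr| \geq K_g^2 t \Bigr) \;\leq\; 2\exp\!\bigl( -c\, n \min\{t, t^2\}\bigr)
\]
for some universal $c > 0$.

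Finally, I would take a union bound over $v \in \mathcal{N}$, giving a failure probability of at most $2 \cdot 9^d \exp(-c n \min\{t, t^2\})$. Choosing $t = C_1(\sqrt{d/n} + d/n) + \delta$ for a sufficiently large $C_1$ ensures that the term $cn\min\{t, t^2\}$ absorbs the $\log 9 \cdot d$ coming from the net cardinality with room to spare, leaving a residual bound of the form $C_2 \exp(-C_3 n \min\{\delta, \delta^2\})$. Combining with the factor of $2$ from the net approximation yields the operator norm bound as stated.

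\textbf{Main obstacle.} The routine calculations are the Bernstein bound and the net counting, but the slightly delicate step is choosing $t$ so that the $\sqrt{d/n} + d/n$ piece of the bound exactly cancels the $9^d$ cardinality factor while leaving a clean tail in $\delta$ alone. Concretely, in the small-deviation regime one needs $t^2 \cdot n \gtrsim d$ (giving the $\sqrt{d/n}$ term) and in the large-deviation regime one needs $t \cdot n \gtrsim d$ (giving the $d/n$ term); writing the bound as the sum of these two regimes plus the free parameter $\delta$ is what produces the precise form stated. Since this is just a restatement of a textbook theorem, I would cite \cite{wainwright2019high} at this point and not reproduce the full calculation.
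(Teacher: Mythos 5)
The paper gives no proof of this lemma; it simply cites Theorem 6.5 of Wainwright (2019), as the lemma's title indicates. Your sketch correctly reproduces the standard argument behind that textbook theorem (variational characterization, $\tfrac14$-net of cardinality $9^d$ with the factor-of-two comparison for symmetric matrices, one-dimensional Bernstein bound for the sub-exponential variables $\langle x_i,v\rangle^2-1$, union bound, and the choice of $t$ absorbing the net entropy), and your closing decision to cite rather than reproduce the computation is exactly what the paper does.
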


\begin{lemma} \label{lemma:sub_gauss_op_norm_bound}
    Consider two independently sampled row-wise sub-Gaussian matrices $\mathbf{A} \in \mathbb{R}^{d_1 \times n}$, $\mathbf{B} \in \mathbb{R}^{d_2 \times n}$ that have i.i.d. entries with variances $\sigma_1^2$, $\sigma_2^2$ respectively. Then for some $\frac{1}{d_1d_2} > \delta > 0$ and constant $C$, with probability at least $1- d_1d_2\delta$, we have
\[
    \frac{1}{n}\opnorm{\mA\mB^\top} \leq \frac{\sigma_1\sigma_2}{n}\sqrt{Cd_1d_2n\log(\frac{2}{\delta})}. 
\]
\end{lemma}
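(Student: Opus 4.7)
\textbf{Proof plan for Lemma \ref{lemma:sub_gauss_op_norm_bound}.} The plan is to mimic the proof of Lemma \ref{lemma: op_norm_bound} verbatim in structure, but to replace its single use of Lemma 4 of \cite{shen2022data} (a Gaussian dot-product concentration) by the sub-Gaussian analogue already proved in Lemma \ref{lemma:sub_gauss_dot_product_bound}. The key observation is that each entry of $\mA\mB^\top$ is a dot product of a row of $\mA$ with a row of $\mB$, viewed as two \emph{independent} sub-Gaussian vectors in $\sR^n$, which is precisely the setting of Lemma \ref{lemma:sub_gauss_dot_product_bound}.

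First I would write
\[
\opnorm{\mA\mB^\top}\;\leq\;\|\mA\mB^\top\|_F\;=\;\sqrt{\sum_{i=1}^{d_1}\sum_{j=1}^{d_2}(\va_i\cdot\vb_j)^2},
\]
where $\va_i\in\sR^n$ is the $i$-th row of $\mA$ and $\vb_j\in\sR^n$ is the $j$-th row of $\mB$. By hypothesis $\va_i$ and $\vb_j$ are independent, have i.i.d.\ zero-mean sub-Gaussian coordinates with variances $\sigma_1^2$ and $\sigma_2^2$, and (by the usual convention under which this paper is written, and consistent with Example \ref{eg: spiked_cov}) sub-Gaussian parameters of the same order $\Theta(\sigma_1)$ and $\Theta(\sigma_2)$. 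Hence Lemma \ref{lemma:sub_gauss_dot_product_bound} applies in ambient dimension $n$: for any $\delta>0$ with $\sqrt{\log(2/\delta)}>\sqrt{cn}$, with probability at least $1-\delta$,
\[
(\va_i\cdot\vb_j)^2\;\leq\;C\,\sigma_1^2\sigma_2^2\,n\,\log(2/\delta).
\]

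Next I would take a union bound over the $d_1 d_2$ entries: with probability at least $1-d_1d_2\delta$ the displayed inequality holds simultaneously for all $(i,j)$, and then
\[
\frac{1}{n}\opnorm{\mA\mB^\top}\;\leq\;\frac{1}{n}\sqrt{d_1 d_2\cdot C\sigma_1^2\sigma_2^2 n\log(2/\delta)}\;=\;\frac{\sigma_1\sigma_2}{n}\sqrt{C\,d_1 d_2\,n\,\log(2/\delta)},
\]
which is exactly the claim. The restriction $\delta<1/(d_1 d_2)$ is imposed so the union bound is nontrivial; the constant $C$ is the one furnished by Lemma \ref{lemma:sub_gauss_dot_product_bound}.

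The only step that requires care is confirming that the side condition of Lemma \ref{lemma:sub_gauss_dot_product_bound}, namely $\sqrt{\log(2/\delta)}>\sqrt{c n}$, is compatible with the range $\delta<1/(d_1 d_2)$ in the typical regime in which we will apply the lemma. In the instantiations used in the paper (e.g.\ the verification of \smallin{} in Example \ref{eg: spiked_cov}, where $d_1=d_2=d-k=\omega(\hat{n}^2)$), one has $\log(2/\delta)\gtrsim \log(d_1 d_2)\gg n$, so the condition is automatic; in general the statement should be read as holding for $\delta$ in the stated range satisfying this mild constraint, exactly as in the Gaussian version (Lemma \ref{lemma: op_norm_bound}). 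This is the only substantive point to pin down; beyond it the argument is a direct sub-Gaussian transcription of the Gaussian proof and presents no new obstacle.
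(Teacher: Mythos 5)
Your proposal matches the paper's proof exactly: the paper simply states that the argument is the same as Lemma \ref{lemma: op_norm_bound} with Lemma \ref{lemma:sub_gauss_dot_product_bound} substituted for the Gaussian dot-product bound, which is precisely the Frobenius-norm-plus-union-bound route you spell out. If anything, your discussion of the side condition $\sqrt{\log(2/\delta)}>\sqrt{cn}$ and its compatibility with $\delta<1/(d_1 d_2)$ is more careful than what the paper records.
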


\begin{proof}
    The proof is the same as Lemma \ref{lemma: op_norm_bound} except that we now use Lemma \ref{lemma:sub_gauss_dot_product_bound} to bound the squared value of each entry in the Frobenius norm. 
\end{proof}

With these alternative extended results, the proof in Section \ref{sec:gaussian_data} immediately generalizes to sub-Gaussian data. This extension potentially allows us to accommodate more realistic scenario and enhances the theoretical robustness of our assumptions. Sub-Gaussian distributions capture a wider class of data behaviors; for instance, the fact that bounded random variables are sub-Gaussian makes the theory more applicable to many real-world datasets, which naturally exhibit sub-Gaussian characteristics. In the following section, we show a general result that even more examples can be constructed. 

\subsection{Proof of Theorem \ref{thm: construct_new}}

    The intuition behind this theorem is that adding high-dimensional sub-Gaussian entries to the given representation preserves decomposbility while slightly modifying the parameters. Due to the orthogonality of $\mM$ and $\mM^\perp$, we let $\mU = \begin{bmatrix}
        \mM \quad \mM^\perp
    \end{bmatrix}$ and then $\alpha(\vx) = \mU\begin{bmatrix}
        h(\vx)\\
        \xi(\vx)
    \end{bmatrix}$; naturally, the column space of $\mM$ can be regarded as the subspace $\gV$, and the column space of $\mM^\perp$ is $\gV^\perp$. Given that $h(\vx)$'s representations are $(\delta, 0, 0)$-decomposable w.r.t. $\sR^d$, we now prove that the new representations are $(\delta, \frac{\sigma^2}{\hat{n}}, \frac{\sigma^2}{\tilde{n}})$-decomposable. Again we only present the proof for one data split whenever it can be replicated for the other.  

    For notation, we let $\gamma = \sigma^2/\max\{\hat{n}, \tilde{n}\}$. 
    
    \begin{enumerate}
        \item \textbf{Boundedness: }  $\frac{1}{\hat{n}}\sum_{i=1}^{\hat{n}} \hat{y}_i^2 = O(1)$ follows from the previous proof using Chebyshev's inequality. For the population covariance, 
        \begin{align}
            \opnorm{\mSigma(\alpha)} = \opnorm{\E_{\gD_{\vx}}[\alpha(\vx)\alpha(\vx)^\top]} &= \left\|\E_{\gD_{\vx}}\begin{bmatrix}
               h(\vx)h(\vx)^\top & h(\vx)\xi(\vx)^\top\\
                \xi(\vx)h(\vx)^\top & \xi(\vx)\xi(\vx)^\top
            \end{bmatrix}\right\|_{op}  \notag \\ & \leq
            \left\|\E_{\gD_{\vx}}\begin{bmatrix}
               h(\vx)h(\vx)^\top & \mathbf{0} \\
                \mathbf{0} & \xi(\vx)\xi(\vx)^\top
            \end{bmatrix}\right\|_{op} + \left\|\E_{\gD_{\vx}}\begin{bmatrix}
               \mathbf{0} & h(\vx)\xi(\vx)^\top\\
                \xi(\vx)h(\vx)^\top & \mathbf{0} 
            \end{bmatrix}\right\|_{op} \label{eq:general_cov}
        \end{align}
        We have that $\opnorm{\E_{\gD_{\vx}}[h(\vx)h(\vx)^\top]} = \opnorm{\mSigma(h)} = O(1)$ by the $(\delta, 0, 0)$-decomposibility assumption on $h$'s representations. From the proof for sub-Gaussian data in Section \ref{sec:gaussian_data}, $\opnorm{\E_{\gD_{\vx}}[\xi(\vx)\xi(\vx)^\top]} = \opnorm{\mSigma(\xi)} = O(1)$. These bound the first term on the RHS of Equation \ref{eq:general_cov}. 

        By the definition of operator norm, 
        \begin{align} \label{eq: cross_term_op}
            \opnorm{\E_{\gD_{\vx}}[h(\vx)\xi(\vx)^\top]} = \sup_{\|\vu\|=1} \sup_{\|\vv\|=1} \vu^T\E_{\gD_{\vx}}[ h(\vx) \xi(\vx)^\top] \vv =  \sup_{\|\vu\|=1} \sup_{\|\vv\|=1} \E_{\gD_{\vx}}[ (\vu^Th(\vx)) (\vv^T\xi(\vx))]. 
        \end{align}
        By Cauchy-Schwartz inequality, we can bound this expectation as: 
        \[
            \E_{\gD_{\vx}}[ (\vu^Th(\vx)) (\vv^T\xi(\vx))] \leq \sqrt{\E_{\gD_{\vx}}[ (\vu^Th(\vx))^2]} \sqrt{\E_{\gD_{\vx}}[ (\vv^T\xi(\vx))^2]}, \text{ where}
        \]
        \[
            \E_{\gD_{\vx}}[ (\vu^Th(\vx))^2] = \E_{\gD_{\vx}}[ \vu^Th(\vx)h(\vx)^\top\vu]  = \vu^T\E_{\gD_{\vx}}[ h(\vx)h(\vx)^\top] \vu \leq \|\vu\|^2\opnorm{\mSigma(h)} = O(1),
        \]
        \[
            \E_{\gD_{\vx}}[ (\vv^T\xi(\vx))^2] = \E_{\gD_{\vx}}[ \vv^T\xi(\vx)\xi(\vx)^\top\vv]  = \vv^T\E_{\gD_{\vx}}[\xi(\vx)\xi(\vx)^\top] \vv \leq \|\vv\|^2\opnorm{\mSigma(\xi)} = O(1). 
        \]
        Combing these results, we have that Equation \ref{eq: cross_term_op} $= \opnorm{\E_{\gD_{\vx}}[h(\vx)\xi(\vx)^\top]} = O(1)$, bounding the second term in Equation \ref{eq:general_cov}. Hence, $\opnorm{\mSigma(\alpha)} = O(1)$. 
        
        Simiarly, we can prove for the empirical covariance: 
        \begin{align*}
            \opnorm{\hat{\mSigma}(\alpha)} = \opnorm{\frac{1}{\hat{n}} \sum_{i=1}^{\hat{n}}\alpha(\hat{\vx}_i)\alpha(\hat{\vx}_i)^\top } & = \left\|\frac{1}{\hat{n}}\begin{bmatrix}
\sum_{i=1}^{\hat{n}}h(\hat{\vx}_i)h(\hat{\vx}_i)^\top & \sum_{i=1}^{\hat{n}}h(\hat{\vx}_i)\xi(\hat{\vx}_i)^\top \\ \sum_{i=1}^{\hat{n}}\xi(\hat{\vx}_i)h(\hat{\vx}_i)^\top & \sum_{i=1}^{\hat{n}}\xi(\hat{\vx}_i)\xi(\hat{\vx}_i)^\top
            \end{bmatrix}\right\|_{\text{op}} \\
            & = \left\|\frac{1}{\hat{n}}\begin{bmatrix}
\hat{\mH}\hat{\mH}^\top & \hat{\mH}\hat{\mathbf{\Xi}}^\top \\ \hat{\mathbf{\Xi}}\hat{\mH}^\top & \hat{\mathbf{\Xi}}\hat{\mathbf{\Xi}}^\top
            \end{bmatrix}\right\|_{\text{op}}, 
        \end{align*}
where the $i$-th column of $\hat{\mathbf{\Xi}}$ is $\xi(\hat{\vx}_i)$ and the $i$-th column of $\mH$ is $h(\hat{\vx}_i)$. 

The rest is straightforward: the assumption on $h$ and the existing proof for sub-Gaussian data imply $\opnorm{\frac{1}{\hat{n}}\hat{\mH}\hat{\mH}^\top} = O(1)$ and $\opnorm{\frac{1}{\hat{n}}\hat{\mathbf{\Xi}}\hat{\mathbf{\Xi}}^\top} = O(1)$. Hence, $\opnorm{\frac{1}{\sqrt{\hat{n}}} \hat{\mH}}$ and $\opnorm{\frac{1}{\sqrt{\hat{n}}} \hat{\mathbf{\Xi}}}$ are $O(1)$, and we have $\opnorm{\frac{1}{\hat{n}} \hat{\mH}\hat{\mathbf{\Xi}}^\top}$ is also O(1). These together bound the empirical covariance. 

        \item \textbf{Concentration on $\gV$}: Since $\gV$ corresponds to the representation space of $h(\vx)$, this condition is automatically satisfied by the $(\delta, 0, 0)$-decomposibility assumption on $h$. 
        
        \item \textbf{Kernel-wise $\delta$-isotropy on $\gV^\perp$}: In this setting, since $\gV^\perp$ corresponds to the column space of $\mM^\perp$ (the high-dimensional sub-Gaussian part), we have
        \[
            \opnorm{\frac{1}{\hat{n}} \hat{\mK}( \mPi_{\gV^\perp}\alpha )\! -\!\frac{\sigma^2}{\hat{n}} \mI }\! = \opnorm{\frac{1}{\hat{n}} \hat{\mK}(\xi)\! -\!\frac{\sigma^2}{\hat{n}} \mI }\!
        \]
        By definition of the kernel matrix, $\hat{\mK}(\xi) = [ \xi(\hat{\vx}_i)^\top \xi(\hat{\vx}_j) ]_{1\leq i,j\leq \hat{n}} = \hat{\mathbf{\Xi}}^\top\hat{\mathbf{\Xi}}$ with $\hat{\mathbf{\Xi}}$ defined above. Then the equation is essentially in the same form of Equation \ref{eq: kernel_wise_entropy_op}, so the previous proof applies here.

        \item \textbf{Small cross-sample inner product on $\gV^\perp$}: Similar to 3, we have
        \[
            \opnorm{ \frac{1}{\sqrt{\hat{n}\tilde{n}}}[ (\mPi_{\gV^\perp}\alpha(\hat{\vx}_i))^\top \mPi_{\gV^\perp}\alpha(\tilde{\vx}_j)  ]_{1\leq i\leq\hat{n}, 1\leq j \leq \tilde{n}} }\! = \opnorm{ \frac{1}{\sqrt{\hat{n}\tilde{n}}}[\xi(\hat{\vx}_i)^\top \xi(\tilde{\vx}_j)  ]_{1\leq i\leq\hat{n}, 1\leq j \leq \tilde{n}} }\! =  \frac{1}{\sqrt{\hat{n}\tilde{n}}}\opnorm{ \hat{\mathbf{\Xi}}^T\tilde{\mathbf{\Xi}}}, 
        \]
        where $\tilde{\mathbf{\Xi}}$ is defined in the same manner. Then the proof after Equation \ref{eq: cross_sample_op} for sub-Gaussian data applies. 
        \item \textbf{Diminishing population covariance on $\gV^\perp$}: This refers covariance matrix of the sub-Gaussian part, and we simply have: 
        \[
            \opnorm{ \mSigma(\mPi_{\gV^\perp}h) }= \opnorm{ \mSigma(\xi) } = \opnorm{\E_{\gD_{\vx}}[\xi(\vx)\xi(\vx)^\top]} = \frac{\sigma^2}{m} = o(\delta + \gamma) \quad \text{ as $m = \omega(\hat{n}) = \omega(\tilde{n})$}
        \]
    \end{enumerate}

\section{Additional Experimental Details}\label{apdx: exp}

\subsection{Training details}\label{apdx: training_details}


\subsubsection{Molecular prediction.}

Our experiment is built on the GitHub codebase provided by \cite{fabian2020molecular}. The strong model, MolBERT, can be downloaded using the link provided on their GitHub repository. For the weak models, we train small transformers using their pipeline with a batch size of 256. For finetuning, we use SGD to train a linear model on representations with the following settings: batch size = 1024, learning rate = 0.001, weight decay = 0.1, and epochs = 2000 when using representations from the strong model; and batch size = 1024, learning rate = 0.01, weight decay = 0, and epochs = 2000 when using representations from the weak models.

\subsubsection{NLP tasks with Embedding Models.}

We use \texttt{nvidia/NV-Embed-v2}, ranked first on the leaderboard of the Massive Text Embedding Benchmark (MTEB) \cite{muennighoff2022mteb}, as the strong model. We consider the following 22 embedding models as the weak model: 
\begin{tabbing}
\texttt{avsolatorio/GIST-Embedding-v0} \\
\texttt{Alibaba-NLP/gte-base-en-v1.5} \\
\texttt{jxm/cde-small-v1} \\
\texttt{thenlper/gte-base} \\
\texttt{infgrad/stella-base-en-v2} \\
\texttt{BAAI/bge-base-en-v1.5} \\
\texttt{thenlper/gte-small} \\
\texttt{intfloat/e5-base-v2} \\
\texttt{abhinand/MedEmbed-small-v0.1} \\
\texttt{nomic-ai/nomic-embed-text-v1} \\
\texttt{sentence-transformers/facebook-dpr-question\_encoder-single-nq-base} \\
\texttt{sentence-transformers/paraphrase-MiniLM-L3-v2} \\
\texttt{sentence-transformers/average\_word\_embeddings\_glove.840B.300d} \\
\texttt{sentence-transformers/roberta-base-nli-mean-tokens} \\
\texttt{sentence-transformers/all-mpnet-base-v1} \\
\texttt{sentence-transformers/bert-base-wikipedia-sections-mean-tokens} \\
\texttt{sentence-transformers/sentence-t5-base} \\
\texttt{Snowflake/snowflake-arctic-embed-s} \\
\texttt{TaylorAI/gte-tiny} \\
\texttt{jinaai/jina-embeddings-v2-small-en} \\
\texttt{sentence-transformers/gtr-t5-base} \\
\texttt{dumyy/sft-bge-small} \\
\end{tabbing}


During fine-tuning, we train a linear classifier on representations using the Adam optimizer \cite{kingma2014adam} with the following settings: batch size = 200, learning rate = 0.01, weight decay = 0.00001, and epochs = 200.

\subsubsection{NLP tasks with End-to-end finetuend LLMs.}

We largely reuse the GitHub codebase provided by \cite{burns2023weak}. We use \texttt{Qwen/Qwen-7B} as the strong model. We consider the following 28 LLMs as the weak model: 
\begin{tabbing}
\texttt{bigscience/bloom-560m} \\
\texttt{bigscience/bloomz-560m} \\
\texttt{bigscience/mt0-base} \\
\texttt{baidu/ernie-code-560m} \\
\texttt{bigscience/mt0-small} \\
\texttt{google/umt5-small} \\
\texttt{google/umt5-base} \\
\texttt{google/mt5-base} \\
\texttt{facebook/xglm-564M} \\
\texttt{MBZUAI/LaMini-T5-61M} \\
\texttt{MBZUAI/LaMini-Flan-T5-77M} \\
\texttt{MBZUAI/LaMini-GPT-124M} \\
\texttt{MBZUAI/LaMini-Neo-125M} \\
\texttt{MBZUAI/LaMini-T5-223M} \\
\texttt{apple/OpenELM-270M} \\
\texttt{apple/OpenELM-450M} \\
\texttt{EleutherAI/pythia-160m} \\
\texttt{MBZUAI/LaMini-Flan-T5-248M} \\
\texttt{MBZUAI/LaMini-GPT-774M} \\
\texttt{cerebras/Cerebras-GPT-111M} \\
\texttt{google-t5/t5-small} \\
\texttt{facebook/opt-125m} \\
\texttt{Qwen/Qwen2.5-0.5B} \\
\texttt{distilbert/distilgpt2} \\
\texttt{EleutherAI/gpt-neo-125m} \\
\texttt{gpt2} \\
\texttt{google/mt5-small} \\
\texttt{EleutherAI/pythia-70m}
\end{tabbing}
We finetune all the models using the pipeline provided in the codebase, which employs the Adam optimizer with a batch size of 32 and trains for a single epoch. The learning rate is set to 5e-5 for weak models and 1e-5 for the strong model, following the default configuration in the codebase, which applies smaller learning rates for larger models.

\subsection{Details and discussions on hyperparameters}\label{apdx: hyperparam}

In Exp. \RC{1}, we set $\alpha_\w = \alpha_\s = 0.1$ and $\beta_\w = \beta_\s = 0.1$ for all datasets. In Exp. \RC{2}, we set $\alpha_\w = 0.001$, $\alpha_\s = 0.05$, $\lambda_\w = 0.0001$, and $\lambda_\s = 0.01$ for both datasets. In Exp. \RC{3}, we tune the hyperparameters for each dataset, reporting the best result. Specifically, we set $\alpha_\w = \alpha_\s$ and vary them within the range $\{0.02, 0.05\}$, and vary $\beta_\w$ and $\beta_\s$ independently within the range $\{0.2, 0.5, 0.8, 1.0, 2, 4, 8\}$.

\begin{figure}
    \centering
\subfigure[Justice\label{}]{
    \includegraphics[width=0.45\linewidth]{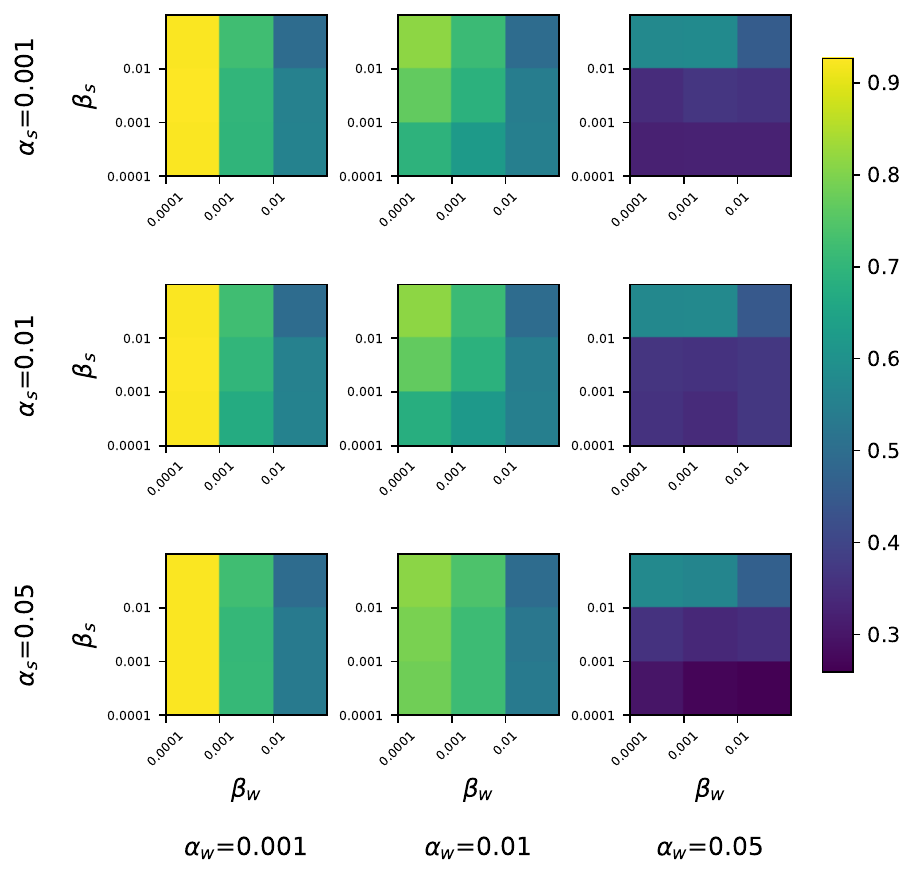}
}
\subfigure[Commonsense\label{}]{
    \includegraphics[width=0.45\linewidth]{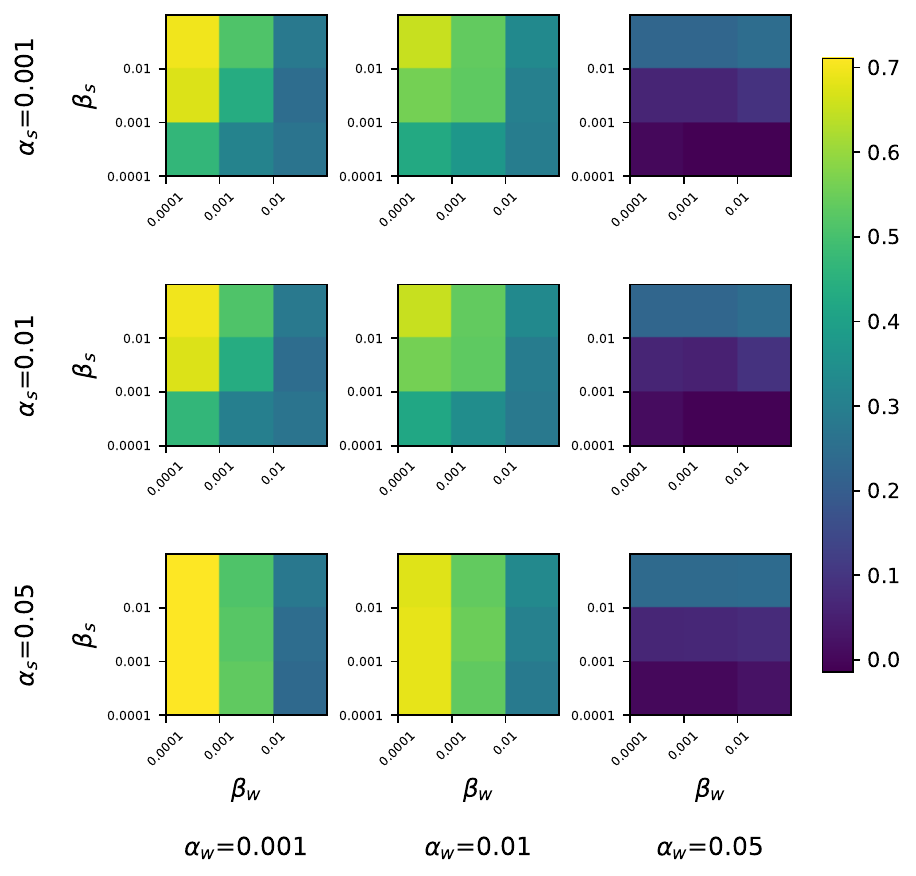}
}
    \caption{Effect of hyperparameters in Exp. \RC{2}. Colors indicate Spearman correlation.}
    \label{fig: exp2_hps}
\end{figure}

\begin{figure}[!t]
    \centering
\subfigure[SciQ\label{}]{
\includegraphics[width=.33\columnwidth]{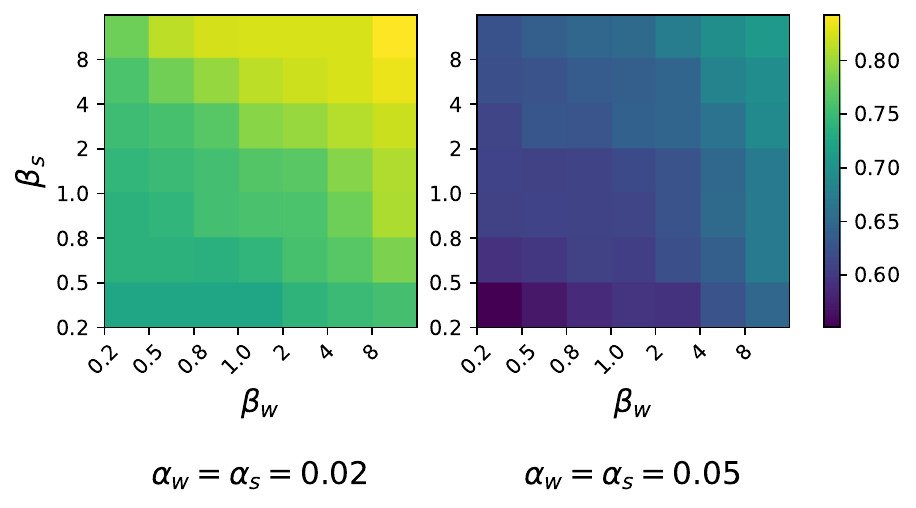}
}
\hspace{-.4cm}
\subfigure[Amazon Polarity\label{}]{
\includegraphics[width=.33\columnwidth]{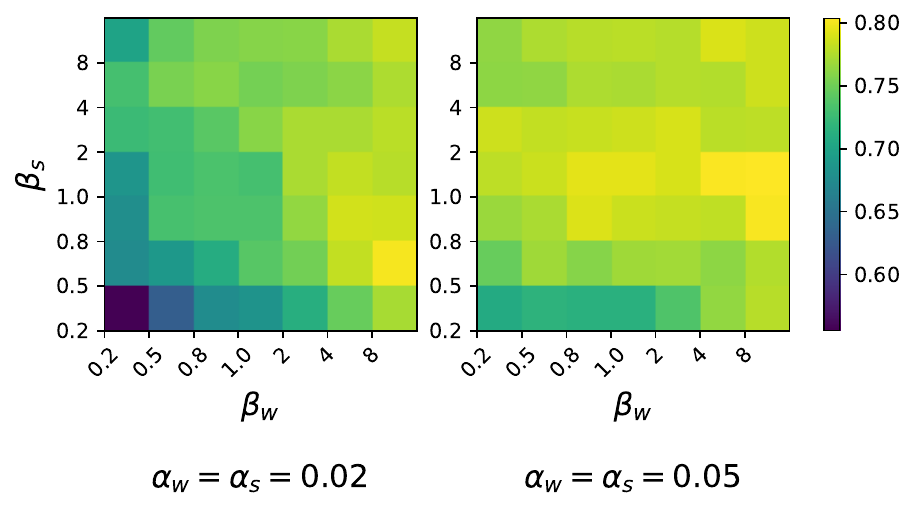}
}
\hspace{-.4cm}
\subfigure[Cosmos-QA\label{}]{
\includegraphics[width=.33\columnwidth]{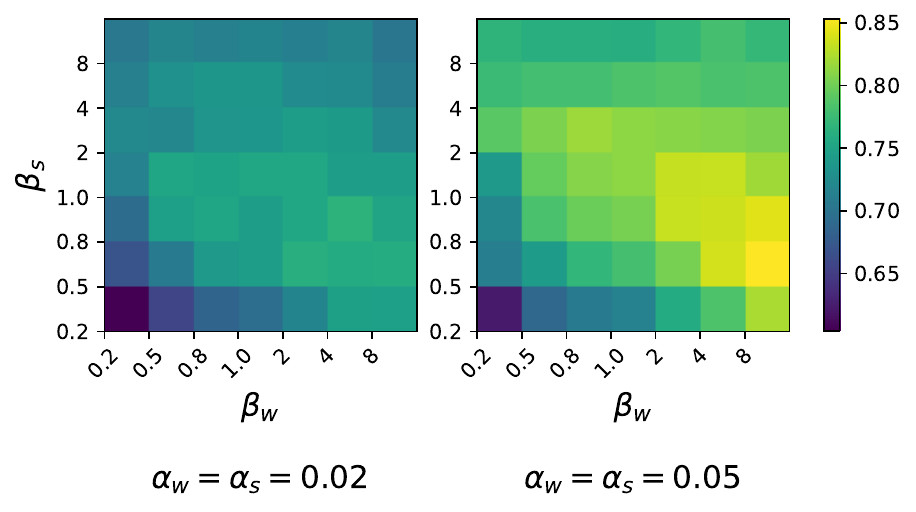}
}
    \caption{ Effect of hyperparameters in Exp. \RC{3}. Colors indicate Spearman correlation.}
    \label{fig: exp3_hps}
\end{figure}

\textbf{Effect of hyperparameters.} We vary the hyperparameters to evaluate their impact on performance. In the setting of Exp. \RC{2}, we vary $\alpha_\w$ and $\alpha_\s$ within the range ${0.001, 0.01, 0.05}$ and $\beta_\w$ and $\beta_\s$ within the range ${0.0001, 0.001, 0.01}$. The results are visualized in Figure \ref{fig: exp2_hps}. In the setting of Exp. \RC{3}, we vary the hyperparameters while keeping $\alpha_\w = \alpha_\s$ as described in the previous paragraph, with results visualized in Figure \ref{fig: exp3_hps}. Although certain hyperparameter configurations may lead to lower correlation, a non-trivial positive correlation is observed in most cases. Interestingly, in Exp. \RC{3}, which is seemingly the most `challenging setting', the results are highly robust to changes in hyperparameters, with the worst-case correlation remaining around 0.6 across all three datasets.

\textbf{Cross-model hyperparameter transfer.} We note that, although each model could technically require different hyperparameters, in experiments we let all weak models share hyperparameters for simplicity and still achieve strong results, suggesting that our approach is not very sensitive to hyperparameters. Further, we present a new experiment demonstrating that hyperparameters selected using one group of models (i.e., as a validation set) generalize to other models. We randomly split the weak models into two groups, select hyperparameters based on one group, and evaluate them on the other. We repeat this 20 times and report the results in Table \ref{tab:hp_transfer}. Correlation remains high with low standard deviation, indicating that hyperparameters  selected using a few models can reliably generalize to new ones. Additionally, we note that a small number of labeled data should suffice for hyperparameters tuning, as they are only used to measure test performance and not to compute our metric.

\begin{table}[!t]
    \caption{Average Spearman correlation with hyperparameters selected on half of the models and evaluated on the rest.}
    \label{tab:hp_transfer}
    \centering
    \begin{tabular}{|c|c|}
    \hline
        Justice &  Commonsense \\
        \hline
        $0.885_{\pm 0.16}$ & $0.67_{\pm 0.20}$ \\
        \hline
    \end{tabular}
\end{table}

\begin{figure}[!t]
    \centering
\subfigure[Lipop\label{}]{
\includegraphics[width=.22\columnwidth]{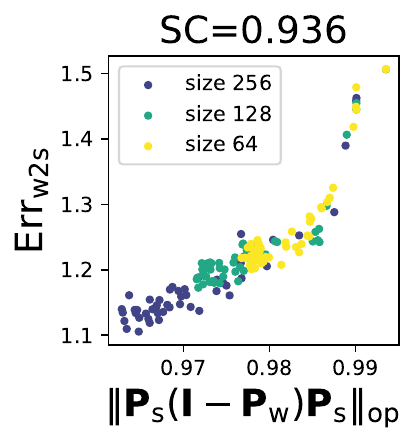}
}
\hspace{.2cm}
\subfigure[FreeSolv\label{}]{
\includegraphics[width=.22\columnwidth]{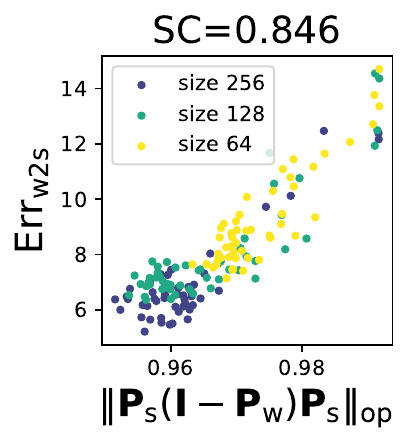}
}
\hspace{.2cm}
\subfigure[ESOL\label{}]{
\includegraphics[width=.22\columnwidth]{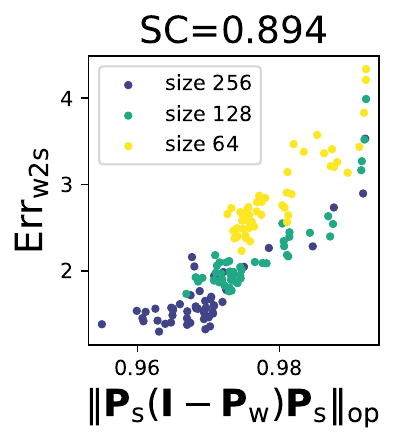}
}
    \caption{ Results for $\opnorm{ \mP_\s(\mI-\mP_\w)\mP_\s }$ in Exp. \RC{1}. }
    \label{fig: molecular_PPP}
\end{figure}

\begin{figure}[!t]
    \centering
\subfigure[ Justice\label{}]{
\includegraphics[width=.25\columnwidth]{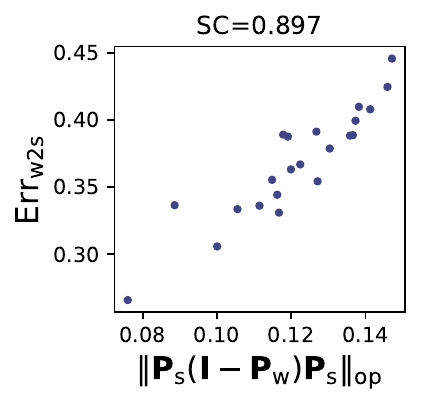}
}
\subfigure[ Commonsense\label{}]{
\includegraphics[width=.25\columnwidth]{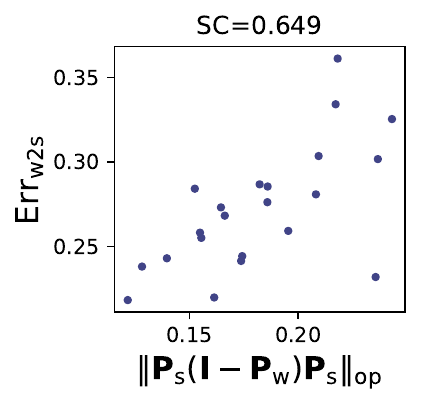}
}
    \caption{Results for $\opnorm{ \mP_\s(\mI-\mP_\w)\mP_\s }$ in Exp. \RC{2}. }
    \label{fig: embedding_PPP}
\end{figure}

\begin{figure}[!t]
    \centering
\subfigure[SciQ\label{}]{
\includegraphics[width=.25\columnwidth]{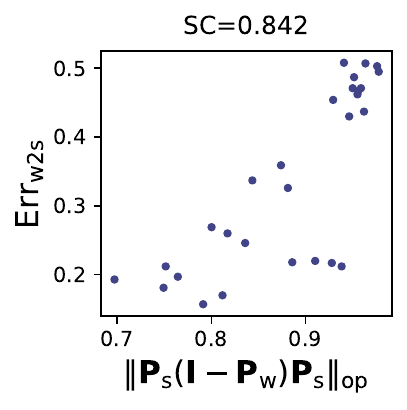}
}
\subfigure[Amazon Polarity\label{}]{
\includegraphics[width=.25\columnwidth]{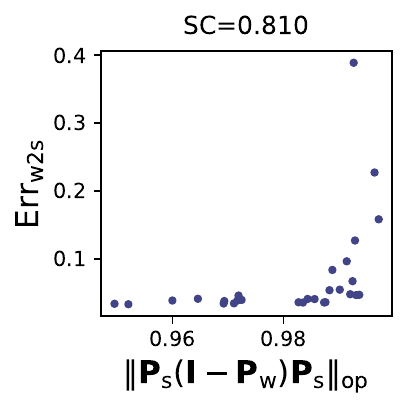}
}
\subfigure[Cosmos-QA\label{}]{
\includegraphics[width=.25\columnwidth]{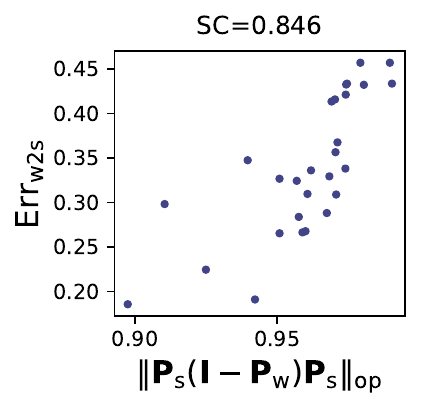}
}
    \caption{Results for $\opnorm{ \mP_\s(\mI-\mP_\w)\mP_\s }$ in Exp. \RC{3}. }
    \label{fig: end2end_PPP}
\end{figure}

\subsection{Results for $\opnorm{ \mP_\s(\mI-\mP_\w)\mP_\s }$}\label{apdx: PPP}

Results for $\opnorm{\mP_\s(\mI-\mP_\w)\mP_\s}$ are presented in Figures \ref{fig: molecular_PPP}, \ref{fig: embedding_PPP}, and \ref{fig: end2end_PPP}. We observe a strong correlation between $\err_\wtos$ and $\opnorm{\mP_\s(\mI-\mP_\w)\mP_\s}$ across the settings. These correlations are similar to those achieved using $\opnorm{\mP_\s(\mI-\mP_\w)}$, indicating that the two metrics are similarly informative for W2SG in practice, despite being theoretically derived in different ways.

\begin{figure}[!t]
    \centering
\includegraphics[width=.2\textwidth]{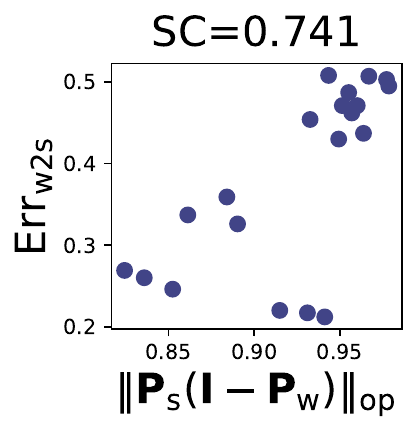}
\includegraphics[width=.208\textwidth]{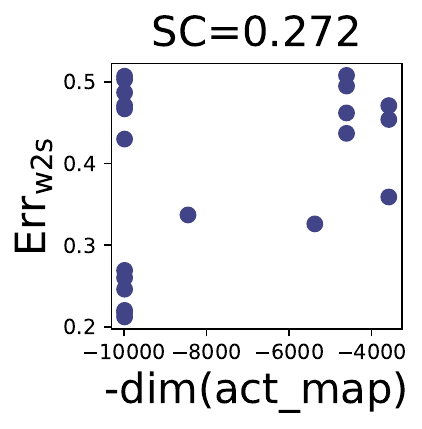}
\includegraphics[width=.206\textwidth]{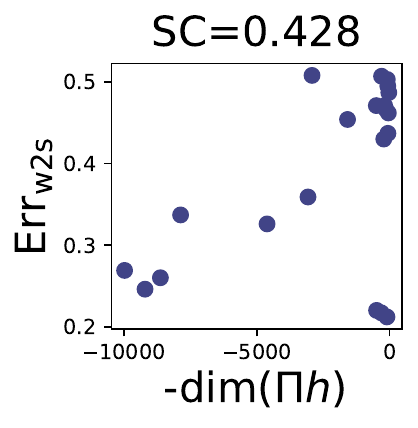}

\includegraphics[width=.2\textwidth]{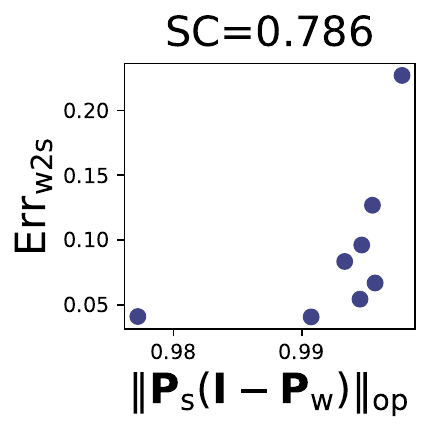}
\includegraphics[width=.22\textwidth]{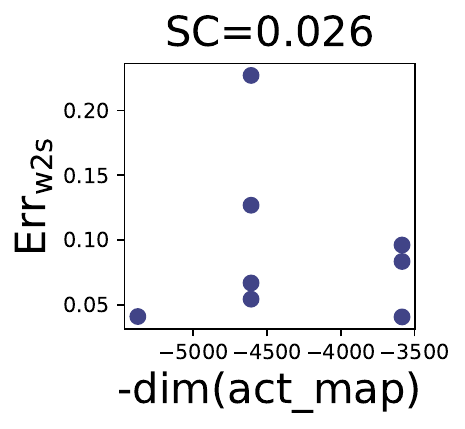}
\includegraphics[width=.21\textwidth]{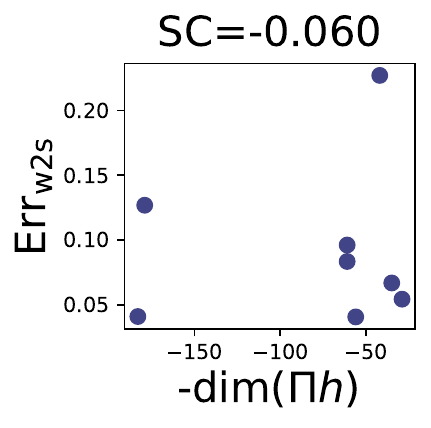}
\vspace{-.2cm}
    \caption{
    \small {The top panel shows results on SciQ for models with sizes $\leq 10000$, while the bottom panel shows results on Amazon Polarity for models with sizes $\leq 8000$. The patterns observed here are consistent with those discussed in Figure \ref{fig: compare_PP_with_size} in the main paper. }\looseness=-1  }
    \label{fig: compare_PP_with_size_apdx}
    \vspace{-.5cm}
\end{figure}

\subsection{Comparison with model size and effective dimension}\label{apdx: compare_size}

Figure \ref{fig: compare_PP_with_size_apdx} compares our metric with the activation map dimension and the dimension of approximated principal representations for smaller models on SciQ and Amazon Polarity. The results are consistent with those presented in Figure \ref{fig: compare_PP_with_size} in the main paper.

\section{Discussion}\label{apdx: discussion}

Using activation maps as representations in Exp. \RC{3} is a simple heuristic that yields promising results. However, more principled methods for defining and extracting representations from LLMs, such as those through NTK \cite{malladi2023kernel} or representation engineering \cite{zou2023representation}, could be explored. Future research could leverage these approaches to improve results and uncover new applications. For instance, \cite{zou2023representation} introduces a method for extracting specific concept directions in representations, such as honesty and power-seeking. This could enable computing our metric based on topic-specific representations, allowing predictions of W2SG for general tasks within specific topical domains.

\end{document}